\definecolor{cnavy}{rgb}{0,0,.5}
\definecolor{cmaroon}{rgb}{0.7,0.1,0.2}
\newcommand{\REQUIRE}{\Require}
\newcommand{\STATE}{\State}
\newcommand{\FOR}{\For}
\newcommand{\ENDFOR}{\EndFor}
\newcommand{\COMMENT}{\Comment}
\newcommand{\ENSURE}{\Ensure}
\newcommand{\WHILE}{\While}
\newcommand{\ENDWHILE}{\EndWhile}
\newcommand{\IF}{\If}
\newcommand{\ELSIF}{\ElsIf}
\newcommand{\ENDIF}{\EndIf}
\theoremstyle{plain}
\newtheorem{theorem}{Theorem}[section]
\newtheorem{lemma}[theorem]{Lemma}
\theoremstyle{definition}
\newtheorem{definition}[theorem]{Definition}
\newtheorem{assumption}[theorem]{Assumption}
\theoremstyle{remark}
\newtheorem{remark}[theorem]{Remark}
\crefname{thm}{Theorem}{Theorems}
\crefname{clm}{Claim}{Claims}
\crefname{cor}{Corollary}{Corollaries}
\crefname{lem}{Lemma}{Lemmas}
\crefname{defi}{Definition}{Definitions}
\crefname{rem}{Remark}{Remarks}
\setlist[enumerate]{noitemsep,topsep=0pt,parsep=0pt,partopsep=0pt}
 \newcommand{\ind}{\perp\!\!\!\!\perp} 
\newcommand{\poly}{\mathrm{poly}}
\newcommand{\A}{\mathcal{A}}
\newcommand{\B}{\mathcal{B}}
\newcommand{\R}{\mathbb{R}}
\newcommand{\PP}{\mathbb{P}}
\newcommand{\EE}{\mathbb{E}}
\newcommand{\E}{\mathcal{E}}
\newcommand{\F}{\mathcal{F}}
\newcommand{\N}{\mathbb{N}}
\newcommand{\NN}{\mathcal{N}}
\renewcommand{\H}{\mathcal{H}}
\renewcommand{\O}{\mathcal{O}}
\newcommand{\tO}{\widetilde{\mathcal{O}}}
\newcommand{\hmu}{\widehat{\mu}}
\newcommand{\1}[1]{\mathds{1}\{#1\}}
\newcommand{\eps}{\varepsilon}
\newcommand{\parens}[1]{\left(#1\right)}
\newcommand{\abs}[1]{\left|#1\right|}
\newcommand{\braces}[1]{\left\{#1\right\}}
\newcommand{\kl}[2]{D_{\mathrm{KL}}(#1 \| #2)}
\newcommand{\klbern}[2]{\mathrm{kl}\parens{#1, #2}}
\newcommand{\bern}[1]{\mathrm{Bernoulli}(#1)}
\newcommand{\unif}[1]{\mathrm{Uniform}\parens{#1}}
\newcommand{\logp}[1]{\log\parens{#1}}
\renewcommand{\L}{\mathcal{L}}
\newcommand{\EXP}[2][]{
    \ifthenelse{\equal{#1}{}}
    {\mathbb{E}\left[#2\right]}
    {\mathop{\mathbb{E}}_{#1}\left[#2\right]}
}
\newcommand{\EXPc}[2][]{
    \ifthenelse{\equal{#1}{}}
    {\mathbb{E}\left[#2\right]}
    {\mathbb{E}^{#1}\left[#2\right]}
}
\newcommand{\EXPcc}[2][]{
    \ifthenelse{\equal{#1}{}}
    {\mathbb{E}[#2]}
    {\mathbb{E}^{#1}[#2]}
}
\newcommand{\VAR}[2][]{
    \ifthenelse{\equal{#1}{}}
    {\mathbb{V}\left[#2\right]}
    {\mathop{\mathbb{V}}_{#1}\left[#2\right]}
}
\newcommand{\PRO}[2][]{
    \ifthenelse{\equal{#1}{}}
    {\mathrm{Pr}\left[#2\right]}
    {\mathop{\mathrm{Pr}}_{#1}\left[#2\right]}
}
\newcommand{\PrEnv}[1][\nubiased]{\mathrm{Pr}_{#1,\pi}}
\newcommand{\PrEnvc}{\mathrm{Pr}}
\newcommand{\PrEnvalt}[1][\nubiasedalt]{\mathrm{Pr}_{#1,\pi}}
\newcommand{\PrHist}[1][\nubiased]{\PrEnv[#1]^{\H_{\tau}}}
\newcommand{\PrHistalt}[1][\nubiasedalt]{\PrEnvalt[#1]^{\H_{\tau}}}
\newcommand{\PrHistat}[2][\nubiased]{\PrEnv[#1]^{\H_{#2}}}
\newcommand{\PrHistatc}[1]{\PrEnvc^{\H_{#1}}}
\newcommand{\PrHistaltat}[2][\nubiasedalt]{\PrEnvalt[#1]^{\H_{#2}}}
\newcommand{\PrHistaltatc}[2][i]{\PrEnvc^{(#1),\H_{#2}}}
\newcommand{\ExpHist}{\mathbb{E}^{\H_\tau}_{\nubiased,\pi}}
\newcommand{\ExpHistalt}{\mathbb{E}^{\H_t}_{\nubiasedalt,\pi}}
\newcommand{\PROenv}[1]{\PRO[\nubiased,\pi]{#1}}
\newcommand{\PROenvc}[1]{\PRO{#1}}
\newcommand{\EXPenv}[1]{\EXP[\nubiased,\pi]{#1}}
\newcommand{\EXPenvc}[1]{\EXPc{#1}}
\newcommand{\EXPenvalt}[1]{\EXP[\nubiasedalt,\pi]{#1}}
\newcommand{\EXPenvmod}[2]{\EXP[\nubiasedmod{#1},\pi]{#2}}
\newcommand{\EXPenvmodc}[2]{\EXPc[(#1)]{#2}}
\newcommand{\penv}[1]{p_{\nubiased,\pi}(#1)}
\newcommand{\penvalt}[1]{p_{\nubiasedalt,\pi}(#1)}
\newcommand{\bnu}{\boldsymbol{\nu}}
\newcommand{\nuunbiased}{\bnu}
\newcommand{\nuunbiasedi}{\nu}
\newcommand{\nuunbiasedalt}{\bnu'}
\newcommand{\nuunbiasedalti}{\nu'}
\newcommand{\nuunbiasedmod}[1]{\bnu^{(#1)}}
\newcommand{\nuunbiasedmodi}[1]{\nu^{(#1)}}
\newcommand{\nubiased}{\bnu^{\mathrm{bias}}}
\newcommand{\nubiasedi}{\nu^{\mathrm{bias}}}
\newcommand{\nubiasedmod}[1]{\bnu^{\mathrm{bias},(#1)}}
\newcommand{\nubiasedalt}{\bnu^{\mathrm{bias},'}}
\newcommand{\nubiasedalti}{\nu^{\mathrm{bias},'}}
\newcommand{\tbnu}{\widetilde{\boldsymbol{\nu}}}
\newcommand{\tmu}{\widetilde{\mu}}
\newcommand{\Egaussian}{\E_{\NN}}
\newcommand{\Uaction}{U^{\mathrm{a}}}
\newcommand{\regretPi}[2]{R_{#1}(#2)}
\newcommand{\regret}[1]{\regretPi{\nuunbiased,\pi}{#1}}
\newcommand{\regretmod}[2]{\regretPi{\nuunbiasedmod{#1},\pi}{#2}}
\newcommand{\delMax}{\Delta_{\mathrm{max}}}
\newcommand{\delMin}{\Delta_{\mathrm{min}}}
\newcommand{\Tbias}{T^{\mathrm{bias}}}
\newcommand{\tbiasat}[1]{#1^{\mathrm{bias}}}
\newcommand{\tbias}{\tbiasat{t}}
\newcommand{\initialbias}{T^0}
\newcommand{\totalinitialbias}{t_0^{\mathrm{bias}}}
\newcommand{\initialbiasst}{\widetilde{T}^0}
\newcommand{\totalinitialbiasst}{\widetilde{t}_0^{\mathrm{st}}}
\newcommand{\nicefracat}[2][A_t]{\nicefrac{\Tbias_{#1}(#2)}{\tbiasat{#2}}}
\newcommand{\nicefract}[1][A_t]{\nicefrac{\Tbias_{#1}(t-1)}{(\tbias - 1)}}
\newcommand{\fract}[1][A_t]{\frac{\Tbias_{#1}(t-1)}{\tbias - 1}}
\newcommand{\fracmin}{\rho_{\mathrm{min}}}
\newcommand{\invfract}[1][A_t]{\frac{\tbias - 1}{\Tbias_{#1}(t-1)}}
\newcommand{\fbiasat}[2]{f\parens{#2}}
\newcommand{\fbiast}[1][A_t]{\fbiasat{#1}{\fract[#1]}}
\newcommand{\nicefbiast}[1][A_t]{\fbiasat{#1}{\nicefract[#1]}}
\newcommand{\fbiasroundsets}[3]{\bar{f}_{#1}\parens{#2 \mid #3}}
\newcommand{\fbiasround}[2]{\fbiasroundsets{#1}{#2}{\A_1\ifthenelse{\equal{#2}{1}}{}{,\ldots,\A_{#2}}}}
\newcommand{\fbiaserrorroundsets}[3]{\xi_{#1}\parens{#2 \mid #3}}
\newcommand{\fbiaserrorround}[2]{\fbiaserrorroundsets{#1}{#2}{\A_1,\ldots,\A_{#2}}}
\newcommand{\fbiasminround}[1]{\bar{f}_{\text{min}}\parens{#1}}
\newcommand{\fbiasminroundi}[2]{\bar{f}_{#1}^{\mathrm{min}}\parens{#2}}
\newcommand{\mbiasat}[2][A_t]{W_{#1}(#2)}
\newcommand{\mbiast}[1][A_t]{\mbiasat[#1]{t}}
\newcommand{\mbiasbar}[2][A_t]{\bar{W}_{#1}(#2)}
\newcommand{\nzero}{n_{0}}
\newcommand{\tauLB}[2]{\tau_{#1}\ifthenelse{\equal{#2}{}}{}{(#2)}}
\newcommand{\tauLBb}[2][']{\tauLB{#2}{\betaLB[#1]}}
\newcommand{\tauLBs}[1]{\tauLB{#1}{}}
\newcommand{\betaLB}[1][]{\beta\ifthenelse{\equal{#1}{}}{}{^{#1}}}
\newcommand{\Ssuboptimality}{\mathcal{A}_{c,c'}}
\newcommand{\lbarms}{\widetilde{S}_{\nzero}(\beta; c,c')}
\newcommand{\delMinS}{\delMin(\Ssuboptimality)}
\newcommand{\delMaxS}{\delMax(\Ssuboptimality)}
\newcommand{\Sideal}[1]{S_{#1}}
\newcommand{\Sidealupper}[1]{U_{#1}}
\newcommand{\Sideallower}[1]{L_{#1}}
\newcommand{\optArms}{L_*}
\newcommand{\mr}[1]{m_{#1}}
\newcommand{\tDel}{\widetilde{\Delta}}
\newcommand{\algtime}[3]{\mathrm{time}_{#1}(#2,#3)}
\newcommand{\algtimeirl}{\algtime{i}{r}{\ell}}
\newcommand{\dyOpt}{1}
\newcommand{\stOpt}{2}
\newcommand{\nustatic}{\tbnu^{\mathrm{st}}}
\newcommand{\nudy}{\nubiased}
\newcommand{\delSt}{\widetilde{\Delta}^{\mathrm{st}}}
\newcommand{\delDy}{\Delta^{\mathrm{bias}}}
\newcommand{\TDy}[2]{T_{#1}(#2)}
\newcommand{\TSt}[2]{\widetilde{T}_{#1}(#2)}
\newcommand{\YDy}[1]{Y^{\mathrm{bias}}_{#1}}
\newcommand{\YSt}[1]{\widetilde{Y}^{\mathrm{st}}_{#1}}
\newcommand{\ASt}[1]{\widetilde{A}^{\mathrm{st}}_{#1}}
\newcommand{\ADy}[1]{A^{\mathrm{bias}}_{#1}}
\newcommand{\muDy}[2]{\mu^{\mathrm{bias}}_{#1}(#2)}
\newcommand{\muSt}[1]{\widetilde{\mu}^{\mathrm{st}}_{#1}}
\newcommand{\muhatDy}{\hmu^{\mathrm{bias}}}
\newcommand{\muhatSt}{\widehat{\widetilde{\mu}}^{\mathrm{st}}}
\newcommand{\G}{\mathcal{B}}
\newcommand{\tG}{\widetilde{\mathcal{B}}}
\newcommand{\ucbName}{\mathsf{UCB}}
\newcommand{\ucb}[1]{\ucbName_{#1}}
\newcommand{\ucbDy}[1]{\ucbName^{\mathrm{bias}}_{#1}}
\newcommand{\ucbSt}[1]{\widetilde{\ucbName}^{\mathrm{st}}_{#1}}
\newcommand{\regretUCB}[1]{\regretPi{\nuunbiased,\ucbName}{#1}}
\newcommand{\true}{\mathsf{True}}
\newcommand{\brk}{\mathsf{Break}}
\newcommand{\QSt}{\mathsf{Q}^{\mathrm{st}}}
\newcommand{\QDy}{\mathsf{Q}^{\mathrm{bias}}}
\newcommand{\HSt}{\mathcal{H}^{\mathrm{st}}}
\newcommand{\HDy}{\mathcal{H}^{\mathrm{bias}}}
\newcommand{\FSt}{\mathcal{F}^{\mathrm{st}}}
\newcommand{\FDy}{\mathcal{F}^{\mathrm{bias}}}
\def\multiset#1#2{\ensuremath{\left(\kern-.3em\left(\genfrac{}{}{0pt}{}{#1}{#2}\right)\kern-.3em\right)}}
\newcommand{\figsize}{1}
\title{On Mitigating Affinity Bias through Bandits with Evolving Biased Feedback}
\author{Matthew Faw\thanks{Algorithms and Randomness Center, Georgia Institute of Technology. {\tt \href{mailto:mfaw3@gatech.edu}{mfaw3@gatech.edu}}}
\and
Constantine Caramanis\thanks{The University of Texas at Austin. \tt\href{mailto:constantine@utexas.edu}{constantine@utexas.edu}}
\and
Jessica Hoffmann\thanks{Google Research. {\tt \href{mailto:jhoffmann@google.com}{jhoffmann@google.com}}}
}
\date{\today}
\begin{document}

\maketitle

\begin{abstract}
Unconscious bias has been shown to influence how we assess our peers, with consequences for hiring, promotions and admissions. In this work, we focus on affinity bias, the component of unconscious bias which leads us to prefer people who are similar to us, despite no deliberate intention of favoritism. In a world where the people hired today become part of the hiring committee of tomorrow, we are particularly interested in understanding (and mitigating) how affinity bias affects this feedback loop. This problem has two distinctive features: 1) we only observe the \textit{biased value} of a candidate, but we want to optimize with respect to their \textit{real value} 2) the bias towards a candidate with a specific set of traits depends on the \textit{fraction} of people in the hiring committee with the same set of traits. We introduce a new bandits variant that exhibits those two features, which we call affinity bandits. Unsurprisingly, classical algorithms such as UCB often fail to identify the best arm in this setting. We prove a new instance-dependent regret lower bound, which is larger than that in the standard bandit setting by a multiplicative function of $K$. Since we  treat rewards that are \textit{time-varying} and \textit{dependent on the policy's past actions}, deriving this lower bound requires developing proof techniques beyond the standard bandit techniques. Finally, we design an elimination-style algorithm which nearly matches this regret, despite never observing the real rewards.
\end{abstract}

\section{Introduction}
\label{sec:intro}

``Unconscious bias'' is a term coined to designate stereotypes (positive or negative) that we hold outside of our awareness.  In recent years, numerous studies have argued that unconscious bias is pervasive, and that it shapes even well-meaning individuals' assessment of their peers \citep{fitzgerald2017implicit, oberai2018unconscious, pager2008sociology, tate2020whiteliness, holroyd2017responsibility, buetow2019apophenia, sukhera2019empathy}. These skewed assessments in turn impact employment \citep{bertrand2004emily, bohnet2016performance, uhlmann2005constructed}. 
We want to design systemic mitigation strategies which relieve individuals of the difficult task of giving an unbiased assessment, while still leading towards a fairer outcome.

In this work, we choose to focus solely on one of the key aspects of  unconscious bias: affinity (or similarity) bias \citep{huang2019women,oberai2018unconscious, russell2019recognizing, clifton2019mathematical}. This bias captures the human tendency to favor people who are similar to ourselves, whether it's because of our skill-set, our language, or even the school we attended. We are particularly interested in the feedback loop which naturally arises in hiring: today's hired candidate will be part of tomorrow's hiring committee. Therefore, the more people with a specific set of attributes are hired, the higher the \textit{proportion} of them in future decision processes, which means the stronger the overall affinity bias will be towards this set of attributes. %

These types of decision-making processes with feedback loops have been modeled by non-stationary multi-armed bandits \citep{G79,W88,HKR16,LCM17,MLS22,MILS23,IK18}. 
In this framework, the decision-maker can interact with the system by pulling an arm, and the system can react by adapting its reward based on the past actions. Prior work has studied both stochastic systems and adversarial systems. As we are interested in modeling an ever-present unconscious effect--as opposed to conscious, chosen discriminatory actions--we assume the system reacts in a stochastic way.

One key feature of our problem is that although the perceived rewards evolve, the real reward of each arm remains unchanged. This is in stark contrast to most of the non-stationary bandits literature, in which it is assumed that previous actions change the environment. In our case, the environment remains unchanged, but as the composition of the hiring committee changes, so does its overall unconscious bias, which leads to variations in the observed rewards. While the decision-maker only observes biased rewards, they want to optimize their actions with respect to the real rewards, which are never observed. 
The other key feature of our problem is that the biased feedback depends not only on the past actions (which has been studied in \citep{TH19,GCG22,SLMD22}), but also on time $t$, since what matters is the \textit{fraction} of times this arm has been selected. Indeed, when someone from a given group (defined as a set of attributes) is hired, not only does this group's relative importance increases, but all the other groups become proportionally less represented. 

To gain insights into the effects and mitigation strategies of affinity bias in hiring, %
we propose a non-stationary bandit setting, in which each arm represents a group of people exhibiting the same set of traits. On each turn, the hiring committee picks an arm, which represents hiring someone with that set of traits. They then observe only the potentially biased feedback of this arm, which relates to the real reward in the following informal way: %
\textit{the observed rewards (biased feedback) follow the same distribution as the real rewards, except that the average of each arm $i$ is multiplicatively reweighed by $W_i(t)$, which is expressed as a function\footnote{The function $f$ models the unknown relation between the amount of bias and size of the affinity group. The choice of $f(\cdot) = 1$ models an unbiased system.} $f$ of the fraction of time $i$ has been pulled at time $t$, and the initial bias of the system.}

Trivially, if the decision-makers do not gain information about the real reward from the perceived reward, it is impossible for them to minimize the regret with respect to the real reward. We therefore assume that while the hiring committee knows neither the exact function $f$, the initial bias, nor the real reward, it does know that the perceived reward depends on the real reward in the way expressed above. 
It is important to note that unconscious bias is---by definition---shaping our judgment beyond our awareness. {\color{black} As such, no additional observations of the candidate over the bandit decision-making time-scale after their hiring would help reveal their true value: the bias comes from the assessor’s  perception, which will potentially take a longer time-scale to overcome.} We now move on to our main contributions:

\subsection{Main contributions}
\textbf{Affinity bandits model.} We introduce a new variant of non-stationary multi-arm bandits called \textit{affinity bandits}, for which we only observe evolving biased feedback. This biased feedback varies based on the fraction of times each arm has been selected in the past, while the real unobserved reward of each arm remain unchanged. Unsurprisingly, adding this feedback loop makes traditional algorithms (such as UCB or EXP3) incur linear regret.

\textbf{New lower bound through new techniques.} We prove this setting is inherently harder than the standard setting by obtaining a lower bound on the regret. This bound holds even in the full information setting, when the exact bias is known, and therefore \textit{results only from the feedback loop effect} (and not, for example, from the lack of information). Compared to the standard regret bound, the regret in our setting incurs at least a multiplicative factor which depends on the total number of groups. We emphasize that the proof of this lower bound requires several new ideas beyond the standard regret lower bound techniques.

\textbf{Near optimal algorithm.} We provide an algorithm that attains logarithmic regret, and nearly matches the lower bound. Interestingly, this is a variant of the elimination algorithm, which keeps a set of potentially optimal arms, and play them one after the other until it is certain that it can eliminate some of them. We therefore prove that to compensate for unconscious bias, the strategy which gives a chance to everyone one after the other until enough information is gathered is almost optimal.

\subsection{Related works}

\begin{figure*}
\centering
\begin{minipage}[b]{.5\textwidth}
     \centering
     \includegraphics[width=\textwidth]{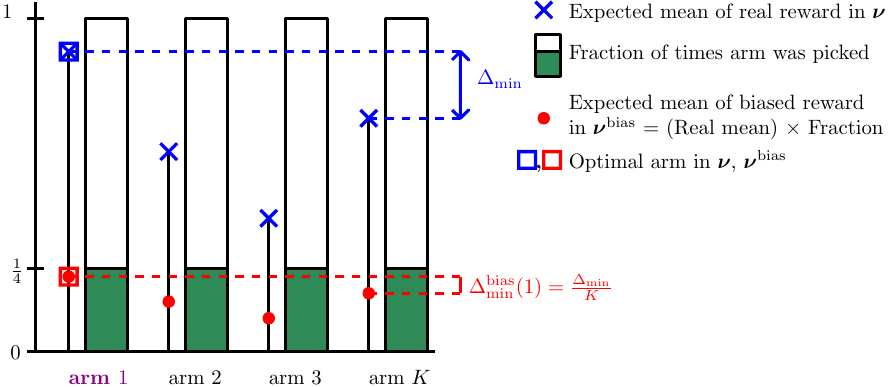}
     \caption{Representation of our setting when each arm has been picked exactly once. The expected biased feedback is the expected real reward divided by $K$. The ordering of the observed rewards is identical to that of the real rewards, but the suboptimality gaps are divided by $K$.}
     \label{fig:biasModel1}
\end{minipage}\qquad
\begin{minipage}[b]{.4\textwidth}
     \centering
     \includegraphics[width=.8\textwidth]{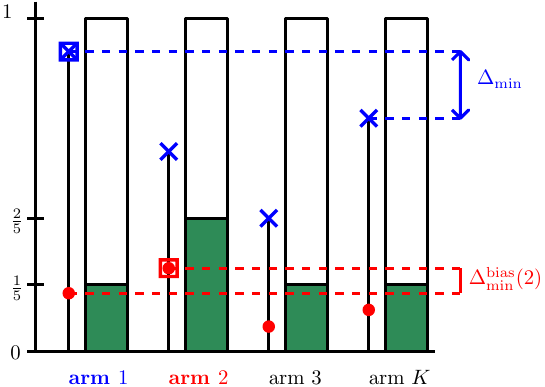}
     \caption{From the setting in \cref{fig:biasModel1}, we picked arm 2. The biased feedback for arm 2 now appears better than the one for arm 1, the real best arm. Moreover, while the fraction for arm 2 increases, the fraction for all the other arm decreases.}
     \label{fig:biasModel2}
\end{minipage}
\end{figure*}

Aside from the tight connections with ``non-stationary bandits'' and ``history-dependent biased bandits'' mentioned above, our work is linked to a few other lines of research (see \cref{sec:app:extendedRW} for an extended discussion). There is a rich history of fairness-related work with bandits \citep{JKMR16,liu2017calibrated,GJKR18,KZA21,WBSJ21}, in which the goal is to minimize regret while satisfying some fairness constraints. Our setting could also be seen as a special case of partial monitoring \citep{R99,BFPRS14,LS19,BPS11,bar2024non} with adversarial feedback. However, their regret guarantees do not transfer meaningfully to our setting (see Appendix \ref{sec:partialMonitoringAppendix} for details). Finally, our work build on techniques for bandit lower bound, in particular asymptotic instance-dependent techniques \citep{LR85,BK96} and the framework to obtain bounds based on divergence decomposition \citep{GMS19,KCG16}. 
Our work generalizes this framework to handle the challenging setting where observed feedback is time-varying and dependent on the decisions of a policy which potentially knows the bias model exactly.

\section{Problem Setting}
\label{sec:psetting}

We consider a variant of the $K$-armed stochastic multi-armed bandit problem where each arm $i\in [K]$ represents a group of people exhibiting the same set of traits relevant for the hiring task, e.g. skill-set. This arm is associated with a distribution $\nu_i$ with finite, unknown mean $\mu_i$. A bandit policy $\pi$ interacts with (a transformation of) this environment $\nuunbiased = (\nu_i)_{i\in [K]}$ over $n$ time steps. At each time $t\leq n$, the policy first selects an arm $A_t \in [K]$, then observes stochastic feedback $Y_t$. The objective of a policy $\pi$ is to minimize the (pseudo-)regret:
\begin{align}\label{eq:regretDef}
    \regret{n} 
    = \max_{i^*\in[K]}\EXP{\sum_{t\in [n]} X_{i^*,t} - X_{A_t,t}}
    &= \sum_{i\in[K]} \Delta_i\EXP{T_i(n)},
\end{align}
where each $X_{i,t} \sim \nu_i$ is an (unobserved) sample from arm $i$'s associated distribution, $\Delta_i = \max_{i^*\in[K]}\mu_{i^*} - \mu_i$ is the suboptimality gap of arm $i$, and $T_i(n) = \sum_{t=1}^n \1{A_t = i}$ is the number of times arm $i$ is played from time $1$ to $n$. As this objective depends on the \emph{unobserved} $X_{i,t}$, not the feedback $Y_t$, achieving sublinear regret is not possible without an assumption on the feedback. We aim to model feedback systems with the following features:
\begin{enumerate}
    \item The system has an initial, perhaps misleading, affinity for each arm.
    \item Pulling an arm increases the system's affinity towards that arm.
    \item Pulling an arm slightly decreases the system's affinity towards other arms\footnote{This represents the relative affinity of a group slightly decreasing when the size of the hiring committee increases without the size of the group increasing.}.
\end{enumerate}
We adopt a fairly general model on the feedback $Y_t$ capturing the essence of these features:
\begin{assumption}[Feedback model]\label{assump:multBias}
    At each time $t$, upon pulling an arm $A_t \in [K]$, the policy observes feedback $Y_t$ sampled from a distribution satisfying:
    \begin{equation}\label{eq:biasModel}
    \begin{aligned}
        \EXP{Y_{t} \mid \F_{t-1}} 
        &= \mu_{A_t} \mbiast
        \quad\text{and}\quad
        \parens{Y_t - \EXP{Y_t \mid \F_{t-1}}} &\text{ is $1$-subGuassian},
    \end{aligned}
    \end{equation}
    where $\F_t$ is the filtration of observations $(A_s,Y_s)_{s\leq t}$ until $t$, and $\mbiast[i]$ is a multiplicative reweighting of arm $i$'s mean $\mu_i$. We assume this multiplicative reweighting satisfies:
    \begin{align}
        \mbiast[i] 
        \triangleq \fbiasat{i}{\frac{\initialbias_i + T_i(t-1)}{\totalinitialbias + t-1}} 
        \triangleq \fbiast[i]
    \end{align}
    for some $\initialbias_i \geq 1$, $\totalinitialbias = \sum_{i\in[K]}\initialbias_i$, and function $\fbiasat{i}{x}$ which is bounded on $(0,1]$, non-decreasing, and $L$-Lipschitz for $x\in (0,1]$. In other words, the reweighting $\mbiast[i]$ is a function of the total fraction of times arm $i$ has been pulled.
\end{assumption}

\textbf{Important features of feedback model:}

\textbf{(i) Generalizes subGaussian bandits.} Our setting subsumes the standard subGaussian bandit setting. Indeed, notice that $\fbiasat{i}{x} = 1$ and $Y_t = X_{A_t, t}$ where $X_{i,t}-\mu_i$ is $1$-subGaussian satisfies \cref{assump:multBias}.

\textbf{(ii) Admits polynomial bias functions.} Our setting allows the mean of $Y_t$ to scale with the fraction of times the selected arm $A_t$ has been played, or indeed any bounded polynomial of this fraction. More precisely, for any $\alpha \geq 1$, our model captures $\fbiasat{i}{x} = x^\alpha$ since this choice is $\alpha$-Lipschitz, increasing, and bounded in $[0,1]$ for $x\in[0,1]$. 

\textbf{(iii) Extends beyond polynomial biases.} Our assumptions on $\fbiasat{i}{x}$ are more general than simply functions of the form $x^\alpha$. For example, the sigmoid function $\fbiasat{i}{x} = (1 + \exp(-x))^{-1}$ is $\nicefrac{1}{4}$-Lipschitz, increasing, and bounded between $[\nicefrac{1}{2},1)$ for $x\in [0,1]$. Further, for any function $\fbiasat{i}{x}$ satisfying \cref{assump:multBias}, the functions $\min\braces{c_1, \fbiasat{i}{x}}$ and $\max\braces{c_2, \fbiasat{i}{x}}$ also satisfy \cref{assump:multBias} for any $c_1,c_2\in [0,1].$

\textbf{(iv) Allows additive, multiplicative, and random reward transformations.} Concretely, if $X_{i,t} - \mu_i$ is $1$-subGaussian, then all of the following choices of feedback $Y_t$ satisfy \cref{assump:multBias}: (a) $Y_t = \mbiast[A_t]X_{A_t,t}$, (b) $Y_t = X_{A_t,t} + \mu_{A_t}(\mbiast[i] - 1)$, and (c) $Y_t = B_t X_{A_t,t}$ where $B_t$ is Bernoulli with mean $\mbiast[i]$ (conditionally) independent of $X_{A_t,t}$.

\textbf{(v) Allows dependence on initial biases.} The parameters $\initialbias_i$ correspond to the ``initial bias'' of the feedback system. These parameters can, for instance, make initial feedback for the optimal arm appear very small, and initial feedback for a suboptimal arm appear very large. Indeed, if $\initialbias_1 \gg \initialbias_2$ and $\mbiast[i] = \fract[i]$, then on average, the feedback for arm $1$ will initially appear larger than that of arm $2$, even if $\mu_2 > \mu_1$.

\section{Why is the problem difficult?}
\label{sec:whyHard}

\begin{figure*}
\centering
\begin{minipage}[b]{.45\textwidth}
     \centering
     \includegraphics[width=\textwidth]{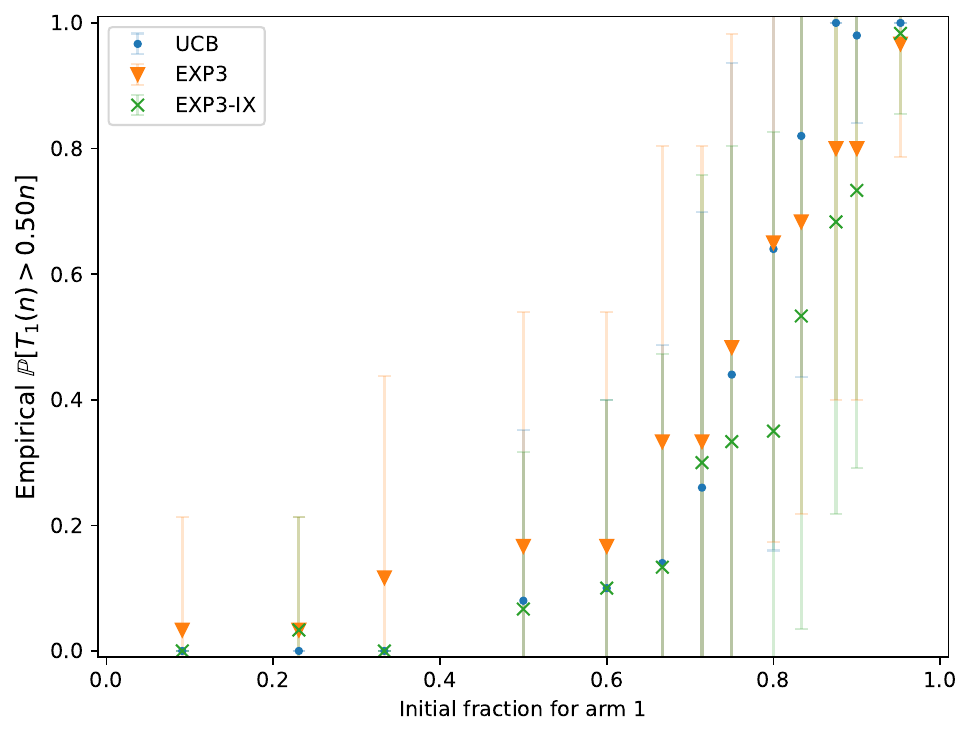}
     \caption{Empirical probability of the suboptimal arm being pulled by more than $\nicefrac{1}{2}$ of the time horizon as a function of the initial bias. We show results for UCB, EXP3 and EXP3-IX. For high initial weight on the suboptimal arm, all three algorithms are more likely to pull it more than the optimal arm. Moreover, even for high weight on the optimal arm, the probability that the suboptimal arm is pulled more than the optimal arm can be bounded away from 0.}
     \label{fig:AlgsProbFail}
\end{minipage}\qquad
\begin{minipage}[b]{.45\textwidth}
     \centering
     \includegraphics[width=\textwidth]{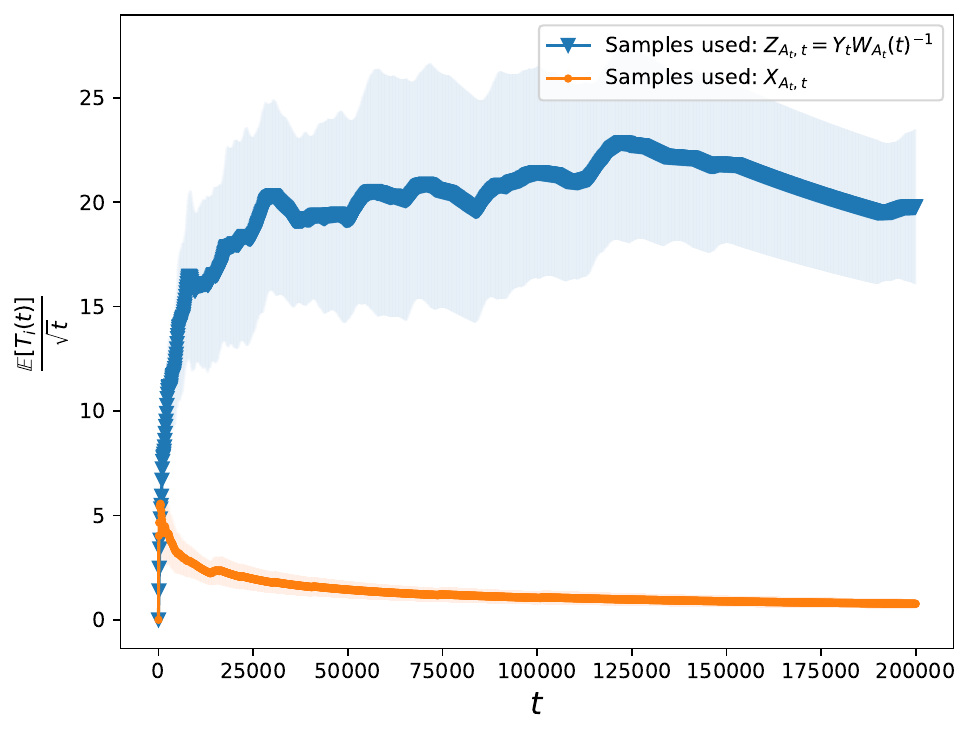}
     \caption{The number of times the suboptimal arm is pulled as a function of time for UCB-V\protect\footnotemark  in two environments, normalized by $\sqrt{t}$. In the first, UCB-V receives the true rewards $X_{A_t,t}$ as samples. In the second environment, UCB-V receives ``debiased'' feedback $Y_t\mbiast^{-1}$ as samples. While we cannot conclude whether the regret of UCB-V grows as $\sqrt{t}$ from this graph, it is  unlikely it grows as $\log(t)$.}
     \label{fig:debiasedUCB}
\end{minipage}
\end{figure*}

\textbf{Ignoring the bias leads to linear regret.}
One may wonder if na\"ively ignoring the bias model and running a standard bandit algorithm such as UCB or EXP3 could still achieve sublinear regret in this setting. Unfortunately, these algorithms suffer linear regret. Indeed, we prove in \cref{thm:ucbLinearRegretAppendix} that UCB suffers linear regret on a simple 2-armed Bernoulli bandit instance with constant suboptimality gaps. Empirically, we observe this same phenomenon for UCB \citep{ACF02}, EXP3 \citep{ACFS95}, and EXP3-IX \citep{KNVM14} in \cref{fig:AlgsProbFail}\footnote{We report the results for a 2-armed Bernoulli bandit environment with $\mu_1 = .4 < .6 = \mu_2$ and with bias model $\mbiast[i] = \nicefract[i]$. Each datapoint is the average of 60 repeats with time horizon $n=2\cdot 10^4$, $\Tbias_2 = 10$, and $\Tbias_1$ varying from $1$ and $200$.}. 

At a high level, these algorithms fail because the bias
structure can make a suboptimal arm appear empirically optimal at early time steps. These standard algorithms will therefore start by favoring the suboptimal arm in the early stages. 
However, the more an arm is played, the better its observed mean appear, despite its real mean remaining unchanged. This leads the algorithms to continue to select the suboptimal arm, incurring
linear regret and even failing at best-arm identification with constant
probability. See \cref{sec:experimentDetailsAppendix} for more details and comparisons to other UCB variants.

\textbf{Exploding variance when ``unbiasing'' the feedback.}
\footnotetext{We remark that the standard UCB-V algorithm assumes all rewards are bounded on the interval $[0,b]$ for a known constant $b$. However, our implementation debiases the feedback $Y_t$ by obtaining unbiased estimates of the true rewards, $Z_{t,A_t} = Y_t \mbiast^{-1}$, which has unbounded support. Our implementation of this algorithm adaptively estimates an upper bound for $Z_{t,A_t}$ based on the observed samples. See \cref{sec:experimentDetailsAppendix} for details.}
Even if the learner knew the feedback model exactly, it could obtain unbiased samples from the true reward distributions by multiplying the observed feedback by the inverse reweighting $\mbiast^{-1}$. However, this operation scales up the variance by $\mbiast^{-2} = \nicefbiast^{-2}$ by \cref{assump:multBias}. Thus, for arms which have been played infrequently, obtaining an unbiased sample comes at the cost of potentially large variance (since $\fbiasat{i}{x}$ is nondecreasing in $x$). The fact that the variance (as well as, potentially, the support of the debiased samples) is time-varying and can potentially scale polynomially in the time horizon invalidates or trivializes standard regret guarantees for many bandit algorithms (e.g., UCB-V \citep{AMS07} and EXP3 \citep{ACFS95}). Since it rescales the feedback by $\mbiast^{-1}$ to obtain unbiased samples, UCB-V has significantly larger regret scaling and deviations than in the standard, unbiased stochastic feedback setting, as can be seen on \cref{fig:debiasedUCB}\footnote{One can observe the impact of this scaling issue on a 2-armed Bernoulli instance, with $\mu_1 = .4 < .6 = \mu_2$ and bias model $\mbiast[i] = \nicefract[i]$. We show 40 sample paths for $n=2\cdot 10^5$ time steps. $\Tbias_1 = 100$, $\Tbias_2=10$.}.

\textbf{Lower bound for \textit{known} bias model but upper bound for \textit{unknown}.}
One notable feature of our lower bound is that it holds even when the bias model $\fbiasat{i}{\cdot}$ and initial biases $\initialbias_i$ from \cref{assump:multBias} are known exactly to an algorithm. Recall, however, that we aim to design an algorithm for settings where the bias model and initial biases are unknown. \cref{thm:lowerBoundRegret} thus gives us an ambitious (yet, as we show in \cref{thm:regretConstSuboptimalityGaps} and \cref{cor:comparisonUpperLowerBound}, nearly-tight) regret scaling target.

\section{Regret upper-bounds for unknown bias model}
\label{sec:upperBound}

\begin{algorithm}[tb]
\caption{Elimination algorithm for unknown bias model}\label{alg:eliminationUnknownBias}
    \begin{algorithmic}
        \REQUIRE Time horizon $n\in\N$, sampling schedule $\mr{r} \approx
        \nicefrac{\log(n)}{\tDel_r^2}$, where $\tDel_r = 2^{-r}.$ 
        \STATE Let $\tau_0=0$, $t=1$ and $\A_1 = [K]$
        \FOR{$r = 1,2,\ldots$}
            \FOR{$\ell \in [|\mr{r}|]$, $i\in \A_r$ in increasing order of index}
                \STATE Pull arm $i$, receive feedback $Y_t$, update
                $t\leftarrow t+1$.
            \ENDFOR
            \STATE Compute $\hmu_i(r)$, the empirical average of the feedback for arm $i$ observed during round $r$
            \STATE Update active arms:
                \STATE $\A_{r + 1} = \braces{i \in \A_r : \max_{j\in \A_r} \hmu_{j}(r) -
                \hmu_{i}(r) \leq \tDel_r}$
            \STATE Mark $\tau_r$ as the end time of round $r$
        \ENDFOR
    \end{algorithmic}
\end{algorithm}

Here, we study the phased-elimination style algorithm (essentially the algorithm
from \citep{AO10}) described in \cref{alg:eliminationUnknownBias}.
We show that, even when the bias model
$\fbiasat{}{\nicefract[i]}$ is unknown to the algorithm, logarithmic instance-dependent regret
bounds are possible, assuming the time horizon is known and sufficiently large.

Before establishing the regret guarantee, let us first give some intuition for
why the algorithm should work. \cref{alg:eliminationUnknownBias} proceeds in
rounds $r\geq 1$. At each round, the objective of the algorithm is to eliminate
all arms whose average feedback is smaller than the largest average feedback by
an additive factor of $\tDel_r = 2^{-r}$.
The main challenge is to guarantee that this rule, with sufficiently high probability, eliminates only the suboptimal arms. Notice that, for any two arms $i_*$ and $i$,
\begin{align}
    \EXP{\hmu_{i_*}(r) - \hmu_i(r) \mid \F_{r-1}}
    &= \mu_{i_*}\mbiasbar[i_*]{r} - \mu_i\mbiasbar[i]{r}\nonumber\\
    &= (\mu_{i_*} - \mu_i)\mbiasbar[i_*]{r}\label{eq:empiricalGapDecompOne} \\
    &\quad+ \mu_i(\mbiasbar[i_*]{r} - \mbiasbar[i]{r}),\label{eq:empiricalGapDecompTwo}
\end{align}
where in the above, we employ a slight abuse of notation by taking $\F_{r}$ to be the filtration of observations until the end of round $r$, and $\mbiasbar[i]{r} = \nicefrac{1}{\mr{r}}\sum_{t=\tau_{r-1}+1}^{\tau_r} \mbiast[i]\1{A_t=i}$ is the average reweighting of arm $i$ over round $r$.

The first term in the above decomposition \eqref{eq:empiricalGapDecompOne} is essentially a reweighted suboptimality gap between arms $i_*$ and $i$ with the same sign as the difference in true means between these two arms. The second term \eqref{eq:empiricalGapDecompTwo}, however, is a non-zero bias term which can be positive or negative, and could potentially cause the optimal arm to be eliminated. Fortunately, one can show (see \cref{lem:eliminationSuboptimalityGaps}) that, under \cref{assump:multBias}, that \eqref{eq:empiricalGapDecompTwo} is bounded by:
\begin{align*}
    |\mu_i (\mbiasbar[i_*]{r} - \mbiasbar[i]{r})| \lesssim \tDel_r^2 L(1 + \frac{\initialbias_{\max} - \initialbias_{\min}}{K})\frac{\log(\log(n)}{\log(n)}
\end{align*}
where $\initialbias_{\max}-\initialbias_{\min}$ is the gap between the largest and smallest initial number of arm pulls. This establishes that, for sufficiently large $n$, the bias term \eqref{eq:empiricalGapDecompTwo} scales as $\tDel_r^2 \ll \tDel_r$, and hence is negligible relative to the elimination criterion of \cref{alg:eliminationUnknownBias}. This is the key insight to proving that the algorithm achieves a sublinear regret guarantee. In particular, we establish the following:

\begin{restatable}[Regret guarantee for \cref{alg:eliminationUnknownBias};
    Simplified version of \cref{thm:eliminationUnknownBiasAppendix} and
    \cref{cor:regretConstSuboptimalityGaps}]{thm}{restateThmRegretConstSuboptimalityGaps}\label{thm:regretConstSuboptimalityGaps}
    Suppose that \cref{alg:eliminationUnknownBias} is run for $n$ time-steps in
    an environment $\nuunbiased$ with bias model satisfying \cref{assump:multBias} with Lipschitz constant $L$ and $\mu_i \in [0,1]$ for
    all $i\in [K]$, using the sampling schedule $\mr{r} = 2^{2r +
    6}\log(\frac{12}{\pi^2} K^2 r^2 n)$. Further, suppose $n$ is sufficiently large
    such that $\nicefrac{\log(nK)}{\log(\log(nK))} \gtrsim L(1 +
    (\nicefrac{\initialbias_{\max} - \initialbias_{\min}}{K}))$,
    and $\initialbias_{\max} \lesssim \log(Kn)$.
    Then, the regret of \cref{alg:eliminationUnknownBias}
    satisfies the following two bounds:
    \begin{align*}
        \regret{n} \lesssim \fbiasat{i}{\nicefrac{1}{15 K}}^{-2} \sum\nolimits_{i : \Delta_i > 0}
        \nicefrac{\log(n)}{\Delta_i}
        \quad\text{and}\quad
        \regret{n} \lesssim \fbiasat{}{\nicefrac{1}{15 K}}^{-1}\sqrt{K
        n\log(n)}.
    \end{align*}
\end{restatable}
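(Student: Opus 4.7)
The plan is to adapt the classical phased-elimination analysis to the time-varying bias by exploiting the decomposition \eqref{eq:empiricalGapDecompOne}--\eqref{eq:empiricalGapDecompTwo}: the signed-gap term $(\mu_{i_*} - \mu_i)\mbiasbar[i_*]{r}$ behaves like a standard reweighted suboptimality gap, while the bias term is negligible (of order $\tDel_r^2$) under the assumed scaling of $n$. Lower-bounding $\mbiasbar[i_*]{r}$ by $\fbiasat{i_*}{1/(15K)}$ then produces the $\fbiasat{i}{1/(15K)}^{-2}$ factor in the final regret bound.

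First, I would establish a ``good event'' on which, for every round $r$ and every $i \in \A_r$, the empirical mean satisfies $|\hmu_i(r) - \EXP{\hmu_i(r)\mid \F_{r-1}}| \leq \tDel_r/8$. Because the increments $Y_t - \mu_{A_t}\mbiast$ are $1$-subGaussian conditionally on $\F_{t-1}$ by \cref{assump:multBias}, an Azuma--Hoeffding bound for martingale differences yields deviations of order $\sqrt{\log(K^2 r^2 n)/\mr{r}}$, and the chosen schedule $\mr{r} = 2^{2r+6}\log(12K^2r^2n/\pi^2)$ makes this at most $\tDel_r/8$. Union-bounding over $i\in[K]$ and $r\geq 1$ keeps the good-event probability at least $1 - 1/n$, so the bad-event contribution to the expected regret is $O(1)$.

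On the good event, I would prove by induction on $r$ that (i) $i_* \in \A_{r+1}$ and (ii) every $i\in \A_{r+1}$ satisfies $\Delta_i \lesssim \tDel_r\, \fbiasat{i}{1/(15K)}^{-1}$. For (i), \eqref{eq:empiricalGapDecompOne}--\eqref{eq:empiricalGapDecompTwo} together with concentration give $\hmu_j(r) - \hmu_{i_*}(r) \leq (\mu_j - \mu_{i_*})\mbiasbar[i_*]{r} + \mu_j(\mbiasbar[j]{r} - \mbiasbar[i_*]{r}) + \tDel_r/4$; the first summand is non-positive and the second is $O(\tDel_r^2)$ by the bias bound of \cref{lem:eliminationSuboptimalityGaps} (valid under the hypothesis $\log(nK)/\log\log(nK) \gtrsim L(1 + (\initialbias_{\max}-\initialbias_{\min})/K)$), so the total is at most $\tDel_r$ and $i_*$ survives. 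For (ii), symmetrically, $\hmu_{i_*}(r) - \hmu_i(r) \geq \Delta_i \mbiasbar[i_*]{r} - O(\tDel_r^2) - \tDel_r/4$, which exceeds $\tDel_r$ whenever $\Delta_i$ is a sufficient constant multiple of $\tDel_r \fbiasat{i_*}{1/(15K)}^{-1}$. Letting $r_i$ denote the last round in which arm $i$ is active, invariant (ii) gives $\mr{r_i} \lesssim \log(n)/(\Delta_i^2\fbiasat{i}{1/(15K)}^2)$; since $\mr{r}$ is geometric, $T_i(n) \leq \sum_{r\leq r_i}\mr{r} \lesssim \mr{r_i}$, and summing $\Delta_i T_i(n)$ gives the first, instance-dependent bound. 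The worst-case bound follows by splitting arms at the threshold $\Delta \asymp \sqrt{K\log(n)/n}\,\fbiasat{i}{1/(15K)}^{-1}$ and bounding the small-gap regret by $\Delta n$.

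The main obstacle is the lower bound $\mbiasbar[i_*]{r} \geq \fbiasat{i_*}{1/(15K)}$, which reduces to controlling $\fract[i_*]$ uniformly over $t$ in round $r$. Since $i_*$ is always active, $T_{i_*}(\tau_{r-1}) = \sum_{s<r}\mr{s}$ and $\tau_{r-1} \leq K\sum_{s<r}\mr{s}$; combined with the hypotheses $\totalinitialbias \lesssim K\log(Kn)$ and $\mr{1} \gtrsim \log(Kn)$ (baked into the schedule), this yields $\fract[i_*] \geq 1/(15K)$ throughout each round $r\geq 2$, with the constant $15$ absorbing slack from the initial biases and from intra-round pulls of $i_*$. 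The first round can transiently violate this bound, but its total regret is only $O(K\mr{1}) = O(K\log(Kn))$ and is thus absorbed by the stated bounds. The other delicate ingredient is the bias bound $|\mu_i(\mbiasbar[i_*]{r} - \mbiasbar[i]{r})| \lesssim \tDel_r^2$ of \cref{lem:eliminationSuboptimalityGaps}, which relies on the Lipschitz hypothesis on $f$ together with round-$r$ pull counts; assuming this lemma, the remainder of the argument is routine.
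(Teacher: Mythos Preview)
Your proposal is correct and follows essentially the same approach as the paper: the same decomposition \eqref{eq:empiricalGapDecompOne}--\eqref{eq:empiricalGapDecompTwo}, the same bias control via \cref{lem:eliminationSuboptimalityGaps}, the same subGaussian concentration, and the same fraction-of-pulls computation to obtain $\fract[i_*]\geq 1/(15K)$ for rounds $r\geq 2$. The only organizational difference is that the paper introduces deterministic ``upper confidence sets'' $\Sidealupper{r}$ (\cref{def:activeArmConfSets}) and proves $\A_r\subseteq \Sidealupper{r}$ with high probability, then lower-bounds the reweighting via the worst-case $\fbiasminroundi{i}{r}$ computed against these sets, whereas you argue directly by induction on the random sets $\A_r$; both routes lead to the same arm-pull bound and the same handling of rounds $r\leq 2$ as an additive $O(K\log(Kn))$ term.
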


\section{Asymptotic instance-dependent lower bound}
\label{sec:lowerBound}

To characterize the fundamental difficulties of our problem setting, we derive instance-dependent lower bounds on the performance of any ``consistent'' bandit policy. Our notion of consistency in \cref{def:consistentPolicy} is a finite-time adaptation of similar (asymptotic) notions of consistency from the bandits literature (e.g., \citep[Eq. (1.8)]{LR85} and \citep[UF Policy]{BK96}).

\begin{restatable}[Consistent policy]{defi}{restateConsistentPolicyDef}\label{def:consistentPolicy}
    Let $\E$ be a set of unbiased bandit environments $\nuunbiased$ with bias model following \cref{assump:multBias} with a \emph{fixed and common} set of initial biases $\braces{\initialbias_i}_{i\in [K]}$ and reweighting function $\mbiast[i] = \nicefbiast[i]$.
    We call a family of bandit policies $\braces{\pi_n}_{n\geq 1}$ \emph{consistent} for an environment class $\E$ (with its associated bias model) if there are constants\footnote{ Note that this constant $C$ may depend on the common environment parameters of $\E$ such as $K$, the initial biases $\initialbias_i$, and the bias function $\fbiasat{i}{\cdot}$.
    } $C > 0$ and $a \in (0,1)$ such that, for all $n\geq 1$ and $\nuunbiased \in \E$, the regret of policy $\pi_n$ is bounded as:
    $\regretPi{\nuunbiased,\pi_n}{n} \leq C\cdot n^a$.
    
    When the dependence of $\pi_n$ on $n$ is clear from context, we adopt a slight abuse of notation and call the policy $\pi$ consistent.
\end{restatable}

Before continuing, we emphasize a couple key features of our notion of consistency, which differ slightly from those in prior literature. Consistent policies $\pi_n$ (as per \cref{def:consistentPolicy}) may (potentially) know both the time horizon and bias model \emph{exactly}. (i) The fact that the policy may know the time horizon and satisfy consistency is crucial -- indeed, it is unclear if there exists \emph{any} policy with sublinear regret when the time horizon is unknown.\footnote{One might hope to apply the standard ``doubling trick'' \citep{ACFS95} used to convert an algorithm which knows the time horizon to one which do not (while essentially preserving the regret guarantee of the known time-horizon algorithm). Unfortunately, this trick does not apply to our setting, since the feedback observed by the algorithm depends on the entire observation history.} (ii) Moreover, the fact that the lower bound holds against policies which know the bias model exactly makes the guarantee quite strong, and reflects the fact that our setting is fundamentally harder than a standard stochastic bandits problem, even when the bias model is exactly known. (iii) Finally, we emphasize that \cref{def:consistentPolicy} is non-vacuous, as \emph{there exist policies satisfying the definition} (e.g, \cref{alg:eliminationUnknownBias} with regret guarantee in \cref{thm:regretConstSuboptimalityGaps}).

We state the following lower bound and proof sketch under the assumption that $\nicefrac{\delMax}{\delMin} = \O(1)$ for simplicity of exposition. For the more general statement under less restrictive conditions, see the more general statement and proofs in \cref{sec:lowerBoundAppendix}.

\begin{restatable}[Informal statement of \cref{thm:lowerBoundRegretAppendix}]{thm}{restateLowerBoundRegret}\label{thm:lowerBoundRegret}
    Fix any $K>1$, time horizon $n$, and bias model satisfying \cref{assump:multBias}. Let $\pi$ be a consistent policy for the class of Gaussian environments with suboptimality gaps $\Delta_i \leq 1$ per \cref{def:consistentPolicy}. Then, for any such environment $\nuunbiased$ for which $\nicefrac{\delMax}{\delMin} = \O(1)$ and $-\log(\fbiasat{i}{\nicefrac{4}{K}}) = \O(\log(K))$, the policy $\pi$ must suffer regret at least:
    \begin{align*}
        \regret{n} 
        \gtrsim \fbiasat{}{\O(\nicefrac{\log(K)}{K})}^{-2}\sum\nolimits_{i : \Delta_i > 0} \nicefrac{\log(n)}{\Delta_i},
    \end{align*}
\end{restatable}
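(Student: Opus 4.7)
The plan is to adapt the classical information-theoretic lower bound framework---change-of-measure combined with divergence decomposition and Bretagnolle--Huber, in the style of \citep{LR85,BK96,GMS19,KCG16}---to handle the history-dependent bias. For a fixed suboptimal arm $i$ I would introduce an alternative Gaussian environment $\nuunbiased'$ sharing the same bias model as $\nuunbiased$ (so that the consistency assumption applies to both), with arm $i$'s mean shifted up to $\max_j \mu_j + \delta$ for a small $\delta>0$, making $i$ uniquely optimal in $\nuunbiased'$; the hypothesis $\delMax/\delMin = O(1)$ ensures every other arm remains suboptimal with gap of order $\Delta_i$ in $\nuunbiased'$. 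Because $\mbiast[i]$ is $\F_{t-1}$-measurable and \emph{identical} in both environments conditionally on the history, the standard chain-rule argument for Gaussian log-likelihood ratios collapses to
\begin{equation*}
    \kl{\mathbb{P}_{\nuunbiased}^n}{\mathbb{P}_{\nuunbiased'}^n}
    \;=\; \frac{(\Delta_i + \delta)^2}{2}\, \mathbb{E}_{\nuunbiased}\!\left[\sum_{t : A_t = i} \mbiast[i]^2\right].
\end{equation*}
Applying Bretagnolle--Huber with the event $\{T_i(n) > n/2\}$, and bounding $\mathbb{P}_{\nuunbiased}(T_i(n) > n/2) = O(n^{a-1}/\Delta_i)$ and $\mathbb{P}_{\nuunbiased'}(T_i(n) \le n/2) = O(Kn^{a-1}/\delMin)$ via Markov applied to the consistency regret estimates in each environment, then forces $\kl{\mathbb{P}_{\nuunbiased}^n}{\mathbb{P}_{\nuunbiased'}^n} \gtrsim (1-a)\log n - O(\log K)$.

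The crucial new ingredient is an upper bound of the form $\mathbb{E}_{\nuunbiased}[\sum_{t:A_t=i}\mbiast[i]^2] \lesssim \fbiasat{}{O(\log K / K)}^2 \cdot \mathbb{E}_{\nuunbiased}[T_i(n)] + o(\log n)$, which upon inversion produces the advertised $\fbiasat{}{O(\log K /K)}^{-2}$ factor. Re-indexing the sum by the rank $k$ of the pull of arm $i$, writing $t_k$ for the corresponding time step, the inequality $t_k \geq k$ together with monotonicity of $f$ gives the deterministic pointwise bound
\begin{equation*}
    \mbiasat[i]{t_k}^2 \;\leq\; f\!\left(\frac{\initialbias_i + k-1}{\totalinitialbias + k-1}\right)^{\!2}.
\end{equation*}
In the regime $\initialbias_i = O(1)$, $\totalinitialbias = \Theta(K)$---which is essentially what the hypothesis $-\log \fbiasat{}{4/K} = O(\log K)$ allows one to encode---this deterministic bound is already at most $\fbiasat{}{O(\log K / K)}^2$ for every $k \lesssim \log K$. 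To handle larger $k$, I would introduce the stopping time $\tau = \inf\{t \le n : A_t = i,\; \mbiast[i]^2 > C \fbiasat{}{\log K/K}^2\}$ and split the sum accordingly: on $\{\tau > n\}$ every summand is bounded by $\fbiasat{}{O(\log K / K)}^2$, giving $\sum \mbiast[i]^2 \leq T_i(n)\, \fbiasat{}{O(\log K / K)}^2$ on this event; while on $\{\tau \leq n\}$ the definition of $\tau$ forces $T_i(\tau -1)$ to be at least a $\Theta(\log K / K)$ fraction of $\totalinitialbias + \tau - 1$, an event which (combined with the consistency estimate $\mathbb{E}_{\nuunbiased}[T_i(n)] = O(n^a/\Delta_i)$) I would show carries only negligible probability and contributes a lower-order term to the expectation.

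Combining the two steps and inverting yields $\mathbb{E}_{\nuunbiased}[T_i(n)] \gtrsim \log n / (\Delta_i^2\, \fbiasat{}{O(\log K / K)}^2)$; multiplying by $\Delta_i$ and summing over suboptimal arms yields the stated regret bound. The main obstacle is the $\{\tau \leq n\}$ case: the trivial substitute $\mbiast[i] \leq f(1)$ would only recover an $f(1)^{-2}$ factor (effectively a rescaled Lai--Robbins bound) rather than the sharper $\fbiasat{}{O(\log K / K)}^{-2}$. Closing this gap \emph{really} requires using consistency non-trivially: an adversarial policy could in principle concentrate many consecutive pulls of arm $i$ at the start of the horizon (driving $\mbiast[i]$ past $\fbiasat{}{\log K / K}$), but doing so would either blow through the $O(n^a)$ regret budget guaranteed by consistency in $\nuunbiased$, or---via the symmetric side of the change-of-measure---force the policy to fail to recognize arm $i$ as optimal in $\nuunbiased'$. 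Turning this qualitative ``low-bias $\Leftrightarrow$ many pulls needed'' bootstrapping into a quantitative tail bound on $\mathbb{P}_{\nuunbiased}(\tau \leq n)$ that is uniform over all consistent policies---something standard reward-independent divergence arguments cannot supply---is what I expect to be the most delicate piece, and is likely what the authors mean when they flag the lower bound as requiring techniques beyond the standard bandit toolkit.
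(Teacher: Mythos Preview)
Your divergence decomposition and the identity $\kl{\PP_{\nuunbiased}}{\PP_{\nuunbiased'}} = \tfrac{(\Delta_i+\delta)^2}{2}\,\EE_{\nuunbiased}\bigl[\sum_t \mbiast[i]^2\1{A_t=i}\bigr]$ are correct and match the paper's \cref{lem:divergenceDecompAppendix}. The genuine gap is in your ``crucial new ingredient'': the per-arm inequality
\[
\EE_{\nuunbiased}\Bigl[\sum_{t:A_t=i} \mbiast[i]^2\Bigr] \;\lesssim\; \fbiasat{}{O(\log K/K)}^2\,\EE_{\nuunbiased}[T_i(n)] \;+\; o(\log n)
\]
is simply false for general consistent policies, and cannot be rescued by the stopping-time-plus-consistency argument you sketch. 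Concretely, take a policy that plays arm $i$ for its first $M = CK\log n$ steps (so $T_i(t)/t\approx 1$ and $\mbiast[i]\approx f(1)$ throughout this phase), then runs any consistent algorithm on the remaining horizon. This composite policy is still consistent in every environment in the class (the extra $M\Delta_i = O(K\log n)$ regret is $o(n^a)$), yet under $\nuunbiased$ one has $\sum_t \mbiast[i]^2\1{A_t=i}\gtrsim M f(1)^2$ while $\EE_{\nuunbiased}[T_i(n)]$ can be $O(M)$. Your inequality would then force $f(1)^2\lesssim \fbiasat{}{\log K/K}^2$, absurd for $f(x)=x$. The flaw in the proposed fix is that consistency controls $\EE[T_i(n)]$, not the \emph{time} at which the fraction first exceeds $\log K/K$; in the example $\tau$ is deterministically $O(\log K)$, so $\PP(\tau\le n)=1$ while regret stays sublinear.

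The paper's proof does not attempt any per-arm bound of this form. Instead it argues \emph{across arms simultaneously}: by pigeonhole (\cref{lem:smallBiasSetBound}), at any fixed time at least $(1-1/\beta)K$ arms have fraction $\le\beta/K$; a combinatorial stability argument (\cref{lem:stabilityAppendix}, proved via \cref{lem:stabilityGeneral}) then shows that, starting from such a configuration at time $n_0\approx\log n/\delMax^2$, pushing the $j$-th successive arm's fraction above $\beta'/K$ costs at least $\tfrac{(\beta'-\beta)n_0}{K}\exp(\beta' j/K)$ additional pulls of that arm. This yields, for a \emph{random} set $B$ of $\Omega(K)$ arms, the dichotomy ``fraction stays below $\beta'/K$ on all of $[n_0,n)$'' (so the divergence argument applies at the stopping time $\tau_i$ with the $\fbiasat{}{\beta'/K}^{-2}$ gain) \emph{or} ``$T_i(n_0,n)$ already exceeds the target lower bound''. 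A second pigeonhole (\cref{lem:smallBiasSetDerandomized}) derandomizes $B$ into a deterministic $B'$ of size $\ge K/4$. The strengthened bound therefore holds only for arms in $B'$; for arms outside $B'$ the paper falls back to the unimproved Lai--Robbins rate. The hypothesis $\delMax/\delMin=O(1)$ is exactly what converts ``$\ge K/4$ arms satisfy the strong bound'' into the advertised inequality summed over \emph{all} suboptimal arms. Your approach, by insisting on the strong bound arm-by-arm, misses both the multi-arm pigeonhole mechanism and the actual role of the gap-ratio hypothesis.
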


\begin{restatable}[Comparison to standard bandit regret lower bound]{rem}{restateRemarkComparisonToStandardLB}
    We note that in the standard, unbiased setting, \citet{LR85} established a lower bound for Gaussian bandits of the form $\regret{n} \geq \sum_{i : \Delta_i > 0} \frac{2\log(n)}{\Delta_i} - \O(1)$. They also gave an algorithm with regret asymptotically matching their lower bound.
    Our lower bound shows that, at least in the setting where the maximum ratio of (nonzero) suboptimality gaps is not too small, then the regret in the biased setting we study must be at least a factor (roughly) $\fbiasat{}{\O(\nicefrac{\log(K)}{K})}^{-2}$ larger than in the standard setting.
\end{restatable}

As a consequence of \cref{thm:lowerBoundRegret}, one can show that, under a mild additional condition on the bias function
$\fbiasat{}{\cdot}$, our regret guarantee for \cref{alg:eliminationUnknownBias}
is optimal up to $\poly\log(K)$ factors.

\begin{restatable}[Comparison of \cref{thm:regretConstSuboptimalityGaps} and
    \cref{thm:lowerBoundRegret}]{cor}{restateCorComparisonUpperLowerBound}\label{cor:comparisonUpperLowerBound}
    Under the conditions in \cref{thm:regretConstSuboptimalityGaps,thm:lowerBoundRegret}a, suppose additionally that
    the bias model satisfies the following: for any $x\in (0,1)$
    and $\mu \in (1, \nicefrac{1}{x})$, there is a constant $L' > 0$ such that
        $\fbiasat{}{\mu x} \leq \mu^{L'} \fbiasat{}{x}.$
    Then, for sufficiently large time horizons, the regret bound of \cref{alg:eliminationUnknownBias} in
    \cref{thm:regretConstSuboptimalityGaps} matches \cref{thm:lowerBoundRegret} up to
    a multiplicative $\O(\log(K)^{2L'})$ factor.
\end{restatable}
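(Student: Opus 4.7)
\textbf{Proof plan for \cref{cor:comparisonUpperLowerBound}.} My approach is a direct term-by-term comparison of the upper bound in \cref{thm:regretConstSuboptimalityGaps} and the lower bound in \cref{thm:lowerBoundRegret}. Both bounds share the same $\sum_{i : \Delta_i > 0} \log(n)/\Delta_i$ factor, so the ratio of the two reduces to the ratio of the reweighting prefactors:
\begin{equation*}
    \frac{\text{upper}}{\text{lower}}
    \;=\; \frac{\fbiasat{}{\nicefrac{1}{15K}}^{-2}}{\fbiasat{}{\O(\nicefrac{\log(K)}{K})}^{-2}}
    \;=\; \parens{\frac{\fbiasat{}{\O(\nicefrac{\log(K)}{K})}}{\fbiasat{}{\nicefrac{1}{15K}}}}^{2}.
\end{equation*}
It therefore suffices to bound the inner ratio by $\O(\log(K)^{L'})$.

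To invoke the additional hypothesis $\fbiasat{}{\mu x} \leq \mu^{L'}\fbiasat{}{x}$, I set $x = \nicefrac{1}{15K}$ and choose $\mu$ so that $\mu x$ equals the argument $\O(\nicefrac{\log(K)}{K})$ appearing in the lower bound. Writing the hidden constant in the lower bound as some fixed $c > 0$, this forces $\mu = 15c\log(K) = \Theta(\log(K))$. For $K$ large enough (so that $\log(K) > 1$ and $15c\log(K) < 15K$), this $\mu$ lies in the admissible range $(1, \nicefrac{1}{x})$, and the hypothesis applies to yield
\begin{equation*}
    \fbiasat{}{\O(\nicefrac{\log(K)}{K})} \;\leq\; (15c\log(K))^{L'}\,\fbiasat{}{\nicefrac{1}{15K}} \;=\; \O(\log(K)^{L'})\cdot \fbiasat{}{\nicefrac{1}{15K}}.
\end{equation*}
Substituting back, the ratio of the two bounds is at most $\O(\log(K)^{2L'})$, which is the claim.

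There is no real obstacle: the argument is essentially algebraic once the two theorem statements are placed side by side and the scaling hypothesis is applied. The only minor care needed is in handling the $\O(\cdot)$ hidden in the lower bound's argument of $\fbiasat{}{\cdot}$: any fixed such constant is absorbed into the $\Theta(\log(K))$ value of $\mu$, so the choice of $\mu$ remains in the required range $(1,\nicefrac{1}{x})$ for all sufficiently large $K$, consistent with the ``sufficiently large time horizon'' assumption in the corollary (since $n \to \infty$ for a fixed instance makes all threshold conditions on $n$ and $K$ mild). Because both theorems' hypotheses are assumed to hold jointly in the corollary's statement, the rest of the inequalities pass through unchanged.
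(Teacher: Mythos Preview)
The proposal is correct and takes essentially the same approach as the paper: both set $x = \nicefrac{1}{15K}$, take $\mu = 15c\log(K)$, and apply the scaling hypothesis $\fbiasat{}{\mu x} \leq \mu^{L'}\fbiasat{}{x}$ to bound $\fbiasat{}{c\log(K)/K} \leq (15c\log(K))^{L'}\fbiasat{}{\nicefrac{1}{15K}}$, then square. Your write-up is simply more detailed than the paper's one-line proof.
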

We briefly interpret \cref{cor:comparisonUpperLowerBound} with some examples. When $\fbiasat{i}{x} = x^\alpha$ for some $\alpha \geq 1$, then \cref{cor:comparisonUpperLowerBound} implies the regret bounds are tight up to an $\O(\log(K)^{2\alpha})$ factor. Moreover, any non-decreasing reweighting function $\fbiasat{i}{x}$ which is upper and lower bounded by constant degree polynomials in $x$ similarly satisfy optimality up to a $\poly\log(K)$ factor. Finally, notice that some reweighting functions such as the sigmoid function $\fbiasat{i}{x} = (1+\exp(-x))^{-1}$ are tight up to constant factors, since $\fbiasat{i}{x} \in [\nicefrac{1}{2}, 1]$.

\subsection{Proof sketch}

Here, we give a sketch of the proof of \cref{thm:lowerBoundRegret}. A complete proof with all formal statements can be found in \cref{sec:lowerBoundAppendix}. 
In the following proof sketch, to reduce clutter, we will use 
$\EXPenvc{\cdot}, \EXPenvmodc{i}{\cdot}$ to denote expectation w.r.t. the observations of a policy $\pi$ in environment $\nuunbiased, \nuunbiasedmod{i}$  (and similarly for probabilities).
Further, for a measure $\PP$ on the filtration $\F_n = \sigma(\H_n)$ generated
by the $n$-round observation history $\H_n=(A_t,Y_t)_{t\in[n]}$, we will
denote $\PP^{\H_{\tau}}$ for $\tau \leq n$ to be the pushforward measure of the
observation history until $\tau$, $\H_\tau$, under $\PP$. 

Consider any bandit policy $\pi$ interacting in an environment $\nuunbiased$ with a bias model satisfying \cref{assump:multBias}. To obtain a lower bound on the regret $\regret{n}$, we construct a set of alternative environments $\nuunbiasedmod{i}$ for each suboptimal arm in $\nuunbiased$, such that arm $i$ is optimal under $\nuunbiasedmod{i}$. To obtain sublinear regret in environment $\nuunbiased$ and $\nuunbiasedmod{i}$ simultaneously (as is mandated by the consistency condition in \cref{def:consistentPolicy}), the policy $\pi$ must pull arm $i$ sufficiently many times to distinguish between these two environments. Due to the feedback model from \cref{assump:multBias}, however, pulling an arm $i$ decreases the mean of the feedback distribution for \emph{every other arm}. We show that this feedback structure implies a significant strengthening of the stochastic bandit lower bound.

\textbf{Step 1: From lower-bounding regret to lower bounding the number of arm pulls.}

Recall that, by the standard regret decomposition \eqref{eq:regretDef}, we may write:
    $\regret{n} = \sum_{i: \Delta_i > 0} \Delta_i \EXPenvc{T_i(n)}.$
Thus, to lower bound $\regret{n}$, it suffices to lower bound $\EXPenvc{T_i(n)}$ for each suboptimal arm $i$. Since we assume for the simplicity of this proof sketch that $\nicefrac{\delMax}{\delMin} = \O(1)$, it thus suffices to find a set of arms $S\subset [K]$ such that $|S| = \Omega(K)$ and (roughly), for some $\beta \lesssim \log(K)$:
\begin{align}\label{eq:sketchGoal}
    \EXPenvc{T_i(n)} \gtrsim \fbiasat{}{\nicefrac{\beta}{K}}^{-2}\nicefrac{\log(n)}{\delMax^2} ~ \forall i\in S.
\end{align}

\textbf{Step 2: Relating the divergence decomposition to a ``biased proxy'' for $\EXPenvc{T_i(n)}$.}

A standard technique for lower-bounding $\EXPenvc{T_i(n)}$ is via the KL-divergence decomposition. Indeed, in the standard unbiased stochastic setting, the KL-divergence between the observation histories in two environments $\nuunbiased, \nuunbiasedmod{i}$ differing only in arm $i$ takes the following convenient form:
\begin{align*}
    \kl{\PrHistatc{n}}{\PrHistaltatc[i]{n}}
    = \kl{\nuunbiasedi_i}{\nuunbiasedmodi{i}_i} \EXPenvc{T_i(n)}.
\end{align*}
See, e.g., \citep[Eq. (17)]{ACFS95} or \citep[Lemma 15.1]{tor2020} for a reference.
Under the feedback model of \cref{assump:multBias}, however, the KL-divergence depends on a ``biased proxy'' $\EXPenvc{\sum_{t}\mbiast[i]^2 \1{A_t=i}}$ instead of directly on $\EXPenvc{T_i(n)}$. In particular, we have the following:

\begin{restatable}[A divergence decomposition for biased environments; Informal statement of \cref{lem:divergenceDecompAppendix}]{lem}{restateDivergenceDecomp}\label{lem:divergenceDecomp}
    Fix a time horizon $n\geq 1$, a Gaussian bandit environment with suboptimal arm $i$ satisfying \cref{assump:multBias} and a policy $\pi$ satisfying \cref{def:consistentPolicy}. Let $\nuunbiasedmod{i}$ be a Gaussian bandit environment (with the same bias model) such that $\nuunbiasedi_j = \nuunbiasedmodi{j}$ for all $j\neq i$, and $\nuunbiasedmodi{i}$ has mean $\mu_i^{(i)} = \mu_i + (1+\eps)\Delta_i$ for some $\eps > 0$. Then, for any stopping time $\tau_i \leq n$,
    \begin{align*}
        \frac{(1+\eps)^2\Delta_i^2}{2} \EXPenvc{\sum\nolimits_{t\in[\tau_i]} \mbiast[i]^2\1{A_t = i}}
        = \kl{\PrHistatc{\tau_i}}{\PrHistaltatc[i]{\tau_i}}
        &\gtrsim \frac{\EXPenvc{\tau_i}}{n} \log(n) - \O(1).
    \end{align*}
\end{restatable}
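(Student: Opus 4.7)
The plan is in two parts: the equality follows from the chain rule for KL divergence under optional stopping, and the inequality follows from the Bretagnolle--Huber inequality combined with the consistency hypothesis.

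For the equality, I would decompose the KL via the chain rule over $t = 1, \ldots, \tau_i$. At each step, the policy's action kernel $\pi(\cdot \mid \H_{t-1})$ is identical under $\mathbb{P}_{\nuunbiased}$ and $\mathbb{P}_{\nuunbiasedmod{i}}$ because $\pi$ is a function of the observed history only, so only the feedback conditionals contribute. By \cref{assump:multBias}, conditional on $\F_{t-1}$ and $A_t$, the feedback $Y_t$ is unit-variance Gaussian with mean $\mu_{A_t}\mbiast[A_t]$ under $\nuunbiased$ and $\mu_{A_t}^{(i)}\mbiast[A_t]$ under $\nuunbiasedmod{i}$; these agree unless $A_t = i$, in which case they differ by exactly $(1+\eps)\Delta_i\mbiast[i]$. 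Using $\kl{\NN(a,1)}{\NN(b,1)} = (a-b)^2/2$ together with the optional-stopping form of the KL chain rule (valid because $\tau_i \leq n$ is bounded) yields the claimed equality.

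For the lower bound, I would apply Bretagnolle--Huber to $\PrHistatc{\tau_i}$ and $\PrHistaltatc[i]{\tau_i}$ with a distinguishing event $E \in \F_{\tau_i}$, such as $E = \{T_i(\tau_i) \geq \lambda\, \EXPenvc{\tau_i}\}$ for a suitable $\lambda \in (0,1)$. Consistency in $\nuunbiased$ bounds the regret by $C n^a$, so $\EXPenvc{T_i(n)} \leq C n^a / \Delta_i$. Consistency in $\nuunbiasedmod{i}$---in which arm $i$ becomes optimal and every other arm has suboptimality gap at least $\eps\Delta_i$---gives $\EXPenvmodc{i}{n - T_i(n)} \leq C n^a / (\eps \Delta_i)$. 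Combined with the pointwise monotonicity $T_i(\tau_i) \leq T_i(n)$ and $\tau_i - T_i(\tau_i) \leq n - T_i(n)$, Markov's inequality bounds both $\mathbb{P}_{\nuunbiased}(E)$ and $\mathbb{P}_{\nuunbiasedmod{i}}(E^c)$ by quantities of order $n^{a-1}/\EXPenvc{\tau_i}$. Inverting Bretagnolle--Huber then delivers the target KL lower bound.

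The main obstacle is extracting the $\EXPenvc{\tau_i}/n$ scaling rather than merely $\log \EXPenvc{\tau_i}$: a direct Bretagnolle--Huber at horizon $n$ gives only $\gtrsim (1-a)\log n$, and the stopping-time sharpening requires care because $\EXPenvc{\tau_i}$ and $\EXPenvmodc{i}{\tau_i}$ need not coincide, so the Markov bounds under the two measures do not automatically match for the same threshold $\lambda$. I expect the cleanest route is to define the event in terms of play-fractions $T_i(\tau_i)/\tau_i$ and reconcile the two measures using the equality together with the boundedness of $\mbiast[i]$, or else to rely on downstream properties of the specific $\tau_i$ used in the main proof that make $\EXPenvc{\tau_i}$ and $\EXPenvmodc{i}{\tau_i}$ comparable.
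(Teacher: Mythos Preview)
Your equality argument is correct and matches the paper's: chain rule for KL until the bounded stopping time, with only the feedback conditionals contributing, and the Gaussian KL producing the $\mbiast[i]^2$ factor.

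For the inequality, however, the paper does not use Bretagnolle--Huber with an indicator event. Instead it invokes the data-processing inequality in the form $\kl{\PP_1}{\PP_2} \geq \klbern{\EE_1[Z]}{\EE_2[Z]}$ for any $\F_{\tau_i}$-measurable $Z\in[0,1]$, together with the elementary bound $\klbern{p}{q} \geq p\log(1/q)-\log 2$. The key is the specific choice
\[
Z_i \;=\; \frac{\tau_i - T_i(\tau_i)}{n} \;=\; \frac{\sum_{j\neq i} T_j(\tau_i)}{n},
\]
which is $[0,1]$-valued because $\tau_i\leq n$. Under $\nuunbiased$, $\EE[Z_i] = (\EXPenvc{\tau_i} - \EXPenvc{T_i(\tau_i)})/n \geq \EXPenvc{\tau_i}/n - Cn^{a-1}/\Delta_i$ by consistency. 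Under $\nuunbiasedmod{i}$, since $\sum_{j\neq i}T_j(\tau_i)\leq \sum_{j\neq i}T_j(n)$ pointwise, $\EE^{(i)}[Z_i]\leq Cn^{a-1}/(\eps\Delta_i)$ by consistency in the alternative. Plugging into $p\log(1/q)-\log 2$ gives the $\frac{\EXPenvc{\tau_i}}{n}\log(n)-\O(1)$ bound directly.

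The obstacle you flagged is real for your route and is precisely what the paper's choice of a \emph{continuous} statistic dissolves: with an indicator event and Markov under $\nuunbiasedmod{i}$, you need to control $\PP^{(i)}(T_i(\tau_i)<\lambda\,\EXPenvc{\tau_i})$, but the natural Markov argument runs through $\tau_i - T_i(\tau_i)$ against a \emph{random} threshold $\tau_i - \lambda\,\EXPenvc{\tau_i}$, and you have no handle on $\EXPenvmodc{i}{\tau_i}$. By taking $Z_i$ to already be the normalized count $\sum_{j\neq i}T_j(\tau_i)/n$, the paper reduces everything to first moments under each measure separately, and those are exactly what consistency controls. Your proposed fixes (play-fractions, or assuming comparability of $\EXPenvc{\tau_i}$ and $\EXPenvmodc{i}{\tau_i}$) are unnecessary once you switch to this $Z_i$.
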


\cref{lem:divergenceDecomp} has two parts. The equality in \cref{lem:divergenceDecomp} is the generalization of the divergence decomposition to the biased feedback setting. As discussed above, instead of directly equating the KL-divergence to $\EXPenvc{T_i(\tau_i)}$, this identity relates the KL-divergence to a ``biased proxy'' for this quantity, depending on the multiplicative biases $\mbiast[i]$.
The inequality in \cref{lem:divergenceDecomp} is the consequence of a data-processing inequality on the KL-divergence (in a similar spirit to the argument in \citep[Eq. (8)]{GMS19}). For more details, refer to the proof in \cref{sec:lowerBoundAppendix}.

\textbf{Step 3: Relating the ``biased proxy'' with $\EXPenvc{T_i(\tau_i)}$ via stopping times}

Inspecting \cref{lem:divergenceDecomp}, we observe that \emph{if} we could find a stopping time $\tau_i$ such that, simultaneously (i) the multiplicative reweighting $\mbiast[i] \leq \fbiasat{i}{\nicefrac{\beta}{K}}$ for all $t < \tau_i$ and (ii) $\nicefrac{\EXP{\tau_i}}{n} = \Omega(1)$, then we could conclude that:
\begin{align*}
    \frac{(1+\eps)^2 \Delta_i^2}{2} \fbiasat{}{\frac{\beta}{K}}^2\EXP{T_i(\tau_i)} \gtrsim \log(n) - \O(1).
\end{align*}
The above would immediately imply our claimed regret lower bound, through the regret decomposition \eqref{eq:regretDef}. The following claim gives a construction for $\tau_i$ which will (essentially) satisfy (i). We will soon see that this construction also satisfies (ii).

\begin{restatable}[Consequence of the Divergence Decomposition; Simplified version of \cref{cor:divDecompConsequenceAppendix}]{clm}{restateDivergenceDecompTrivialConsequence}\label{cor:divDecompConsequence}
    Consider the same setting as in \cref{lem:divergenceDecomp}, where arm $i$ is suboptimal in $\nuunbiased$. Fix $\nzero \approx \nicefrac{\log(n)}{12\delMax^2}$, $\beta \approx \log(K)$, and define:
    \begin{align*}
        \tau_i = \min\braces{t \geq \nzero : \nicefracat[i]{t} \gtrsim \nicefrac{\beta}{K} \text{ or } t=n}.
    \end{align*}
    Then, denoting $T_i(a,b) = T_i(b) - T_i(a)$, we have that:
    \begin{align*}
        \EXPenvc{\sum\nolimits_{t\in[\tau_i]} \mbiast[i]^2 \1{A_t=1}} 
        &\leq \nzero + \fbiasat{}{\frac{\beta}{K}}^{2}\EXPenvc{T_i(\nzero,\tau_i)}.
    \end{align*}
    In particular, this implies that for any policy satisfying $\EXPenvc{\tau_i} = \Omega(n)$:
    \begin{align*}
        \EXPenvc{T_i(\nzero, \tau_i)}
        &\gtrsim \fbiasat{}{\nicefrac{\beta}{K}}^{-2} \parens{\nicefrac{\log(n)}{\Delta_i^2} - \O(1)}.
    \end{align*}
\end{restatable}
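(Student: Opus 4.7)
The plan is to prove the first inequality by partitioning $\sum_{t\in[\tau_i]} \mbiast[i]^2\1{A_t=i}$ at time $\nzero$, using the trivial bound $\mbiast[i]\leq 1$ on the initial segment and the stopping-time condition on $\tau_i$ to control $\mbiast[i]$ on the tail; the second inequality then follows by substituting this bound into the divergence decomposition of \cref{lem:divergenceDecomp}, invoking the hypothesis $\EXPenvc{\tau_i} = \Omega(n)$, and rearranging.

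For the first inequality, the pathwise decomposition reads
\begin{align*}
    \sum\nolimits_{t\in[\tau_i]} \mbiast[i]^2\1{A_t=i} = \sum\nolimits_{t\leq \nzero} \mbiast[i]^2\1{A_t=i} + \sum\nolimits_{\nzero < t \leq \tau_i} \mbiast[i]^2\1{A_t=i}.
\end{align*}
For the first sum, \cref{assump:multBias} yields $\mbiast[i] \leq \fbiasat{i}{1} \leq 1$, so it is bounded pathwise by $T_i(\nzero) \leq \nzero$. For the second, the key observation is that whenever $\nzero \leq t < \tau_i$, the defining property of $\tau_i$ forces $\nicefracat[i]{t} \lesssim \beta/K$; since $\fbiasat{i}{\cdot}$ is non-decreasing, this yields $\mbiast[i] \leq \fbiasat{}{\beta/K}$, and hence this tail sum is at most $\fbiasat{}{\beta/K}^{2} T_i(\nzero,\tau_i)$. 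Taking expectations delivers the claimed inequality.

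For the second inequality, I plug this bound into \cref{lem:divergenceDecomp} (with a convenient fixed choice of $\eps > 0$) to obtain
\begin{align*}
    \frac{(1+\eps)^2\Delta_i^2}{2}\parens{\nzero + \fbiasat{}{\beta/K}^2 \EXPenvc{T_i(\nzero,\tau_i)}} \gtrsim \frac{\EXPenvc{\tau_i}}{n}\log(n) - \O(1).
\end{align*}
Under the hypothesis $\EXPenvc{\tau_i} = \Omega(n)$, the right-hand side becomes $\gtrsim \log(n) - \O(1)$. Solving for $\EXPenvc{T_i(\nzero,\tau_i)}$, substituting $\nzero \approx \log(n)/(12\delMax^2)$, and noting that $\Delta_i \leq \delMax$ implies $\nzero \lesssim \log(n)/(12\Delta_i^2)$ is a strictly smaller multiple of the target $\log(n)/\Delta_i^2$ after dividing through by $\Delta_i^2/2$, I recover the claimed bound $\EXPenvc{T_i(\nzero,\tau_i)} \gtrsim \fbiasat{}{\beta/K}^{-2}\parens{\log(n)/\Delta_i^2 - \O(1)}$.

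The main obstacle is the careful calibration of $\nzero$: it must be small enough — specifically $\nzero = \O(\log(n)/\delMax^2)$ — that it can be absorbed into the lower-order term after dividing by $\Delta_i^2/2$, yet large enough that the stopping time $\tau_i$ is substantive and the external hypothesis $\EXPenvc{\tau_i} = \Omega(n)$ (discharged elsewhere in the proof of \cref{thm:lowerBoundRegret}) remains non-vacuous. The simplifying assumption $\delMax/\delMin = \O(1)$ in the informal theorem statement makes this calibration automatic, since $1/\delMax^2$ and $1/\Delta_i^2$ then differ by only a bounded factor; in the more general version handled in the appendix, one must take more care to ensure $\nzero$ remains a lower-order term relative to $\log(n)/\Delta_i^2$ across all suboptimal arms simultaneously.
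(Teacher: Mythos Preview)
Your proposal is correct and follows essentially the same approach as the paper: split the biased-proxy sum at $\nzero$, bound the initial segment trivially by $\nzero$ via $\mbiast[i]\leq 1$, bound the tail by $\fbiasat{}{\beta/K}^2 T_i(\nzero,\tau_i)$ using the stopping-time condition and monotonicity of $f$, then substitute into \cref{lem:divergenceDecomp} and rearrange using $\EXPenvc{\tau_i}=\Omega(n)$ together with the calibration $\nzero \lesssim \log(n)/\Delta_i^2$. Your discussion of why the choice of $\nzero$ is delicate also mirrors the paper's reasoning.
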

\cref{cor:divDecompConsequence} follows from the definition of the stopping time $\tau_i$, together with the conditions on $\mbiast[i]$ from \cref{assump:multBias}. To utilize this claim, however, we must show that $\EXP{\tau_i} = \Omega(n)$.

From \cref{cor:divDecompConsequence}, we see that we can obtain a refined upper-bound on the ``biased proxy'' using stopping times.
Thus, recall that we choose $\nzero \approx \nicefrac{\log(n)}{12\delMax^2}.$ If we can show that $\EXPenvc{\tau_i} \geq \nicefrac{n}{6}$ for a constant fraction of arms, then \cref{cor:divDecompConsequence} would give our desired regret lower bound.

\textbf{Step 4: Lower-bounding $\EXPenvc{\tau_i}$ for many arms}

Recall from \cref{cor:divDecompConsequence} that $\tau_i$ is the first time after $\nzero$ when arm $i$ is played $> \nicefrac{\beta}{K}$ fraction of the time (or $n$, in the case that this event does not occur). To complete our lower bound argument, by \cref{cor:divDecompConsequence}, it suffices to show that, for \emph{any} policy $\pi$ (satisfying \cref{def:consistentPolicy}), $\EXPenvc{\tau_i} = \Omega(n)$. Notice that, by definition of $\tau_i$, the event that $\tau_i = n$ is equivalent to $\nicefracat[i]{t}\leq \nicefrac{\beta}{K}$ for all $t\in[\nzero,n)$. Thus, we have:
\begin{align}\label{eq:expStoppingTimeLB}
    \EXPenvc{\tau_i}
    \geq n\PROenvc{\tau_i=n}
    &= n\PROenvc{\nicefracat[i]{t} \leq \nicefrac{\beta}{K} ~ \forall t \in [\nzero,n)}.
\end{align}

Hence, it suffices to show that the above probability is constant.

Unfortunately, since the quantity $\nicefracat[i]{t}$ is inherently policy-specific, it is infeasible to make deterministic statements about this quantity for every arm. Fortunately, however, obtaining our lower bound requires bounding $\EXP{\tau_i}$ only for a constant fraction of suboptimal arms $i$.
Thus, through deterministic pigeonholing arguments, we identify properties on the quantities $\nicefract[i]$ (for a sufficiently large subset of arms) which hold deterministically for any algorithm. The first is the observation that, for any time $t$, many arms must have small $\nicefract[i]$.

    \begin{restatable}[Size of the small bias set]{lem}{restateSmallBiasSetBound}\label{lem:smallBiasSetBound}
        Consider a bandit policy $\pi$ interacting in an environment $\nuunbiased$.
        Let us denote, for any $t\in [n]$ and $\beta > 1$,
        \begin{align*}
            S_t(\beta) = \braces{i \in [K] : \nicefrac{\Tbias_i(t)}{\tbias} \leq \nicefrac{\beta}{K}}.
        \end{align*}
        Then, $|S_t(\beta)| \geq (1 - \nicefrac{1}{\beta})K$.
    \end{restatable}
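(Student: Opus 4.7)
The result is purely combinatorial/deterministic: it does not depend on the policy $\pi$, the environment, or the realizations of the rewards. The plan is to exploit the fact that, by the definition of $T_i^{\mathrm{bias}}(t) = \initialbias_i + T_i(t)$ and $\tbias = \totalinitialbias + t$ in \cref{assump:multBias}, summing over $i \in [K]$ gives $\sum_{i\in[K]} T_i^{\mathrm{bias}}(t) = \sum_{i\in[K]} \initialbias_i + \sum_{i\in[K]} T_i(t) = \totalinitialbias + t = \tbias$, since the policy plays exactly one arm per round. Hence the $K$ nonnegative fractions $\rho_i \triangleq \nicefract[i]$ form a probability vector: $\sum_{i\in[K]} \rho_i = 1$.

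From here the argument is a one-line pigeonhole. Let $S_t(\beta)^c = \{i\in[K] : \rho_i > \beta/K\}$ denote the complement of $\Sideal{t}(\beta)$ in $[K]$. Suppose for contradiction that $|\Sideal{t}(\beta)| < (1-1/\beta)K$, which is equivalent to $|S_t(\beta)^c| > K/\beta$. Then
\begin{equation*}
    1 = \sum_{i\in[K]} \rho_i \geq \sum_{i\in S_t(\beta)^c} \rho_i > |S_t(\beta)^c|\cdot \frac{\beta}{K} > \frac{K}{\beta}\cdot\frac{\beta}{K} = 1,
\end{equation*}
a contradiction. Therefore $|S_t(\beta)^c| \leq K/\beta$, so $|\Sideal{t}(\beta)| \geq (1-1/\beta)K$, as claimed.

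\textbf{Anticipated difficulty.} There is essentially no obstacle here; the lemma is a purely deterministic pigeonhole fact about probability vectors, and the only subtlety is to confirm the normalization identity $\sum_i T_i^{\mathrm{bias}}(t) = \tbias$ coming from the definitions of $\initialbias_i$ and $\totalinitialbias$ in \cref{assump:multBias}. Everything else follows by bounding the total mass contributed by the set of arms with fraction exceeding $\beta/K$.
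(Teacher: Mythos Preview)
Your proof is correct and follows the same pigeonhole argument as the paper: bound the size of the complement $S_t(\beta)^c$ by using that the fractions $\Tbias_i(t)/\tbias$ sum to $1$, so at most $K/\beta$ of them can strictly exceed $\beta/K$. The only minor slip is notational---the macro $\nicefract[i]$ expands to $\Tbias_i(t-1)/(\tbias-1)$ rather than $\Tbias_i(t)/\tbias$ as in the statement---but the normalization identity and the argument go through verbatim with the correct ratio.
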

    While \cref{lem:smallBiasSetBound} guarantees that $|S_t(\beta)| = \Omega(K)$ for every fixed $t$, it does not guarantee that the arms in $S_t(\beta)$ stay the same as $t$ changes. Moreover, the set $S_t(\beta)$ is random. 
    Ideally, we would like to use \cref{lem:smallBiasSetBound} to conclude that some fixed set $\tilde{S}$ of $\Omega(K)$ arms such that, for every $i\in \tilde{S}$:
    \begin{align*}
    \PRO{\nicefract[i] \leq \nicefrac{\beta}{K} ~ \forall t \in [\nzero, n)} = \Omega(1).
    \end{align*}
    Indeed, together with \cref{cor:divDecompConsequence} with \eqref{eq:expStoppingTimeLB}, this would imply our desired lower bound.

    There is, however, a flaw  above -- we cannot give any
    unconditional nontrivial guarantee on the number of arms for $\tilde{S}$ . Indeed, at time $\nzero \approx \nicefrac{\log(n)}{12\delMax^2}$, it might be the case that the policy $\pi$ identifies all arms $i\in S_{\nzero}(\beta)$, pulling them until $\nicefract[i] > \nicefrac{\beta}{K}$. In this way, an algorithm could guarantee that $\tilde{S} = \emptyset$. However, there is a cost to the policy removing many arms from $S_{\nzero}(\beta)$. Indeed, playing any fixed arm decreases the fraction of times that all other arms have been played. Thus, one can show that removing each successive arm from $S_{\nzero}(\beta)$ requires playing the arm more than the previously removed one, and this cost scales with $\nzero$ (see \cref{lem:stabilityGeneral} for a precise statement).%

    Motivated the above discussion, we show that, for any policy $\pi$, there
    is a (random) set of arms $B$ with $|B|=\Omega(K)$ such that, for each arm
    $i\in B$, one of two conditions must be satisfied: either (i) the arm has been played less than a $\O(\nicefrac{1}{K})$ fraction of time, or (ii) it has been pulled more than our desired lower bound on this quantity. Notice that, if $B$ were not random, then we could appeal to \cref{cor:divDecompConsequence} to conclude with a stronger lower bound on $\EXPenvc{T_i(n)}$ for each of the $\Omega(K)$ arms. 
    
    \begin{restatable}[A small bias set which is stable over time; informal statement of \cref{lem:stabilityAppendix}]{lem}{restateStability}\label{lem:stability}
        Let $\pi$ be any bandit policy interacting in an environment $\nuunbiased$ satisfying \cref{assump:multBias} with the reweighting function $-\log(\fbiasat{i}{\nicefrac{4}{K}}) = \O(\log(K))$ and suboptimality gaps satisfying $\nicefrac{\delMax}{\delMin} = \O(1)$.
        Let $\nzero \approx \nicefrac{\log(n)}{\delMax^2}$ and $\beta \approx \log(K)$. Then, there exists a set of arms $B \subseteq S_{\nzero}(\beta)$ such that $|B| \geq \nicefrac{K}{2}$, and each arm $i\in B$ satisfies one of the following:
        \begin{itemize}[wide=\parindent]
            \item[\textbf{Case 1}.] $\frac{\Tbias_i(t)}{\tbias} \lesssim \frac{\beta}{K} ~ \forall t \in [\nzero,n)$.
            \item[\textbf{Case 2}.] $T_i(\nzero,n) \gtrsim \fbiasat{}{\frac{4}{K}}^{-2}\frac{\log(n)}{\Delta_i^2}.$
        \end{itemize}
    \end{restatable}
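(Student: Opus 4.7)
My strategy is to construct $B$ by removing from $S_{\nzero}(\beta)$ a small collection of ``early leavers''—arms whose fraction quickly exceeds a carefully chosen threshold after time $\nzero$—and then argue that every remaining arm either never has a large fraction (Case~1) or must have been pulled many times between $\nzero$ and $n$ (Case~2). The engine of the argument is a deterministic pigeonhole bound exploiting \cref{lem:smallBiasSetBound}, combined with an accounting identity: raising an arm's fraction from $\le \beta/K$ to $>\beta'/K$ requires a quantifiable number of pulls, and all such pulls are drawn from the common budget $t-\nzero$.

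\paragraph{Step 1 (per-arm leaving cost).} Fix a ``leaving threshold'' $\beta' > \beta$ (to be tuned). For each $i \in S_{\nzero}(\beta)$, let
\[
\tau_i \;=\; \min\bigl\{t \ge \nzero : \Tbias_i(t)/\tbias > \beta'/K\bigr\} \wedge n.
\]
Using $\Tbias_i(\nzero) \le (\beta/K)\tbiasat{\nzero}$ together with $\Tbias_i(\tau_i) > (\beta'/K)\tbiasat{\tau_i}$ and $\tbiasat{\tau_i}-\tbiasat{\nzero} = \tau_i-\nzero$, I get the key deterministic lower bound
\[
T_i(\nzero,\tau_i) \;>\; \frac{\beta'-\beta}{K}\,\tbiasat{\nzero} \;+\; \frac{\beta'}{K}(\tau_i-\nzero).
\]
The first term is the fixed ``level-shift'' cost; the second is the cost of keeping up with the growing denominator.

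\paragraph{Step 2 (pigeonhole and definition of $B$).} Let $T^*\in (\nzero,n)$ be a threshold to be chosen, and define the early leavers $E = \{i\in S_{\nzero}(\beta) : \tau_i < T^*\}$. Since each $T_i(T^*)$ is disjoint and $\sum_i T_i(T^*) \le T^*$, summing the per-arm lower bound across $E$ gives
\[
|E| \;\lesssim\; \frac{K(T^*-\nzero)}{(\beta'-\beta)\,\tbiasat{\nzero}}.
\]
Set $B := S_{\nzero}(\beta)\setminus E$, so $|B| \ge (1-\tfrac{1}{\beta})K - |E|$. Arms with $\tau_i = n$ satisfy $\Tbias_i(t)/\tbias \le \beta'/K$ for all $t\in [\nzero,n)$, which gives Case~1 (absorbing $\beta'/\beta = \OO(1)$ into the hidden constant). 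Arms with $\tau_i \ge T^*$ satisfy
\[
T_i(\nzero,n) \;\ge\; T_i(\nzero,\tau_i) \;>\; \tfrac{\beta'}{K}(T^*-\nzero),
\]
so Case~2 follows as long as $\tfrac{\beta'}{K}(T^*-\nzero) \gtrsim f(4/K)^{-2}\log(n)/\Delta_i^2$. Using $\delMax/\delMin = \OO(1)$, this reduces to choosing $T^*-\nzero \gtrsim K f(4/K)^{-2}\nzero / \beta'$.

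\paragraph{Step 3 and main obstacle.} The remaining task is to tune $\beta, \beta', T^*$ so both $|E| \le K/4$ and the Case~2 target hold simultaneously. Substituting the Case~2 choice of $T^*$ into the $|E|$ bound, the binding algebraic constraint is $\beta'(\beta'-\beta) \gtrsim K\, f(4/K)^{-2}$, whose feasibility rests precisely on the hypothesis $-\log f(4/K) = \OO(\log K)$; this is where the proof is delicate and where I expect most of the actual work to lie (possibly together with choosing $\beta'$ super-logarithmic in $K$ while keeping $\beta'/\beta$ under control, and absorbing the initial-bias slack $\tbiasat{\nzero}-\nzero = \totalinitialbias$ using the assumption $\initialbias_{\max} \lesssim \log(Kn)$). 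Once the parameters balance, $|B| \ge (1-1/\beta)K - K/4 \ge K/2$, and the dichotomy (Case~1 / Case~2) over $B$ follows immediately from the definitions above. Finally, I would note that $B$ is random—it depends on the entire history through the stopping times $\tau_i$—which is why the subsequent application in the proof sketch must feed $B$ into the divergence decomposition (\cref{cor:divDecompConsequence}) via the stopping time $\tau_i$ rather than a deterministic index set.
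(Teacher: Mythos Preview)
Your per-arm accounting in Step~1 is correct, and the overall architecture (stopping time $\tau_i$, early/late dichotomy) is the right one. The gap is quantitative: the constraint you identify in Step~3, $\beta'(\beta'-\beta)\gtrsim K\,f(4/K)^{-2}$, cannot be satisfied with $\beta'\lesssim\log K$. Under the hypothesis $-\log f(4/K)=\O(\log K)$ one only knows $f(4/K)^{-2}=K^{\O(1)}$, so e.g.\ for $f(x)=x^\alpha$ the right side is $\Theta(K^{1+2\alpha})$ while the left is $\O(\log^2 K)$. Allowing $\beta'$ to be polynomial in $K$ breaks Case~1 (you need $\beta'/K\lesssim \beta/K\approx\log K/K$ there, and you explicitly rely on $\beta'/\beta=\O(1)$), so you are stuck.

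The missing idea is that your pigeonhole uses only the \emph{additive} level-shift cost $\tfrac{\beta'-\beta}{K}\tbiasat{\nzero}$ per arm, throwing away the interaction between leavers. The paper instead orders the arms in $S_{\nzero}(\beta)$ by their leaving times $\tau_{i_1}\le\tau_{i_2}\le\cdots$ and observes that when the $j$-th arm leaves, the denominator $\tbiasat{\tau_{i_j}}$ has already absorbed $\sum_{k<j}T_{i_k}(\nzero,\tau_{i_k})$. Writing $x_j:=T_{i_j}(\nzero,\tau_{i_j})$, this gives the recursion
\[
x_j \;>\; \frac{(\beta'-\beta)\,\tbiasat{\nzero}}{K-\beta'} \;+\; \frac{\beta'}{K-\beta'}\sum_{k<j}x_k,
\]
which unwinds (via the binomial identity) to $x_j>\tfrac{(\beta'-\beta)\tbiasat{\nzero}}{K}\exp(\beta' j/K)$. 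Taking $B$ to be all but the first $\alpha K$ leavers (for a constant $\alpha$), every $i\in B$ with $\tau_i<n$ then satisfies $T_i(\nzero,n)\gtrsim\tfrac{\nzero}{K}\exp(\alpha\beta')$. Now $\beta'=\Theta(\log K)$ is enough: the exponential factor $\exp(\alpha\beta')$ delivers the needed $K\cdot f(4/K)^{-2}$ blowup while keeping Case~1 at $\beta'/K\lesssim\log K/K$. Your single-threshold $T^*$ argument is the linearized version of this recursion and loses exactly the exponential gain.
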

    Intuitively, \cref{lem:stability} tells us that, for any bandit policy $\pi$, for a large (possibly random) set of arms $B$, one of two things can happen. In Case 1, an arm $i\in B$ is played roughly $\tO(\nicefrac{1}{K})$ times. Ignoring the fact that $B$ is random, these are the arms for which the stronger lower bound from \cref{cor:divDecompConsequence} applies. In Case 2, $T_i(n)$ must already be larger than the desired regret lower-bound (recall that we assume $\nicefrac{\delMax}{\delMin} = \O(1)$ for this proof sketch.

    In order to apply the arguments from above, we need to translate the guarantees from \cref{lem:stability}, which hold for a \emph{random} set of arms $B$, to a guarantee that either Case 1 or 2 happens with constant probability for a deterministic set of arms $B'$ of a similar size. 
    As it turns out, this can be accomplished via a pigeonholing argument. In particular, we can obtain the following ``derandomization'' of \cref{lem:stability}:

    \begin{restatable}[A derandomization of \cref{lem:stability}; Informal statement of \cref{lem:mainLowerBoundIngredientsAppendix}]{lem}{restateMainLowerBoundIngredients}\label{lem:mainLowerBoundIngredients}
        There exists a (deterministic) set $B' \subseteq [K]$ such that $|B'| \geq \nicefrac{K}{4}$ and, for each $i\in B'$, one of the following holds:
        \begin{itemize}[wide=\parindent]
            \item[\textbf{Case 1'}.] $\EXPenvc{\tau_i} \geq \frac{n}{6}$.
            \item[\textbf{Case 2'}.] $\EXPenvc{T_i(\nzero,n)} \gtrsim \fbiasat{}{\frac{4}{K}}^{-2} \frac{\log(n)}{\Delta_i^2}$.
        \end{itemize}
    \end{restatable}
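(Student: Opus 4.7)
The plan is to derandomize Lemma \ref{lem:stability} by a pigeonholing argument on the total probability mass assigned to each arm lying in the random set $B$. For each arm $i \in [K]$, I would define the two event probabilities
\begin{align*}
q_i^{(1)} = \PROenvc{i \in B \text{ and Case 1 holds for } i}, \qquad q_i^{(2)} = \PROenvc{i \in B \text{ and Case 2 holds for } i}.
\end{align*}
Since Lemma \ref{lem:stability} guarantees $|B| \geq K/2$ almost surely and that every $i \in B$ satisfies at least one of the two cases, we have $q_i^{(1)} + q_i^{(2)} \geq \PROenvc{i \in B}$, and summing over $i$ produces the key budget constraint $\sum_{i\in[K]} (q_i^{(1)} + q_i^{(2)}) \geq \EXPenvc{|B|} \geq K/2$.

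Next, I would fix a sufficiently small constant threshold $c > 0$ and define the deterministic sets $B'_1 = \{i : q_i^{(1)} \geq c\}$, $B'_2 = \{i : q_i^{(2)} \geq c\}$, and $B' = B'_1 \cup B'_2$. Splitting the budget sum over $B'$ and its complement, and using $q_i^{(j)} \leq 1$ on $B'$ while $q_i^{(j)} < c$ off of $B'$, I obtain an inequality of the form $K/2 \leq 2|B'|(1-c) + 2cK$, which rearranges to $|B'| \geq K(1 - 4c)/(4(1-c))$. Choosing $c$ small enough makes this a constant fraction of $K$; pinning down the precise constant $K/4$ claimed here requires leveraging the sharper appendix version of Lemma \ref{lem:stability} (for instance, one producing $|B| \geq (1-o(1))K$, or with tighter constants in its two cases so the double pigeonhole has slack).

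It remains to convert membership in $B'_1$ or $B'_2$ into the corresponding expectation bounds. For $i \in B'_1$, observe that on the event $\{i \in B$ and Case 1$\}$, the condition $\Tbias_i(t)/\tbias \lesssim \beta/K$ holds throughout $[\nzero, n)$, which by the definition of $\tau_i$ in Claim \ref{cor:divDecompConsequence} forces $\tau_i = n$. Hence $\EXPenvc{\tau_i} \geq n \cdot q_i^{(1)} \geq cn$, yielding Case 1' once $c$ is tuned so that $cn \geq n/6$. For $i \in B'_2$, the Case 2 conclusion gives $T_i(\nzero, n) \gtrsim \fbiasat{}{4/K}^{-2} \log(n)/\Delta_i^2$ pointwise on the event $\{i \in B, \text{Case 2}\}$, so taking expectations and discarding all sample paths outside this event yields $\EXPenvc{T_i(\nzero,n)} \geq c \cdot \fbiasat{}{4/K}^{-2} \log(n)/\Delta_i^2$, i.e., Case 2' (the constant $c$ is absorbed into the hidden $\gtrsim$).

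The main obstacle is that $c$ must be simultaneously \emph{large} enough to make Case 1' go through (so that $cn \geq n/6$) and \emph{small} enough that the pigeonhole forces $|B'| \geq K/4$. These requirements pull in opposite directions, so any slack must come from a sharpening of Lemma \ref{lem:stability}---either a larger lower bound on $|B|$, or looser thresholds inside Cases 1 and 2 that leave room for the derandomization. A minor subtlety is that Cases 1 and 2 need not be disjoint, but since we only need $q_i^{(1)} + q_i^{(2)} \geq \PROenvc{i \in B}$, any double-counting is harmless and in fact only helps us.
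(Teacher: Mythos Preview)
Your approach is essentially the same pigeonholing strategy the paper uses, and it is correct in spirit. The paper's proof (split across \cref{lem:smallBiasSetDerandomized} and \cref{lem:mainLowerBoundIngredientsAppendix}) also sums the indicator that arm $i$ lands in $B$, bounds the expectation from below by $|B|\geq K/2$, and then thresholds to extract a deterministic set.

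The one substantive difference is in how the threshold is applied, and this is exactly what resolves the ``obstacle'' you flag. You threshold $q_i^{(1)}$ and $q_i^{(2)}$ \emph{separately}, which forces you to bound $\sum_i(q_i^{(1)}+q_i^{(2)})$ by $2|B'| + 2c(K-|B'|)$ and leads to the tension between needing $c$ large (for Case~1') and small (for $|B'|\geq K/4$). The paper instead thresholds the \emph{union} probability $p_i := \PROenvc{\B_{1,i}\cup\B_{2,i}}$: defining $B'=\{i:p_i\geq \alpha/2\}$ gives $\EXP{|B|}\leq |B'|+\tfrac{\alpha}{2}(K-|B'|)$ because $p_i\leq 1$ (not $2$). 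Only \emph{after} securing $|B'|$ does the paper split: for each $i\in B'$ one has $\max(\PRO{\B_{1,i}},\PRO{\B_{2,i}})\geq \alpha/4$, and whichever side wins determines whether Case~1' or Case~2' applies. This two-stage ordering buys a factor of~$2$ in the pigeonhole and removes the tension entirely---no sharpening of \cref{lem:stability} is needed. Your version still yields $|B'|=\Omega(K)$ (e.g.\ $K/10$ with $c=1/6$), which is all the downstream argument actually requires, so the gap is only in the constants of this informal statement.
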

    At a high level, \cref{lem:mainLowerBoundIngredients} follows from
    \cref{lem:smallBiasSetBound} as follows: We show, via a pigeonholing
    argument, that since \cref{lem:smallBiasSetBound} is true for a random set
    $B$, there exists a deterministic set $B'$ (of size roughly half of $B$)
    such that either Case 1 or Case 2 happens with constant probability for
    each arm $i\in B'$. Since one of these two events happens with constant
    probability, we can translate each of these cases into a corresponding
    in-expectation condition (see Cases 1' and 2').

    With this deterministic guarantee, the proof is immediate: for each arm in $B'$, if the first case holds with constant probability, then, by definition of $\tau_i$, we have $\EXPenvc{\tau_i} = \Omega(n)$. In this case, our lower bound follows from \cref{cor:divDecompConsequence}. In the other case, we directly have our desired lower bound on $\EXPenvc{T_i(n)}$. For further details, refer to the full proof in \cref{sec:lowerBoundAppendix}.

\section*{Discussion and limitations}
\label{sec:discussion}
We introduce a simplified model to mathematically study the feedback loop created by affinity bias in hiring, and how to mitigate it. We prove a lower bound on the regret of \textit{any} algorithm (even one that knows the exact bias model), and provide an algorithm nearly matching this lower bound despite not knowing the bias.
Perhaps the biggest insight we gain from this analysis comes from the fact that this almost-optimal algorithm is a variant of the elimination algorithm (in which we pick each group one after the other in a round-robin fashion, until we're confident we have gathered enough information), which shows giving every group a chance is essentially the best policy in this setting.

Since we aim to minimize regret, an underlining assumption of this work is that some groups perform better than others. In the case where groups are defined by skill-set and we aim to pinpoint the most relevant skill-set for a job, this assumption may come at no cost. However, if we define groups based on sensitive attributes (which would allow to mitigate legally-relevant discrimination), assuming that the different groups have different expectations become problematic. This setting could be better modeled, e.g., by a variant of contextual bandits rather than our variant of the traditional bandits. We hope the proof techniques here pave the way towards this setting, which we leave for future work.

Finally, another assumption of the simplified model is that we always consider the whole history of the algorithm in our bias model, which would correspond to an ever-growing hiring committee. In practice, the number of people in a hiring committee is bounded. One could address this limitation by allowing the algorithm to depend only on the last $M$ hires. This is also left for future work.

\section{Acknowledgements}
We thank Lucas Dixon and Nithum Thain for helpful comments, and Sanjay Shakkottai for insightful discussions on the mathematical aspects of the model and analysis.

\printbibliography

\appendix

\newpage

\section{Extended related works}
\label{sec:app:extendedRW}
\textbf{Fairness and sequential decision making}
The study of fairness in the context of multi-armed bandit problems was first studied in \citep{JKMR16}. Since then, a number of works, including \citep{liu2017calibrated,GJKR18,KZA21,WBSJ21}, have considered a variety of fairness notions for bandits and sequential decision making problems. In these settings, the goal is typically to simultaneously minimize regret measured with respect to the observed rewards, while satisfying some notion of fairness. By contrast, in our setting, the decision maker receives biased feedback, and the goal is to minimize regret relative to the unbiased reward distribution. There is otherwise no reward or penalty for fairness (or lack thereof), diversity in action selection, etc., beyond the goal of minimizing regret.

\textbf{Bandits with biased feedback} A number of recent works have studied
bandit models where the observed feedback is biased. \citep{TH19} considers a
Bernoulli bandit problem where the observed rewards for arm $i$ are biased as a
function of (i) the number of times arm $i$ has been played and (ii) the
empirical average of past rewards from arm $i$. \citep{GCG22,SLMD22} considers
the problem of linear bandits where the observed feedback is biased by a linear function which depends on the current action and reward of the selected arm. 
None of these models, however, can capture the setting which we consider, where (a) the arm mean depends on the \emph{fraction} of times an arm is played, and (b) playing one arm decreases the biased means of every other arm.

\textbf{Non-stationary bandits}
Multi-armed bandit problems with time-varying rewards have a long history. One of the earliest models is the so-called rested bandit \citep{G79}, where the reward distribution of an arm changes (in some structured way) when it is pulled. A related setting is the restless bandits problem, introduced in \citep{W88}, where the reward distributions change with time, independently of the chosen arm. The rotting bandits and improving bandits problems \citep{HKR16,LCM17} consider settings where the means of an arm's reward is decreasing or increasing (respectively) as a function of the number of times it is played. 
The tallying bandit problem \citep{MLS22,MILS23} is a generalization of the rested, rotting, and increasing bandits settings which allows the mean of an arm's reward to vary as a function of the number of times an arm was played over the last $m$ time-steps.
The recharging bandits problem \citep{IK18} is a setting where the means vary as an increasing concave function of the time since they were last played.

The measure of regret in each of these settings is with respect to the \emph{observed} rewards with potentially changing distributions. By contrast, in our model, the observed rewards are non-stationary, but the distributions of unbiased rewards (against which we measure our regret) do not change with time. 

\textbf{Partial Monitoring}
Partial monitoring is a general sequential decision-making setting introduced by \citep{R99} which encompasses both bandit and full-information problems, and allows the feedback observed by the learner after playing an action to be different than the reward associated with that action. In the standard $K$-arm, $m$-outcome setting, there is a loss matrix $L\in \R^{K\times m}$ and feedback matrix $\Phi \in \Sigma^{K\times m}$, where $\Sigma$ is the set of $m$ outcomes. At each round, the learner selects an arm $A_t\in [K]$, simultaneously the environment selects an outcome $i_t$, then the learner suffers (\emph{but does not observe}) loss $L_{A_t i_t}$, and observes feedback $\Phi_{A_t i_t}$. The goal is to minimize regret \emph{with respect to the true losses}, not the observed feedback.
While our setting can be modelled as an adversarial partial monitoring problem (where the number of outcomes scales with the number of possible bias configurations for each arm), the regret guarantees do not transfer meaningfully. Indeed, the regret classification theorem \citep{BFPRS14,LS19} implies linear regret in the worst case (since the guarantees assume adversarial noise). Moreover, guarantees for stochastic partial monitoring (e.g., \citep{BPS11}) are not applicable, as the feedback distributions are not i.i.d.\ in our model.

\textbf{Techniques for bandit lower bounds}
Our lower bound techniques build upon a long line of works which characterized the fundamental limits of bandit problems.
Asymptotic instance-dependent bandit lower bounds were first given in \citep{LR85}, and later generalized in \citep{BK96}.
\citep{GMS19} gave a simple yet powerful framework for obtaining lower bounds by combining the standard ``divergence decomposition'' for KL divergences with a data-processing inequality. \citep{KCG16} exploited the fact that the divergence decomposition holds also until any finite stopping time to obtain lower bounds for the Best-Arm Identification problem.
Our work generalizes this framework to handle the challenging setting where
observed feedback is time-varying and dependent on the decisions of a policy
which potentially knows the bias model exactly.

\section{Relationship to partial monitoring literature}
\label{sec:partialMonitoringAppendix}
Here, we expand upon the comment in \cref{sec:intro} regarding partial monitoring algorithms suffering linear regret in our setting.

Let us consider a $2$-armed Bernoulli bandit instance with means $1>\mu_1 > \mu_2 > 0$, under a bias model $\mbiast[i]$. At each round, $Y_t$ is generated as follows: For each arm $i\in \braces{1,2}$ at time $t$, let $X_{i,t} \sim \bern{\mu_i}$, let $F_{i,t}\sim\bern{\mbiast[i]}$, let $Y_{i,t} = X_{i,t}F_{i,t}$, and take $Y_t = Y_{A_t, t}$. Clearly, this construction satisfies \cref{assump:multBias}.

We can model this setting as a partial monitoring problem as follows: let $\L,\Phi \in\R^{K \times m}$, where $K=2$ is the number of arms and $m=16$ is the number of outcomes (representing the $2^4$ possible configurations of $(X_{1,t},X_{2,t}, F_{1,t}, F_{2,t})$ for each $t$). Then, the loss and feedback matrices at each row $i\in [K]$ and each column $(X_{1,t},X_{2,t},F_{1,t}, F_{2,t}) \in \braces{0,1}^4$ is:
\begin{align*}
    \L[i, (X_{1,t}, X_{2,t}, F_{1,t}, F_{2,t})] &= 1 - X_{i,t}\\
    \Phi[i, (X_{1,t}, X_{2,t}, F_{1,t}, F_{2,t})] &= 1 - X_{i,t}F_{i,t}
\end{align*}

An adversary could simulate our setting by sampling $X_{i,t} \sim\bern{\mu_i}$ and $F_{i,t}\sim\bern{\mbiast[i]}$, then selecting the outcome $O(t) = (X_{1,t},X_{2,t}, F_{1,t},F_{2,t})$.

However, it is straightforward to observe that, when the adversary is allowed to choose the outcomes arbitrarily, then linear regret is inevitable. Indeed, consider the outcomes $O_1 = (0,1,0,0)$ and $O_2 = (1,0,0,0)$. Notice that $\L[i, O_1] = \1{i=1}$ and $\L[i, O_2] = \1{i=2}$, while $\Phi[i,O_1]=0=\Phi[i,O_2]$. Consider two environments: in the first, the adversary chooses $O_1$ for each time $t\in[n]$ (hence, action $2$ is optimal); in the second, the adversary chooses $O_2$ for each time $t\in[n]$ (hence, action $1$ is optimal). However, since the feedback is deterministically $0$ at every time-step, (the distribution of) any policy is the same in both environments. Thus, any policy must suffer regret at least $\nicefrac{n}{2}$ in one of these two environments.

\section{A phased elimination-style algorithm for unknown bias model}

\begin{algorithm}[tb]
\caption{The Elimination-style algorithm for unknown bias model, with added notations}\label{alg:eliminationUnknownBiasAppendix}
    \begin{algorithmic}
        \REQUIRE Time horizon $n\in\N$, sampling schedule $\mr{r}$. 
        \STATE Pull each arm $i\in[K]$ once (in arbitrary order) and discard the sample \COMMENT{Ensure $\initialbias_i \geq 1$ for all $i\in [K]$}
        \STATE Let $\tau_0 = 0, t=1$, and $\A_1 = [K]$
        \FOR{$r = 1,2,\ldots$}
            \STATE Set $b_i(r) = \sum_{j\in \A_r} \1{j < i}$ \COMMENT{The number of arms played before $i$ at each
        iteration $\ell$ during round $r$.}
            \FOR{$\ell \in [|\mr{r}|]$, $i\in \A_r$ in increasing order of index}
                \STATE Pull arm $i$ and receive feedback $Y_t$ (we will sometimes refer to this sample as $Y_{i,r,\ell}$).
                \STATE Set $\algtimeirl = t$ and update $t \leftarrow t+1$
            \ENDFOR
            \STATE Update:
            \begin{align*}
                \hmu_{i}(r) = \frac{\sum_{\ell\in [|\mr{r}|]} Y_{i,r,\ell}}{\mr{r}}
                \quad\text{and}\quad
                \A_{r + 1} = \braces{i \in \A_r : \max_{j\in \A_r} \hmu_{j}(r) - \hmu_{i}(r) \leq 2^{-r}}
                \quad\text{and}\quad
                \tau_r = t-1.
            \end{align*}
        \ENDFOR
    \end{algorithmic}
\end{algorithm}

Here, we analyze the regret of \cref{alg:eliminationUnknownBiasAppendix}.
Before stating the bound, let us first introduce a useful decomposition of $\fract$ for \cref{alg:eliminationUnknownBiasAppendix}:
\begin{lemma}\label{lem:eliminationFracDecomp}
    In the context of \cref{alg:eliminationUnknownBiasAppendix}, let $t = \algtimeirl$ be the time when the algorithm plays an active arm $i\in \A_r$ for the $\ell$th time in round $r$. Then,
    \begin{align*}
        \fract[i]
        = \frac{\initialbias_i + \sum_{r'=1}^{r-1} \mr{r'} + (\ell-1)}{\totalinitialbias + \sum_{r'=1}^{r-1}|\A_{r'}|\mr{r'} + |\A_r|(\ell - 1) + b_i(r)},
    \end{align*}
    where $\mr{r}$ is the number of times each active arm is played in round $r$, and $b_i(r) = \sum_{j\in \A_r} \1{j < i}$ is the number of arms played before $i$ in each iteration $\ell$ of round $r$.
\end{lemma}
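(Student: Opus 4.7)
The plan is to proceed by direct bookkeeping against the round-robin structure of \cref{alg:eliminationUnknownBiasAppendix}. Unwinding the definition of the reweighting from \cref{assump:multBias},
\begin{align*}
    \fract[i] = \frac{\initialbias_i + T_i(t-1)}{\totalinitialbias + t-1} \quad \text{where } t = \algtimeirl,
\end{align*}
so the task reduces to computing $T_i(t-1)$ and $t-1$ exactly in terms of $r$, $\ell$, and the sampling schedule $\mr{r'}$.

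For the numerator, I would first observe that the active sets are monotone, $\A_1 \supseteq \A_2 \supseteq \cdots \supseteq \A_r$, since each round only removes arms from the active set via the elimination rule. Because $i\in \A_r$, it follows that $i$ was also active (hence pulled exactly $\mr{r'}$ times) in every previous round $r' < r$. Within round $r$ itself, the current pull is the $\ell$th of arm $i$, so at the moment just before this pull arm $i$ has been played $\ell - 1$ times this round. Summing,
\begin{align*}
    T_i(t-1) = \sum_{r'=1}^{r-1} \mr{r'} + (\ell - 1).
\end{align*}

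For the denominator, I would partition all pulls strictly before time $t$ by round. Each round $r' < r$ contributes $|\A_{r'}|\mr{r'}$ pulls (one per active arm per iteration, for $\mr{r'}$ iterations). In round $r$, the $\ell - 1$ already completed full iterations over $\A_r$ contribute $|\A_r|(\ell - 1)$ pulls. Finally, within the current iteration, the arms of $\A_r$ with index smaller than $i$ are played before $i$ and number exactly $b_i(r) = \sum_{j\in\A_r} \1{j < i}$. Adding these three contributions,
\begin{align*}
    t - 1 = \sum_{r'=1}^{r-1} |\A_{r'}|\mr{r'} + |\A_r|(\ell - 1) + b_i(r).
\end{align*}

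Plugging these two expressions into the defining ratio for $\fract[i]$ yields the stated identity. I do not foresee any substantive obstacle; the proof is pure accounting. The only places to be careful are (i) invoking monotonicity of the active sets to justify that arm $i$ was played the full $\mr{r'}$ times in every past round, and (ii) correctly splitting round $r$ into its $\ell - 1$ completed passes plus the offset $b_i(r)$ within the partial pass.
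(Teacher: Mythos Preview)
Your proposal is correct and follows essentially the same bookkeeping argument as the paper: both compute $T_i(t-1)$ via the monotonicity $\A_{r+1}\subseteq \A_r$ (so arm $i$ was pulled $\mr{r'}$ times in each past round) and compute $t-1$ by splitting into completed rounds, completed passes within round $r$, and the partial pass offset $b_i(r)$. The paper's proof is more terse but identical in substance.
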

\begin{proof}
    This decomposition follows immediately from the definition of \cref{alg:eliminationUnknownBiasAppendix}, noting that (i) at each round $r'$, each arm is played $\mr{r'}$ times (for a total of $|\A_{r'}|\mr{r'}$ time steps), (ii) before arm $i$ is played for the $\ell$th time in round $r$, every arm has been played for $(\ell-1)|\A_{r}|$ times, and (iii) additionally the arms in $\A_r$ which have smaller index than $i$ ($b_i(r)$ in total) have been played one additional time. Similarly, since $\A_{r+1}\subseteq \A_r$, an active arm at the $\ell$th iteration of round $r$ was played $\sum_{r'=1}^{r-1}\mr{r'}$ times in the previous rounds (corresponding to the plays in rounds $1$ to $r-1$), plus $\ell-1$ times in round $r$ before the current iteration.
\end{proof}
The regret guarantee for \cref{alg:eliminationUnknownBiasAppendix} will also make use of the following notation:
\begin{definition}[Active arm upper confidence sets]\label{def:activeArmConfSets}
    Let us define the following sets $\Sidealupper{r}$:
    \begin{align*}
        \Sidealupper{1} = [K]
        &\quad\text{and}\quad
        \Sidealupper{r+1} = \braces{i \in \Sidealupper{r} : \Delta_i \fbiasminroundi{i}{r} \leq 2^{-r+1}},
    \end{align*}
    where
    \begin{align*}
        \fbiasminroundi{i}{r} &= 
        \fbiasroundsets{i}{r}{\Sidealupper{1},\ldots,\Sidealupper{r}},
    \end{align*}
    and $\fbiasround{i}{r}$ is the average reweighting of arm $i$ during round
    $r$, i.e.,
    \begin{align}\label{eq:multBiasElimination}
        \fbiasround{i}{r} 
        &= \frac{1}{\mr{r}} \sum_{t=\algtime{i}{r}{1}}^{\algtime{i}{r}{\mr{r}}}
        \fbiast[i]\1{A_t = i}\\
        &= \frac{1}{\mr{r}} \sum_{\ell \in [\mr{r}]}
        \fbiasat{i}{\frac{\initialbias_i + \sum_{r'=1}^{r-1} \mr{r'} +
        (\ell-1)}{\totalinitialbias + \sum_{r'=1}^{r-1}|\A_{r'}|\mr{r'} +
    |\A_r|(\ell-1) + b_i(\A_r)}}.
    \end{align}
    Further, let us denote $u_i$ as the last round when $i$ is in $U_r$, i.e.,
    \begin{align*}
        u_i = \min\braces{r \geq 1 : i \not\in \Sidealupper{r+1}}.
    \end{align*}
\end{definition}

Before stating our main algorithmic guarantee, we establish some important
properties of \cref{def:activeArmConfSets}:
\begin{lemma}\label{lem:activeArmConfSetsProperties}
    For every $r\geq 1$, the sets $\Sidealupper{r}$ and
    associated functions $\fbiasminroundi{i}{r}$ from
    \cref{def:activeArmConfSets} satisfy the following: Let $i_{*} =
    \arg\max \mu_i$ be the index of an optimal arm. Then, for any arm $i \in U_r$, under \cref{assump:multBias},
    \begin{align*}
        i_* \in \Sidealupper{r}
        \quad\text{and}\quad
        \fbiasminroundi{i}{r} \leq \fbiasroundsets{i}{r}{\A_1,\ldots,\A_r}
        \quad \forall i \in [K], \A_\ell \subseteq U_\ell ~\forall \ell\in [r].
    \end{align*}
\end{lemma}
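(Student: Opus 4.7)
I would prove both claims by a short forward induction on $r \geq 1$, which is natural since $\Sidealupper{r}$ and $\fbiasminroundi{i}{r}$ are defined recursively. Both claims reduce to elementary observations: the optimal-arm claim is immediate from $\Delta_{i_*} = 0$, while the monotonicity claim is a termwise monotonicity argument using the explicit decomposition of $\fract[i]$ from \cref{lem:eliminationFracDecomp}.

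\textbf{First claim.} For the base case $r=1$, $\Sidealupper{1} = [K] \ni i_*$. For the inductive step, assume $i_* \in \Sidealupper{r}$. Since $i_* \in \argmax_{j} \mu_j$, we have $\Delta_{i_*} = 0$, so $\Delta_{i_*}\fbiasminroundi{i_*}{r} = 0 \leq 2^{-r+1}$, and the definition of $\Sidealupper{r+1}$ in \cref{def:activeArmConfSets} immediately yields $i_* \in \Sidealupper{r+1}$.

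\textbf{Second claim.} Fix any $i \in [K]$ and any sequence with $\A_\ell \subseteq \Sidealupper{\ell}$ for each $\ell \in [r]$. By \cref{lem:eliminationFracDecomp}, $\fbiasroundsets{i}{r}{\A_1,\ldots,\A_r}$ is the average over $\ell \in [\mr{r}]$ of $\fbiasat{i}{\cdot}$ evaluated at
\begin{equation*}
    \frac{\initialbias_i + \sum_{r'=1}^{r-1} \mr{r'} + (\ell-1)}{\totalinitialbias + \sum_{r'=1}^{r-1} |\A_{r'}|\mr{r'} + |\A_r|(\ell-1) + b_i(\A_r)},
\end{equation*}
where the numerator depends only on $i,r,\ell$ and not on the $\A_\ell$'s. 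Because $\A_{r'} \subseteq \Sidealupper{r'}$ gives $|\A_{r'}| \leq |\Sidealupper{r'}|$, and $\A_r \subseteq \Sidealupper{r}$ gives both $|\A_r| \leq |\Sidealupper{r}|$ and $b_i(\A_r) = \sum_{j\in \A_r}\1{j<i} \leq \sum_{j\in \Sidealupper{r}}\1{j<i} = b_i(\Sidealupper{r})$, each denominator under the $\Sidealupper{\ell}$'s is at least as large as the corresponding one under the $\A_\ell$'s. Hence each fraction is at most as large under the $\Sidealupper{\ell}$'s, and by the monotonicity of $\fbiasat{i}{\cdot}$ from \cref{assump:multBias}, so is each value $\fbiasat{i}{\cdot}$. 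Averaging over $\ell \in [\mr{r}]$ gives $\fbiasminroundi{i}{r} = \fbiasroundsets{i}{r}{\Sidealupper{1},\ldots,\Sidealupper{r}} \leq \fbiasroundsets{i}{r}{\A_1,\ldots,\A_r}$, as required.

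\textbf{Main obstacle.} There is no substantive obstacle: the first claim is a one-line consequence of $\Delta_{i_*}=0$, and the second is pure monotonicity once one invokes \cref{lem:eliminationFracDecomp}. The only mild subtlety is book-keeping --- one must make sure that the numerator of the decomposition does not depend on the set sequence and that \emph{all three} set-dependent terms in the denominator ($|\A_{r'}|\mr{r'}$, $|\A_r|(\ell-1)$, and $b_i(\A_r)$) move monotonically in the same direction under the inclusion $\A_\ell \subseteq \Sidealupper{\ell}$, so that termwise monotonicity of $\fbiasat{i}{\cdot}$ can be applied uniformly.
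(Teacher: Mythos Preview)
Your proof is correct and follows essentially the same approach as the paper's own proof: the first claim via $\Delta_{i_*}=0$, and the second via termwise monotonicity of the explicit fraction decomposition together with the nondecreasing property of $f$. The only cosmetic difference is that the paper writes out the definition of $\fbiasminroundi{i}{r}$ directly from \cref{def:activeArmConfSets} rather than citing \cref{lem:eliminationFracDecomp}, and it frames the argument for $i\in\A_r$ (the case actually needed downstream) rather than for all $i\in[K]$; your slightly more explicit bookkeeping of the three denominator terms is, if anything, a bit cleaner.
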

Before proving \cref{lem:activeArmConfSetsProperties}, we first briefly give some intuition for \cref{def:activeArmConfSets} in light of this result. Recall that \cref{alg:eliminationUnknownBias} maintains a set of ``active'' arms $\A_r$ during each round $r$. At the end of each round $r$, the algorithm eliminates arms whose empirically averaged feedback is sufficiently smaller than the largest observed feedback. More specifically, it eliminates all arms $i$ such that:
\begin{align*}
    \max_{j\in \A_r} \hmu_j(r) - \hmu_i(r) > 2^{-r},
\end{align*}
By definition, the expected feedback averaged over round $r$ (and conditioned on the observations from previous rounds) for an active arm $i\in \A_r$ is:
\begin{align*}
    \tmu_i(r) = \mu_i \fbiasroundsets{i}{r}{\A_1,\ldots,\A_r}.
\end{align*}
Therefore, the expected gap between an optimal arm $i_*$'s feedback and any other active arm $i$'s feedback averaged over round $r$ is given by:
\begin{align*}
    \tmu_{i_*}(r) - \tmu_i(r)
    &= \mu_{i_*}\fbiasroundsets{i_*}{r}{\A_1,\ldots,\A_r}
    - \mu_{i}\fbiasroundsets{i}{r}{\A_1,\ldots,\A_r}\\
    &= \Delta_{i}\fbiasroundsets{i}{r}{\A_1,\ldots,\A_r}
    + \mu_{i_*}(\fbiasroundsets{i}{r}{\A_1,\ldots,\A_r} - \fbiasroundsets{i_*}{r}{\A_1,\ldots,\A_r}).
\end{align*}
In \cref{lem:eliminationSuboptimalityGaps}, we show that, for sufficiently large time horizons $T$ and by the choice of sampling schedule $\mr{r}$, the second term above is negligible (i.e., sufficiently smaller than $2^{-r}$). Hence, the gap above is dominated by the first term, $\Delta_i \fbiasroundsets{i}{r}{\A_1,\ldots,\A_r}$. Now, we show in \cref{lem:eliminationFailureProbability} that, with high probability, $\A_r \subseteq U_r$ and $i_* \in \A_r$ for all rounds $r$. Hence, with high probability, by \cref{lem:activeArmConfSetsProperties} it holds that $\fbiasroundsets{i}{r}{\A_1,\ldots,\A_r} \geq \fbiasminroundi{i}{r}$ for every active arm $i\in \A_r$, and the suboptimal arm is not eliminated. Thus, we can interpret $\Delta_i \fbiasminroundi{i}{r}$ as (essentially) a high probability lower bound on the ``reweighted'' suboptimality gap $\tmu_{i_*}(r) - \tmu_{i}(r)$. Thus, the sets $U_{r+1}$ mimic the definition of the active sets $\A_{r+1}$ in \cref{alg:eliminationUnknownBias}, replacing the empirical gap $\max_{j\in \A_r} \hmu_j(r) - \hmu_i(r)$ with $\Delta_i\fbiasminroundi{i}{r} \lesssim \max_{j\in U_r} \tmu_j(r) - \tmu_i(r)$.

\begin{proof}[Proof of \cref{lem:activeArmConfSetsProperties}]
    We first notice that, since $\Delta_{i_*} = 0$ for any optimal arm $i_*$, trivially we have that $\Delta_{i_*} \fbiasminroundi{i_*}{r} = 0 \leq 2^{-r+1}$ for every $r$. Hence, by definition of the upper confidence sets from \cref{def:activeArmConfSets}, $i_* \in \Sidealupper{r}$ for all
    $r$.

    To establish the remaining claim, we notice that, for any round $r$ such that $i\in \A_r$ (hence also $i\in U_r$ since $\A_r \subseteq U_r$):
    \begin{align*}
        \fbiasminroundi{i}{r}
        &= \fbiasroundsets{i}{r}{U_1,\ldots U_r}\\
        &= \frac{1}{\mr{r}}\sum_{\ell=1}^{\mr{r}} \fbiasat{i}{\frac{\initialbias_i + \sum_{r'=1}^{r-1}\mr{r'} + (\ell-1)}{\totalinitialbias + \sum_{r'=1}^{r-1} |U_{r'}|\mr{r'} + |U_r|(\ell-1) + b_i(U_r)}}\\
        &= \frac{1}{\mr{r}}\sum_{\ell=1}^{\mr{r}} \fbiasat{i}{\frac{\initialbias_i + \sum_{r'=1}^{r-1}\mr{r'} + (\ell-1)}{\totalinitialbias + \sum_{r'=1}^{r-1} |U_{r'}|\mr{r'} + |U_r|(\ell-1) + \sum_{j\in U_r} \1{j < i}}}\\
        &\leq \frac{1}{\mr{r}}\sum_{\ell=1}^{\mr{r}} \fbiasat{i}{\frac{\initialbias_i + \sum_{r'=1}^{r-1}\mr{r'} + (\ell-1)}{\totalinitialbias + \sum_{r'=1}^{r-1} |\A_{r'}|\mr{r'} + |\A_r|(\ell-1) + \sum_{j\in \A_r} \1{j < i}}}\\
        &= \frac{1}{\mr{r}}\sum_{\ell=1}^{\mr{r}} \fbiasat{i}{\frac{\initialbias_i + \sum_{r'=1}^{r-1}\mr{r'} + (\ell-1)}{\totalinitialbias + \sum_{r'=1}^{r-1} |\A_{r'}|\mr{r'} + |\A_r|(\ell-1) + b_i(\A_r)}}\\
        &=\fbiasroundsets{i}{r}{\A_1,\ldots,\A_r},
    \end{align*}
    where in the inequality above, we used the facts that (i) $\A_{r'} \subseteq U_{r'}$ for every $r'\in [r]$ by definition, and (ii) the function $f(\cdot)$ is non-decreasing by \cref{assump:multBias}. Notice that this inequality becomes an equality in the case that $r=1$, since $\A_1 = U_1$.
\end{proof}

\begin{theorem}[Generalized version of regret guarantee from \cref{thm:regretConstSuboptimalityGaps}]\label{thm:eliminationUnknownBiasAppendix}
    Suppose that \cref{alg:eliminationUnknownBiasAppendix} is run using the
    sampling schedule:
    \begin{align}\label{eq:eliminationArmPullsInRound}
        \mr{r} = 2^{2r + 5}\logp{\frac{12}{\pi^2}K^2 r^2 n},
    \end{align}
    for a time horizon $n$ sufficiently large such that:
    \begin{align}\label{eq:eliminationSuccessProbSufficientlyLarge}
        \log(n K) \geq \frac{\mu_1 L}{2^{5}}\parens{1 + \max\braces{\parens{1 + \frac{\initialbias_{\text{max}} - \initialbias_{\text{min}}}{K}}\logp{1 + 2^{8}\logp{\frac{12}{\pi^2}K^2 n}}, \parens{1 + \initialbias_{\text{max}} - \initialbias_{\text{min}}}\logp{13}}}.
    \end{align}
    Then, assuming the environment satisfies \eqref{eq:biasModel} and \cref{assump:multBias} with $\mu_i \in [0,1]$ for all $i\in [0,1]$, the regret of \cref{alg:eliminationUnknownBiasAppendix} is bounded as:
    \begin{align}\label{eq:eliminationRegretInstDep}
        \regret{n}
        = \sum_{i : \Delta_i > 0} \Delta_i \EXP{T_i(n)}
        \leq \sum_{i : \Delta_i > 0} \Delta_i  + \frac{2^{11}}{3}\logp{\frac{12}{\pi^2} K^2 n^3}\frac{1}{\Delta_i \fbiasminroundi{i}{u_i-1}^2}.
    \end{align}
    Further, we also have the bound:
    \begin{align}\label{eq:eliminationRegretWorstCase}
        \regret{n}
        &\leq K + 2\sqrt{\frac{2^{11}n K \logp{\frac{12}{\pi^2}K^2 n^3}}{3\fbiasat{i}{\frac{\initialbias_{\mathrm{min}}}{\totalinitialbias + K-1}}^2}}.
    \end{align}
\end{theorem}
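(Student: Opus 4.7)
My plan is to follow the standard phased-elimination blueprint of \citet{AO10}, but with two non-standard wrinkles to handle the evolving biased feedback: (i)~I must control the cross-term \eqref{eq:empiricalGapDecompTwo} in the decomposition of $\hmu_{i_*}(r)-\hmu_i(r)$, and (ii)~I must relate the ``reweighted'' suboptimality gaps $\Delta_i \fbiasminroundi{i}{r}$ (which govern the natural elimination round) back to the true $\Delta_i$'s.

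\textbf{Step 1: Define a good concentration event.} For each round $r\geq 1$ and each $i\in\A_r$, conditional on $\F_{r-1}$, the samples $Y_{i,r,\ell}$ are independent and $1$-subGaussian with mean $\mu_i \fbiasround{i}{r}$. I would set
\[
\mathcal{G} = \bigcap_{r\geq 1}\bigcap_{i\in\A_r}\braces*{\,\bigl|\hmu_i(r) - \mu_i\fbiasround{i}{r}\bigr| \leq 2^{-r-3}\,},
\]
and use Hoeffding's inequality together with the sampling schedule $\mr{r} = 2^{2r+5}\log(\tfrac{12}{\pi^2}K^2 r^2 n)$ and a union bound over $r\leq \log_2 n$ and $i\in[K]$ to conclude $\PP(\mathcal{G}^c)\leq \tfrac{6}{\pi^2 n}\sum_r \tfrac{1}{r^2}\leq \tfrac{1}{n}$ (up to constants).

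\textbf{Step 2: Induct to show $\A_r\subseteq U_r$ and $i_*\in\A_r$ on $\mathcal{G}$.} The base case is trivial since $\A_1 = U_1 = [K]$. For the inductive step, on $\mathcal{G}$, for any $i,j\in \A_r$ I would decompose the empirical gap using \eqref{eq:empiricalGapDecompOne}--\eqref{eq:empiricalGapDecompTwo}:
\[
\hmu_j(r) - \hmu_i(r) = (\mu_j-\mu_i)\fbiasround{j}{r} + \mu_i\bigl(\fbiasround{j}{r}-\fbiasround{i}{r}\bigr) + \text{(concentration error $\leq 2^{-r-2}$)}.
\]
The main obstacle is controlling the middle cross-term. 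Using \cref{lem:eliminationFracDecomp} together with the Lipschitz property of $\fbiasat{i}{\cdot}$, I would prove the claimed bound
$|\mu_i(\fbiasround{j}{r}-\fbiasround{i}{r})| \lesssim 2^{-2r} L(1 + \tfrac{\initialbias_{\max}-\initialbias_{\min}}{K})\tfrac{\log\log n}{\log n}$,
where the two fractions differ only by (a)~the tiebreaker $b_i(\A_r)-b_j(\A_r)\leq K$ in the denominator and (b)~the $\initialbias$-offset in the numerator; condition \eqref{eq:eliminationSuccessProbSufficientlyLarge} on $n$ then forces this term to be $\leq 2^{-r-2}$, and hence negligible compared to the elimination threshold $2^{-r}$. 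This gives $i_*\in\A_{r+1}$ (since subtracting the decomposition from the optimal arm gives a nonnegative leading term up to the lower-order corrections) and conversely any surviving $i\in\A_{r+1}$ satisfies $\Delta_i\fbiasround{i}{r}\leq 2^{-r+1}$. Combined with \cref{lem:activeArmConfSetsProperties} (which shows $\fbiasminroundi{i}{r}\leq \fbiasround{i}{r}$ whenever $\A_\ell\subseteq U_\ell$ for all $\ell\leq r$), this yields the inductive conclusion $\A_{r+1}\subseteq U_{r+1}$.

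\textbf{Step 3: Bound $T_i(n)$ and conclude.} On $\mathcal{G}$, any suboptimal $i$ satisfies $T_i(n)\leq \sum_{r=1}^{u_i} \mr{r}\leq 2\mr{u_i}$ by the geometric growth of $\mr{r}$. By definition of $u_i$, $\Delta_i \fbiasminroundi{i}{u_i}>2^{-u_i+1}$, so $4^{u_i}\lesssim 1/(\Delta_i\fbiasminroundi{i}{u_i})^2$, giving $\Delta_i T_i(n) \lesssim \log(n K)/(\Delta_i\fbiasminroundi{i}{u_i-1}^2)$ (I will pass from $u_i$ to $u_i-1$ using monotonicity properties of $\fbiasminroundi{i}{\cdot}$ and absorb constants). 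Off $\mathcal{G}$, we incur at most $\Delta_i\cdot n\cdot \PP(\mathcal{G}^c)\leq \Delta_i$, yielding the additive ``$\sum \Delta_i$'' term in \eqref{eq:eliminationRegretInstDep}. Summing over $i$ gives the instance-dependent bound. For the worst-case bound \eqref{eq:eliminationRegretWorstCase}, I would split arms by whether $\Delta_i$ is above or below a threshold $\delta$, bound the small-$\Delta_i$ arms by $\delta\cdot n$, and the large-$\Delta_i$ arms by $\sum 1/(\Delta_i \fbiasminroundi{i}{u_i-1}^2)\cdot\log(\cdot)\leq K\log(nK)/(\delta\cdot \fbiasat{}{\tfrac{\initialbias_{\min}}{\totalinitialbias+K-1}}^2)$ using the lower bound $\fbiasminroundi{i}{u_i-1}\geq \fbiasat{}{\tfrac{\initialbias_{\min}}{\totalinitialbias+K-1}}$ (the smallest possible value of the fraction argument across any round and arm). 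Optimizing $\delta$ gives the claimed $\sqrt{nK\log n}/\fbiasat{}{\cdot}$ bound.

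The main obstacle, as flagged above, is Step~2: showing that the cross-term from the bias evolution is quantitatively dominated by the elimination threshold. This reduces to a careful Lipschitz estimate on $\fbiasround{i}{r}-\fbiasround{j}{r}$ using the explicit form in \cref{lem:eliminationFracDecomp}, together with the assumption that $n$ is large enough as in \eqref{eq:eliminationSuccessProbSufficientlyLarge}; everything else is a standard elimination-style bookkeeping exercise.
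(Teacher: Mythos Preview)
Your proposal is essentially correct and follows the same approach as the paper: bound the cross-term in the empirical gap via the Lipschitz property (the paper's \cref{lem:eliminationSuboptimalityGaps}), show by induction that the optimal arm survives and $\A_r\subseteq U_r$ with high probability (the paper's \cref{lem:eliminationFailureProbability}), then translate the elimination round $u_i$ into an arm-pull bound (the paper's \cref{lem:eliminationArmBound}) and split on $\Delta_i\lessgtr\Delta$ for the worst-case bound.

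Two small points where your write-up deviates from the paper. First, your per-arm concentration threshold $2^{-r-3}$ is too tight for the given $\mr{r}$: with $\mr{r}=2^{2r+5}\log(\cdot)$ you get only $\exp(-\tfrac14\log(\cdot))$ per deviation, which does not sum to $O(1/n)$. The paper instead bounds the \emph{difference} $\hmu_j(r)-\hmu_i(r)$ directly at threshold $2^{-r-1}$ (treating it as $2$-subGaussian), which is exactly what the schedule is calibrated for; if you insist on per-arm bounds you need $2^{-r-2}$, and then the slack $2^{-r-2}+2^{-r-2}+\text{(cross)}\leq 2^{-r}$ still closes. Second, the paper structures the failure probability as a round-by-round peeling (bounding $\PP(\text{good up to }r,\text{ fails at }r{+}1)$) rather than a single global event $\mathcal{G}$ indexed by the random set $\A_r$; your formulation is fine but requires the (implicit) step of union-bounding over all $i\in[K]$ and conditioning on $\F_{r-1}$ to make the Azuma argument rigorous, since the samples are only assumed conditionally subGaussian (a martingale structure), not independent.
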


Let us briefly comment on how to interpret the regret guarantee of
\cref{thm:eliminationUnknownBiasAppendix}. Recall from
\cref{lem:activeArmConfSetsProperties} that the average multiplicative
reweighting of arm $i$ during round $r$ can be written as:
\begin{align*}
    \fbiasround{i}{r}
    &= \frac{1}{\mr{r}} \sum_{\ell=1}^{\mr{r}} \fbiasat{i}{\frac{\initialbias_i + \sum_{r'=1}^{r-1}\mr{r'} + (\ell-1)}{\totalinitialbias + \sum_{r'=1}^{r-1} |\A_{r'}|\mr{r'} + |\A_r|(\ell-1) + b_i(r)}},
\end{align*}
where $b_i(r) = \sum_{j\in \A_r} \1{j < i}$ is the number of arms played before
$i$ during each iteration of round $r$.
Now, one can show that $\Sideallower{r'} \subseteq \A_{r'} \subseteq
\Sidealupper{r'}$ holds for all $r' \leq n$ with high probability (see the proof of
\cref{lem:eliminationFailureProbability} for details). Hence, with high
probability, $\fbiasminround{r} \leq \fbiasround{i}{r}$ for every $i\in \A_r$.

Now, by definition, $u_i$ is the smallest $r$ such that $i \not\in
\Sidealupper{r+1}$. Since $\A_{r'} \subset \Sidealupper{r'}$ for all $r'$ with
high probability, it follows that $i \not \in \A_{u_i + 1}$ with high
probability. Also, notice that $i \in \Sideal{u_i}$, so $\Delta_i
\fbiasminround{u_i - 1} \leq 2^{-(u_i - 1) + 1}$, or equivalently, $2^{u_i}
\leq \frac{4}{\fbiasminround{u_i - 1}\Delta_i}$. Recalling our choice of
sampling schedule $\mr{r}$, this impies that, with high probability, each
suboptimal arm will be played no more than 
$\sum_{r' \leq u_i} \mr{r'} \lesssim \mr{u_i} \lesssim
\frac{\log(n)}{\fbiasminroundi{i}{u_i - 1}^2 \Delta_i^2}$ times.

Because of the above observations, we can interpret $\fbiasminroundi{i}{u_i - 1}$ to be a
high-probability lower bound on the average reweighting on arm $i$ over round
$u_i - 1$, i.e., the round before $i$ is eliminated (with high probability) by
\cref{alg:eliminationUnknownBias}.

Before proving \cref{thm:eliminationUnknownBiasAppendix}, we interpret this
result by giving bounds on $\fbiasminroundi{i}{u_i - 1}$.

\begin{lemma}[Typical scaling]\label{lem:fminTypicalScaling}
    Recall the function $\fbiasminroundi{i}{r}$ from \cref{def:activeArmConfSets} defined for each round $r\geq 1$ and $i\in U_r$. For any round $r \geq 1$,
    \begin{align*}
        \fbiasminroundi{i}{r}
        \geq \fbiasat{i}{\min\braces{\frac{\initialbias_i}{\totalinitialbias + i-1}, \frac{1}{K}}}
        \geq \fbiasat{i}{\frac{\initialbias_{\min}}{\totalinitialbias + K-1}}.
    \end{align*}
    Further, if $\initialbias_{\max} \leq 2^7\logp{\nicefrac{12 K^2 n}{\pi^2}}$ and $n \geq r \geq 2$, then
    \begin{align*}
        \fbiasminroundi{i}{r}
        \geq \fbiasat{i}{\frac{1}{15 K}}.
    \end{align*}
\end{lemma}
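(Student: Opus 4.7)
The plan is to reduce the statement to a uniform lower bound on the argument of $\fbiasat{i}{\cdot}$ inside the defining average of $\fbiasminroundi{i}{r}$, and then invoke the monotonicity of $\fbiasat{i}{\cdot}$ from \cref{assump:multBias}. Recalling from \cref{def:activeArmConfSets} that
\begin{align*}
    \fbiasminroundi{i}{r} = \frac{1}{\mr{r}} \sum_{\ell=1}^{\mr{r}} \fbiasat{i}{\frac{\initialbias_i + \sum_{r'<r}\mr{r'} + (\ell-1)}{\totalinitialbias + \sum_{r'<r}|\Sidealupper{r'}|\mr{r'} + |\Sidealupper{r}|(\ell-1) + b_i(\Sidealupper{r})}},
\end{align*}
and that $\fbiasat{i}{\cdot}$ is non-decreasing, it suffices to lower-bound the argument uniformly over $\ell\in[\mr{r}]$.

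For the first inequality, I would apply the elementary mediant inequality $(a+c)/(b+d) \geq \min\{a/b,\, c/d\}$, valid whenever $b,d>0$. Since $|\Sidealupper{r'}|\leq K$ and $b_i(\Sidealupper{r})\leq i-1$ (the arms in $\Sidealupper{r}$ strictly below $i$ form a subset of $\{1,\dots,i-1\}$), the denominator is bounded by $(\totalinitialbias + i-1) + KN$ with $N := \sum_{r'<r}\mr{r'} + (\ell-1)$, while the numerator is $\initialbias_i + N$. Splitting this ratio as a mediant of $\initialbias_i/(\totalinitialbias+i-1)$ and $N/(KN)$ gives
\begin{align*}
    \frac{\initialbias_i + N}{(\totalinitialbias+i-1) + KN} \geq \min\braces{\frac{\initialbias_i}{\totalinitialbias + i-1},\, \frac{1}{K}}
\end{align*}
whenever $N>0$, and the edge case $r=\ell=1$ (where $N=0$) is handled separately since then the argument equals $\initialbias_i/(\totalinitialbias + i-1)$ exactly. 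Monotonicity of $\fbiasat{i}{\cdot}$ then yields the first claimed bound. The second inequality follows by checking each term of the minimum separately: $\initialbias_i/(\totalinitialbias+i-1)\geq \initialbias_{\min}/(\totalinitialbias+K-1)$ is immediate, and $1/K\geq \initialbias_{\min}/(\totalinitialbias+K-1)$ holds because $\totalinitialbias = \sum_j \initialbias_j \geq K\initialbias_{\min}$, so $\totalinitialbias + K-1 \geq K\initialbias_{\min}$.

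For the second part, under the extra hypotheses $\initialbias_{\max}\leq 2^7\logp{\nicefrac{12K^2 n}{\pi^2}} = \mr{1}$ and $r\geq 2$, I would refine the estimate on the argument. For any $\ell$, $N\geq \mr{1}\geq \initialbias_{\max}$, and also $\totalinitialbias \leq K\initialbias_{\max} \leq KN$. Combined with $\initialbias_i\geq 1$, the earlier denominator bound gives
\begin{align*}
    \frac{\initialbias_i + N}{\totalinitialbias + KN + K - 1} \geq \frac{1 + N}{2KN + K-1} \geq \frac{1}{15K},
\end{align*}
where the last inequality follows by cross-multiplication ($15K(1+N) \geq 2KN + K - 1$ reduces to $14K + 13KN + 1 \geq 0$, which holds with large slack). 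Monotonicity of $\fbiasat{i}{\cdot}$ concludes. The main obstacle is mostly bookkeeping: correctly bounding $b_i(\Sidealupper{r})$ and $|\Sidealupper{r'}|$ so that the mediant step delivers the factor $1/K$, and handling the degenerate $r=\ell=1$ case where the ``second fraction'' in the mediant vanishes. Everything else is monotonicity plus elementary arithmetic.
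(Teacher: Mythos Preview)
Your proposal is correct. For the first claim, your mediant-inequality argument is essentially the same as the paper's: the paper writes the argument of $\fbiasat{i}{\cdot}$ as a convex combination of $\frac{\initialbias_i}{\totalinitialbias + b_i(U_r)}$ and the terms $\frac{1}{|U_{r'}|}$, then bounds each below by $\min\{\frac{\initialbias_i}{\totalinitialbias+i-1},\frac{1}{K}\}$. Your version first coarsens the denominator by $|U_{r'}|\leq K$ and $b_i(U_r)\leq i-1$, then applies the two-term mediant once; the content is identical.

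For the second claim your route is genuinely simpler. The paper keeps the per-round decomposition, drops the $\initialbias_i$ and $(\ell-1)$ numerator terms, and must then control $\frac{|U_r|(\ell-1)}{\sum_{r'<r}|U_{r'}|\mr{r'}}\leq \frac{\mr{r}}{\sum_{r'<r}\mr{r'}}$ via the explicit geometric form of the schedule $\mr{r}=2^{2r+5}\log(\cdot)$, obtaining the bound $12$ after a page of arithmetic; together with $\frac{\totalinitialbias+K-1}{K\mr{1}}\leq 2$ this yields the constant $15$. You instead observe that $r\geq 2$ forces $N\geq \mr{1}\geq \initialbias_{\max}\geq \totalinitialbias/K$, so the entire initial-bias contribution to the denominator is absorbed into $KN$, giving $\frac{1+N}{2KN+K-1}\geq\frac{1}{15K}$ by a one-line cross-multiplication. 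Your argument uses only $\mr{1}\geq \initialbias_{\max}$ and nothing else about the schedule, which is both shorter and more robust.
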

\begin{proof}
    Begin by recalling, by \cref{def:activeArmConfSets},
    \begin{align*}
        \fbiasminroundi{i}{r}
        = \fbiasroundsets{i}{r}{U_1,\ldots,U_r}
        = \frac{1}{\mr{r}} \sum_{\ell \in [\mr{r}]}
        \fbiasat{i}{\frac{\initialbias_i + \sum_{r'=1}^{r-1} \mr{r'} +
        (\ell-1)}{\totalinitialbias + \sum_{r'=1}^{r-1}|U_{r'}|\mr{r'} +
    |U_r|(\ell-1) + b_i(U_r)}}.
    \end{align*}
    Recall that, by \cref{assump:multBias}, $\fbiasat{i}{\cdot}$ is nondecreasing. Therefore, to establish our claims, it suffices to lower bound, for each $\ell \in [\mr{r}]$, the fraction of times arm $i$ is played at the $\ell$th iteration of round $r$:
    \begin{align*}
        \frac{\initialbias_i + \sum_{r'=1}^{r-1} \mr{r'} + (\ell-1)}{\totalinitialbias + \sum_{r'=1}^{r-1}|U_{r'}|\mr{r'} + |U_r|(\ell-1) + b_i(U_r)}.
    \end{align*}
    
    Now, we can decompose the fraction of times an arm $i$ is played as:
    \begin{align}\label{eq:armFracDecomp}
        \frac{\initialbias_i + \sum_{r'=1}^{r-1} \mr{r'} + (\ell-1)}{\totalinitialbias + \sum_{r'=1}^{r-1}|U_{r'}|\mr{r'} + |U_r|(\ell-1) + b_i(U_r)}
        &= \frac{\initialbias_i}{\totalinitialbias + b_i(U_r)}\frac{\totalinitialbias + b_i(U_r)}{\totalinitialbias + \sum_{r'=1}^{r-1}|U_{r'}|\mr{r'} + |U_r|(\ell-1) + b_i(U_r)}\\
        &\quad+ \sum_{r'=1}^{r-1}\frac{1}{|U_{r'}|}\frac{|U_{r'}|\mr{r'}}{\totalinitialbias + \sum_{r'=1}^{r-1}|U_{r'}|\mr{r'} + |U_r|(\ell-1) + b_i(U_r)}\\
        &\quad+ \frac{\ell - 1}{|U_r|(\ell-1)}\frac{|U_r|(\ell-1)}{\totalinitialbias + \sum_{r'=1}^{r-1}|U_{r'}|\mr{r'} + |U_r|(\ell-1) + b_i(U_r)}.
    \end{align}
    Using the fact that $U_{r+1} \subseteq U_r$ for all $r$ and $U_1 = [K]$, we thus conclude that:
    \begin{align*}
        \frac{\initialbias_i + \sum_{r'=1}^{r-1} \mr{r'} + (\ell-1)}{\totalinitialbias + \sum_{r'=1}^{r-1}|U_{r'}|\mr{r'} + |U_r|(\ell-1) + b_i(U_r)}
        \geq \min\braces{\frac{\initialbias_i}{\totalinitialbias + b_i(U_1)}, \frac{1}{|U_1|}}
        &= \min\braces{\frac{\initialbias_i}{\totalinitialbias + i-1}, \frac{1}{K}}\\
        &\geq \frac{\initialbias_{\min}}{\totalinitialbias + K-1}.
    \end{align*}
    This establishes the first claim.

    We now focus on establishing the refined claim for rounds $n \geq r\geq 2$. Using the decomposition from \eqref{eq:armFracDecomp}, it follows that
    \begin{align*}
        \frac{\initialbias_i + \sum_{r'=1}^{r-1} \mr{r'} + (\ell-1)}{\totalinitialbias + \sum_{r'=1}^{r-1}|U_{r'}|\mr{r'} + |U_r|(\ell-1) + b_i(U_r)}
        &\geq \sum_{r'=1}^{r-1}\frac{1}{|U_{r'}|}\frac{|U_{r'}|\mr{r'}}{\totalinitialbias + \sum_{r'=1}^{r-1}|U_{r'}|\mr{r'} + |U_r|(\ell-1) + b_i(U_r)}.
    \end{align*}
    Since $r\geq 2$ and $K \geq |U_{r'}| \geq |U_r|$ for $r'\leq r$, we can further bound the RHS above as:
    \begin{align*}
        \sum_{r'=1}^{r-1}\frac{1}{|U_{r'}|}\frac{|U_{r'}|\mr{r'}}{\totalinitialbias + \sum_{r'=1}^{r-1}|U_{r'}|\mr{r'} + |U_{r}|(\ell-1) + b_i(U_{r})}
        &\geq \frac{1}{K}\frac{\sum_{r'=1}^{r-1}|U_{r'}|\mr{r'}}{\totalinitialbias + \sum_{r'=1}^{r-1}|U_{r'}|\mr{r'} + |U_{r}|(\ell-1) + b_i(U_{r})}\\
        &= \frac{1}{K}\frac{1}{\frac{\totalinitialbias + b_i(U_{r})}{\sum_{r'=1}^{r-1}|U_{r'}|\mr{r'}} + 1 + \frac{|U_{r}|(\ell-1)}{\sum_{r'=1}^{r-1}|U_{r'}|\mr{r'}}}\\
        &\geq \frac{1}{K}\frac{1}{\frac{\totalinitialbias + K-1}{\sum_{r'=1}^{r-1}|U_{r'}|\mr{r'}} + 1 + \frac{|U_{r}|(\ell-1)}{\sum_{r'=1}^{r-1}|U_{r'}|\mr{r'}}}\\
        &\geq \frac{1}{K}\frac{1}{\frac{\totalinitialbias + K-1}{K\mr{1}} + 1 + \frac{\mr{r}}{\sum_{r'=1}^{r-1}\mr{r'}}},
    \end{align*}
    Recalling our choice of $\mr{r}$,
    \begin{align*}
        \sum_{r'=1}^{r - 1} \mr{r'}
        &= \sum_{r'=1}^{r - 1} 2^{2r + 5}\logp{\frac{12}{\pi^2} K^2 (r')^2 n}\\
        &\geq 2^5\logp{\frac{12}{\pi^2} K^2 n}\sum_{r'=1}^{r - 1} 2^{2r}\\
        &= 2^7\logp{\frac{12}{\pi^2} K^2 n}\frac{2^{2(r-1)} - 1}{3},
    \end{align*}
    so, since $r \leq n$,
    \begin{align*}
        \frac{\mr{r}}{\sum_{r'=1}^{r-1} \mr{r'}}
        &\leq \frac{2^{2 r+5}\logp{\frac{12}{\pi^2}K^2 r^2 n}}{2^7\logp{\frac{12}{\pi^2} K^2 n}\frac{2^{2(r-1)}-1}{3}}\\
        &\leq \frac{\logp{\frac{12}{\pi^2}K^2 n^3}}{\logp{\frac{12}{\pi^2}K^2 n}}\frac{2^{2r+5}}{2^7\frac{2^{2(r-1)}-1}{3}}\\
        &\leq \frac{\logp{\parens{\frac{12}{\pi^2}K^2 n}^3}}{\logp{\frac{12}{\pi^2}K^2 n}}\frac{2^{2r+5}}{2^7\frac{2^{2(r-1)}-1}{3}}\\
        &= 9\frac{2^{2r}}{2^{2r}-4}\\
        &= \frac{9}{1-2^{-2(r - 1)}}\\
        &\leq \frac{9}{1-2^{-2(2 - 1)}}\\
        &= 12.
    \end{align*}
    Collecting our results, we thus have:
    \begin{align*}
        \sum_{r'=1}^{r-1}\frac{1}{|U_{r'}|}\frac{|U_{r'}|\mr{r'}}{\totalinitialbias + \sum_{r'=1}^{r-1}|U_{r'}|\mr{r'} + |U_{r}|(\ell-1) + b_i(U_{r})}
        &\geq \frac{1}{K}\frac{1}{\frac{\totalinitialbias + K-1}{K\mr{1}} + 1 + \frac{\mr{r}}{\sum_{r'=1}^{r-1}\mr{r'}}}\\
        &\geq \frac{1}{K}\frac{1}{\frac{\totalinitialbias + K-1}{K\mr{1}} + 13}\\
        &\geq \frac{1}{K}\frac{1}{\frac{\totalinitialbias}{K\mr{1}} + 14},
    \end{align*}
    where in the last line, we used the fact that $\mr{1}\geq 1$. Therefore, if $\mr{1} \geq \initialbias_{\max}$, i.e.,
    \begin{align*}
        \initialbias_{\max} \leq 2^7 \logp{\frac{12}{\pi^2}K^2 n} = \mr{1},
    \end{align*}
    then since $\nicefrac{\totalinitialbias}{K} \leq \initialbias_{\max}$, we conclude that:
    \begin{align*}
        \frac{\initialbias_i + \sum_{r'=1}^{r-1} \mr{r'} + (\ell-1)}{\totalinitialbias + \sum_{r'=1}^{r-1}|U_{r'}|\mr{r'} + |U_r|(\ell-1) + b_i(U_r)}
        &\geq \sum_{r'=1}^{r-1}\frac{1}{|U_{r'}|}\frac{|U_{r'}|\mr{r'}}{\totalinitialbias + \sum_{r'=1}^{r-1}|U_{r'}|\mr{r'} + |U_{r}|(\ell-1) + b_i(U_{r})}\\
        &\geq \frac{1}{K}\frac{1}{\frac{\totalinitialbias}{K\mr{1}} + 14}\\
        &\geq \frac{1}{K}\frac{1}{\frac{\initialbias_{\max}}{\mr{1}} + 14}\\
        &\geq \frac{1}{15 K}.
    \end{align*}
    This establishes the second claim.
\end{proof}

\begin{restatable}[Simplified regret upper bounds]{cor}{restateCorRegretConstSuboptimalityGaps}\label{cor:regretConstSuboptimalityGaps}
    In the same setting as \cref{thm:eliminationUnknownBiasAppendix}, we have the following regret bound for \cref{alg:eliminationUnknownBias}:
    \begin{align*}
        \regret{n} 
        &\leq \sum_{i: \Delta_i > 0} \Delta_i + \frac{2^{11}}{3}\logp{\frac{12}{\pi^2}K^2 n^3} \frac{1}{\Delta_i \fbiasat{i}{\min\braces{\frac{\initialbias_i}{\totalinitialbias + i-1}, \frac{1}{K}}}^2}\\
        &\leq \sum_{i: \Delta_i > 0} \Delta_i + \frac{2^{11}}{3}\logp{\frac{12}{\pi^2}K^2 n^3} \frac{1}{\Delta_i \fbiasat{i}{\frac{\initialbias_{\min}}{\totalinitialbias + K-1}}^2}.
    \end{align*}
    Further, assuming $\initialbias_{\max} \leq 2^7 \logp{\nicefrac{12 K^2 n}{\pi^2}}$, then
    \begin{align*}
        \regret{n} 
        &\leq \sum_{i: \Delta_i > 0} \Delta_i + \frac{2^{11}}{3}\logp{\frac{12}{\pi^2}K^2 n^3} \frac{1}{\Delta_i \fbiasat{i}{\frac{1}{15 K}}^2}.
    \end{align*}
\end{restatable}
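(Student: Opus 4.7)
The plan is to derive all three bounds by substituting the lower bounds on $\fbiasminroundi{i}{u_i-1}$ supplied by \cref{lem:fminTypicalScaling} into the instance-dependent regret guarantee from \cref{thm:eliminationUnknownBiasAppendix}, namely
\begin{align*}
\regret{n} \leq \sum_{i:\Delta_i>0}\Delta_i + \frac{2^{11}}{3}\logp{\frac{12}{\pi^2}K^2n^3}\frac{1}{\Delta_i\fbiasminroundi{i}{u_i-1}^2}.
\end{align*}
A preliminary step I would carry out is to verify that $u_i-1 \geq 1$ is always a valid round index, i.e., $u_i \geq 2$ for every arm. This follows since \cref{assump:multBias} gives $\fbiasat{i}{x}\leq 1$ on $(0,1]$ and $\mu_i\in[0,1]$ forces $\Delta_i\leq 1$, so $\Delta_i \fbiasminroundi{i}{1}\leq 1 = 2^{-1+1}$, placing $i \in U_2$ by \cref{def:activeArmConfSets}.

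For the first and second bounds, I would apply the unconditional assertion of \cref{lem:fminTypicalScaling}, which is valid for any $r\geq 1$ and in particular for $r = u_i - 1$:
\begin{align*}
\fbiasminroundi{i}{u_i-1}\geq \fbiasat{i}{\min\braces{\frac{\initialbias_i}{\totalinitialbias+i-1},\frac{1}{K}}}\geq \fbiasat{i}{\frac{\initialbias_{\min}}{\totalinitialbias+K-1}}.
\end{align*}
Squaring, reciprocating, and summing over suboptimal arms in the theorem's bound yields both inequalities immediately.

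For the third bound, under the added hypothesis $\initialbias_{\max}\leq 2^7\logp{12K^2n/\pi^2}$, the refined assertion of \cref{lem:fminTypicalScaling} gives $\fbiasminroundi{i}{u_i-1}\geq \fbiasat{i}{1/(15K)}$ whenever $u_i\geq 3$, since in that case $u_i-1\geq 2$ and the refined bound applies. The main obstacle is the edge case $u_i = 2$, where $u_i-1=1$ lies outside the refined lemma's scope and $\fbiasminroundi{i}{1}$ may be genuinely smaller than $\fbiasat{i}{1/(15K)}$. I would handle this case by bypassing the theorem's bound and counting plays directly: by the algorithm's structure (with a failure event contributing at most $\Delta_i$ in expectation), an arm $i$ with $u_i=2$ is played in expectation at most $\mr{1}+\mr{2}\leq 2\mr{2}$ times, so its regret contribution is $\Delta_i + O(\Delta_i \mr{2}) \lesssim \Delta_i + \Delta_i \logp{K^2 n}$. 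Using $\Delta_i\leq 1$ and $\fbiasat{i}{1/(15K)}\leq 1$, a short algebraic check (reducing to $\Delta_i^2 \fbiasat{i}{1/(15K)}^2 \lesssim \logp{12K^2n^3/\pi^2}/\logp{48K^2n/\pi^2}$, which holds once $n^3 \gtrsim K^2$) confirms this contribution is dominated by $\tfrac{2^{11}}{3}\logp{12K^2 n^3/\pi^2}/(\Delta_i\fbiasat{i}{1/(15K)}^2)$. Combining the $u_i \geq 3$ case with the edge case and summing over suboptimal arms yields the third bound.
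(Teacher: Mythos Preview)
Your proposal takes essentially the same route as the paper: plug \cref{lem:fminTypicalScaling} into the instance-dependent bound of \cref{thm:eliminationUnknownBiasAppendix} for the first two inequalities, and for the third split on whether $u_i - 1 \geq 2$, handling the edge case $u_i = 2$ by counting plays directly via \cref{lem:eliminationFailureProbability}. Your explicit check that $u_i \geq 2$ always holds (which the paper leaves implicit, though it does invoke $f(\cdot)\in(0,1]$ in this very proof) is a useful addition.

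One small gap in the edge case: your estimate $m_1 + m_2 \leq 2m_2 = 2^{10}\log(48K^2n/\pi^2)$ is too loose to be absorbed into $\tfrac{2^{11}}{3}\log(12K^2n^3/\pi^2)/(\Delta_i f(1/(15K))^2)$ without the auxiliary hypothesis $n \gtrsim K^{2/3}$, which is \emph{not} among the corollary's assumptions. The paper avoids this by computing the sum directly: since $r^2 n \leq n^3$ for $r\in\{1,2\}$ and $n\geq 2$, one has $m_1 + m_2 \leq (2^7+2^9)\log(12K^2n^3/\pi^2) \leq \tfrac{2^{11}}{3}\log(12K^2n^3/\pi^2)$, and then multiplying by $1/(\Delta_i^2 f(1/(15K))^2) \geq 1$ lands exactly on the target term. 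Replacing your loose bound with this direct computation closes the gap with no extra hypothesis.
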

\begin{proof}[Proof of \cref{cor:regretConstSuboptimalityGaps}]
The first set of regret bounds follows immediately from \cref{thm:eliminationUnknownBiasAppendix} and the first set of inequalities in \cref{lem:fminTypicalScaling}.

For the second claim, we consider two cases: (i) $u_i \leq 2$, and (ii) $u_i > 2$. Recalling the regret decomposition from \eqref{eq:regretDef}, we have that:
\begin{align*}
    \regret{n} 
    = \sum_{i : \Delta_i > 0} \Delta_i \EXP{T_i(n)}
    = \sum_{i : \Delta_i > 0} \Delta_i \parens{\EXP{T_i(n)\1{i\not\in \A_{u_i + 1}}} + \EXP{T_i(n)\1{i\in \A_{u_i + 1}}}}.
\end{align*}
Using \cref{lem:eliminationFailureProbability} and the fact that $T_i(n) \leq n$ by definition,
\begin{align*}
    \regret{n} 
    \leq \sum_{i : \Delta_i > 0} \Delta_i \parens{\EXP{T_i(n)\1{i\not\in \A_{u_i + 1}}} + n\PRO{i\in \A_{u_i + 1}}}
    \leq \sum_{i : \Delta_i > 0} \Delta_i + \Delta_i \EXP{T_i(n)\1{i\not\in \A_{u_i + 1}}}.
\end{align*}
Since $u_i \leq 2$ by assumption of case (i), and by definition of \cref{alg:eliminationUnknownBias}:
\begin{align*}
    \EXP{T_i(n)\1{i\not\in \A_{u_i + 1}}}
    \leq \sum_{r=1}^{u_i} \mr{r}
    \leq \sum_{r=1}^{2} \mr{r}.
\end{align*}
Plugging in our choice of $\mr{r}$ from \cref{thm:eliminationUnknownBiasAppendix}, 
\begin{align*}
    \EXP{T_i(n)\1{i\not\in \A_{u_i + 1}}}
    &\leq \sum_{r=1}^{2} 2^{2r + 5}\logp{\frac{12}{\pi^2}K^2 r^2 n}\\
    &\leq 2^7\frac{2^{4} - 1}{3}\logp{\frac{12}{\pi^2}K^2 n^3}\\
    &\leq \frac{2^{11}}{3}\logp{\frac{12}{\pi^2}K^2 n^3}.
\end{align*}
Finally, recalling that $\Delta_i \in (0,1]$ and $\fbiasat{i}{\cdot} \in (0,1]$ by assumption, the above implies that
\begin{align*}
    \EXP{T_i(n)\1{i\not\in \A_{u_i + 1}}}
    &\leq \frac{2^{11}}{3}\logp{\frac{12}{\pi^2}K^2 n^3} \parens{\frac{1}{\Delta_i \fbiasat{i}{\frac{1}{15 K}}}}^2.
\end{align*}
Otherwise, in case (ii) that $u_i > 2$, by the second bound in \cref{lem:fminTypicalScaling} we have that:
\begin{align*}
    \fbiasminroundi{i}{u_i - 1} \geq \fbiasat{i}{\frac{1}{15 K}}.
\end{align*}
Plugging in this bound to \cref{lem:eliminationArmBound}, we obtain:
\begin{align*}
    \EXP{T_i(n)}
    &\leq 1 + \frac{2^{11}}{3}\logp{\frac{12}{\pi^2} K^2 n^3}\parens{\frac{1}{\Delta_i \fbiasminroundi{i}{u_i-1}}}^2\\
    &\leq 1 + \frac{2^{11}}{3}\logp{\frac{12}{\pi^2} K^2 n^3}\parens{\frac{1}{\Delta_i \fbiasat{i}{\frac{1}{15 K}}}}^2.
\end{align*}
Collecting these two cases, we conclude that
\begin{align*}
    \regret{n}
    &\leq \sum_{i : \Delta_i > 0}\Delta_i + \frac{2^{11}}{3}\logp{\frac{12}{\pi^2} K^2 n^3}\frac{1}{\Delta_i \fbiasat{i}{\frac{1}{15 K}}^2},
\end{align*}
as claimed.
\end{proof}

We now turn our attention to the proof of
\cref{thm:eliminationUnknownBiasAppendix}. The first step is a key stability
result, showing that the error in suboptimality gap estimates is sufficiently
small to prevent \cref{alg:eliminationUnknownBiasAppendix} from mistakenly
removing an arm too early.

\begin{lemma}\label{lem:eliminationSuboptimalityGaps}
    Let $\braces{Y_t}_{t\geq 1}$ be feedback satisfying \eqref{eq:biasModel} and \cref{assump:multBias}, which is observed by \cref{alg:eliminationUnknownBiasAppendix}. Let $\A_r$ be the set of active arms at round $r$, $\mr{r}$ be the number of times each active arm $i\in \A_r$ is pulled during round $r$, and $\tau_r$ be the last time index of the $r$th round. Denote the mean of the samples for an arm $i\in \A_r$ observed by the algorithm during round $r \geq 1$ as:
    \begin{align*}
        \tmu_i(r) = \frac{1}{\mr{r}} \sum_{t=\tau_{r-1}+1}^{\tau_{r}} \EXP{Y_{t}\1{A_t=i} \mid \F_{\tau_{r-1}}}.
    \end{align*}
    Then, for any active arms $i,j\in \A_r$ in round $r$,
    \begin{align*}
        \tmu_i(r) - \tmu_j(r)
        &= \Delta_{i,j} \fbiasround{i}{r} + \mu_j \fbiaserrorround{i,j}{r}\\
        &= \Delta_{i,j} \fbiasround{j}{r} + \mu_i \fbiaserrorround{i,j}{r}
    \end{align*}
    where $\fbiasround{i}{r}$ is as defined in \eqref{eq:multBiasElimination},
    and
    \begin{align*}
        \max_{i'\neq j'\in [K]}|\fbiaserrorround{i',j'}{r}|
        &\leq \frac{L}{\mr{r}} \parens{1 + \parens{1 + \frac{\initialbias_{\text{max}} - \initialbias_{\text{min}}}{|\A_r|}}\logp{1 + \frac{|\A_{r}|\mr{r}}{\totalinitialbias + \sum_{r'=1}^{r-1}|\A_{r'}|\mr{r'}}}}
    \end{align*}
\end{lemma}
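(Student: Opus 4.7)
The plan is to establish the two algebraic decompositions first, and then bound the quantitative error term. Because within round $r$ the algorithm pulls arms in a deterministic round-robin order over $\A_r$ (which is $\F_{\tau_{r-1}}$-measurable), every indicator $\1{A_t = i}$ and every reweighting $\mbiast[i]$ for $t \in \{\tau_{r-1}+1,\ldots,\tau_r\}$ is itself $\F_{\tau_{r-1}}$-measurable. Combining the tower property with \cref{assump:multBias} then gives $\tmu_i(r) = \mu_i\fbiasround{i}{r}$, and both decompositions in the statement reduce to the trivial identity
\begin{align*}
\mu_i\fbiasround{i}{r} - \mu_j\fbiasround{j}{r}
= (\mu_i - \mu_j)\fbiasround{i}{r} + \mu_j(\fbiasround{i}{r} - \fbiasround{j}{r})
= (\mu_i - \mu_j)\fbiasround{j}{r} + \mu_i(\fbiasround{i}{r} - \fbiasround{j}{r}),
\end{align*}
so that $\Delta_{i,j} = \mu_i - \mu_j$ and $\fbiaserrorround{i,j}{r} = \fbiasround{i}{r} - \fbiasround{j}{r}$.

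For the quantitative bound, I would use the explicit formula from \cref{lem:eliminationFracDecomp}: at the $\ell$th pull of arm $i$ during round $r$, the fraction of times $i$ has been played equals $N^{(i)}_\ell/(D_\ell + b_i(r))$, where $N^{(i)}_\ell = \initialbias_i + \sum_{r' < r}\mr{r'} + (\ell-1)$ and $D_\ell = \totalinitialbias + \sum_{r' < r}|\A_{r'}|\mr{r'} + |\A_r|(\ell-1)$. Using the $L$-Lipschitz property of $f$ together with the elementary inequality $|a/c - b/d| \leq (|a-b| + |c-d|)/\min(c,d)$ (valid whenever $a\leq c$ and $b\leq d$, so that both ratios lie in $[0,1]$), I obtain
\begin{align*}
\left|\frac{N^{(i)}_\ell}{D_\ell + b_i(r)} - \frac{N^{(j)}_\ell}{D_\ell + b_j(r)}\right|
\leq \frac{|\initialbias_i - \initialbias_j| + |b_i(r) - b_j(r)|}{D_\ell}
\leq \frac{(\initialbias_{\max} - \initialbias_{\min}) + (|\A_r|-1)}{D_\ell},
\end{align*}
using $|b_i(r)-b_j(r)|\leq |\A_r|-1$. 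Averaging over $\ell$ and applying the integral comparison $\sum_{\ell=1}^{\mr{r}}(D_0 + |\A_r|(\ell-1))^{-1} \leq 1/D_0 + (1/|\A_r|)\log(1 + |\A_r|\mr{r}/D_0)$, with $D_0 = \totalinitialbias + \sum_{r' < r}|\A_{r'}|\mr{r'}$, produces a bound of the desired shape.

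The main obstacle is the final bookkeeping step that absorbs the leading $((\initialbias_{\max}-\initialbias_{\min}) + (|\A_r|-1))/D_0$ summand into the additive ``$1$'' of the stated bound. This requires $(\initialbias_{\max}-\initialbias_{\min}) + (|\A_r|-1) \leq D_0$, which I would derive from the assumption $\initialbias_i \geq 1$ in \cref{assump:multBias} (also enforced by the initial discard pull in \cref{alg:eliminationUnknownBiasAppendix}): it implies $\totalinitialbias \geq \initialbias_{\max} + (K-1) \geq (\initialbias_{\max}-\initialbias_{\min}) + |\A_r|$ since $|\A_r|\leq K$, so the first summand is at most $1$. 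Distributing the same numerator across the logarithmic term produces the coefficient $1 + (\initialbias_{\max}-\initialbias_{\min})/|\A_r|$ in front of the log, matching the claim. Since the estimate depends on $(i,j)$ only through the pair-independent quantities $\initialbias_{\max}-\initialbias_{\min}$ and $|\A_r|-1$, the same bound controls $\max_{i'\neq j'}|\fbiaserrorround{i',j'}{r}|$.
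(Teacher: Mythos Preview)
Your proposal is correct and follows essentially the same route as the paper: establish $\tmu_i(r)=\mu_i\fbiasround{i}{r}$ from the deterministic round-robin schedule, read off the two decompositions by elementary algebra, apply the Lipschitz property to reduce to bounding the difference of the pull fractions, sum via an integral comparison, and absorb the leading $1/D_0$ term using $\totalinitialbias \ge |\A_r| + (\initialbias_{\max}-\initialbias_{\min})$. The only cosmetic difference is that the paper bounds the fraction difference by expanding over a common denominator and splitting into two pieces (yielding numerator $|\A_r| + (\initialbias_{\max}-\initialbias_{\min})$), whereas you package this into the one-line inequality $|a/c - b/d| \le (|a-b|+|c-d|)/\min(c,d)$ (yielding the slightly tighter numerator $(|\A_r|-1) + (\initialbias_{\max}-\initialbias_{\min})$); both then proceed identically.
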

\begin{proof}
Following the notation of \cref{alg:eliminationUnknownBiasAppendix}, take $\tau_r = \tau_{r-1} + |\A_r| \mr{r}$ to be the time at the end of the $r$th round.
Consider any time in the $r$th round $t = \tau_{r-1} + |\A_r|(\ell - 1) + b_i(\A_r) + 1$, where $\ell \in [|\A_r|]$ and $b_{i}(\A_r) = \sum_{j \in \A_r} \1{j < i}$ denotes the number of arms played before $i$ in round $\ell$. Then the fraction of times arm $i$ was played before $t$ is:
\begin{align*}
    \fract[i] 
    = \frac{\Tbias_i(\tau_{r-1}) + (\ell - 1)}{\tbiasat{\tau_{r-1}} + |\A_r|(\ell - 1) + b_i(\A_r)}
    = \frac{\initialbias_i + \sum_{r'=1}^{r-1} \mr{r'} + (\ell - 1)}{\initialbias + \sum_{r'=1}^{r-1}|\A_r|\mr{r'} + |\A_r|(\ell - 1) + b_i(\A_r)}
\end{align*}
Now, the empirical average of the feedback from round $r$ for arm $i$ is:
\begin{align*}
    \tmu_i(r) 
    &= \frac{1}{\mr{r}} \sum_{t = \tau_{r-1}+1}^{\tau_r} \EXP{Y_t \1{A_t = i} \mid \F_{\tau_{r-1}}}\\
    &= \frac{1}{\mr{r}} \sum_{\ell=1}^{\mr{r}} \EXP{Y_{i,r,\ell} \mid \F_{\tau_{r-1}}}\\
    &= \frac{1}{\mr{r}} \sum_{\ell \in [\mr{r}]} \mu_i \mbiasat[i]{\tau_{r-1} + |\A_r|(\ell-1) + b_i(\A_r) + 1}\\
    &= \frac{1}{\mr{r}} \sum_{\ell \in [\mr{r}]} \mu_i\fbiasat{i}{\frac{\Tbias_i(\tau_{r-1}) + (\ell - 1)}{\tbiasat{\tau_{r-1}} + |\A_r|(\ell-1) + b_i(\A_r)}}
\end{align*}
Thus, for any arms $i, j$, and taking $\Delta_{i,j}=\mu_i - \mu_j,$
\begin{align*}
    \tmu_i(r) - \tmu_j(r)
    &= \frac{1}{\mr{r}} \sum_{\ell \in [\mr{r}]} \mu_i\fbiasat{i}{\frac{\Tbias_i(\tau_{r-1}) + (\ell - 1)}{\tbiasat{\tau_{r-1}} + |\A_r|(\ell-1) + b_i(\A_r)}} - \mu_j\fbiasat{j}{\frac{\Tbias_j(\tau_{r-1}) + (\ell - 1)}{\tbiasat{\tau_{r-1}} + |\A_r|(\ell-1) + b_j(r)}}\\
    &= \frac{\Delta_{i,j}}{\mr{r}} \sum_{\ell \in [\mr{r}]} \fbiasat{i}{\frac{\Tbias_i(\tau_{r-1}) + (\ell - 1)}{\tbiasat{\tau_{r-1}} + |\A_r|(\ell-1) + b_i(\A_r)}}\\
    &\quad+\frac{\mu_j}{\mr{r}} \sum_{\ell \in [\mr{r}]}\fbiasat{i}{\frac{\Tbias_i(\tau_{r-1}) + (\ell - 1)}{\tbiasat{\tau_{r-1}} + |\A_r|(\ell-1) + b_i(\A_r)}}-\fbiasat{j}{\frac{\Tbias_j(\tau_{r-1}) + (\ell - 1)}{\tbiasat{\tau_{r-1}} + |\A_r|(\ell-1) + b_j(r)}}\\
    &= \Delta_{i,j}\fbiasround{i}{r} + \mu_j\fbiaserrorround{i,j}{r},
\end{align*}
where
\begin{align*}
    \fbiasround{i}{r}
    &= \frac{1}{\mr{r}} \sum_{\ell \in [\mr{r}]} \fbiasat{i}{\frac{\Tbias_i(\tau_{r-1}) + (\ell - 1)}{\tbiasat{\tau_{r-1}} + |\A_r|(\ell-1) + b_i(\A_r)}}
\end{align*}
and
\begin{align*}
    \fbiaserrorround{i,j}{r}
    &= \frac{1}{\mr{r}} \sum_{\ell \in [\mr{r}]}\fbiasat{i}{\frac{\Tbias_i(\tau_{r-1}) + (\ell - 1)}{\tbiasat{\tau_{r-1}} + |\A_r|(\ell-1) + b_i(\A_r)}}-\fbiasat{j}{\frac{\Tbias_j(\tau_{r-1}) + (\ell - 1)}{\tbiasat{\tau_{r-1}} + |\A_r|(\ell-1) + b_j(r)}}.
\end{align*}
Similarly, we also have that:
\begin{align*}
    \tmu_i(r) - \tmu_j(r)
    &= \frac{1}{\mr{r}} \sum_{\ell \in [\mr{r}]} \mu_i\fbiasat{i}{\frac{\Tbias_i(\tau_{r-1}) + (\ell - 1)}{\tbiasat{\tau_{r-1}} + |\A_r|(\ell-1) + b_i(\A_r)}} - \mu_j\fbiasat{j}{\frac{\Tbias_j(\tau_{r-1}) + (\ell - 1)}{\tbiasat{\tau_{r-1}} + |\A_r|(\ell-1) + b_j(r)}}\\
    &=\frac{\mu_i}{\mr{r}} \sum_{\ell \in [\mr{r}]}\fbiasat{i}{\frac{\Tbias_i(\tau_{r-1}) + (\ell - 1)}{\tbiasat{\tau_{r-1}} + |\A_r|(\ell-1) + b_i(\A_r)}}-\fbiasat{j}{\frac{\Tbias_j(\tau_{r-1}) + (\ell - 1)}{\tbiasat{\tau_{r-1}} + |\A_r|(\ell-1) + b_j(r)}}\\
    &\quad+ \frac{\Delta_{i,j}}{\mr{r}} \sum_{\ell \in [\mr{r}]} \fbiasat{j}{\frac{\Tbias_j(\tau_{r-1}) + (\ell - 1)}{\tbiasat{\tau_{r-1}} + |\A_r|(\ell-1) + b_j(\A_r)}}\\
    &= \mu_i\fbiaserrorround{i,j}{r} + \Delta_{i,j}\fbiasround{j}{r},
\end{align*}
It suffices, thus, to bound $\fbiaserrorround{i,j}{r}$. To begin, we use
    \cref{assump:multBias} to bound this term as follows:
\begin{align}\label{eq:multBiasErrorBoundStep1}
    \abs{\fbiaserrorround{i,j}{r}}
    &\leq \frac{1}{\mr{r}} \sum_{\ell \in [\mr{r}]}\abs{\fbiasat{i}{\frac{\Tbias_i(\tau_{r-1}) + (\ell - 1)}{\tbiasat{\tau_{r-1}} + |\A_r|(\ell-1) + b_i(\A_r)}}-\fbiasat{j}{\frac{\Tbias_j(\tau_{r-1}) + (\ell - 1)}{\tbiasat{\tau_{r-1}} + |\A_r|(\ell-1) + b_j(r)}}}\nonumber\\
    &\leq \frac{L}{\mr{r}} \sum_{\ell \in [\mr{r}]}\abs{\frac{\Tbias_i(\tau_{r-1}) + (\ell - 1)}{\tbiasat{\tau_{r-1}} + |\A_r|(\ell-1) + b_i(\A_r)} -\frac{\Tbias_j(\tau_{r-1}) + (\ell - 1)}{\tbiasat{\tau_{r-1}} + |\A_r|(\ell-1) + b_j(r)}}.
\end{align}
Next, we bound each term in \eqref{eq:multBiasErrorBoundStep1} as follows:
\begin{align}
    &\abs{\frac{\Tbias_i(\tau_{r-1}) + (\ell - 1)}{\tbiasat{\tau_{r-1}} + |\A_r|(\ell-1) + b_i(\A_r)} -\frac{\Tbias_j(\tau_{r-1}) + (\ell - 1)}{\tbiasat{\tau_{r-1}} + |\A_r|(\ell-1) + b_j(r)}}\nonumber\\
    &=\abs{\frac{(\Tbias_i(\tau_{r-1}) + (\ell - 1))(\tbiasat{\tau_{r-1}} + |\A_r|(\ell-1) + b_j(r)) - (\Tbias_j(\tau_{r-1}) + (\ell - 1))(\tbiasat{\tau_{r-1}} + |\A_r|(\ell-1) + b_i(\A_r))}{(\tbiasat{\tau_{r-1}} + |\A_r|(\ell-1) + b_i(\A_r))(\tbiasat{\tau_{r-1}} + |\A_r|(\ell-1) + b_j(r))}}\nonumber\\
    &\leq \abs{\frac{(\Tbias_i(\tau_{r-1}) + (\ell - 1))b_j(r) - (\Tbias_j(\tau_{r-1}) + (\ell - 1))b_i(\A_r)}{(\tbiasat{\tau_{r-1}} + |\A_r|(\ell-1) + b_i(\A_r))(\tbiasat{\tau_{r-1}} + |\A_r|(\ell-1) + b_j(r))}}\label{eq:multBiasErrorBoundStep2}\\
    &\quad + \abs{\frac{(\Tbias_i(\tau_{r-1}) - \Tbias_j(\tau_{r-1}))(\tbiasat{\tau_{r-1}} + |\A_r|(\ell-1))}{(\tbiasat{\tau_{r-1}} + |\A_r|(\ell-1) + b_i(\A_r))(\tbiasat{\tau_{r-1}} + |\A_r|(\ell-1) + b_j(r))}}.\label{eq:multBiasErrorBoundStep3}
\end{align}
For \eqref{eq:multBiasErrorBoundStep2}, we use the facts that $0\leq b_i(\A_r) \leq |\A_r|$, $|\A_r|\geq 1$, and $\Tbias_i(\tau_{r-1})\leq \tbiasat{\tau_{r-1}}$ to further bound \eqref{eq:multBiasErrorBoundStep2} as:
\begin{align*}
    &\abs{\frac{(\Tbias_i(\tau_{r-1}) + (\ell - 1))b_j(r) - (\Tbias_j(\tau_{r-1}) + (\ell - 1))b_i(\A_r)}{(\tbiasat{\tau_{r-1}} + |\A_r|(\ell-1) + b_i(\A_r))(\tbiasat{\tau_{r-1}} + |\A_r|(\ell-1) + b_j(r))}}
    \leq \frac{|\A_{r}|}{\tbiasat{\tau_{r-1}} + |\A_r|(\ell-1)}.
\end{align*}
For \eqref{eq:multBiasErrorBoundStep3}, we use the fact that, by definition of \cref{alg:eliminationUnknownBiasAppendix}, all active arms at the end of each round have been pulled the same number of times (modulo their initial biases), i.e., $\Tbias_i(\tau_{r-1}) - \Tbias_j(\tau_{r-1}) = \initialbias_i - \initialbias_j$. Thus, we further bound \eqref{eq:multBiasErrorBoundStep3} as:
\begin{align*}
    \abs{\frac{(\Tbias_i(\tau_{r-1}) - \Tbias_j(\tau_{r-1}))(\tbiasat{\tau_{r-1}} + |\A_r|(\ell-1))}{(\tbiasat{\tau_{r-1}} + |\A_r|(\ell-1) + b_i(\A_r))(\tbiasat{\tau_{r-1}} + |\A_r|(\ell-1) + b_j(r))}}
    \leq 
    \frac{\initialbias_{\text{max}} - \initialbias_{\text{min}}}{\tbiasat{\tau_{r-1}} + |\A_r|(\ell-1)}.
\end{align*}
Plugging in these bounds to \eqref{eq:multBiasErrorBoundStep1}, we conclude that:
\begin{align*}
    \abs{\fbiaserrorround{i,j}{r}}
    &\leq \frac{L(|\A_r| + (\initialbias_{\text{max}} - \initialbias_{\text{min}}))}{\mr{r}} \sum_{\ell\in[\mr{r}]} \frac{1}{\tbiasat{\tau_{r-1}} + |\A_r|(\ell-1)}\\
    &\leq \frac{L(|\A_r| + (\initialbias_{\text{max}} - \initialbias_{\text{min}}))}{\mr{r}} \parens{\frac{1}{\tbiasat{\tau_{r-1}}} + \int_{0}^{\mr{r}-1} \frac{1}{\tbiasat{\tau_{r-1}} + |\A_r|x} \mathrm{d}x}\\
    &= \frac{L(|\A_r| + (\initialbias_{\text{max}} - \initialbias_{\text{min}}))}{\mr{r}} \parens{\frac{1}{\tbiasat{\tau_{r-1}}} + \frac{\logp{\frac{\tbiasat{\tau_{r-1}} + |\A_r|(\mr{r}-1)}{\tbiasat{\tau_{r-1}}}}}{|\A_r|}}\\
    &\leq \frac{L(|\A_r| + (\initialbias_{\text{max}} - \initialbias_{\text{min}}))}{\mr{r}} \parens{\frac{1}{\tbiasat{\tau_{r-1}}} + \frac{\logp{\frac{\tbiasat{\tau_{r}}-1}{\tbiasat{\tau_{r-1}}}}}{|\A_r|}}.
\end{align*}
Using the fact that $\tbiasat{\tau_{r-1}} \geq \totalinitialbias = K \initialbias_{\text{min}} + \sum_{i\in[K]} \initialbias_i - \initialbias_{\text{min}} \geq |\A_r| + \initialbias_{\text{max}} - \initialbias_{\text{min}}$, we thus obtain:
\begin{align*}
    \abs{\fbiaserrorround{i,j}{r}}
    &\leq \frac{L}{\mr{r}} \parens{1 + \parens{1 + \frac{\initialbias_{\text{max}} - \initialbias_{\text{min}}}{|\A_r|}}\logp{\frac{\tbiasat{\tau_{r}}-1}{\tbiasat{\tau_{r-1}}}}},
\end{align*}
which, after using the fact that, by definition of \cref{alg:eliminationUnknownBiasAppendix}, $\tbiasat{\tau_{r}} = \totalinitialbias + \sum_{r'=1}^r |\A_{r'}|\mr{r'}$, we obtain the claimed result.
\end{proof}

Using \cref{lem:eliminationSuboptimalityGaps}, we are now ready to show that,
with high probability, arm $i$ is removed after at most $u_i$ rounds, where
$u_i$ is the round defined in \cref{def:activeArmConfSets}.

\begin{lemma}\label{lem:eliminationFailureProbability}
    Suppose that \cref{alg:eliminationUnknownBiasAppendix} is run with $\mr{r}$
    as in \eqref{eq:eliminationArmPullsInRound}.
    and suppose that $n$ is sufficiently large such that it satisfies
    \eqref{eq:eliminationSuccessProbSufficientlyLarge}.
    Then, \cref{alg:eliminationUnknownBiasAppendix} satsifies the following:
    \begin{align*}
        \PRO{i \not\in \A_{u_i + 1}} \geq 1-\frac{1}{n},
    \end{align*}
    where $u_i$ is the round defined in \cref{def:activeArmConfSets}.
\end{lemma}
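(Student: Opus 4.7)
My plan is to carry out a standard good-event argument based on subGaussian concentration of the round-$r$ sample means, combined with the deterministic bias-stability statement of \cref{lem:eliminationSuboptimalityGaps} and the structural comparison $\fbiasround{i}{r} \geq \fbiasminroundi{i}{r}$ from \cref{lem:activeArmConfSetsProperties}. The key intermediate claim will be that, under the good event, the random active sets track the deterministic upper-confidence sets, i.e.\ $\A_r \subseteq \Sidealupper{r}$ for every round $r$, and the optimal arm $i^* \in \A_r$ for every $r$.

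\textbf{Step 1: Good event via concentration.} By \cref{assump:multBias}, for each arm $i \in \A_r$ the feedbacks $Y_{i,r,\ell}$ are, conditional on the filtration at the start of round $r$, $1$-subGaussian around their conditional means, which average to $\tmu_i(r) = \mu_i\fbiasround{i}{r}$. A standard martingale Hoeffding bound gives
\[
    \PRO{\abs{\hmu_i(r) - \tmu_i(r)} > \tfrac{1}{4}\cdot 2^{-r}} \leq 2\exp\!\parens{-\tfrac{\mr{r}}{32}\cdot 2^{-2r}} = 2\exp\!\parens{-\logp{\tfrac{12}{\pi^2}K^2 r^2 n}} = \tfrac{\pi^2}{6 K^2 r^2 n},
\]
using the choice $\mr{r} = 2^{2r+5}\logp{\tfrac{12}{\pi^2}K^2 r^2 n}$. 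Define the good event
\[
    G = \braces{\abs{\hmu_i(r) - \tmu_i(r)} \leq \tfrac{1}{4}\cdot 2^{-r} \text{ for all } r \geq 1 \text{ and all } i \in \A_r}.
\]
A union bound over $r \geq 1$ and $i \in \A_r$ (with $|\A_r|\leq K$) yields $\PP(G^c) \leq \sum_{r\geq 1}\tfrac{\pi^2}{6 K r^2 n} = \tfrac{\pi^4}{36 K n} \leq \tfrac{1}{n}$ (for the relevant range of $K$).

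\textbf{Step 2: Bounding the bias error uniformly in $r$.} I would then verify, using \cref{lem:eliminationSuboptimalityGaps} together with the hypothesis \eqref{eq:eliminationSuccessProbSufficientlyLarge}, that for every $r \geq 1$ and every $i, j \in \A_r$,
\[
    \mu_{\max}\abs{\fbiaserrorround{i,j}{r}} \leq \tfrac{1}{4}\cdot 2^{-r}.
\]
The worst case is $r=1$, where $\mr{1}=2^{7}\logp{\tfrac{12}{\pi^2}K^2 n}$ and $\sum_{r'<1}|\A_{r'}|\mr{r'}=0$, so the log-factor in \cref{lem:eliminationSuboptimalityGaps} reduces to $\logp{1 + 2^{7}\logp{\tfrac{12}{\pi^2}K^2 n}\cdot K/\totalinitialbias}\leq \logp{1 + 2^{8}\logp{\tfrac{12}{\pi^2}K^2 n}}$ (using $\totalinitialbias \geq K$). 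Substituting into \cref{lem:eliminationSuboptimalityGaps} and comparing against $2^{-3}$, the resulting inequality is exactly the first case of \eqref{eq:eliminationSuccessProbSufficientlyLarge}. The second case $(1+\initialbias_{\max}-\initialbias_{\min})\logp{13}$ handles rounds $r\geq 2$, where the log-factor is bounded by $\logp{13}$ (the ratio between the total arm pulls through round $r$ and through round $r-1$ is at most $13$, exactly as shown in the proof of \cref{lem:fminTypicalScaling}). Because $\mr{r}$ grows like $2^{2r}$ while the required slack $\tfrac{1}{4}\cdot 2^{-r}$ only shrinks like $2^{-r}$, once the $r=1$ bound holds it also holds for all $r\geq 2$.

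\textbf{Step 3: Induction $\A_r \subseteq \Sidealupper{r}$ and $i^* \in \A_r$.} Work on $G$ throughout. The base case $r=1$ is immediate since $\A_1 = [K] = \Sidealupper{1}$, and $i^* \in [K]$ trivially. For the inductive step, assume $\A_r \subseteq \Sidealupper{r}$ and $i^* \in \A_r$. First, for any $j \in \A_r$, by \cref{lem:eliminationSuboptimalityGaps} written in the form $\tmu_{i^*}(r)-\tmu_j(r) = \Delta_j\fbiasround{j}{r} + \mu_{i^*}\fbiaserrorround{i^*,j}{r}$, combined with the $G$-deviation and Step 2,
\[
    \hmu_j(r) - \hmu_{i^*}(r) \leq -\Delta_j\fbiasround{j}{r} + \mu_{i^*}\abs{\fbiaserrorround{i^*,j}{r}} + \tfrac{1}{2}\cdot 2^{-r} \leq \tfrac{1}{4}\cdot 2^{-r} + \tfrac{1}{2}\cdot 2^{-r} < 2^{-r},
\]
so the elimination rule keeps $i^*$ in $\A_{r+1}$. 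Second, for any $i \in \A_{r+1}$,
\[
    2^{-r} \geq \hmu_{i^*}(r) - \hmu_i(r) \geq \Delta_i \fbiasround{i}{r} - \mu_i\abs{\fbiaserrorround{i^*,i}{r}} - \tfrac{1}{2}\cdot 2^{-r} \geq \Delta_i\fbiasround{i}{r} - \tfrac{3}{4}\cdot 2^{-r},
\]
giving $\Delta_i\fbiasround{i}{r} \leq \tfrac{7}{4}\cdot 2^{-r} \leq 2^{-r+1}$. Using $\A_r \subseteq \Sidealupper{r}$ and the monotonicity inequality of \cref{lem:activeArmConfSetsProperties} (so $\fbiasround{i}{r} \geq \fbiasminroundi{i}{r}$), we conclude $\Delta_i\fbiasminroundi{i}{r} \leq 2^{-r+1}$, i.e.\ $i \in \Sidealupper{r+1}$, completing the induction.

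\textbf{Step 4: Conclusion.} By definition of $u_i$, $i \notin \Sidealupper{u_i+1}$. Since $\A_{u_i+1}\subseteq \Sidealupper{u_i+1}$ on $G$, we have $i \notin \A_{u_i+1}$ whenever $G$ holds, giving $\PRO{i\notin \A_{u_i+1}} \geq \PP(G) \geq 1-\tfrac{1}{n}$. The main obstacle is Step 2: carefully choosing the slack so that the $\xi$-error and the concentration error each consume at most a $\tfrac{1}{4}\cdot 2^{-r}$ budget, and verifying that the sampling schedule plus the hypothesis \eqref{eq:eliminationSuccessProbSufficientlyLarge} together suffice for every round $r$ simultaneously; all other steps are standard induction and union bounds.
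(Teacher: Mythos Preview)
Your overall strategy—define a concentration good event, bound the bias term via \cref{lem:eliminationSuboptimalityGaps}, then induct to show $i^* \in \A_r$ and $\A_r \subseteq \Sidealupper{r}$—is exactly the paper's approach. However, two bookkeeping slips prevent the argument from closing as written.

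First, in Step 2 your claim that $\mu_{\max}\abs{\fbiaserrorround{i,j}{r}} \leq \tfrac{1}{4}\cdot 2^{-r}$ is not what \eqref{eq:eliminationSuccessProbSufficientlyLarge} delivers. Rearranging that hypothesis gives $\mu_1 L(1+\cdots) \leq 2^{5}\log(nK)$, so plugging into \cref{lem:eliminationSuboptimalityGaps} yields $\mu_1\abs{\xi} \leq 2^{5}\log(nK)/(2^{2r+5}\log(\tfrac{12}{\pi^2}K^2 r^2 n)) \leq 2^{-2r}$; at $r=1$ this is $2^{-2}$, not your claimed $2^{-3}$. (Your ``exactly'' assertion would need $2^{4}$ in place of $2^{5}$ in the hypothesis.) The paper therefore splits the $2^{-r}$ budget as $2^{-r-1}$ for bias and $2^{-r-1}$ for concentration of the \emph{difference}. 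If you keep your per-mean concentration threshold $\tfrac{1}{4}\cdot 2^{-r}$ but relax the bias allowance to $\tfrac{1}{2}\cdot 2^{-r}$ (valid since $2^{-2r}\leq 2^{-r-1}$ for $r\geq 1$), your Step~3 induction still closes, hitting the thresholds $2^{-r}$ and $2^{-r+1}$ with equality at $r=1$; this is fine because both the elimination rule and the definition of $\Sidealupper{r+1}$ use non-strict inequalities on the relevant side.

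Second, your Step~1 union bound sums to $\pi^{4}/(36Kn)$, which exceeds $1/n$ when $K=2$, so the parenthetical hedge does not cover all instances. The paper avoids this by concentrating $\hmu_{j'}(r)-\hmu_j(r)$ directly at threshold $2^{-r-1}$: since the difference is $\sqrt{2/\mr{r}}$-subGaussian, the tail is $\exp(-\mr{r}2^{-2r-2}/4)=\exp(-2\log(\tfrac{12}{\pi^2}K^2 r^2 n))$, a \emph{squared} probability. Summed over $\O(K^2)$ pairs and over $r$, this lands comfortably below $1/n$ for all $K\geq 2$.
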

\begin{proof}
    For ease of notation, let us assume without loss of generality that $\mu_i
    \geq \mu_{i+1}$ for all $i\in[K-1]$ in this proof (hence, arm $1$ is
    optimal).
We prove the following stronger claim:
\begin{align*}
    \PRO{\optArms \subseteq \A_r \subseteq \Sidealupper{r} ~ \forall r \in [u_i] \text{ and } i \not\in \A_{u_i + 1}} \geq 1-\frac{1}{n}
    \quad\text{where}\quad
    \optArms = \braces{i \in [K]: \Delta_i = 0}
\end{align*}
To prove the claim, it is equivalent to show that
\begin{align*}
    \PRO{i\in \A_{u_i+1} \text{ or } \exists r \in [u_i] : \optArms \not\subseteq \A_{r} \text{ or } \A_r \not\subseteq \Sidealupper{r}}
    \leq \frac{1}{n}.
\end{align*}
To begin, first notice that, by definition of $u_i$, $i\not\in \Sidealupper{u_i+1}$, so since $\A_1 = U_1 = [K]$:
\begin{align}\label{eq:eliminationFailureProbUB1}
    &\PRO{i\in \A_{u_i+1} \text{ or } \exists r \in [u_i] : \optArms \not\subseteq \A_{r} \text{ or } \A_r \not\subseteq \Sidealupper{r}}\nonumber\\
    &\leq \PRO{\exists r \in [u_i + 1] : \optArms \not\subseteq \A_{r} \text{ or } \A_r \not\subseteq \Sidealupper{r}}\nonumber\\
    &\leq \sum_{r=1}^{u_i} \PRO{\optArms \subseteq \A_{r'} \subseteq \Sidealupper{r'} ~ \forall r' < r+1, \optArms \not\subseteq \A_{r+1} \text{ or } \A_{r+1} \not\subseteq \Sidealupper{r+1}}.
\end{align}
We next decompose each term in the summation from \eqref{eq:eliminationFailureProbUB1} as:
\begin{align}
    &\PRO{\optArms \subseteq \A_{r'} \subseteq \Sidealupper{r'} ~ \forall r' < r+1, \optArms \not\subseteq \A_{r+1} \text{ or } \A_{r+1} \not\subseteq \Sidealupper{r+1}}\nonumber\\
    &= \PRO{\optArms \subseteq \A_{r'} \subseteq \Sidealupper{r'} ~ \forall r' < r+1, \exists i_* \in \optArms : i_* \not\in \A_{r+1}}\label{eq:eliminationFailureProbUB2}\\
    &\quad+ \PRO{\optArms \subseteq \A_{r'} \subseteq \Sidealupper{r'} ~ \forall r' < r+1, \exists j \in \Sidealupper{r}\setminus\Sidealupper{r+1} : j \in \A_{r+1}}.\label{eq:eliminationFailureProbUB3}
\end{align}
To bound \eqref{eq:eliminationFailureProbUB2}, first note that, by definition of \cref{alg:eliminationUnknownBiasAppendix}:
\begin{align}
    &\PRO{\optArms \subseteq \A_{r'} \subseteq \Sidealupper{r'} ~ \forall r' < r+1, \exists i_* \in \optArms : i_* \not\in \A_{r+1}}\nonumber\\
    &= \PRO{\optArms \subseteq \A_{r'} \subseteq \Sidealupper{r'} ~ \forall r' < r+1, \exists i_* \in \optArms : \hmu_{\text{max}}(r) - \hmu_{i_*}(r) > 2^{-r}}\nonumber\\
    &= \PRO{\optArms \subseteq \A_{r'} \subseteq \Sidealupper{r'} ~ \forall r' < r+1, \exists i_* \in \optArms, j \in \A_r : \hmu_{j}(r) - \hmu_{i_*}(r) > 2^{-r}}\nonumber\\
    &\leq \sum_{j\in [K]}\sum_{i_*\neq j} \PRO{\optArms \subseteq \A_{r'}
    \subseteq \Sidealupper{r'} ~ \forall r' < r+1, i_* \in \optArms, j
\in \A_r, \hmu_{j}(r) - \hmu_{i_*}(r) > 2^{-r}}\label{eq:eliminationFailureProbUB4}.
\end{align}
Then, recall that, by \cref{lem:eliminationSuboptimalityGaps}, and assuming
$\optArms\subseteq \A_{r'} \subseteq \Sidealupper{r'}$ for every $r'\leq
r$, and since $i_*,j\in \A_r$ and $f(\cdot) \geq 0$ by \cref{assump:multBias}:
\begin{align*}
    &\tmu_{j}(r) - \tmu_{i_*}(r) \\
    &= -\Delta_{j}\fbiasround{j}{r} - \mu_{i_*}\fbiaserrorround{j,i_*}{r}\\
    &\leq \mu_1|\fbiaserrorround{j,i_*}{r}|\\
    &\leq \frac{\mu_1 L}{m_r}\parens{1 + \parens{1 + \frac{\initialbias_{\text{max}} - \initialbias_{\text{min}}}{|\A_r|}}\logp{1 + \frac{|\A_r|\mr{r}}{\initialbias + \sum_{r'=1}^{r-1} |\A_{r'}|\mr{r'}}}}\\
    &\leq 
    \begin{cases}
        \frac{\mu_1 L}{2^{7}\logp{\frac{12}{\pi^2}K^2 n}}\parens{1 + \parens{1 + \frac{\initialbias_{\text{max}} - \initialbias_{\text{min}}}{K}}\logp{1 + 2^8\logp{\frac{12}{\pi^2}K^2 n}}} & \text{if $r=1$}\\
        \frac{\mu_1 L}{2^{2r + 5}\logp{\frac{12}{\pi^2}K^2 r^2 n}}\parens{1 + \parens{1 + \initialbias_{\text{max}} - \initialbias_{\text{min}}}\logp{1 + \frac{2^{2r + 6}\logp{\frac{12}{\pi^2}K^2 r^2 n}}{\sum_{r'=1}^{r-1} 2^{2r'+6}\logp{\frac{12}{\pi^2}K^2 (r')^2 n}}}} & \text{if $r>1$}\\
    \end{cases}\\
    &\leq 
    \begin{cases}
        \frac{\mu_1 L}{2^{7}\logp{\frac{12}{\pi^2}K^2 n}}\parens{1 + \parens{1 + \frac{\initialbias_{\text{max}} - \initialbias_{\text{min}}}{K}}\logp{1 + 2^8\logp{\frac{12}{\pi^2}K^2 n}}} & \text{if $r=1$}\\
        \frac{\mu_1 L}{2^{2r + 5}\logp{\frac{12}{\pi^2}K^2 r^2 n}}\parens{1 + \parens{1 + \initialbias_{\text{max}} - \initialbias_{\text{min}}}\logp{13}} & \text{if $r>1$}\\
    \end{cases}
\end{align*}
where in the last line, we used the choice of $\mr{r}$ and the fact that $\A_{r}
\subseteq \A_{r-1}$.
Now, for any $r\geq 2$,
\begin{align*}
    \frac{2^{2r + 5}\logp{\frac{12}{\pi^2}K^2 r^2 n}}{\sum_{r'=1}^{r-1} 2^{2r'+5}\logp{\frac{12}{\pi^2}K^2 (r')^2 n}}
    &\leq \frac{2^{2r + 5}\logp{\frac{12}{\pi^2}K^2 r^2 n}}{2^{2(r-1)+5}\logp{\frac{12}{\pi^2}K^2 (r-1)^2 n}}\\
    &= 4\frac{\logp{\frac{12}{\pi^2}K^2 n} + 2\log(r)}{\logp{\frac{12}{\pi^2}K^2 n} + 2\log(r-1)}\\
    &\leq 4\frac{\logp{\frac{12}{\pi^2}K^2 n} + 2\log(2)}{\logp{\frac{12}{\pi^2}K^2 n}}\\
    &\leq 12
\end{align*}
where the penultimate inequality above follows from the fact that the function $\frac{a + 2\log(r)}{a + 2\log(r-1)}$ is decreasing in $r$ for any $a>0$ and $r\geq 1$.
Hence, our assumption in \eqref{eq:eliminationSuccessProbSufficientlyLarge} that $n$ is sufficiently large thus guarantees that:
\begin{align*}
    \tmu_{j'}(r) - \tmu_j(r) 
    \leq 2^{-2r}
    \leq 2^{-r-1}
\end{align*}
Plugging these conclusions into \eqref{eq:eliminationFailureProbUB4}, we conclude with the following upper-bound on \eqref{eq:eliminationFailureProbUB2}:
\begin{align}
    &\PRO{\optArms \subseteq \A_{r'} \subseteq \Sidealupper{r'} ~ \forall r' < r+1, \exists i_* \in \optArms : i_* \not\in \A_{r+1}}\nonumber\\
    &\leq \sum_{j\neq j' \in [K]} \PRO{\hmu_{j'}(r) - \hmu_j(r) - (\tmu_{j'}(r) - \tmu_j(r)) > 2^{-r-1}}.\label{eq:eliminationFailureProbUB5}
\end{align}
We bound \eqref{eq:eliminationFailureProbUB3} using nearly identical arguments
to \eqref{eq:eliminationFailureProbUB2}, as follows: begin by noticing that, by our assumption (WLOG) that $1\in\optArms$:
\begin{align*}
    &\PRO{\optArms \subseteq \A_{r'} \subseteq \Sidealupper{r'} ~ \forall r' < r+1, \exists j \in \Sidealupper{r}\setminus\Sidealupper{r+1} : j \in \A_{r+1}}\nonumber\\
    &=\PRO{\optArms \subseteq \A_{r'} \subseteq \Sidealupper{r'} ~ \forall r' < r+1, \exists j \in \Sidealupper{r}\setminus\Sidealupper{r+1} : \hmu_{\text{max}}(r) - \hmu_{j}(r) \leq 2^{-r}}\nonumber\\
    &\leq \PRO{\optArms \subseteq \A_{r'} \subseteq \Sidealupper{r'} ~ \forall r' < r+1, \exists j \in \Sidealupper{r}\setminus\Sidealupper{r+1} : \hmu_{1}(r) - \hmu_{j}(r) \leq 2^{-r}}\nonumber\\
    &\leq \sum_{j\in [K]}\PRO{\optArms \subseteq \A_{r'} \subseteq
    \Sidealupper{r'} ~ \forall r' < r+1, j \in
\Sidealupper{r}\setminus\Sidealupper{r+1} : \hmu_{1}(r) - \hmu_{j}(r) \leq 2^{-r}}
\end{align*}
Then, by \cref{lem:eliminationSuboptimalityGaps}, assuming $\optArms
\subseteq \A_{r'} \subseteq \Sidealupper{r'}$ for all $r' \leq r$, since $1 \in
\Sidealupper{r}$, by \cref{lem:activeArmConfSetsProperties}, and using our
previous argument to upper-bound $|\fbiaserrorround{1,j}{r}|$:
\begin{align*}
    \tmu_1(r) - \tmu_j(r)
    &= \Delta_{j}\fbiasround{1}{r} + \mu_j \fbiaserrorround{1,j}{r}\\
    &\geq \Delta_{j}\fbiasminroundi{j}{r} - \mu_1 |\fbiaserrorround{1,j}{r}|\\
    &\geq \Delta_{j}\fbiasminroundi{j}{r} - 2^{-r-1}.
\end{align*}
Now, if $j \in \Sidealupper{r} \setminus \Sidealupper{r+1}$, then
$\Delta_j\fbiasminroundi{j}{r} > 2^{-r+1}$.
Therefore, combining the above bounds, we obtain the following bound on \eqref{eq:eliminationFailureProbUB3}:
\begin{align}\label{eq:eliminationFailureProbUB6}
    &\PRO{\optArms \subseteq \A_{r'} \subseteq \Sidealupper{r'} ~ \forall r' < r+1, \exists j \in \Sidealupper{r}\setminus\Sidealupper{r+1} : j \in \A_{r+1}}\nonumber\\
    &\leq \sum_{j\in [K]}\PRO{\hmu_{1}(r) - \hmu_{j}(r) - (\tmu_1(r) -
    \tmu_j(r)) \leq -(2^{-r + 1} - 2^{-r-1} - 2^{-r})}\nonumber\\
    &\leq \sum_{j\in [K]}\PRO{\hmu_{1}(r) - \hmu_{j}(r) - (\tmu_1(r) -
    \tmu_j(r)) \leq -2^{-r - 1}}.
\end{align}
Therefore, by the Chernoff bound for subGaussian random variables together with our choice of $\mr{r}$ to bound \eqref{eq:eliminationFailureProbUB5} and \eqref{eq:eliminationFailureProbUB6}, we obtain the bound on \eqref{eq:eliminationFailureProbUB1}:
\begin{align*}
    &\PRO{i\in \A_{u_i+1} \text{ or } \exists r \in [u_i] : \Sideallower{r} \not\subseteq \A_{r} \text{ or } \A_r \not\subseteq \Sidealupper{r}}\\
    &\leq \sum_{r=1}^{u_i} \PRO{\Sideallower{r'} \subseteq \A_{r'} \subseteq \Sidealupper{r'} ~ \forall r' < r+1, \Sideallower{r+1} \not\subseteq \A_{r+1} \text{ or } \A_{r+1} \not\subseteq \Sidealupper{r+1}}\\
    &\leq \sum_{r=1}^{u_i} \sum_{j\in[K]}\sum_{i_*\neq j}2\exp\parens{-\frac{\mr{r}2^{-2(r+1)}}{4}}\\
    &= \sum_{r=1}^{u_i} \sum_{j\in[K]}\sum_{i_*\neq j}2\exp\parens{-\frac{8\log\parens{\frac{12}{\pi^2}K^2r^2n}}{4}}\\
    &\leq \sum_{r=1}^{u_i} \frac{6}{\pi^2 r^2 n}\\
    &\leq \frac{1}{n},
\end{align*}
as claimed.
\end{proof}

Using the high probability guarantee from
\cref{lem:eliminationFailureProbability}, we can now obtain an upper bound on
the number of times a suboptimal arm is played by
\cref{alg:eliminationUnknownBiasAppendix}.

\begin{lemma}\label{lem:eliminationArmBound}
    Suppose that \cref{alg:eliminationUnknownBiasAppendix} is run using the sampling schedule $\mr{r}$ from \eqref{eq:eliminationArmPullsInRound} for a time horizon $n$ satisfying \eqref{eq:eliminationSuccessProbSufficientlyLarge}. Then, for any suboptimal arm $i$ (i.e., $\Delta_i > 0$), \cref{alg:eliminationUnknownBiasAppendix} satisfies:
    \begin{align*}
        \EXP{T_i(n)}
        &\leq 1 + \frac{2^{11}}{3}\logp{\frac{12}{\pi^2} K^2 n^3}\parens{\frac{1}{\Delta_i \fbiasminroundi{i}{u_i-1}}}^2,
    \end{align*}
    where $\fbiasminroundi{i}{r}$ is as defined in \cref{def:activeArmConfSets}.
\end{lemma}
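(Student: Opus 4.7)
The bound is an immediate consequence of the high-probability elimination guarantee of \cref{lem:eliminationFailureProbability}: with probability at least $1 - \nicefrac{1}{n}$ the suboptimal arm $i$ has already been removed from the active set by the end of round $u_i$, and on this event the number of times it is pulled is bounded deterministically by the cumulative sampling schedule $\sum_{r=1}^{u_i} \mr{r}$. I would therefore decompose
\begin{align*}
    \EXP{T_i(n)}
    &= \EXP{T_i(n)\1{i\not\in\A_{u_i+1}}} + \EXP{T_i(n)\1{i\in\A_{u_i+1}}} \\
    &\leq \sum\nolimits_{r=1}^{u_i} \mr{r} + n\cdot\PRO{i\in\A_{u_i+1}},
\end{align*}
so the second term is bounded by $n\cdot\nicefrac{1}{n} = 1$ via \cref{lem:eliminationFailureProbability}, and the work reduces to bounding the cumulative schedule.

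For the deterministic sum, I would plug in $\mr{r} = 2^{2r+5}\logp{\tfrac{12}{\pi^2}K^2 r^2 n}$, factor out the (monotone) logarithm at $r=u_i$, and sum the geometric series:
\begin{align*}
    \sum_{r=1}^{u_i} \mr{r}
    &\leq \logp{\tfrac{12}{\pi^2}K^2 u_i^2 n}\sum_{r=1}^{u_i} 2^{2r+5}
    \leq \frac{2^{2u_i+7}}{3}\logp{\tfrac{12}{\pi^2}K^2 u_i^2 n}.
\end{align*}
To convert $4^{u_i}$ into the advertised $(\Delta_i\fbiasminroundi{i}{u_i-1})^{-2}$, I would invoke the defining membership property from \cref{def:activeArmConfSets}: since $i\in \Sidealupper{u_i}$ but $i\not\in\Sidealupper{u_i+1}$, we have $\Delta_i\fbiasminroundi{i}{u_i-1}\leq 2^{-(u_i-1)+1} = 2^{-u_i+2}$, which rearranges to $4^{u_i} \leq \nicefrac{16}{(\Delta_i\fbiasminroundi{i}{u_i-1})^2}$. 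Combined with the $2^7/3$ geometric prefactor, this yields exactly the target constant $\nicefrac{2^{11}}{3}$.

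The last loose end is the logarithmic factor: since each round of \cref{alg:eliminationUnknownBiasAppendix} consumes at least one time step, $u_i\leq n$, so $u_i^2 n \leq n^3$ and $\logp{\tfrac{12}{\pi^2}K^2 u_i^2 n}\leq \logp{\tfrac{12}{\pi^2}K^2 n^3}$. Assembling the two pieces yields the claimed bound $\EXP{T_i(n)}\leq 1 + \tfrac{2^{11}}{3}\logp{\tfrac{12}{\pi^2}K^2 n^3}\parens{\Delta_i\fbiasminroundi{i}{u_i-1}}^{-2}$.

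\textbf{Main obstacle.} Honestly, once \cref{lem:eliminationFailureProbability} is available the argument is essentially algebra. The only subtlety is the boundary case $u_i = 1$, for which $\fbiasminroundi{i}{u_i-1} = \fbiasminroundi{i}{0}$ is not defined by \cref{def:activeArmConfSets}; this is harmless because the $u_i\leq 2$ regime is handled separately in \cref{cor:regretConstSuboptimalityGaps} (using simply $\sum_{r=1}^{u_i}\mr{r}\leq \mr{1}+\mr{2}$ with $\Delta_i\leq 1$ and $\fbiasat{i}{\cdot}\leq 1$), so we may assume $u_i\geq 2$ throughout the present lemma. The rest is bookkeeping.
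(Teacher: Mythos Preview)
Your proposal is correct and follows essentially the same argument as the paper: the same decomposition on the event $\{i\in\A_{u_i+1}\}$, the same use of \cref{lem:eliminationFailureProbability} for the failure term, the same geometric-series bound on $\sum_{r\leq u_i}\mr{r}$ with $u_i\leq n$, and the same conversion $2^{u_i}\leq 4/(\Delta_i\fbiasminroundi{i}{u_i-1})$ via the membership condition $i\in\Sidealupper{u_i}$. Your remark about the $u_i=1$ boundary case is a fair observation that the paper's proof of this lemma does not explicitly address (it is indeed handled downstream in \cref{cor:regretConstSuboptimalityGaps}).
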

\begin{proof}
    We begin by decomposing $\EXP{T_i(n)}$ as follows:
    \begin{align*}
        \EXP{T_i(n)}
        &= \EXP{T_i(n) \1{i \not \in \A_{u_i + 1}}}
        + \EXP{T_i(n) \1{i \in \A_{u_i + 1}}},
    \end{align*}
    where $u_i$ is as defined in \cref{def:activeArmConfSets}. Then, by \cref{lem:eliminationFailureProbability}, and since $T_i(n) \leq n$ deterministically by definition,
    \begin{align*}
        \EXP{T_i(n) \1{i \in \A_{u_i + 1}}}
        \leq n\PRO{i \in \A_{u_i + 1}}
        \leq 1.
    \end{align*}
    For the remaining term, whenever arm $i$ is removed at round $u_i + 1$, and since there are at most $n$ rounds (since more than one arm is pulled during each round), we have:
    \begin{align*}
        \EXP{T_i(n) \1{i \not \in \A_{u_i + 1}}}
        \leq \sum_{r=1}^{\min\braces{u_i, n}} \mr{r}
        = \sum_{r=1}^{\min\braces{u_i,n}} 2^{2r + 5}\logp{\frac{12}{\pi^2} K^2 r^2 n}
        \leq \frac{2^7}{3}\logp{\frac{12}{\pi^2} K^2 n^3}2^{2u_i}.
    \end{align*}
    Now, by definition of $u_i$ in \cref{def:activeArmConfSets}, we know that $i \in \Sidealupper{u_i}$, i.e.,
    $\Delta_i \fbiasminroundi{i}{u_i-1} \leq 2^{-(u_i-1) + 1}$. Therefore,
    $2^{u_i} \leq \frac{4}{\Delta_i \fbiasminroundi{i}{u_i-1}}$, which implies:
    \begin{align*}
        \EXP{T_i(n)}
        &\leq 1 + \frac{2^{11}}{3}\logp{\frac{12}{\pi^2} K^2 n^3}\parens{\frac{1}{\Delta_i \fbiasminroundi{i}{u_i-1}}}^2,
    \end{align*}
    as claimed.
\end{proof}

Finally, using \cref{lem:eliminationArmBound}, we can establish
\cref{thm:eliminationUnknownBiasAppendix}.

\begin{proof}[Proof of \cref{thm:eliminationUnknownBiasAppendix}]
    Plugging in \cref{lem:eliminationArmBound} to the regret decomposition from \eqref{eq:regretDef}, we obtain:
    \begin{align*}
        \regret{n} 
        &= \sum_{i : \Delta_i > 0} \Delta_i \EXP{T_i(n)}\\
        &\leq \sum_{i : \Delta_i > 0} \Delta_i \parens{1 + \frac{2^{11}}{3} \logp{\frac{12}{\pi^2}K^2 n^3}\parens{\frac{1}{\Delta_i \fbiasminroundi{i}{u_i-1}}^2}}\\
        &= \sum_{i : \Delta_i > 0} \Delta_i + \frac{2^{11}}{3} \logp{\frac{12}{\pi^2}K^2 n^3}\frac{1}{\Delta_i \fbiasminroundi{i}{u_i-1}^2},
    \end{align*}
    as claimed in \eqref{eq:eliminationRegretInstDep}.

    \eqref{eq:eliminationRegretWorstCase} follows by decomposing the divergence decomposition in two parts: for a fixed $\Delta$, note that we can write
    \begin{align*}
        \regret{n} 
        &= \sum_{i : 0 < \Delta_i \leq \Delta} \Delta_i \EXP{T_i(n)}
        + \sum_{i : \Delta_i > \Delta} \Delta_i \EXP{T_i(n)}\\
        &\leq \Delta\sum_{i : 0 < \Delta_i \leq \Delta} \EXP{T_i(n)}
        + \sum_{i : \Delta_i > \Delta} \Delta_i \EXP{T_i(n)}\\
        &\leq n \Delta
        + \sum_{i : \Delta_i > \Delta} \Delta_i \EXP{T_i(n)}.
    \end{align*}
    Choosing $\Delta = \max_{i\not\in\optArms}\sqrt{\frac{2^{11} K \logp{\frac{12}{\pi^2}K^2 n^3}}{3n
    \fbiasminroundi{i}{u_{i} - 1}^2}}$ and applying \cref{lem:eliminationArmBound}, we obtain:
    \begin{align*}
        \regret{n} 
        &\leq \max_{i\not\in \optArms} \sqrt{\frac{2^{11}n K \logp{\frac{12}{\pi^2}K^2 n^3}}{3\fbiasminroundi{i}{u_i-1}^2}}
        + \sum_{i : \Delta_i > \Delta} \Delta_i + \frac{2^{11}}{3} \logp{\frac{12}{\pi^2}K^2 n^3}\frac{1}{\Delta_i \fbiasminroundi{i}{u_i-1}^2}\\
        &\leq K + \max_{i\not\in \optArms} \sqrt{\frac{2^{11}n K \logp{\frac{12}{\pi^2}K^2 n^3}}{3\fbiasminroundi{i}{u_i-1}^2}}
        + \sum_{i : \Delta_i > \Delta} \frac{2^{11}}{3} \logp{\frac{12}{\pi^2}K^2 n^3}\frac{1}{\Delta \fbiasminroundi{i}{u_i-1}^2}\\
        &\leq K + \max_{i\not\in \optArms} \sqrt{\frac{2^{11}n K \logp{\frac{12}{\pi^2}K^2 n^3}}{3\fbiasminroundi{i}{u_i-1}^2}}
        + \sum_{i : \Delta_i > \Delta} \sqrt{\frac{2^{11}n\logp{\frac{12}{\pi^2}K^2 n^3}}{3K}} \min_{j\not\in \optArms}\frac{\fbiasminroundi{j}{u_j-1}}{\fbiasminroundi{i}{u_i-1}^2}\\
        &\leq K + 2\max_{i\not\in \optArms} \sqrt{\frac{2^{11}n K \logp{\frac{12}{\pi^2}K^2 n^3}}{3\fbiasminroundi{i}{u_i-1}^2}}.
    \end{align*}
    We further simplify this expression as follows. Recalling the definition of $\fbiasminroundi{i}{r}$ from \cref{def:activeArmConfSets}:
    \begin{align*}
        \fbiasminroundi{i}{r}
        &= \fbiasroundsets{i}{r}{U_1,\ldots,U_r}\\
        &= \frac{1}{\mr{r}} \sum_{\ell\in[\mr{r}]}\fbiasat{}{\frac{\initialbias_i + \sum_{r'<r} \mr{r'} + (\ell - 1)}{\totalinitialbias + \sum_{r'<r}|U_{r'}|\mr{r'} + |U_r|(\ell-1) b_{i}(U_r)}}.
    \end{align*}
    Observing that, since $[K]=U_1 \supseteq U_{r'} \supseteq U_{r}$ for every $r' < r$ by definition, and hence also since $b_i(U_r) = \sum_{j\in U_r}\1{j < i} \leq b_i(U_{r'}) \leq b_i(U_1) = i-1$,
    \begin{align*}
        &\frac{\initialbias_i + \sum_{r'<r} \mr{r'} + (\ell - 1)}{\totalinitialbias + \sum_{r'<r}|U_{r'}|\mr{r'} + |U_r|(\ell-1) + b_{i}(U_r)}\\
        &\geq \frac{\initialbias_i + \sum_{r'<r} \mr{r'} + (\ell - 1)}{\totalinitialbias + K\sum_{r'<r}\mr{r'} + K(\ell-1) + (i-1)}\\
        &\geq \frac{\initialbias_i}{\totalinitialbias + (i-1)} \frac{\totalinitialbias + (i-1)}{\totalinitialbias + K\sum_{r'<r}\mr{r'} + K(\ell-1) + (i-1)}\\
        &\quad + \frac{\sum_{r' < r} \mr{r'} + (\ell-1)}{K\parens{\sum_{r' < r} \mr{r'} + (\ell-1)}} \parens{1-\frac{\totalinitialbias + (i-1)}{\totalinitialbias + K\sum_{r'<r}\mr{r'} + K(\ell-1) + (i-1)}}\\
        &\geq \min\braces{\frac{\initialbias_i}{\totalinitialbias + i-1}, \frac{1}{K}}\\
        &\geq \frac{\initialbias_{\mathrm{min}}}{\totalinitialbias + K-1}.
    \end{align*}
    Therefore, since $f(\cdot)$ is non-decreasing by \cref{assump:multBias}, it follows that
    \begin{align*}
        \fbiasminroundi{i}{r} 
        \geq \fbiasat{i}{\min\braces{\frac{\initialbias_i}{\totalinitialbias + i-1}, \frac{1}{K}}}
        \geq \fbiasat{i}{\frac{\initialbias_{\mathrm{min}}}{\totalinitialbias + K-1}}.
    \end{align*}
    Plugging in this bound to the above expression, we conclude that:
    \begin{align*}
        \regret{n} 
        &\leq K + 2 \sqrt{\frac{2^{11}n K \logp{\frac{12}{\pi^2}K^2 n^3}}{3\fbiasat{i}{\frac{\initialbias_{\mathrm{min}}}{\totalinitialbias + K-1}}^2}}.
    \end{align*}
    which gives \eqref{eq:eliminationRegretWorstCase}.

    and using the fact that $\fbiasminround{u_{i_1} - 1} \leq
    \fbiasminround{u_i - 1}$ (since $\fbiasat{}{x}$ is nondecreasing in $x$ and
    $\Sidealupper{r+1}\subseteq \Sidealupper{r}$ for all $r$, and thus
    $\fbiasminround{r}$ is increasing as a function of $r$) yields the claim.
\end{proof}

\section{Asymptotic instance-dependent lower bound -- Deferred proofs}
\label{sec:lowerBoundAppendix}

We show the following lower bound for any ``reasonable'' algorithm for our setting:

\restateConsistentPolicyDef*

\begin{restatable}[Formal statement of \cref{thm:lowerBoundRegret}]{thm}{restateLowerBoundRegretAppendix}\label{thm:lowerBoundRegretAppendix}
    Fix any $K>1$, initial biases $\braces{\initialbias_i}_{i\in[K]}$, and time horizon $n \geq 1$. Consider the following environment class: 
    \begin{align*}
        \Egaussian = \braces{\nuunbiased = (\nuunbiasedi_i)_{i\in [K]} : \nuunbiasedi_i = \NN(\mu_i, 1) \text{ for some } \Delta_i \in [0,1]}.
    \end{align*}
    Consider any instance $\nuunbiased$ such that, for some constants $\gamma \in (0,1)$ and $c, c' \geq 0$, there is a subset of suboptimal arms $\mathcal{A}_{c,c'} \subset [K]$ satisfying:
    \begin{align*}
        |\mathcal{A}_{c,c'}| \geq \gamma K
        \quad\text{and}\quad
        \logp{\frac{\Delta_i}{\Delta_i}} \leq c \log(K)^{c'} ~ \forall i,j\in \mathcal{A}_{c,c'}.
    \end{align*}
    Let $\nubiased$ be the associated biased environment which satisfies \cref{assump:multBias} where the mean reweighting function $\fbiasat{}{\cdot}$ additionally satisfies $-\log(\fbiasat{}{\nicefrac{2}{\gamma K}}) \leq c\log(K)^{c'}$.
    Let $\pi$ be any consistent bandit policy for $\Egaussian$ (with constants
    $C > 0$ and $a \in (0,1)$ as in \cref{def:consistentPolicy}).
    Then, for any bandit policy $\pi$ and any $\eps \in (0, \frac{1-\mu_{\max}}{\delMax}]$, there is a set $B'\subseteq |\Ssuboptimality|$ such that $|B'| \geq \nicefrac{|\Ssuboptimality|}{4}$ and parameter $\beta' = \O(\nicefrac{1}{\gamma}(\log(\nicefrac{K}{\gamma}) + c\log(K)^{c'}))$ such that:
    \begin{align}
        \regret{n} 
        \gtrsim \fbiasat{}{\frac{\beta'}{K}}^{-2}\sum_{i \in B' : \Delta_i > 0} \frac{\gamma(1-a)\log(n)}{\Delta_i}
        + \sum_{i \not\in B' : \Delta_i > 0} \frac{(1-a)\log(n)}{\Delta_i}.
    \end{align}
    In particular, whenever $\nicefrac{\delMax}{\delMin} \leq c$ and $\nuunbiased$ has a unique optimal arm $i_*$, then we may take $\mathcal{A}_{c,0} = [K]\setminus\braces{i_*}$ and $\gamma = \nicefrac{(K-1)}{K}$, and we obtain the simplified regret bound:
    \begin{align}
        \regret{n} 
        \gtrsim \fbiasat{}{\frac{\beta'}{K}}^{-2}\sum_{i : \Delta_i > 0} \frac{\log(n)}{\Delta_i}.
    \end{align}
\end{restatable}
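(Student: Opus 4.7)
The plan is to combine a biased-feedback divergence decomposition with a pigeonholing argument that identifies many arms whose empirical fraction stays small. The starting point is the regret decomposition $\regret{n} = \sum_{i:\Delta_i>0}\Delta_i\EXPenvc{T_i(n)}$, so it suffices to lower-bound $\EXPenvc{T_i(n)}$ for a constant-size fraction of arms in $\Ssuboptimality$. For each such arm $i$ I would form a Gaussian alternative $\nuunbiasedmod{i}$ in which arm $i$'s mean is shifted up by $(1+\eps)\Delta_i$ (making $i$ optimal), while all other arms and the bias model are unchanged. Consistency of $\pi$ then forces the two environments to be distinguishable, and the task becomes translating this into a pull-count lower bound in the presence of bias.

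First I would establish a divergence decomposition for biased Gaussian feedback: a direct chain-rule computation of $\kl{\PrHistatc{\tau_i}}{\PrHistaltatc[i]{\tau_i}}$, using that only arm $i$'s mean changes, yields
\begin{align*}
\kl{\PrHistatc{\tau_i}}{\PrHistaltatc[i]{\tau_i}}
= \frac{(1+\eps)^2\Delta_i^2}{2}\EXPenvc{\sum\nolimits_{t\in[\tau_i]}\mbiast[i]^2\1{A_t=i}}.
\end{align*}
Pairing this with a data-processing inequality applied to the event $\{T_i(n) > n/2\}$ (in the style of GMS19), and invoking $\regret{n}\le Cn^a$ and $\regretPi{\nuunbiasedmod{i},\pi}{n}\le Cn^a$ from \cref{def:consistentPolicy}, I get the mixed bound $\tfrac{(1+\eps)^2\Delta_i^2}{2}\EXPenvc{\sum_{t\in[\tau_i]}\mbiast[i]^2\1{A_t=i}} \gtrsim \tfrac{\EXPenvc{\tau_i}}{n}\log n - O(1)$ valid for any stopping time $\tau_i\le n$.

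Next I would choose the stopping time $\tau_i = \min\{t\ge\nzero: \nicefracat[i]{t}\gtrsim \beta/K \text{ or } t=n\}$ with $\nzero\approx \log(n)/\delMax^2$ and $\beta\approx (1/\gamma)(\log(K/\gamma) + c\log(K)^{c'})$. Since $\fbiasat{i}{\cdot}$ is non-decreasing, for $t<\tau_i$ the multiplicative reweighting is at most $\fbiasat{}{\beta/K}$, so $\sum_{t\in[\tau_i]}\mbiast[i]^2\1{A_t=i}\le \nzero + \fbiasat{}{\beta/K}^2 T_i(\nzero,\tau_i)$ (the $\nzero$ absorbs the pre-$\nzero$ burn-in). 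Substituting into the divergence decomposition converts it into $\EXPenvc{T_i(\tau_i)}\gtrsim \fbiasat{}{\beta/K}^{-2}\log(n)/\Delta_i^2$ provided $\EXPenvc{\tau_i}=\Omega(n)$.

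The main obstacle is guaranteeing $\EXPenvc{\tau_i}=\Omega(n)$ for a deterministic set of $\Omega(|\Ssuboptimality|)$ arms, because the ``small-bias set'' $S_t(\beta)=\{i:\nicefract[i]\le \beta/K\}$ is both random and $t$-dependent, and a pathological policy might try to sequentially push each of its arms above the threshold. The key point is that there is a quantifiable cost: pulling any arm decreases the fractions of all others, so the pulls needed to cross the threshold grow as one eliminates arms from $S_{\nzero}(\beta)$; under the hypothesis $-\log \fbiasat{}{2/(\gamma K)}=O(\log K)$ and $\log(\delMax/\delMin)\le c\log(K)^{c'}$, this cost can be absorbed into $\beta$. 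This yields \cref{lem:stability}: a random set $B\subseteq S_{\nzero}(\beta)$, $|B|\gtrsim |\Ssuboptimality|$, each of whose arms satisfies either (Case~1) $\nicefract[i]\lesssim \beta/K$ throughout $[\nzero,n)$, or (Case~2) $T_i(\nzero,n)\gtrsim \fbiasat{}{4/K}^{-2}\log(n)/\Delta_i^2$ already. A straightforward pigeonholing step (\cref{lem:mainLowerBoundIngredients}) then produces a deterministic $B'\subseteq \Ssuboptimality$ with $|B'|\ge |\Ssuboptimality|/4$ such that for each $i\in B'$ either (Case~1') $\EXPenvc{\tau_i}\ge n/6$ holds, in which case the divergence step delivers the $\fbiasat{}{\beta'/K}^{-2}$ bound, or (Case~2') the per-arm lower bound already holds in expectation. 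Summing $\Delta_i\EXPenvc{T_i(n)}$ over $B'$ via the regret decomposition gives the first term of the claimed bound, and applying the standard Lai--Robbins lower bound $\log(n)/\Delta_i$ to each remaining suboptimal arm $i\notin B'$ gives the second term; the simplified statement follows by taking $\Ssuboptimality=[K]\setminus\{i_*\}$, $\gamma=(K-1)/K$, and $c'=0$.
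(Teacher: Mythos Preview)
Your proposal is correct and follows essentially the same route as the paper: the biased-feedback divergence decomposition yielding the ``biased proxy'' $\EXPenvc{\sum_t \mbiast[i]^2\1{A_t=i}}$, the stopping-time construction $\tau_i$ controlling $\mbiast[i]$, the stability argument (cost of pushing arms above the $\beta/K$ threshold grows geometrically), the pigeonhole derandomization to a deterministic $B'$ of size $\ge|\Ssuboptimality|/4$, and the two-case analysis, with the standard Lai--Robbins bound handling arms outside $B'$. One minor imprecision: the data-processing step should use a $\F_{\tau_i}$-measurable random variable such as $Z_i=(\tau_i-T_i(\tau_i))/n$ rather than the event $\{T_i(n)>n/2\}$ in order to produce the $\EXPenvc{\tau_i}/n$ factor you claim---but this is exactly what the paper does and is a routine adjustment.
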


\restateRemarkComparisonToStandardLB*

\restateCorComparisonUpperLowerBound*
\begin{proof}
    The corollary is an immediate consequence of the fact that, from the additional assumption on the bias model $\fbiasat{}{\cdot}$,
    \begin{align*}
        \fbiasat{}{\frac{c\log(K)}{K}} 
        = \fbiasat{}{15 c\log(K)\frac{1}{15 K}} 
        \leq \parens{15 c\log(K)}^{L'} \fbiasat{}{\frac{1}{15 K}}.
    \end{align*}
\end{proof}

\begin{remark}[A comment on the additional assumption in \cref{cor:comparisonUpperLowerBound}]\label{rem:comparisonUpperLowerBound}
    Many natural examples of functions satisfying \cref{assump:multBias} also
    satisfy the assumption from \cref{cor:comparisonUpperLowerBound}. For
    example, $\fbiasat{}{x} = \min\braces{x^\alpha, c}$ is $\max\braces{\alpha
    (\fracmin)^{\alpha-1}, 1}$-Lipschitz for every $c\in [0,1]$ and also
    satisfies the assumption in \cref{cor:comparisonUpperLowerBound}. However,
    not all functions satisfying \cref{assump:multBias} also satisfy this
    additional assumption. For example, consider:
    \begin{align*}
        \fbiasat{}{x} = \begin{cases}
            \frac{1}{K^{100}} & \text{if $x \in [0,\nicefrac{\beta}{K}]$}\\
            \parens{x - \frac{\beta}{K} + \frac{1}{K^{10}}}^{10} & \text{if $x \in
            (\nicefrac{\beta}{K}, 1]$}
        \end{cases}
    \end{align*}
    for a constant $c\in(1,\log(K))$.
    Notice that this function is $10$-Lipschitz, nondecreasing, and also
    $-\log(\fbiasat{}{\nicefrac{\beta}{K}}) = \O(\log(K))$ (thus satisfying all
    assumptions from \cref{assump:multBias} as well as the additional
    assumption in \cref{cor:comparisonUpperLowerBound} from
    \cref{thm:lowerBoundRegret}). However, for $\beta < \beta' =
    \O(\log(K)^{c'})$ and any constant $L'$,
    \begin{align*}
        \frac{\fbiasat{}{\nicefrac{\beta'}{K}}}{\fbiasat{}{\nicefrac{\beta}{K}}}
        = \frac{\parens{\frac{\beta' - \beta}{K} +
        \frac{1}{K^{10}}}^{10}}{\frac{1}{K^{100}}}
        = K^{10}(\beta' - \beta)^{10}
        \gg (\beta')^{L'}.
    \end{align*}
\end{remark}

\subsection{Notation overview}

We briefly discuss some notation that will be used throughout the proofs. Given an (unbiased) stochastic bandit environment $\nuunbiased = (\nuunbiasedi_i)_{i\in [K]}$ and a bias model satisfying \cref{assump:multBias}, we will use $\nubiased$ to denote the associated environment with biased feedback. A bandit policy interacts with $\nubiased$ sequentially over $n$ rounds, selecting an action $A_t\in [K]$ and observing feedback $Y_t$ sampled according to the associated bias model. We denote $\H_n = (U_0, \A_1, Y_1,U_1\ldots, \A_n, Y_n, U_n)$ to be the $n$-round interaction history between the policy $\pi$ and environment $\nubiased$, where $U_{t-1}$ denotes any additional randomness used by the policy $\pi$ in selecting action $A_t$. $\F_n = \sigma\braces{\H_n}$ is the associated sigma algebra. We write $\PROenv{\cdot}$ and $\EXPenv{\cdot}$ as the probability and expectations induced by the interactions between policy $\pi$ and environment $\nubiased$.

\subsection{Proofs}

Our proof begins by generalization of \citep[Lemma 1]{KCG16} and \citep[Eq. (8)]{GMS19}.

\begin{lemma}[A divergence decomposition for biased environments; Formal statement of \cref{lem:divergenceDecomp}]\label{lem:divergenceDecompAppendix}
    Fix a time horizon $n \geq 1$, and let $\nuunbiased\in \Egaussian$ be an environment with associated biased environment $\nubiased$. Fix any $i\in [K]$ which is suboptimal in $\nuunbiased$, and let $\nuunbiasedmod{i} \in \Egaussian$ be an environment satisfying $\nuunbiasedmodi{i}_j = \nuunbiasedi_j$ for every $j\neq i$. Let $\pi$ be any consistent bandit policy satisfying \cref{def:consistentPolicy}. 
    Let $\tau_i \leq n$ be a stopping time w.r.t. the filtration $\F_t$. Let $Z_i \in [0,1]$ be any $\F_{\tau_i}$-measurable random variable. Then, 
    \begin{align}
        \kl{\nuunbiasedi_{i}}{\nuunbiasedmodi{i}_{i}} \EXPenv{\sum_{t\in[\tau_i]} \mbiast[i]^2\1{A_t = i}} 
        &= \kl{\PrHistat{\tau_i}}{\PrHistaltat[\nubiasedmod{i}]{\tau_i}}\label{eq:appendixDivDecomp}\\
        &\geq \klbern{\EXPenv{Z_i}}{\EXPenvmod{i}{Z_i}}.\label{eq:appendixDivDecompDataProcessing}
    \end{align}
\end{lemma}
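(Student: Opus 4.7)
The statement has two parts, which I would establish by separate standard arguments.

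For the equality in \eqref{eq:appendixDivDecomp}, the plan is to apply the stopping-time version of the chain rule for KL divergence to the observation history $\H_{\tau_i}$, in the spirit of \citep[Lemma 1]{KCG16}. Running the same policy $\pi$ in two environments that differ only in arm $i$'s true mean, and exploiting that $\tau_i \leq n$ is a bounded stopping time w.r.t.\ $\F_t$, the chain rule yields
\begin{align*}
    \kl{\PrHistat{\tau_i}}{\PrHistaltat[\nubiasedmod{i}]{\tau_i}}
    = \EXPenv{\sum_{t=1}^{\tau_i} \kl{P_t}{Q_t}},
\end{align*}
where $P_t, Q_t$ denote the conditional laws of $Y_t$ given $(\F_{t-1}, A_t)$ under $\nubiased$ and $\nubiasedmod{i}$, respectively. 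Realizing the feedback through the Gaussian additive-noise construction (Feature~(iv)(b) of \cref{assump:multBias}), these conditional laws are $\NN(\mu_{A_t}\mbiast[A_t], 1)$ and $\NN(\mu^{(i)}_{A_t}\mbiast[A_t], 1)$. Since $\mu_j^{(i)} = \mu_j$ for $j\neq i$, only times with $A_t=i$ contribute, and each such term equals $\tfrac12 (\mu_i - \mu_i^{(i)})^2 \mbiast[i]^2 = \kl{\nuunbiasedi_i}{\nuunbiasedmodi{i}_i} \mbiast[i]^2$. Factoring the constant $\kl{\nuunbiasedi_i}{\nuunbiasedmodi{i}_i}$ out of the expectation delivers the claimed identity.

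For the inequality in \eqref{eq:appendixDivDecompDataProcessing}, I would apply the data-processing inequality twice. Since $Z_i$ is $\F_{\tau_i}$-measurable, it is a measurable function of $\H_{\tau_i}$, so data-processing immediately gives $\kl{\PrHistat{\tau_i}}{\PrHistaltat[\nubiasedmod{i}]{\tau_i}} \geq \kl{\mathcal{L}_{\nubiased}(Z_i)}{\mathcal{L}_{\nubiasedmod{i}}(Z_i)}$. Then, since $Z_i \in [0,1]$, I would introduce an auxiliary Bernoulli variable $B$ with $B \mid Z_i \sim \bern{Z_i}$ (a randomized function of $Z_i$); its marginals are $\bern{\EXPenv{Z_i}}$ under $\nubiased$ and $\bern{\EXPenvmod{i}{Z_i}}$ under $\nubiasedmod{i}$. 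A second data-processing step then yields $\kl{\mathcal{L}_{\nubiased}(Z_i)}{\mathcal{L}_{\nubiasedmod{i}}(Z_i)} \geq \klbern{\EXPenv{Z_i}}{\EXPenvmod{i}{Z_i}}$, as desired.

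\textbf{Main obstacle.} The chief subtlety is the stopping-time form of the chain rule, which requires interchanging expectation and summation and telescoping a likelihood-ratio martingale. Both are justified by the boundedness $\tau_i \leq n$ together with the fact that the event $\{t \leq \tau_i\}$ is $\F_{t-1}$-measurable, so that conditional KL contributions can be accumulated via the tower property. Beyond that, the emergence of the $\mbiast[i]^2$ factor in the ``biased proxy'' follows directly from the elementary Gaussian identity $\kl{\NN(a,1)}{\NN(b,1)} = \tfrac12 (a-b)^2$ applied to the rescaled means $\mu_i \mbiast[i]$ and $\mu_i^{(i)} \mbiast[i]$.
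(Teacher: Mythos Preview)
Your proposal is correct and follows essentially the same route as the paper: the equality is obtained via the stopping-time chain rule for the KL divergence (exploiting that $\{\tau_i \geq t\}\in\F_{t-1}$, that policy and auxiliary-randomness terms cancel, and the Gaussian identity $\kl{\NN(a,1)}{\NN(b,1)}=\tfrac12(a-b)^2$ applied to the rescaled means), and the inequality is data processing. The only cosmetic difference is that the paper invokes \citep[Lemma~1]{GMS19} directly for the step $\kl{\PrHistat{\tau_i}}{\PrHistaltat[\nubiasedmod{i}]{\tau_i}}\geq \klbern{\EXPenv{Z_i}}{\EXPenvmod{i}{Z_i}}$, whereas you unpack that lemma via the auxiliary Bernoulli construction; these are the same argument at different levels of granularity.
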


Before proving \cref{lem:divergenceDecomp}, we discuss a couple of immediate applications of this result to our setting.
From \cref{lem:divergenceDecomp}, we have the following immediate consequence:

\begin{restatable}[Formal statement of \cref{cor:divDecompConsequence}]{clm}{restateDivergenceDecompTrivialConsequenceApp}\label{cor:divDecompConsequenceAppendix}
    Consider the same setting as in \cref{lem:divergenceDecomp}, where arm $i$ is suboptimal in $\nuunbiased$. Choose $\nuunbiasedmod{i}$ satisfying $\nuunbiasedmodi{i}_j = \nuunbiasedi_j$ for $j\neq i$ and $\nuunbiasedmodi{i}_i = \NN(\mu_i + (1+\eps)\Delta_i,1)$ for any $\eps > 0$ such that $\mu_i + (1+\eps)\Delta_i \leq 1$. Then, for any consistent policy $\pi$ (according to \cref{def:consistentPolicy}),
    \begin{align}\label{eq:divDecompConsequenceOneAppendix}
        \EXPenv{T_i(n)} 
        &\geq \frac{2}{(1+\eps)^2\Delta_i^2}\parens{\parens{1 - \frac{C}{\Delta_i n^{1-a}}}\logp{\frac{\eps \Delta_i n^{1-a}}{C}} - \log(2)}\nonumber\\
        &= \frac{2(1-a)}{(1+\eps)^2\Delta_i^2}\logp{n} - \O(1).
    \end{align}
    If, in addition, $\pi$ has the property that $\frac{\Tbias_i(t-1)}{\tbias - 1} \leq \frac{\beta'}{K}$ for every $t \in (\nzero, \tau_i]$ for any stopping time $\nzero \leq \tau_i\leq n$ w.r.t. $\F_t = \sigma\braces{\H_t}$ such that $\EXPenv{\tau_i} \geq c n$ for $c\in (0,1)$, then:
    \begin{align}
        &\EXPenv{T_i(\nzero, \tau_i)}\nonumber\\
        &\geq \fbiasat{}{\frac{\beta'}{K}}^{-2}\frac{2}{(1+\eps)^2\Delta_i^2}\parens{\parens{c - \frac{C}{\Delta_i n^{1-a}}}\logp{\frac{\eps \Delta_i n^{1-a}}{C}} - \log(2) - \frac{(1+\eps)^2 \Delta_i^2}{2} \EXPenv{T_i(\nzero)}}\nonumber\\
        &\geq \fbiasat{}{\frac{\beta'}{K}}^{-2}\frac{2 c(1-a)}{(1+\eps)^2\Delta_i^2}\parens{\logp{n} - \frac{(1+\eps)^2 \Delta_i^2}{2c(1-a)} \EXPenv{T_i(\nzero)} - \O(1)}\label{eq:divDecompConsequenceTwoAppendix}\\
        &\geq \fbiasat{}{\frac{\beta'}{K}}^{-2}\frac{2 c(1-a)}{(1+\eps)^2\Delta_i^2}\parens{\logp{n} - \frac{(1+\eps)^2 \Delta_i^2}{2c(1-a)} \nzero - \O(1)},\label{eq:divDecompConsequenceThreeAppendix}
    \end{align}
    where $\fbiasat{}{x}$ is as defined in \cref{assump:multBias}.
\end{restatable}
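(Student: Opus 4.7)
The plan is to specialize \cref{lem:divergenceDecompAppendix} to Gaussian environments, where $\kl{\nuunbiasedi_i}{\nuunbiasedmodi{i}_i} = (1+\eps)^2\Delta_i^2/2$, and then choose the stopping time $\tau_i$ and the test random variable $Z_i$ appropriately for each of the three inequalities. Consistency of $\pi$ (\cref{def:consistentPolicy}) supplies two uniform bounds that I will reuse throughout: since arm $i$ is suboptimal in $\nuunbiased$ with gap $\Delta_i$, $\EXPenv{T_i(n)} \leq C n^a/\Delta_i$; and since arm $i$ becomes the unique optimum in $\nuunbiasedmod{i}$ with gap at least $\eps\Delta_i$ to every other arm, $\EXPenvmod{i}{\sum_{j\neq i} T_j(n)} \leq C n^a/(\eps\Delta_i)$. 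Combined with Markov's inequality, these will control the expectations $\EXPenv{Z_i}$ and $\EXPenvmod{i}{Z_i}$ that appear in the Bernoulli-KL lower bound.

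\textbf{First bound.} For \eqref{eq:divDecompConsequenceOneAppendix} I would take the trivial stopping time $\tau_i = n$ and the test $Z_i = \1{T_i(n) \leq n/2}$, which is $\F_n$-measurable and in $[0,1]$. The two consistency bounds above, together with Markov, give $\EXPenv{Z_i} \geq 1 - 2C/(\Delta_i n^{1-a})$ and $\EXPenvmod{i}{Z_i} \leq 2C/(\eps\Delta_i n^{1-a})$. Since $\mbiast[i] \leq 1$, the ``biased proxy'' is dominated by $T_i(n)$, so \cref{lem:divergenceDecompAppendix} yields $\tfrac{(1+\eps)^2\Delta_i^2}{2}\EXPenv{T_i(n)} \geq \klbern{\EXPenv{Z_i}}{\EXPenvmod{i}{Z_i}}$. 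Applying the standard bound $\klbern{p}{q} \geq p\log(1/q) - \log 2$ (valid for all $p,q \in (0,1)$) with $p$ near $1$ and $q$ near $0$, and rearranging, yields \eqref{eq:divDecompConsequenceOneAppendix} up to absorbing constants into $C$.

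\textbf{Refined bound via stopping time.} For \eqref{eq:divDecompConsequenceTwoAppendix} and \eqref{eq:divDecompConsequenceThreeAppendix}, the extra $\fbiasat{}{\beta'/K}^{-2}$ factor comes from exploiting the stopping condition. By monotonicity of $f$ in \cref{assump:multBias} and the hypothesis $\Tbias_i(t-1)/(\tbias - 1) \leq \beta'/K$ on $t\in(\nzero,\tau_i]$, we get $\mbiast[i] \leq \fbiasat{}{\beta'/K}$ on this range, so splitting the sum over $[0,\nzero]$ and $(\nzero,\tau_i]$ gives $\sum_{t\in[\tau_i]}\mbiast[i]^2 \1{A_t=i} \leq T_i(\nzero) + \fbiasat{}{\beta'/K}^2 T_i(\nzero,\tau_i)$. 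The key choice of test is $Z_i = (\tau_i - T_i(\tau_i))/n = \sum_{j\neq i}T_j(\tau_i)/n$, which is $\F_{\tau_i}$-measurable by optional sampling and takes values in $[0,1]$. Using $\EXPenv{\tau_i} \geq cn$ together with $\EXPenv{T_i(\tau_i)} \leq \EXPenv{T_i(n)} \leq C n^a/\Delta_i$ gives $\EXPenv{Z_i} \geq c - C/(\Delta_i n^{1-a})$, while the monotonicity $T_j(\tau_i) \leq T_j(n)$ combined with consistency under the alternative gives $\EXPenvmod{i}{Z_i} \leq C/(\eps\Delta_i n^{1-a})$. Plugging these into \cref{lem:divergenceDecompAppendix} with the split above and applying $\klbern{p}{q} \geq p\log(1/q) - \log 2$, then isolating $\EXPenv{T_i(\nzero,\tau_i)}$, produces \eqref{eq:divDecompConsequenceTwoAppendix}; the further inequality \eqref{eq:divDecompConsequenceThreeAppendix} is immediate from the deterministic bound $\EXPenv{T_i(\nzero)} \leq \nzero$.

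\textbf{Main obstacle.} The hard part is identifying the correct test $Z_i$ in the refined step: it must simultaneously (i) be $\F_{\tau_i}$-measurable (which rules out using $T_i(n)$ with a nontrivial stopping time), (ii) have $\EXPenv{Z_i}$ bounded below by roughly $c$ so that the $\log n$ term in $\klbern{\cdot}{\cdot}$ carries the $c$ prefactor, and (iii) have $\EXPenvmod{i}{Z_i} = \O(1/n^{1-a})$. Natural first attempts such as $Z_i = \tau_i/n$ or an indicator on $T_i(n)$ satisfy at most two of (i)--(iii). The resolution is $Z_i = (\tau_i - T_i(\tau_i))/n = \sum_{j\neq i}T_j(\tau_i)/n$: it is large under $\nuunbiased$ because $\tau_i$ is large in expectation while arm $i$ is seldom pulled, and it is small under $\nuunbiasedmod{i}$ because pulls of arms $j\neq i$ are rare by consistency in the alternative. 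This duality is exactly what manufactures the sharp $(c - C/(\Delta_i n^{1-a}))$ prefactor in \eqref{eq:divDecompConsequenceTwoAppendix}.
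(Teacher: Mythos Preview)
Your proposal is correct and essentially matches the paper's proof. The only cosmetic difference is in the first bound: the paper takes $Z_i = (n - T_i(n))/n$ directly (so that $\EXPenv{Z_i} = 1 - \EXPenv{T_i(n)}/n$ and $\EXPenvmod{i}{Z_i} = n^{-1}\sum_{j\neq i}\EXPenvmod{i}{T_j(n)}$ follow without Markov, yielding the stated constants exactly), whereas you use the indicator $Z_i = \1{T_i(n)\le n/2}$ and pass through Markov, losing a harmless factor of $2$; for the refined bound your choice $Z_i = (\tau_i - T_i(\tau_i))/n$ and the split of the biased proxy are identical to the paper's argument.
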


\begin{proof}
    Note that $\frac{\Tbias_i(t-1)}{\tbias-1} \leq 1$ deterministically, and:
    \begin{align}\label{eq:klBernInequality}
        \klbern{p}{q} 
        &= p\logp{\frac{1}{q}} + (1-p)\logp{\frac{1}{1-q}} + \parens{p\log(p) + (1-p)\log(1-p)}\nonumber\\
        &\geq p\logp{\frac{1}{q}} - \log(2),
    \end{align}
    which follows from the facts that $p\log(p) + (1-p)\log(1-p) \geq -\log(2)$ and $(1-p)\logp{\frac{1}{1-q}} \geq 0$. Thus \cref{lem:divergenceDecomp} together with \eqref{eq:klBernInequality} implies that, for any stopping time $\tau_i$ and any $\F_{\tau_i}$-measurable $Z_i$:
    \begin{align}\label{eq:lowerBoundTrivial}
        \kl{\nuunbiasedi_{i}}{\nuunbiasedmodi{i}_{i}} \EXPenv{T_i(\tau_i)} 
        \geq \EXPenv{Z_i}\logp{\frac{1}{\EXPenvalt{Z_i}}} - \log(2).
    \end{align}
    which is essentially the result of \citep[Eq. (8)]{GMS19}. In particular, let us choose $\nuunbiasedmodi{i}_i = \NN(\mu_i + (1+\eps)\Delta_i,1)$, $\tau_i = n$, and 
    \begin{align*}
        Z_i 
        = \frac{n - T_i(n)}{n}
        = \frac{\sum_{j\neq i}T_j(n)}{n}.
    \end{align*}
    Then \eqref{eq:lowerBoundTrivial} implies:
    \begin{align}\label{eq:worstCaseRegretBounds}
        \frac{(1+\eps)^2 \Delta_i^2}{2} \EXPenv{T_i(n)} 
        &\geq \parens{1 - \frac{\EXPenv{T_i(n)}}{n}}\logp{\frac{n}{\sum_{j\neq i}\EXPenvmod{i}{T_i(n)}}} - \log(2)\nonumber\\
        &\geq \parens{1 - \frac{C}{\Delta_i n^{1-a}}}\logp{\frac{\eps \Delta_i n^{1-a}}{C}} - \log(2),
    \end{align}
    where, in the first line, we used the fact that $\kl{\NN(\mu,1)}{\NN(\mu',1)} = \frac{(\mu-\mu')^2}{2}$ and, in the last line, we used the fact that, by assumption on $\pi$,
    \begin{align}\label{eq:consistentPolicyImpliedBounds}
        \Delta_i \EXPenv{T_i(n)} &\leq \regret{n} \leq C n^a \text{, and}\\
        \eps\Delta_i\sum_{j\neq i} \EXPenvmod{i}{T_j(n)} &\leq \sum_{j\neq i}(\Delta_j + \eps\Delta_i) \EXPenvmod{i}{T_j(n)} \leq \regretmod{i}{n} \leq C n^a,
    \end{align}
    which establishes the first part of the claim.

    For the second part of the claim, we begin by recalling that, since $\fract[i] \leq \nicefrac{\beta'}{K}$ for all $t \in (\nzero, \tau_i]$ by assumption on $\pi$, we have that, by \cref{assump:multBias}, $\mbiast[i] = \fbiast[i] \leq \fbiasat{}{\frac{\beta'}{K}}$ for all $t\in (\nzero, \tau_i]$ (since $\fbiasat{}{x}$ is nondecreasing in $x$). Using this bound together with the (trivial) bound $\mbiast[i] \leq 1$, we obtain:
    \begin{align}\label{eq:trivialAssumedBound}
        \EXPenv{\sum_{t\in [\tau_i]} \mbiast[i]^2 \1{A_t = i}}
        &\leq \EXPenv{\sum_{t\in [\nzero]} \1{A_t = i}} + \EXPenv{\sum_{t=\nzero+1}^{\tau_i} \mbiast[i]^2\1{A_t = i}}\nonumber\\
        &\leq \EXPenv{T_i(\nzero)} + \fbiasat{}{\frac{\beta'}{K}}^{2}\EXPenv{T_i(\nzero,\tau_i)}.
    \end{align}
    Now, let us choose
    \begin{align*}
        Z_i 
        = \frac{\tau_i - T_i(\tau_i)}{n}
        = \frac{\sum_{j\neq i} T_j(\tau_i)}{n},
    \end{align*}
    which, we note is almost the same as the choice of $Z_i$ used in proving \eqref{eq:divDecompConsequenceOneAppendix}, except modified to guarantee that $Z$ is $\F_{\tau_i}$-measurable. Then, using \eqref{eq:klBernInequality}, we have that:
    \begin{align*}
        &\klbern{\EXPenv{Z_i}}{\EXPenvmod{i}{Z_i}} + \log(2)\\
        &\geq \frac{\sum_{j\neq i} \EXPenv{T_j(\tau_i)}}{n} \logp{\frac{n}{\sum_{j\neq i}\EXPenvmod{i}{T_j(\tau_i)}}}\\
        &= \frac{\EXPenv{\tau_i} - \EXPenv{T_i(\tau_i)}}{n} \logp{\frac{n}{\sum_{j\neq i}\EXPenvmod{i}{T_j(\tau_i)}}}\\
        &\geq \frac{c n - \EXPenv{T_i(n)}}{n} \logp{\frac{n}{\sum_{j\neq i}\EXPenvmod{i}{T_j(n)}}},
    \end{align*}
    where, in the last line, we used the fact that $T_i(\tau_i) \leq T_i(n)$ and $\EXPenv{\tau_i} \geq c n$. Therefore, by plugging in the above bound and \eqref{eq:trivialAssumedBound} to \cref{lem:divergenceDecomp}, we obtain:
    \begin{align*}
        &\frac{(1+\eps)^2\Delta_i^2}{2}\parens{\EXPenv{T_i(\nzero)} + \fbiasat{}{\frac{\beta'}{K}}^{2} \EXPenv{T_i(\nzero, \tau_i)}}\\
        &\geq \kl{\nuunbiasedi_i}{\nuunbiasedmodi{i}_i}\EXPenv{\sum_{t\in[\tau_i]} \mbiast[i]^2 \1{A_t=i}}\\
        &\geq \kl{\EXPenv{Z}}{\EXPenvmod{i}{Z}}\\
        &\geq \frac{c n - \EXPenv{T_i(n)}}{n} \logp{\frac{n}{\sum_{j\neq i}\EXPenvmod{i}{T_j(n)}}},
    \end{align*}
    which, after lower-bounding the RHS above using \eqref{eq:consistentPolicyImpliedBounds} and rearranging, is \eqref{eq:divDecompConsequenceTwoAppendix}. \eqref{eq:divDecompConsequenceThreeAppendix} follows immediately from \eqref{eq:divDecompConsequenceTwoAppendix}, using the fact that $T_i(\nzero) \leq \nzero$.

\end{proof}

\begin{proof}[Proof of \cref{lem:divergenceDecompAppendix}]
    Our proof extends the arguments used in proving \citep[Eq. (8)]{GMS19} and \citep[Lemma 1]{KCG16} to biased feedback settings. We start by recalling \citep[Lemma 1]{GMS19}:
    \begin{lemma}[Lemma 1, \citep{GMS19}]\label{lem:dataProcessing}
        Let $(\Omega, \F)$ be a measurable space equipt with probability measures $\PP_1, \PP_2$, and let $Z: \Omega \rightarrow [0,1]$ be a $\F$-measurable function. Denote $\EE_1$ and $\EE_2$ be the expectations under $\PP_1$ and $\PP_2$, respectively. Then,
        \begin{align*}
            \kl{\PP_1}{\PP_2} \geq \klbern{\EE_1[Z]}{\EE_2[Z]},
        \end{align*}
        where $\klbern{p}{q} = p\logp{\frac{p}{q}} + (1-p)\logp{\frac{1-p}{1-q}}$ denotes the KL-divergence between two Bernoulli random variables with means $p$ and $q$.
    \end{lemma}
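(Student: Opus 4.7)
The plan is to prove this via the Donsker--Varadhan variational representation of KL divergence, which avoids needing to explicitly construct Markov kernels. Recall that for any probability measures $\PP_1, \PP_2$ on a common measurable space $(\Omega, \F)$,
\[
\kl{\PP_1}{\PP_2} = \sup_g \braces{\EE_1[g] - \log \EE_2[e^g]},
\]
where the supremum is over bounded measurable $g: \Omega \to \R$. This identity follows from standard convex duality (optimizing $g \mapsto \EE_1[g] - \log \EE_2[e^g]$ by taking a functional derivative formally leads to $g = \log(d\PP_1/d\PP_2) + \mathrm{const}$, which recovers $\kl{\PP_1}{\PP_2}$), and I would either invoke it as a standard result or sketch this one-line derivation.

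First I would substitute the specific test function $g = \lambda Z$ for an arbitrary parameter $\lambda \in \R$, yielding $\kl{\PP_1}{\PP_2} \geq \lambda \EE_1[Z] - \log \EE_2[e^{\lambda Z}]$. Next, using the fact that $z \in [0,1]$ and the convexity bound $e^{\lambda z} = e^{\lambda(z\cdot 1 + (1-z)\cdot 0)} \leq z e^\lambda + (1-z)$ (which holds for every $\lambda \in \R$ by convexity of $x \mapsto e^{\lambda x}$), I would bound $\EE_2[e^{\lambda Z}] \leq p_2 e^\lambda + (1-p_2)$, where $p_i := \EE_i[Z]$. This produces the one-parameter family of lower bounds $\kl{\PP_1}{\PP_2} \geq \lambda p_1 - \log(p_2 e^\lambda + (1-p_2))$ valid for every $\lambda \in \R$.

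The final step is to maximize this bound over $\lambda$. Differentiating with respect to $\lambda$ and setting the derivative to zero gives the stationary point $\lambda^* = \log\frac{p_1(1-p_2)}{p_2(1-p_1)}$. Plugging $\lambda^*$ back in and simplifying (observe that $p_2 e^{\lambda^*} + (1-p_2) = \frac{1-p_2}{1-p_1}$) yields exactly $p_1 \log\frac{p_1}{p_2} + (1-p_1)\log\frac{1-p_1}{1-p_2} = \klbern{p_1}{p_2}$, the desired inequality.

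The main obstacle is handling degenerate cases. If $\PP_1 \not\ll \PP_2$ then the LHS is $+\infty$ and the bound is vacuous. When $p_1$ or $p_2$ lies in $\braces{0,1}$, the optimizer $\lambda^*$ diverges and one must take careful limits (or verify the inequality directly); for instance, $p_1 = 0$ makes the LHS of the target bound equal $\log\frac{1}{1-p_2}$, which is easily obtained by letting $\lambda \to -\infty$ in the family of lower bounds. I would dispatch these boundary cases by direct computation with the convention $0\log 0 = 0$, after which the generic optimization argument above completes the proof.
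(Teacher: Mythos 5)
The paper does not actually prove this lemma; it cites it as an external result (Lemma 1 of Garivier--M\'enard--Stoltz) and invokes it as a black box. So the comparison is really between your proof and the standard one, which proceeds by the data-processing inequality: the map $\omega \mapsto \bern{Z(\omega)}$ is a fixed Markov kernel from $(\Omega,\F)$ to $\braces{0,1}$, and its pushforward of $\PP_i$ is $\bern{\EE_i[Z]}$, so $\kl{\PP_1}{\PP_2} \geq \klbern{\EE_1[Z]}{\EE_2[Z]}$ by contraction of KL under a common channel.

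Your proof via Donsker--Varadhan is correct and is a genuinely different route. You restrict the variational supremum to the one-parameter family $g = \lambda Z$, use convexity of $x\mapsto e^{\lambda x}$ on $[0,1]$ to bound $\EE_2[e^{\lambda Z}] \leq p_2 e^\lambda + (1-p_2)$, and then the resulting scalar optimization over $\lambda$ is exact: the stationary point $\lambda^* = \log\frac{p_1(1-p_2)}{p_2(1-p_1)}$ recovers $\klbern{p_1}{p_2}$ with no slack. The algebra checks out, the choice $g=\lambda Z$ is a legal test function since $Z$ is bounded, and your handling of the degenerate endpoints ($p_1,p_2\in\braces{0,1}$, or $\PP_1 \not\ll \PP_2$) is appropriate. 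What each approach buys: the data-processing proof is one line once you recognize the right kernel and requires no calculus, and it exposes exactly \emph{why} the Bernoulli reduction is lossless (it is the garbling of the observation through $Z$); your variational proof is more elementary in the sense that it reduces everything to a scalar maximization and a simple convexity bound, and the same template (bound an MGF, optimize a Chernoff parameter) is familiar from concentration arguments, which makes it self-contained if one is unwilling to cite the abstract data-processing inequality.
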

    We will apply \cref{lem:dataProcessing} as follows: let $\nubiased$ and $\nubiasedalt$ be two biased bandit instances (both having the same initial biases $\braces{\initialbias_i}_{i\in[K]}$), let $n\in \N$ be a time horizon, and let $\pi$ be a bandit policy (possibly depending on $n$). Let $\Omega = \R^{nk}$ and $\F = \B(\Omega)$ denotes the Borel $\sigma$-algebra on $\Omega$, and let $\PrEnv$ and $\PrEnvalt$ be the probability measures on $(\Omega, \F)$ induced by the $n$-round interaction between the bandit policy $\pi$ and the environment $\nubiased$ and $\nubiasedalt$, respectively. Let $\H_t = (\Uaction_0, A_1, Y_1,\ldots,\Uaction_{t-1}, A_t, Y_t, \Uaction_t)$ denote the interaction history (plus any auxiliary randomness $\Uaction_t$, sampled from $\unif{[0,1]}$ WLOG, used by the policy $\pi$ when selecting action $A_t$) until time $t$. Then, let $\tau \in [n]$ be a stopping time w.r.t. the filtration $\F_t = \sigma\braces{\H_t}$, and take $\PrHist$ and $\PrHistalt$ to be the pushforward measures of $\H_\tau$ under $\PP_{\nu,\pi}$ and $\PP_{\nu',\pi}$, respectively. Thus, by \cref{lem:dataProcessing}, we have that, for any $\F_\tau$-measurable random variable $Z$:
    \begin{align}\label{eq:dataProcessingStep1}
        \kl{\PrHist}{\PrHistalt}
        \geq
        \klbern{\ExpHist[Z]}{\ExpHistalt[Z]}
    \end{align}
    Now, our claim follows by a standard application of Wald's lemma and the divergence decomposition (similar to the arguments, e.g., in \citep[Lemma 15.1 and Exercise 15.7]{tor2020}) to the LHS of \eqref{eq:dataProcessingStep1}. Indeed, since $\kl{\nubiasedi_{i,t}}{\nubiasedalti_{i,t}} < \infty$ for all $i,t$, and denoting $\penv{\cdot}$ (resp., $\penvalt{\cdot}$) as the density of $\PrHist$ (resp., $\PrHistalt$), we can write the KL divergence as follows:
    \begin{align*}
        &\kl{\PrHist}{\PrHistalt}\\
        &= \ExpHist\left[\logp{\frac{\penv{\H_\tau}}{\penvalt{\H_\tau}}}\right]\\
        &= \ExpHist\left[\logp{\frac{\penv{\Uaction_0}\prod_{t\in[\tau]} \penv{A_t \mid \H_{t-1}}\penv{Y_{t} \mid \H_{t-1}, A_t}\penv{\Uaction_t \mid \H_{t-1},A_t, Y_t}}{\penvalt{\Uaction_0}\prod_{t\in[\tau]} \penvalt{A_t \mid \H_{t-1}}\penvalt{Y_{t} \mid \H_{t-1}, A_t}\penvalt{\Uaction_t \mid \H_{t-1},A_t, Y_t}}}\right].
    \end{align*}
    First, notice that $\Uaction_t$ is sampled i.i.d $\unif{[0,1]}$ in both environments, so these terms in the above expression cancel.
    Further, by definition of the bandit policy $\pi$, given a fixed observation history $\H_{t-1}$, the action $A_t$ in environment $\nubiased$ and $\nubiasedalt$ is the same (since $A_t = \pi_t(\H_{t-1})$). That is, $\penv{A_t \mid \H_{t-1}} = \penvalt{A_t \mid \H_{t-1}}$ for all $t, \H_{t-1}, A_t$. Therefore, the above expression simplifies to:
    \begin{align*}
        \kl{\PrHist}{\PrHistalt}
        &= \ExpHist\left[\logp{\frac{\prod_{t\in[\tau]} \penv{Y_{t} \mid \H_{t-1}, A_t}}{\prod_{t\in[\tau]} \penvalt{Y_{t} \mid \H_{t-1}, A_t}}}\right]\\
        &= \ExpHist\left[\sum_{t\in[\tau]}\logp{\frac{\penv{Y_{t} \mid \H_{t-1}, A_t}}{\penvalt{Y_{t} \mid \H_{t-1}, A_t}}}\right].
    \end{align*}
    Now, since $\tau \leq n$ is a stopping time w.r.t. $\F_t$, the event $\braces{\tau \geq t} \in \F_{t-1}$.  Further, $A_t = \pi_t(\H_{t-1})$ is $\F_{t-1}$-measurable since $\F_{t-1} = \sigma\braces{\H_{t-1}}$. Thus, applying the tower rule of expectations and the fact that $\sum_{i\in[K]} \1{A_t = i} = 1$:
    \begin{align*}
        \kl{\PrHist}{\PrHistalt}
        &= \sum_{t\in[n]}\ExpHist\left[\logp{\frac{\penv{Y_{t} \mid \H_{t-1}, A_t}}{\penvalt{Y_{t} \mid \H_{t-1}, A_t}}} \1{\tau \geq t}\right]\\
        &= \sum_{t\in[n]}\sum_{i\in [K]}\ExpHist\left[\logp{\frac{\penv{Y_{t} \mid \H_{t-1}, A_t=i}}{\penvalt{Y_{t} \mid \H_{t-1}, A_t=i}}} \1{\tau \geq t, A_t = i}\right]\\
        &= \sum_{t\in[n]}\sum_{i\in[K]}\ExpHist\left[\ExpHist\left[\logp{\frac{\penv{Y_{t} \mid \H_{t-1}, i}}{\penv{Y_{t} \mid \H_{t-1}, i}}} \mid \F_{t-1}\right] \1{\tau \geq t, A_t = i}\right].
    \end{align*}
    Finally, observing that\footnote{Note that, in the equation below, we adopt a slight abuse of notation. Indeed, the KL-divergence between $\nubiased_{i,t}$ and $\nubiasedalt_{i,t}$ is taken \emph{conditioned} on the filtration $\F_{t-1}$, as the LHS of the expression makes clear.}
    \begin{align*}
        \ExpHist\left[\logp{\frac{\penv{Y_{t} \mid \H_{t-1}, i}}{\penvalt{Y_{t} \mid \H_{t-1}, i}}} \mid \F_{t-1}\right] 
        &= \kl{\nubiasedi_{i,t}}{\nubiasedalti_{i,t}}\\
        &= \kl{\NN\parens{\mu_i \mbiast[i],1}}{\NN\parens{\mu_i' \mbiast[i],1}},
    \end{align*}
    and, recalling that:
    \begin{align*}
        \kl{\NN(x\mu,1)}{\NN(x\mu',1)} 
        = \frac{(x\mu - x\mu')^2}{2} = x^2\kl{\NN(\mu,1)}{\NN(\mu',1)}
        ~ \forall x,\mu,\mu'\in \R,
    \end{align*}
    we conclude that:
    \begin{align*}
        \kl{\PrHist}{\PrHistalt}
        &= \sum_{t\in[n]}\sum_{i\in[K]}\kl{\nuunbiasedi_i}{\nuunbiasedalti_i}\ExpHist\left[\mbiast[i]^2 \1{\tau \geq t, A_t = i}\right]\\
        &= \sum_{i\in[K]}\kl{\nuunbiasedi_i}{\nuunbiasedalti_i}\ExpHist\left[\sum_{t\in[\tau]}\mbiast[i]^2 \1{A_t = i}\right].
    \end{align*}
    Together with \eqref{eq:dataProcessingStep1}, this establishes that, for any $\F_\tau$-measurable $Z$,
    \begin{align*}
            \sum_{i\in[K]} \kl{\nuunbiasedi_{i}}{\nuunbiasedalti_{i}} \EXPenv{\sum_{t\in[\tau]} \mbiast[i]^2\1{A_t = i}} 
            \geq \klbern{\EXPenv{Z}}{\EXPenvalt{Z}}.
    \end{align*}
    In particular, whenever $\nuunbiasedalt = \nubiasedmod{i}$, where $\nuunbiasedmodi{i}_j = \nuunbiasedi_j$ for all $j\neq i$, then since $\kl{\nuunbiasedi_j}{\nuunbiasedmodi{i}_j} = 0$, the above simplifies to:
    \begin{align*}
            \kl{\nuunbiasedi_{i}}{\nuunbiasedalti_{i}} \EXPenv{\sum_{t\in[\tau]} \mbiast[i]^2\1{A_t = i}} 
            \geq \klbern{\EXPenv{Z}}{\EXPenvalt{Z}},
    \end{align*}
    as claimed.
\end{proof}

Notice that \eqref{eq:divDecompConsequenceOneAppendix} gives the standard bandit lower bound. This shows the intuitive fact that our biased setting is at least as hard as the unbiased stochastic bandit setting.
However, to obtain our stronger lower bound, we will exploit the fact that $\frac{\Tbias_i(t-1)}{\tbias-1}$ cannot be close to $1$ for all arms simultaneously. Indeed, a pigeonholing argument shows that:

\restateSmallBiasSetBound*
\begin{proof}
By definition of $T_i(t)$ and the set $S_t(\beta)$:
    \begin{align*}
        t 
        = \sum_{i\in[K]} T_i(t)
        \geq \sum_{i\in S_t(\beta)^c} T_i(t)
        = \sum_{i\in S_t(\beta)^c} \Tbias_i(t) - \initialbias_i
        > \frac{\beta \tbias}{K}|S_t(\beta)^c| - \sum_{i\in S_t(\beta)^c}\initialbias_i.
    \end{align*}
    We may rearrange the above inequality to conclude that:
    \begin{align*}
        |S_t(\beta)^c| 
        < \frac{K(t + \sum_{i\in S_t(\beta)^c} \initialbias_i)}{\beta\tbias}
        \leq \frac{K}{\beta},
    \end{align*}
    Since $|S_t(\beta)^c| = K - |S_t(\beta)|$, we obtain the claimed result.
\end{proof}

Notice that, while \cref{lem:smallBiasSetBound} guarantees that a constant fraction of arms have a ratio smaller than $\O(\nicefrac{1}{K})$, it says nothing about how this set evolves over time. To make use of \eqref{eq:divDecompConsequenceTwoAppendix}, we need to identify a subset of arms which have a small ratio for many time steps. In the next result, we show that this is, in fact, possible:

\begin{restatable}[A small bias set which is stable over time; formal (generalized) statement of \cref{lem:stability}]{lem}{restateStabilityAppendix}\label{lem:stabilityAppendix}
    Let $\pi$ be any bandit policy interacting in an environment $\nubiased$ with initial biases $\braces{\initialbias_i}_{i\in[K]}$ and suboptimality gaps $\Delta_i$. For constants $c,c'\geq 0$, let $\A_{c,c'}$ be any subset of suboptimal arms such that
    \begin{align*}
        \Ssuboptimality = \braces{i, j \in [K] : \logp{\frac{\Delta_i}{\Delta_j}} \leq c \log(K)^{c'}}
    \end{align*}
    Let $S_t(\beta)$ (for $\beta > 1$) be the set from \cref{lem:smallBiasSetBound}. Fix any $\nzero\in [n]$, and define
    \begin{align*}
        \widetilde{S}_{\nzero}(\beta; c, c') := S_{\nzero}(\beta) \cap \A_{c,c'}.
    \end{align*}
    Fix any $\frac{K}{|\Ssuboptimality|} < \beta < \beta'$.
    Then, for any $\alpha\in (0,1)$, there exists a set of arms $B:= B(\nzero, c,c',\beta,\beta',\alpha) \subseteq \lbarms$ such that $|B| = |\lbarms| - \alpha K$, and each arm $i\in B$ satisfies one of the following:

    \begin{itemize}[wide=\parindent]
        \item[\textbf{Case 1}.] $\frac{\Tbias_i(t)}{\tbias} \leq \frac{\beta'}{K} ~ \forall t \in [\nzero,n)$. Let $\B_{1,i}$ denote the event that this case occurs.
        \item[\textbf{Case 2}.] $T_i(\nzero, n) \geq \frac{(\beta' - \beta)(\nzero + \sum_{i\in[K]}\initialbias_i)}{K} \exp(\alpha \beta')$. Let $\B_{2,i}$ denote the event that this case occurs.
    \end{itemize}
\end{restatable}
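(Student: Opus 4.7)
For each $i \in \lbarms$ define the first-crossing stopping time
\[
    \tau_i := \min\!\left(\left\{t \in (\nzero, n] : \nicefracat[i]{t} > \tfrac{\beta'}{K}\right\} \cup \{n\}\right),
\]
call $i$ a \emph{Case~1 arm} if $\tau_i = n$ and otherwise an \emph{escaping arm}, and order the escaping arms by increasing $\tau_i$ as $i_1,\ldots,i_m$ (breaking ties by index). The set $B$ is obtained from $\lbarms$ by removing the first $\min(m,\lfloor\alpha K\rfloor)$ escaping arms (padding with Case~1 arms if $m<\alpha K$) so that $|B|=|\lbarms|-\alpha K$. Case~1 arms of $B$ satisfy the first bullet by definition; the whole proof reduces to showing that every remaining escaping arm $i_j$ with $j>\alpha K$ satisfies Case~2.

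At $\tau_{i_j}$ the ratio just crosses $\beta'/K$, so $\Tbias_{i_j}(\tau_{i_j}) > (\beta'/K)\,\tbiasat{\tau_{i_j}}$; since $i_j\in\lbarms\subseteq S_{\nzero}(\beta)$ also satisfies $\Tbias_{i_j}(\nzero)\leq(\beta/K)\,\tbiasat{\nzero}$, subtracting yields
\[
    T_{i_j}(\nzero,\tau_{i_j}) \;>\; \tfrac{1}{K}\bigl(\beta'\,\tbiasat{\tau_{i_j}} - \beta\,\tbiasat{\nzero}\bigr).
\]
The coupling step is a lower bound on $\tbiasat{\tau_{i_j}}$ coming from the plays of the earlier escaping arms. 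Writing $M_j:=\sum_{k=1}^j T_{i_k}(\nzero,\tau_{i_k})$ and using $\tau_{i_k}\leq\tau_{i_j}$ together with monotonicity $T_{i_k}(\nzero,\tau_{i_j})\geq T_{i_k}(\nzero,\tau_{i_k})$ for $k\leq j-1$, I will show
\[
    \tbiasat{\tau_{i_j}} \;\geq\; \tbiasat{\nzero} + M_{j-1} + T_{i_j}(\nzero,\tau_{i_j}).
\]
Substituting and rearranging, with $r := (1-\beta'/K)^{-1}$, produces the recursion
\[
    T_{i_j}(\nzero,\tau_{i_j}) \;>\; r\cdot\tfrac{1}{K}\bigl[(\beta'-\beta)\,\tbiasat{\nzero} + \beta'\,M_{j-1}\bigr].
\]

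Using the algebraic identity $(\beta'/K)\cdot r = r-1$, a short induction on $j$ telescopes this recursion into the clean closed form $T_{i_j}(\nzero,\tau_{i_j}) > r^{j}(\beta'-\beta)\,\tbiasat{\nzero}/K$. For $j>\alpha K$, the elementary bound $-\log(1-x)\geq x$ gives $r^j\geq r^{\alpha K}\geq\exp(\alpha\beta')$, and since $T_{i_j}(\nzero,n)\geq T_{i_j}(\nzero,\tau_{i_j})$ and $\tbiasat{\nzero}=\nzero+\totalinitialbias$, this is precisely Case~2. I anticipate that the main obstacle is isolating the $T_{i_j}(\nzero,\tau_{i_j})$ contribution from the rest of $\tbiasat{\tau_{i_j}}$ cleanly enough that the recursion admits the closed-form geometric solution (the naive bookkeeping gives $\tbiasat{\tau_{i_j}}\geq\tbiasat{\nzero}+M_j$, which is weaker and does not telescope); handling non-integer $\alpha K$ and the trivial case $m\leq\alpha K$ is then routine.
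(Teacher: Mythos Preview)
Your proposal is correct and follows essentially the same route as the paper. The paper likewise orders the arms in $\lbarms$ by the same stopping times, derives the recursion
\[
x_{j} > \frac{(\beta'-\beta)(\totalinitialbias+\nzero)}{K-\beta'} + \frac{\beta'}{K-\beta'}\sum_{k<j}x_{k}
\]
(identical to yours after multiplying through by $r=(1-\beta'/K)^{-1}$), obtains the closed form $x_{j}>\frac{(\beta'-\beta)(\totalinitialbias+\nzero)}{K}\,r^{j}$, and takes $B=\{i_{j}:j>\alpha K\}$. The only stylistic difference is how the recursion is solved: the paper expands it via the hockey-stick identity into a binomial sum and then collapses $(1+\frac{\beta'}{K-\beta'})^{j-1}$ with the binomial theorem, whereas your induction on $M_{j}$ gets to $x_{j}>ar^{j}$ more directly; both then apply the same inequality $-\log(1-\beta'/K)\ge\beta'/K$ to reach $\exp(\alpha\beta')$.
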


This proof, and subsequent ones, will rely on the following definition:
\begin{restatable}[The first large bias time]{defi}{restateStoppingTime}\label{def:stoppingTime}
    Let $\frac{K}{|\Ssuboptimality|} < \betaLB < \betaLB[']$, $\nzero\in [n]$, and $S_{\nzero}(\beta)$ be as in \cref{lem:smallBiasSetBound}. For each $i\in [K]$, we define the following stopping time:
    \begin{align*}
        \tauLBb{i} = \min\braces{t \geq \nzero : \frac{\Tbias_i(t)}{\tbias} > \frac{\betaLB[']}{K} \text{ or } t=n}.
    \end{align*}
    When $\betaLB[']$ is clear from context, we will sometimes abuse notation by writing this time as $\tauLBs{i}$.
\end{restatable}
Notice that $\tauLBs{i}$ is adapted to the filtration $\F_t$. Using this notation, we will first state a stronger bound than \cref{lem:stabilityAppendix}, and use it to establish that claim.
\begin{restatable}{lem}{restateStabilityGeneral}\label{lem:stabilityGeneral}
    Let $i_1,\ldots,i_{|\lbarms|}$ be an ordering on the arms in $\lbarms$ (the set from \cref{lem:stabilityAppendix}) satisfying $\tauLBb{i_j} \leq \tauLBb{i_{j+1}}$ for each $j < |\lbarms|$. Then, each arm $i_j \in \lbarms$ satisfies one of the following:
    \begin{align*}
        \tauLBb{i_j} = n
        \quad\text{or}\quad
        T_{i_j}(\nzero, \tauLBb{i_j}) \geq \frac{(\betaLB[']-\betaLB)(\totalinitialbias + \nzero)}{K}\exp\parens{\frac{\betaLB['] j}{K}},
    \end{align*}
    where $\tau_{i}(\beta')$ is the stopping time from \cref{def:stoppingTime} and $T_i(a,b)=\sum_{t=a+1}^b \1{A_t=i} = T_i(b)-T_i(a)$ is the number of times policy $\pi$ plays arm $i$ during $t\in (a,b]$.
\end{restatable}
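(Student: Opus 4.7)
The plan is to fix $j \in \{1, \ldots, |\lbarms|\}$ and establish the disjunction for $i_j$. If $\tauLBb{i_j} = n$ the first alternative holds trivially, so suppose $\tauLBb{i_j} < n$. By the definition of the stopping time, $\Tbias_{i_j}(\tauLBb{i_j})/\tbiasat{\tauLBb{i_j}} > \beta'/K$; since $i_j \in \lbarms \subseteq S_{\nzero}(\beta)$, we also have $\Tbias_{i_j}(\nzero)/\tbiasat{\nzero} \le \beta/K$. Subtracting these two yields
\[T_{i_j}(\nzero, \tauLBb{i_j}) \;=\; \Tbias_{i_j}(\tauLBb{i_j}) - \Tbias_{i_j}(\nzero) \;>\; \frac{\beta'\,\tbiasat{\tauLBb{i_j}} \,-\, \beta\,\tbiasat{\nzero}}{K},\]
so everything reduces to lower-bounding $\tbiasat{\tauLBb{i_j}} = \totalinitialbias + \tauLBb{i_j}$.

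The key step -- and the place where the ordering assumption on $i_1,\ldots,i_{|\lbarms|}$ enters -- is a cascade argument. For each $\ell \le j$, monotonicity of $\Tbias_{i_\ell}(\cdot)$ together with $\tauLBb{i_\ell} \le \tauLBb{i_j}$ gives
\[\Tbias_{i_\ell}(\tauLBb{i_j}) \;\ge\; \Tbias_{i_\ell}(\tauLBb{i_\ell}) \;>\; \frac{\beta'}{K}\,\tbiasat{\tauLBb{i_\ell}}.\]
Summing over $\ell = 1, \ldots, j$ and using the trivial conservation $\sum_{\ell \le j}\Tbias_{i_\ell}(\tauLBb{i_j}) \le \tbiasat{\tauLBb{i_j}}$ produces the recursion
\[\tbiasat{\tauLBb{i_j}}\left(1 - \tfrac{\beta'}{K}\right) \;>\; \frac{\beta'}{K}\sum_{\ell=1}^{j-1}\tbiasat{\tauLBb{i_\ell}}.\]

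Writing $x_\ell := \tbiasat{\tauLBb{i_\ell}}$ and $S_m := \sum_{\ell \le m} x_\ell$, this rearranges to $S_j \ge S_{j-1}/(1 - \beta'/K)$; iterating from the base case $S_1 = x_1 \ge \tbiasat{\nzero} = \totalinitialbias + \nzero$ and using $1/(1-x) \ge e^x$ gives $S_j \ge (\totalinitialbias + \nzero)\,e^{\beta'(j-1)/K}$. One more application of the same recursion transfers this growth to $x_j$, yielding $x_j \gtrsim \tbiasat{\nzero}\,e^{\beta' j/K}$. Substituting back into the first display and using $e^{\beta' j/K} \ge 1$ to fold the $-\beta\,\tbiasat{\nzero}/K$ term into the $(\beta'-\beta)$ numerator produces the claimed lower bound on $T_{i_j}(\nzero, \tauLBb{i_j})$. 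The whole argument is deterministic -- it uses only monotonicity of $\Tbias_i$, the conservation identity $\sum_i \Tbias_i(t) = \tbiasat{t}$, and the crossing definition of $\tauLBb{\cdot}$ -- so the main obstacle is the bookkeeping in the recursion, in particular making sure the off-by-one between the $e^{\beta'(j-1)/K}$ growth of $S_j$ and the desired $e^{\beta' j/K}$ on $x_j$ is recovered cleanly, and that the strict inequalities from the stopping-time definition carry through the sum.
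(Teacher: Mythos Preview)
Your overall cascade strategy is right and matches the paper's, but the recursion you set up is on the wrong quantity, and this causes a genuine loss that cannot be recovered at the end. You track $x_\ell := \tbiasat{\tauLBb{i_\ell}}$ and obtain $x_j(1-\beta'/K) > (\beta'/K)\sum_{\ell<j} x_\ell$. Iterating from $S_1 = x_1 \ge \totalinitialbias+\nzero$ gives $S_j \ge (\totalinitialbias+\nzero)\,[K/(K-\beta')]^{j-1}$, which is correct. But ``one more application'' then yields only
\[
x_j > \frac{\beta'}{K-\beta'}\,S_{j-1} \;\ge\; \frac{\beta'}{K}(\totalinitialbias+\nzero)\Bigl[\frac{K}{K-\beta'}\Bigr]^{j-1},
\]
which is weaker than the claimed $x_j \gtrsim (\totalinitialbias+\nzero)\,e^{\beta' j/K}$ by a factor of $\beta'/K$; this factor is not $O(1)$ and is exactly what you need for the last folding step. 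Concretely, for $j=1$ your route gives only $T_{i_1}(\nzero,\tauLBb{i_1}) > (\beta'-\beta)(\totalinitialbias+\nzero)/K$, whereas the lemma requires the extra factor $e^{\beta'/K}$.

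The paper avoids this by setting up the recursion directly on $z_\ell := T_{i_\ell}(\nzero,\tauLBb{i_\ell})$ rather than on $\tbiasat{\tauLBb{i_\ell}}$. The key extra ingredient is to lower-bound the denominator of the crossing fraction by $\totalinitialbias+\nzero+\sum_{\ell\le j} z_\ell$ (not just by $\sum_{\ell\le j}\Tbias_{i_\ell}(\tauLBb{i_j})$), which produces
\[
z_j \;>\; \frac{(\beta'-\beta)(\totalinitialbias+\nzero)}{K-\beta'} \;+\; \frac{\beta'}{K-\beta'}\sum_{\ell<j} z_\ell.
\]
The additive seed $(\beta'-\beta)(\totalinitialbias+\nzero)/(K-\beta')$ is precisely what your recursion is missing; with it, a straightforward induction (the paper's \cref{lem:stabilityTechnical}) gives $z_j > \frac{(\beta'-\beta)(\totalinitialbias+\nzero)}{K}\,e^{\beta' j/K}$ directly, with no conversion step at the end. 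If you redo your derivation using $\tbiasat{\tauLBb{i_j}} \ge (\totalinitialbias+\nzero) + \sum_{\ell\le j} z_\ell$ in place of the conservation bound you used, you will recover exactly this recursion.
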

We first show how \cref{lem:stabilityAppendix} follows directly from \cref{lem:stabilityGeneral}:
\begin{proof}[Proof of \cref{lem:stabilityAppendix}]
    Consider $i_j \in \lbarms$ for $j\geq \alpha K$. By \cref{lem:stabilityGeneral}, there are two cases: in the first, we have that:
    \begin{align*}
        T_{i_j}(\nzero, n) 
        \geq T_{i_j}(\nzero, \tau_{i_j})
        &\geq \frac{(\beta'-\beta)(\totalinitialbias + \nzero)}{K}\exp\parens{\frac{\beta' j}{K}}\\
        &\geq \frac{(\beta'-\beta)(\totalinitialbias + \nzero)}{K}\exp\parens{\alpha\beta'},
    \end{align*}
    which is the first case of \cref{lem:stability}. Otherwise, \cref{lem:stabilityGeneral} implies that $\tau_{i_j} = n$. Recalling \cref{def:stoppingTime}, this implies:
    \begin{align*}
        \frac{\Tbias_i(t-1)}{\tbias - 1} 
        \leq \frac{\beta'}{K} ~ \forall t \in (\nzero, n],
    \end{align*}
    which is the second condition of \cref{lem:stability}. Thus, taking 
    \begin{align*}
        B = \braces{i_j \in S_{\nzero}(\beta) : \alpha K < j \leq |S_{\nzero}(\beta)|},
    \end{align*}
    and noticing that $|B| = |S_{\nzero}(\beta)| - \alpha K$, we obtain the claimed result.
\end{proof}
We now establish \cref{lem:stabilityGeneral}.
\begin{proof}[Proof of \cref{lem:stabilityGeneral}]
    Let us assume, for the notational simplicity of this proof only, that $\lbarms = \braces{1,2,\ldots,|\lbarms|}$, and that each arm $i \in \lbarms$ is removed from this set before $i+1$ (i.e., $\tauLBs{i} \leq \tauLBs{i+1}$ for all $i,i+1 \in \lbarms$). Indeed, this is always true up to a relabelling of the arm indices.

    Let us denote, for any $i,j\in [K]$:
    \begin{align*}
        x_{i,j} = \sum_{t=\nzero + 1}^{\tauLBs{j}} \1{A_t = i}.
    \end{align*}
    Notice that $x_{i,i} = T_i(\nzero,\tauLBs{i})$.
    Intuitively, $x_{i,j}$ represents the number of times arm $i$ is played on the interval $(\nzero, \tauLBs{j}]$, i.e., before arm $j$ has bias larger than $\nicefrac{\beta'}{K}$ for the $r$th time after $\nzero$. 

    Fix any arm $i\in \lbarms$, and let us assume that $\tauLBs{i} < n$ (since otherwise, the claim follows trivially). Then, by definition of the $x_{i,j}$s, we know that 
    \begin{align*}
        \tauLBs{i} - \nzero
        = \sum_{t=\nzero+1}^{\tauLBs{i}}\sum_{j\in [k]} \1{A_t=j}
        = \sum_{j\in [k]} x_{j,i}.
    \end{align*}
    Further, since $\tauLBs{j} \leq \tauLBs{i}$ for every $j\leq i \in \lbarms$ (by our assumed ordering of the arms):
    \begin{align*}
        \sum_{j\leq i} x_{j,i} 
        = \sum_{j\leq i}\sum_{t=\nzero + 1}^{\tauLBs{i}} \1{A_t=j}
        \geq \sum_{j\leq i}\sum_{t=\nzero+1}^{\tauLBs{j}} \1{A_t=j}
        = \sum_{j\leq i} x_{j,j}.
    \end{align*}
    Thus, by definition of $\tauLBs{i}$, and since $x_{j,i} \geq 0$ for all $i,j$, we may use the above bounds to conclude: 
    \begin{align*}
        \frac{\betaLB[']}{K} 
        < \frac{\Tbias_i(\tauLBs{i})}{\totalinitialbias + \tauLBs{i}}
        = \frac{\Tbias_i(\nzero) + x_{i,i}}{\totalinitialbias + \nzero + \sum_{j\in [K]}x_{j,i}}
        &\leq \frac{\Tbias_i(\nzero) + x_{i,i}}{\totalinitialbias + \nzero + \sum_{j\leq i}x_{j,i}}
        \leq \frac{\Tbias_i(\nzero) + x_{i,i}}{\totalinitialbias + \nzero + \sum_{j\leq i}x_{j,j}}
    \end{align*}
    Rearranging this expression, we have that:
    \begin{align*}
        \parens{1 - \frac{\betaLB[']}{K}}x_{i,i} 
        > \frac{\beta'(\totalinitialbias + \nzero)}{K}\parens{1 - \frac{K}{\beta'}\frac{\Tbias_i(\nzero)}{\totalinitialbias + \nzero} + \frac{\sum_{j < i} x_{j,j}}{\totalinitialbias + \nzero}}.
    \end{align*}
    Thus, as long as $\beta' < K$, then, since $i\in \lbarms$ (i.e., $\frac{\Tbias_i(\nzero)}{\totalinitialbias + \nzero} \leq \frac{\beta}{K}$),
    \begin{align*}
        x_{i,i} 
        > \frac{\beta'(\totalinitialbias + \nzero)}{K - \beta'}\parens{1 - \frac{\betaLB}{\betaLB[']} + \frac{\sum_{j<i} x_{j,j}}{\totalinitialbias + \nzero}}
        = \frac{(\betaLB['] - \betaLB)(\totalinitialbias + \nzero)}{K - \betaLB[']} + \frac{\betaLB[']}{K-\betaLB[']}\sum_{j<i} x_{j,j}.
    \end{align*}

    It is a technical exercise (see \cref{lem:stabilityTechnical}) to show that the above bound implies that:
    \begin{align*}
        x_{i,i} > \frac{(\betaLB[']-\betaLB)(\totalinitialbias + \nzero)}{K} \exp\parens{\frac{\betaLB['] i}{K}},
    \end{align*}
    which is the claimed bound.
\end{proof}
In the following, we establish a technical result used in the proof of \cref{lem:stabilityGeneral}.
\begin{restatable}{lem}{restateStabilityTechnical}\label{lem:stabilityTechnical}
    Let $1 < \beta < \beta' < K$, and let $t > 0$.
    Suppose that:
    \begin{align}\label{eq:stabilityRecursiveBound}
        x_{i,i} > \frac{(\beta' - \beta)t}{K - \beta'} + \frac{\beta'}{K-\beta'}\sum_{j<i} x_{j,j} ~~ \forall i \in [K].
    \end{align}
    Then, for every $i\in [K]$, we have that:
    \begin{align}\label{eq:stabilityInductionClaim}
        x_{i,i} > \frac{(\beta'-\beta)t}{K} \exp\parens{\frac{\beta' i}{K}}.
    \end{align}
\end{restatable}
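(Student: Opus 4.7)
The plan is a straightforward induction on $i$, but to make the recursion close cleanly we first prove a sharper (non-asymptotic) geometric bound and only at the end convert it to the exponential form in \eqref{eq:stabilityInductionClaim}.

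First, I would set $a := \frac{(\beta'-\beta)t}{K-\beta'}$ and $b := \frac{\beta'}{K-\beta'}$, so that the hypothesis \eqref{eq:stabilityRecursiveBound} reads $x_{i,i} > a + b\sum_{j<i} x_{j,j}$. I claim by induction on $i$ the strengthened bound
\begin{equation*}
    x_{i,i} > a\,(1+b)^{i-1} = \frac{(\beta'-\beta)t}{K-\beta'}\left(\frac{K}{K-\beta'}\right)^{i-1}.
\end{equation*}
The base case $i=1$ is immediate from \eqref{eq:stabilityRecursiveBound} (the sum is empty). For the inductive step, the induction hypothesis gives a geometric lower bound on $\sum_{j<i} x_{j,j}$, and since $1 + b = K/(K-\beta')$, the resulting partial sum telescopes to $\frac{a}{b}\bigl((1+b)^{i-1}-1\bigr)$. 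Substituting back into \eqref{eq:stabilityRecursiveBound}, the $a$ and $-a$ terms cancel and one obtains exactly $x_{i,i} > a(1+b)^{i-1}$, closing the induction.

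Second, I would convert this geometric bound to the desired exponential form. Rewriting,
\begin{equation*}
    x_{i,i} > \frac{(\beta'-\beta)t}{K-\beta'}\left(\frac{K}{K-\beta'}\right)^{i-1} = \frac{(\beta'-\beta)t}{K}\left(\frac{K}{K-\beta'}\right)^{i},
\end{equation*}
so it suffices to verify $\bigl(K/(K-\beta')\bigr)^i \geq \exp(\beta' i/K)$, i.e., (after taking logarithms and dividing by $i$) that $\log\!\bigl(K/(K-\beta')\bigr) \geq \beta'/K$. Setting $y = \beta'/K \in (0,1)$, this is equivalent to $1 \geq (1-y)e^y$, which holds since $\frac{d}{dy}[(1-y)e^y] = -y e^y \leq 0$ on $[0,1)$ and $(1-y)e^y|_{y=0} = 1$. (Equivalently, it is the standard bound $1-y \leq e^{-y}$.) Combining yields \eqref{eq:stabilityInductionClaim}.

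There isn't really a serious obstacle here: the only subtlety is noticing that one must induct on the sharper geometric form $a(1+b)^{i-1}$ rather than on the weaker target $\tfrac{(\beta'-\beta)t}{K}e^{\beta' i/K}$, because the exponential form does not telescope exactly under the recursion. Once one introduces $a$ and $b$ and recognizes $1+b = K/(K-\beta')$, the computation is mechanical, and the concluding step $1 \geq (1-y)e^y$ is what turns the $K-\beta'$ in the denominator into the cleaner $K$ in the statement.
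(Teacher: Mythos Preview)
Your proof is correct and essentially the same as the paper's: both establish the sharp geometric intermediate bound $x_{i,i} > \frac{(\beta'-\beta)t}{K-\beta'}\bigl(\tfrac{K}{K-\beta'}\bigr)^{i-1}$ by induction and then convert to the exponential via the equivalent inequality $1-y\le e^{-y}$ (the paper writes it as $1+x\ge e^{x/(1+x)}$). The only cosmetic difference is that the paper carries the induction through the binomial-sum form $\sum_{j=0}^{i-1}\binom{i-1}{j}b^j$ and invokes the hockey-stick identity, whereas you go straight to $a(1+b)^{i-1}$ via a geometric sum, which is slightly cleaner.
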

\begin{proof}
    We first prove that
    \begin{align}\label{eq:stabilityInductionClaimIntermediate}
        x_{i,i} > \frac{(\beta'-\beta)t}{K-\beta}\sum_{j=0}^{i-1} {i-1 \choose j} \parens{\frac{\beta'}{K-\beta'}}^j.
    \end{align}
    \eqref{eq:stabilityInductionClaim} follows from \eqref{eq:stabilityInductionClaimIntermediate} by first noting that, by the Binomial identity, the summation in \eqref{eq:stabilityInductionClaimIntermediate} can be written as
    \begin{align*}
        \sum_{j=0}^{i-1} {i-1 \choose j} \parens{\frac{\beta'}{K-\beta'}}^j
        = \parens{1 + \frac{\beta'}{K-\beta'}}^{i-1}
        = \frac{K-\beta'}{K}\parens{1 + \frac{\beta'}{K-\beta'}}^{i}
        \geq \frac{K-\beta'}{K}\exp\parens{\frac{\beta' i}{K}}.
    \end{align*}
    where, in the last line, we used the fact that $1 + x \geq \exp\parens{\frac{x}{1+x}}$ for any $x > -1$.
    
    We thus focus on proving \eqref{eq:stabilityInductionClaimIntermediate}. The proof proceeds by induction on $i$. The case of $i=1$ is immediate from \eqref{eq:stabilityRecursiveBound}. Assuming the claim holds for $1,\ldots,i$, we have that, using \eqref{eq:stabilityRecursiveBound}:
    \begin{align*}
        x_{i+1,i+1}
        &> \frac{(\beta' - \beta)t}{K - \beta'} + \frac{\beta'}{K-\beta'}\sum_{j=1}^{i} x_{j,j}\\
        &> \frac{(\beta' - \beta)t}{K - \beta'} + \sum_{j=1}^{i} \frac{(\beta'-\beta)t}{K-\beta}\sum_{\ell=0}^{j-1} {j-1 \choose \ell} \parens{\frac{\beta'}{K-\beta}}^{\ell+1}\\
        &= \frac{(\beta' - \beta)t}{K - \beta'}\parens{1 + \sum_{\ell=0}^{i-1}\parens{\frac{\beta'}{K-\beta'}}^{\ell+1}\sum_{j=\ell+1}^{i} {j-1 \choose \ell}},
    \end{align*}
    where in the last step, we exchange the order of summation. Then, by the hockey-stick identity:
    \begin{align*}
        \sum_{j=\ell+1}^{i} {j-1 \choose \ell} = {i \choose \ell+1},
    \end{align*}
    we obtain:
    \begin{align*}
        x_{i+1,i+1}
        &> \frac{(\beta' - \beta)t}{K - \beta'}\parens{1 + \sum_{\ell=0}^{i-1}\parens{\frac{\beta'}{K-\beta'}}^{\ell+1}{i \choose \ell+1}}\\
        &= \frac{(\beta' - \beta)t}{K - \beta'}\sum_{\ell=0}^{i}\parens{\frac{\beta'}{K-\beta'}}^{\ell}{i \choose \ell}.\qedhere
    \end{align*}
\end{proof}

To complete the proof, it will be useful to ``derandomize'' the set $B$, which is accomplished by another pigeonholing argument.
\begin{restatable}[Derandomizing the set from \cref{lem:stabilityAppendix}]{lem}{restateSmallBiasSetDerandomized}\label{lem:smallBiasSetDerandomized}
    For $\alpha\in(0,1)$, let $B:=B(\nzero,\beta,\beta',c,c',\alpha)$ and $\lbarms$ be the (random) sets from \cref{lem:stabilityAppendix} satisfying $|B| \geq |\lbarms| - \alpha K$ for some $\alpha \in (0,1)$. Then, there exists a deterministic set $B':= B'(\nzero,\beta,\beta',c,c',\alpha) \subseteq [K]$ such that $|B'| \geq \frac{1}{1 - \nicefrac{\alpha}{2}}\parens{\EXPenv{|\lbarms|} - \frac{3\alpha K}{2}}$ and, for each $i\in B'$:
    \begin{align*}
        \PROenv{\B_{1,i} \text{ or } \B_{2,i}} \geq \nicefrac{\alpha}{2},
    \end{align*}
    where $\B_{1,i}$ and $\B_{2,i}$ are the events defined in \cref{lem:stabilityAppendix}.
\end{restatable}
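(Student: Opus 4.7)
The strategy is a simple averaging/pigeonhole argument. For each arm $i\in[K]$, define $p_i = \PROenv{\B_{1,i} \cup \B_{2,i}}$, where $\B_{1,i}$ and $\B_{2,i}$ are the two events from \cref{lem:stabilityAppendix}. The candidate deterministic set is then
\[
    B' := \braces{i \in [K] : p_i \geq \alpha/2}.
\]
Verifying the probability lower bound $\PROenv{\B_{1,i} \cup \B_{2,i}} \geq \alpha/2$ for each $i\in B'$ is immediate from this definition, so the only thing to establish is the lower bound on $|B'|$.

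For the cardinality bound, the plan is to link $|B'|$ to $\EXPenv{|B|}$ via $p_i$, and then use the pointwise guarantee $|B| \geq |\lbarms| - \alpha K$ from \cref{lem:stabilityAppendix}. First I would observe that, since $B\subseteq\lbarms$ and each $i\in B$ satisfies $\B_{1,i}\cup\B_{2,i}$ by construction of $B$, we have the pointwise implication $\1{i\in B}\leq \1{\B_{1,i}\cup\B_{2,i}}$, so that $\PROenv{i\in B}\leq p_i$ for every $i$. Taking expectations in the inequality $|B|\geq |\lbarms|-\alpha K$ and summing the bound $\PROenv{i\in B}\leq p_i$ over $i\in[K]$, I would deduce
\[
    \sum_{i\in[K]} p_i \;\geq\; \EXPenv{|B|} \;\geq\; \EXPenv{|\lbarms|} - \alpha K.
\]

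Next, I would split this sum over $B'$ and its complement. Using $p_i\leq 1$ for $i\in B'$ and $p_i<\alpha/2$ for $i\notin B'$,
\[
    \EXPenv{|\lbarms|}-\alpha K \;\leq\; \sum_{i\in B'} p_i + \sum_{i\notin B'} p_i \;\leq\; |B'| + (K-|B'|)\tfrac{\alpha}{2} \;=\; |B'|\parens{1-\tfrac{\alpha}{2}} + \tfrac{\alpha K}{2}.
\]
Rearranging yields $|B'|\geq (\EXPenv{|\lbarms|}-\tfrac{3\alpha K}{2})/(1-\tfrac{\alpha}{2})$, which is exactly the claimed lower bound, completing the proof.

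The argument is almost entirely routine; the only conceptual point to be careful about is the asymmetry between the set $B$ (which is random and depends on $\lbarms$) and the target set $B'$ (which must be deterministic). The key observation that unlocks the proof is that \cref{lem:stabilityAppendix} guarantees a pointwise implication $\1{i\in B}\leq \1{\B_{1,i}\cup\B_{2,i}}$ for every realization, which lets us pass from an in-expectation statement about $|B|$ to a first-moment bound on the probabilities $p_i$. Once this is done, the pigeonhole step separating arms with large $p_i$ from those with small $p_i$ is mechanical. No intricate concentration or stopping time arguments are needed at this stage; the heavy lifting has already been done in \cref{lem:stabilityAppendix}.
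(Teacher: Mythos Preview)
Your proposal is correct and follows essentially the same approach as the paper: define $B'=\{i:p_i\geq\alpha/2\}$, use that every $i\in B$ satisfies $\B_{1,i}\cup\B_{2,i}$ to get $\sum_i p_i\geq\EXPenv{|B|}$, then split the sum over $B'$ and its complement and rearrange. The paper's argument is identical in structure and detail.
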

\begin{proof}
    By \cref{lem:stabilityAppendix}, there is a set $B \subseteq [K]$ such that:
    \begin{align*}
        \sum_{i\in[K]} \1{\B_{1,i} \text{ or } \B_{2,i}}
        \geq \sum_{i\in B} \1{\B_{1,i} \text{ or } \B_{2,i}}
        = \sum_{i\in B} 1
        = |B| 
    \end{align*}
    Denote $B'$ as the (possibly empty) set of all arms $i$ satisfying $\PROenv{\B_{1,i} \text{ or } \B_{2,i}} \geq \nicefrac{\alpha}{2}$.
    Then, taking expectations of the above, and using linearity of expectation, we obtain:
    \begin{align*}
        \EXPenv{|B|}
        &\leq \sum_{i\in B'} \PROenv{\B_{1,i} \text{ or } \B_{2,i}}
        + \sum_{i\in [K]\setminus B'} \PROenv{\B_{1,i} \text{ or } \B_{2,i}}\\
        &\leq \sum_{i\in B'} 1
        + \sum_{i\in [K]\setminus B'} \nicefrac{\alpha}{2}\\
        &= |B'| + \frac{\alpha}{2}(K - |B'|).
    \end{align*}
    Rearranging the above, and using the fact that, by \cref{lem:stabilityAppendix}, $|B| \geq |\lbarms| - \alpha K$, we conclude that $|B'| \geq \frac{1}{1-\nicefrac{\alpha}{2}}\parens{\EXP{|\lbarms|} - \nicefrac{3\alpha K}{2}}$, as claimed.
\end{proof}

Now that we have derandomized the set $B$, we are almost ready to conclude our proof. First, we show the following:
\begin{restatable}[Properties of the derandomized set from \cref{lem:smallBiasSetDerandomized}; Restatement of \cref{lem:mainLowerBoundIngredients}]{lem}{restateMainLowerBoundIngredientsAppendix}\label{lem:mainLowerBoundIngredientsAppendix}
Consider a policy $\pi$ interacting in environment $\nubiased$ with initial biases $\braces{\initialbias_i}_{i\in[K]}$. Let $\A_{c,c'}$ be the set from \cref{lem:stabilityAppendix} for some $c,c'\geq 0$. Fix any $\nicefrac{K}{|\A_{c,c'}|} < \beta < \beta'$, $\nzero\in[n]$, $\alpha\in (0,1)$. Let $B':=B'(\nzero,\beta,\beta',c,c',\alpha)$ be the set from \cref{lem:smallBiasSetDerandomized} and $\lbarms$ be the set from \cref{lem:stabilityAppendix} satisfying $|B'| \geq \frac{1}{1-\nicefrac{\alpha}{2}}(\EXP{|\lbarms|} - \nicefrac{3\alpha K}{2})$. Then, for every $i\in B'$, one of the following holds:
    \begin{itemize}[wide=\parindent]
        \item[\textbf{Case 1'}.] $\EXPenv{\tau_i(\beta')} \geq \frac{\alpha n}{4}$
        \item[\textbf{Case 2'}.] $\EXPenv{T_i(\nzero,n)} \geq \frac{\alpha(\beta' - \beta)(\nzero + \sum_{i\in[K]} \initialbias_i)}{4 K}\exp(\alpha\beta')$
    \end{itemize}
    where $\tau_i(\beta')$ is the stopping time from \cref{def:stoppingTime}, and $T_i(\nzero, n) = \sum_{t=\nzero+1}^n \1{A_t=i} = T_i(n) - T_i(\nzero)$ is the number of times policy $\pi$ plays arm $i$ during $t\in (\nzero, n]$.
\end{restatable}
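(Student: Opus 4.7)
The plan is to combine the derandomized set from Lemma~\ref{lem:smallBiasSetDerandomized} with two elementary Markov-style inequalities. By Lemma~\ref{lem:smallBiasSetDerandomized}, for every $i \in B'$ we have $\PROenv{\B_{1,i} \cup \B_{2,i}} \geq \alpha/2$, so by the union bound at least one of the pointwise probabilities $\PROenv{\B_{1,i}}$ and $\PROenv{\B_{2,i}}$ must be at least $\alpha/4$ for each such $i$. I will show that whichever probability dominates yields the correspondingly numbered conclusion of the lemma, after which the proof is complete (the claim is pointwise in $i \in B'$, so no joint argument across arms is needed).

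For the first sub-case, observe that the event $\B_{1,i}$, namely $\{\Tbias_i(t)/\tbias \leq \beta'/K \text{ for all } t \in [\nzero, n)\}$, is exactly the event $\{\tau_i(\beta') = n\}$ by the definition of the stopping time in Definition~\ref{def:stoppingTime}: the stopping condition is ``first $t \geq \nzero$ with $\Tbias_i(t)/\tbias > \beta'/K$ or $t = n$,'' which is triggered by $t = n$ precisely when no earlier fraction exceeds $\beta'/K$. Since $\tau_i(\beta') \leq n$ deterministically, a reverse Markov inequality gives
\[
    \EXPenv{\tau_i(\beta')} \;\geq\; n\cdot\PROenv{\tau_i(\beta') = n} \;=\; n\cdot\PROenv{\B_{1,i}} \;\geq\; \frac{\alpha n}{4},
\]
which is Case 1'. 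For the second sub-case, the event $\B_{2,i}$ places a deterministic lower bound on the non-negative random variable $T_i(\nzero, n)$, so
\[
    \EXPenv{T_i(\nzero,n)} \;\geq\; \EXPenv{T_i(\nzero,n)\,\1{\B_{2,i}}} \;\geq\; \frac{(\beta'-\beta)(\nzero + \totalinitialbias)}{K}\exp(\alpha\beta')\cdot\PROenv{\B_{2,i}} \;\geq\; \frac{\alpha(\beta'-\beta)(\nzero + \totalinitialbias)}{4K}\exp(\alpha\beta'),
\]
which is Case 2'.

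The ``main obstacle'' here is really just bookkeeping: one must carefully verify that the event $\B_{1,i}$ as written in Lemma~\ref{lem:stabilityAppendix} coincides with $\{\tau_i(\beta') = n\}$ (ruling out off-by-one issues at the endpoints of the interval $[\nzero, n)$), and check that the multiplicative constants coming from the union-bound split (namely the factor $\alpha/4$) match those appearing in Cases~1' and~2'. No technical content beyond the union bound and the reverse Markov inequality is required, since all of the substantive combinatorial and probabilistic work -- identifying a large stable set of arms with small bias, and then derandomizing it -- has already been carried out in the proofs of Lemmas~\ref{lem:stabilityAppendix} and~\ref{lem:smallBiasSetDerandomized}.
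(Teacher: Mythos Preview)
The proposal is correct and essentially identical to the paper's own proof: both invoke Lemma~\ref{lem:smallBiasSetDerandomized} to get $\PROenv{\B_{1,i}\cup\B_{2,i}}\geq \alpha/2$, split via $\max\{\PROenv{\B_{1,i}},\PROenv{\B_{2,i}}\}\geq \alpha/4$, and then lower-bound the relevant expectation by restricting to the corresponding event. Your identification $\B_{1,i}=\{\tau_i(\beta')=n\}$ and the subsequent Markov-style bounds match the paper's argument line by line.
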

\begin{proof}
    Recall that, by \cref{lem:smallBiasSetDerandomized},
    \begin{align*}
        2\max\braces{\PROenv{\B_{1,i}}, \PROenv{\B_{2,i}}}
        \geq \PROenv{\B_{1,i}} + \PROenv{\B_{2,i}}
        \geq \PROenv{\B_{1,i} \text{ or } \B_{2,i}}
        \geq \frac{\alpha}{2}.
    \end{align*}
    Now, if $\PROenv{\B_{1,i}} \geq \frac{\alpha}{4}$, then, recalling that $\B_{1,i}$ (the first event in \cref{lem:stabilityAppendix}) implies $\tau_i = n$:
    \begin{align*}
        \EXPenv{\tau_i} 
        \geq \EXPenv{\tau_i\1{\B_{1,i}}}
        = n\PRO{\B_{1,i}}
        \geq \frac{\alpha n}{4}.
    \end{align*}
    Otherwise, recalling that $\B_{2,i}$ (the event from \cref{lem:stabilityAppendix}) implies a lower bound on $T_i(\nzero,n)$,
    \begin{align*}
        \EXPenv{T_i(\nzero,n)}
        \geq \EXPenv{T_i(\nzero,n)\1{\B_{1,i}}}
        &\geq \frac{(\beta'-\beta)(\totalinitialbias + \nzero)}{K}\exp(\alpha\beta')\PRO{\B_{1,i}}\\
        &\geq \frac{\alpha(\beta'-\beta)(\totalinitialbias + \nzero)}{4 K}\exp(\alpha\beta').
    \end{align*}
\end{proof}
We are now ready to prove \cref{thm:lowerBoundRegretAppendix}.
\begin{proof}[Proof of \cref{thm:lowerBoundRegretAppendix}]
    Recall the set $\Ssuboptimality$ from \cref{lem:stabilityAppendix}.
    Let $\delMaxS = \max_{i \in \Ssuboptimality} \Delta_i$ and $\delMinS =
    \min_{i \in \Ssuboptimality : \Delta_i > 0} \Delta_i$. Notice that, by definition of $\A_{c,c'}$, $\logp{\nicefrac{\delMaxS}{\delMinS}} \leq c \log(K)^{c'}$
    Let us take:
    \begin{align*}
        \beta = \frac{2 K}{|\Ssuboptimality|}
        &\quad\text{and}\quad
        \alpha = \frac{|\Ssuboptimality|}{6K}
        \quad\text{and}\quad
        \beta' = \frac{1}{\alpha}\parens{2\logp{\frac{\delMaxS}{\delMinS}} + \log(\nicefrac{4K}{\alpha}) - 2\logp{\fbiasat{}{\frac{\beta}{K}}}},\\
        &\quad\text{and}\quad
        \nzero = \frac{\alpha(1-a)\log(n)}{4(1+\eps)^2\delMaxS^2},
    \end{align*}
    where $a$ is the exponent in the consistency condition from \cref{def:consistentPolicy}, and $\eps \in (0,\nicefrac{(1-\mu_{\max})}{\Delta_{\max}}]$. Taking $\gamma\in (0,1)$ satisfying $|\A_{c,c'}|=\gamma K$, we have that the above parameter choices satisfy:
    \begin{align*}
        \frac{K}{|\A_{c,c'}|} = \frac{1}{\gamma} < \frac{2}{\gamma} = \beta < \frac{6}{\gamma} < \beta',
    \end{align*}
    and $\alpha\in (0,1)$, so our choices of parameters satisfy the conditions in the results of this section.
    By \cref{lem:smallBiasSetBound} and definition of $\lbarms = S_{\nzero}(\beta) \cap \Ssuboptimality$ (from \cref{lem:stabilityAppendix}), we know that 
    \begin{align*}
        |\lbarms| 
        \geq |S_{\nzero(\beta)}| - |\Ssuboptimality^c| 
        \geq (1 - \nicefrac{1}{\beta})K - |[K] \setminus \Ssuboptimality|    
        = \frac{|\A_{c,c'}|}{2},
    \end{align*}
    where, in the last line, we used our choice of $\beta$.
    Recall the set $B':=B'(\nzero,\beta,\beta',c,c',\alpha)$ be the set from \cref{lem:smallBiasSetDerandomized}, where $|B'|\geq \frac{1}{1-\nicefrac{\alpha}{2}}\parens{\EXPenv{|\lbarms|} - \frac{3\alpha K}{2}}$. 
    Then, by the previous observation and our choice of $\alpha$,
    \begin{align*}
        |B'| 
        &\geq \frac{1}{1 - \nicefrac{\alpha}{2}}\parens{\EXPenv{|\lbarms|} - \frac{3\alpha K}{2}}\\
        &\geq \frac{1}{1 - \nicefrac{\alpha}{2}}\parens{\frac{1}{2}|\Ssuboptimality| - \frac{3\alpha K}{2}}\\
        &= \frac{3K}{12K - |\Ssuboptimality|}|\Ssuboptimality|\\
        &\geq \frac{|\Ssuboptimality|}{4}.
    \end{align*}
    Fix any arm $i \in B' \subseteq \Ssuboptimality$, and construct the alternative environment $\nuunbiasedmod{i}$ as in \cref{cor:divDecompConsequenceAppendix}.
    We use \cref{lem:mainLowerBoundIngredientsAppendix} to lower bound $\EXPenv{T_i(n)}$.
    There are two cases. 

    \textbf{Case 1'}: $\EXPenv{\tau_i(\beta')} \geq \nicefrac{\alpha n}{4}$. Recall that, by definition of $\tau_i(\beta')$ in \cref{def:stoppingTime}, $\fract[i] \leq \nicefrac{\beta'}{K}$ for all $t\in (\nzero,\tau_i(\beta')]$. Thus, by \eqref{eq:divDecompConsequenceTwoAppendix} in \cref{cor:divDecompConsequenceAppendix} (taking $c \leftarrow \nicefrac{\alpha}{4}$ in the notation of that equation), and since $\nzero = \frac{\alpha(1-a)\log(n)}{4(1+\eps)^2\delMaxS^2}$, and since $\delMaxS \geq \Delta_i$ by definition of $i\in \Ssuboptimality$, we have that:
    \begin{align*}
        \EXPenv{T_i(n)}
        &\geq \fbiasat{}{\frac{\beta'}{K}}^{-2}\frac{\alpha(1-a)}{2(1+\eps)^2\Delta_i^2}\parens{\logp{n} - \frac{2(1+\eps)^2 \Delta_i^2}{\alpha(1-a)} \nzero - \O(1)}\\
        &\geq \fbiasat{}{\frac{\beta'}{K}}^{-2}\frac{\alpha(1-a)}{2(1+\eps)^2\Delta_i^2}\parens{\frac{\logp{n}}{2} - \O(1)}\\
        &\geq \fbiasat{}{\frac{\beta'}{K}}^{-2}\frac{\alpha(1-a-o(1))\log(n)}{4(1+\eps)^2\Delta_i^2}.
    \end{align*}

    \textbf{Case 2'}: Otherwise, by our choice of $\beta, \beta'$, and $\nzero$, we directly conclude, using the lower bound from \cref{lem:mainLowerBoundIngredientsAppendix}, that
    \begin{align*}
        \EXPenv{T_i(n)} 
        &\geq \frac{\alpha(\beta' - \beta)(\totalinitialbias + \nzero)}{4K}\exp(\alpha\beta')\\
        &\geq \fbiasat{}{\frac{\beta}{K}}^{-2}\parens{\frac{\delMaxS}{\delMinS}}^2\nzero\\
        &\geq \fbiasat{}{\frac{\beta}{K}}^{-2}\frac{\alpha(1-a)\log(n)}{4(1+\eps)^2 \Delta_i^2}\\
        &\geq \fbiasat{}{\frac{\beta'}{K}}^{-2}\frac{\alpha(1-a)\log(n)}{4(1+\eps)^2 \Delta_i^2},
    \end{align*}
    where, in the second line, we used the fact that $\beta' - \beta > 1$ and $\totalinitialbias\geq 0$. In the penultimate line, we used the fact that $\delMinS \leq \Delta_i$ for all $i \in \Ssuboptimality$. In the final line, we used the fact that $\fbiasat{}{x}$ is nondecreasing in $x$ by definition.
    Observe that we obtain, up to vanishing factors, the same bound in Cases 1' and 2'. Thus, repeating this argument for each arm $i\in B'$, using the bound in \eqref{eq:divDecompConsequenceOneAppendix} from \cref{cor:divDecompConsequenceAppendix} to bound the remaining arms $i\not \in B'$, and plugging in the resulting bounds to the regret decomposition \eqref{eq:regretDef}, we obtain the claimed regret lower bound.
\end{proof}

\section{Linear regret of UCB when biased structure is ignored}
\label{sec:ucbAppendix}

\begin{restatable}{thm}{restateUCBLinearRegretAppendix}\label{thm:ucbLinearRegretAppendix}
    Consider the UCB policy, which first selects each arm once, then, for each $n \geq t > K$, computes a upper confidence estimate for each arm $i\in [K]$:
    \begin{align*}
        \ucb{i,t} = \hmu_{i,t-1} + \sqrt{\frac{2\log(t)}{T_i(t-1)}}
        \quad\text{where}\quad
        \hmu_{i,t-1} = \frac{\sum_{s\in [t-1]} Y_t \1{A_t = i}}{T_i(t-1)}.
    \end{align*}
    then selects an arm $A_t$ as follows:
    \begin{align}\tag{UCB}\label{eq:UCB}
        A_t = \begin{cases}
            t & \text{if $t\in [K]$}\\
            \arg\max_{i\in [K]} \ucb{i,t} & \text{otherwise}.
        \end{cases}
    \end{align}
    Then, there is a $2$-armed Bernoulli bandit instance $\nudy$ under bias model \eqref{eq:biasModel} with unbiased means $\mu_1 = .9 > .8 = \mu_2$ and arm biases $\initialbias_{\dyOpt}= 10$ and $\initialbias_{\stOpt} = 16216^{1.5}$ such that, for any $n \geq 16207$, \eqref{eq:UCB} suffers linear regret: $\regretUCB{n} \geq .098 n$.
\end{restatable}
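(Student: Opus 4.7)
The plan is to specify the instance with bias function $f(x) = x$, which satisfies \cref{assump:multBias} with Lipschitz constant $L=1$, and then to show that the extreme initial bias $\initialbias_2 = 16216^{1.5} \gg \initialbias_1 = 10$ traps $\ucbName$ into pulling the suboptimal arm~2 for all but $O(\log n)$ time-steps. The high-level strategy is to establish, on a high-probability event, a self-consistent inductive bound $T_1(t) \leq C_0 \log n$ for every $t \leq n$; the claimed regret lower bound then follows since $\regretUCB{n} = \Delta_2 \EXP{T_2(n)}$ and $\Delta_2 = 0.1$.

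First I would record the deterministic identity $\mu_i \mbiast[i] = \mu_i (\initialbias_i + T_i(t-1))/(\totalinitialbias + t-1)$ for $i\in\{1,2\}$ and every $t \leq n$, from which follow the history-dependent bounds
\begin{align*}
\mu_1 \mbiast[1] \;\leq\; \mu_1 \cdot \frac{\initialbias_1 + T_1(t-1)}{\totalinitialbias}
\quad\text{and}\quad
\mu_2 \mbiast[2] \;\geq\; \mu_2 \cdot \left(1 - \frac{\initialbias_1 + T_1(t-1)}{\totalinitialbias + t - 1}\right).
\end{align*}
Next I would apply Hoeffding--Azuma to the $[-1,1]$-valued martingale differences $Y_s - \mu_{A_s}\mbiasat[A_s]{s}$ and union bound over $i\in\{1,2\}$ and $t\in[n]$ to obtain a concentration event $\E$ with $\PP(\E) \geq 1 - 1/n$ on which
\begin{align*}
|\hmu_i(t) - \bar{p}_i(t)| \;\leq\; \sqrt{\log(4n^2)/(2\,T_i(t))},
\quad\text{where}\quad
\bar{p}_i(t) = \frac{1}{T_i(t)}\sum_{s \leq t:\, A_s = i}\mu_i\mbiasat[i]{s}.
\end{align*}

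The core step is a self-consistent induction on $t$ showing that, on $\E$, $T_1(t) \leq C_0 \log n$ for every $t \leq n$, with $C_0$ depending only on the $\mu_i$ and the initial biases. Under the inductive hypothesis, the deterministic bounds yield $\bar{p}_1(t) \leq \mu_1(\initialbias_1 + C_0\log n)/\totalinitialbias =: p_1^*$, while $T_2(s-1) \geq s-1-C_0\log n$ combined with the specific choice $\initialbias_2 = 16216^{1.5}$ yields $\bar{p}_2(t) \geq \mu_2(1 - (\initialbias_1 + C_0\log n)/\totalinitialbias) =: p_2^*$, with an explicit gap $p_2^* - p_1^* \geq c^* > 0$ close to $0.8$ provided $C_0\log n$ is small relative to $\totalinitialbias \approx 2 \times 10^6$. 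If $\ucbName$ were to select arm~1 at step $t+1$, then $\ucb{1} \geq \ucb{2}$ combined with concentration would force
\begin{align*}
p_1^* + 2\sqrt{\log(4n^2)/(2\,T_1(t))} \;\geq\; p_2^* - \sqrt{\log(4n^2)/(2\,T_2(t))},
\end{align*}
and since $T_2(t) \geq t - C_0\log n$ renders the final term negligible for the range of $t$ that matters, this yields $T_1(t) \leq 2\log(4n^2)/(c^*)^2$, closing the induction with an explicit $C_0$ of order~$32$.

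Combining the induction with the regret decomposition gives $\regretUCB{n} \geq 0.1\,(n - C_0 \log n)(1 - 1/n)$, which exceeds $0.098\,n$ once $n \geq 16207$ after plugging in the specific constants. The main obstacle is that the ``target'' $\bar{p}_i(t)$ of the Hoeffding--Azuma concentration itself depends on the algorithm's past decisions, so one cannot concentrate against a fixed mean; the self-consistent induction---enabled by the monotonicity of $f(x)=x$ in the arm frequency---is the key device circumventing this. A secondary hurdle is the numerical bookkeeping needed to pin down $C_0$ and verify the threshold $n \geq 16207$ against the prescribed constant $0.098$, but this reduces to elementary inequalities once the explicit values of $\mu_i$ and $\initialbias_i$ are substituted.
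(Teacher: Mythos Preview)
Your direct martingale-plus-induction route differs from the paper's, which instead constructs a coupling between $\ucbName$ in the biased environment $\nudy$ and in a ``frozen'' static Bernoulli environment $\nustatic$ whose means are close to the initial biased means $\muDy{i}{1}$. The coupling (\cref{lem:stochasticDominance}) shows that while the dynamic biased means remain sandwiched as $\muDy{1}{t} \leq \muSt{1} < \muSt{2} \leq \muDy{2}{t}$, the pull count $\TDy{1}{t}$ is pathwise dominated by $\TSt{1}{t}$; a standard UCB tail bound in the \emph{static} environment then gives $\TSt{1}{t} \leq \eps t$ with high probability for each $t$, which in turn preserves the sandwich. The key feature is that the static suboptimality gap $\delSt$ is a fixed constant independent of $n$, so the whole argument is uniform in $n$.

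Your approach has a genuine gap precisely at this point. The induction hypothesis $T_1(t) \leq C_0\log n$ makes both $p_1^*$ and $p_2^*$ depend on $n$, and the gap
\[
c^* \;=\; p_2^* - p_1^* \;=\; \mu_2 - (\mu_1+\mu_2)\,\frac{\initialbias_1 + C_0\log n}{\totalinitialbias}
\]
stays bounded away from zero only while $C_0\log n \ll \totalinitialbias \approx 2\times 10^6$. For $n \gtrsim \exp(\totalinitialbias/C_0)$ the gap collapses and the induction no longer closes, whereas the theorem asserts linear regret for \emph{all} $n \geq 16207$. The fix is to weaken the hypothesis to one uniform in $n$: take $T_1(t-1) \leq \eps(\totalinitialbias + t - 1) - \initialbias_1$ for a small fixed $\eps$ (this is exactly the event $\G_t$ in \eqref{eq:ucbGoodEvent} that the paper tracks). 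Under this hypothesis each summand in $\bar p_1(t)$ is at most $\mu_1 \eps$ and each summand in $\bar p_2(t)$ is at least $\mu_2(1-\eps)$, so $c^*$ becomes a constant independent of $n$; your concentration then yields $T_1(t) = O(\log t) \ll \eps t$ for large $t$, closing the induction uniformly. With this adjustment your argument goes through and is arguably more elementary than the paper's coupling construction.
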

\begin{remark}[On bias initialization]\label{rem:biasInitializationUCB}
    Given the result of \cref{thm:ucbLinearRegretAppendix}, one might wonder if linear regret is inevitable, even when policy is initially biased towards the optimal arm $\dyOpt$ (i.e., $\initialbias_{\dyOpt} \geq \initialbias_{\stOpt}$). Whenever the initial biases are constant and the environment is Bernoulli, then \eqref{eq:UCB} must suffer linear regret, and this follows straightforwardly from the proof of \cref{thm:ucbLinearRegretAppendix}. Indeed, with a constant probability, starting at any initial biases, \eqref{eq:UCB} can pull the suboptimal arm $\stOpt$ the majority of time steps over any constant-length time window (this can happen, e.g., if the samples from arm $\stOpt$ are always $1$, and samples from $\dyOpt$ are always $0$ in this window). After a sufficiently large constant initialization window $w$ when $\Tbias_{\dyOpt}(w) \ll \Tbias_{\stOpt}(w)$, we can apply the same arguments as in the proof of \cref{thm:ucbLinearRegretAppendix} to show that, in this case too, $\regretUCB{n} = \Omega(n)$ when $n$ is sufficiently large.
\end{remark}

\subsection{Environment construction}
\begin{figure}
     \centering
     \includegraphics[width=.48\textwidth]{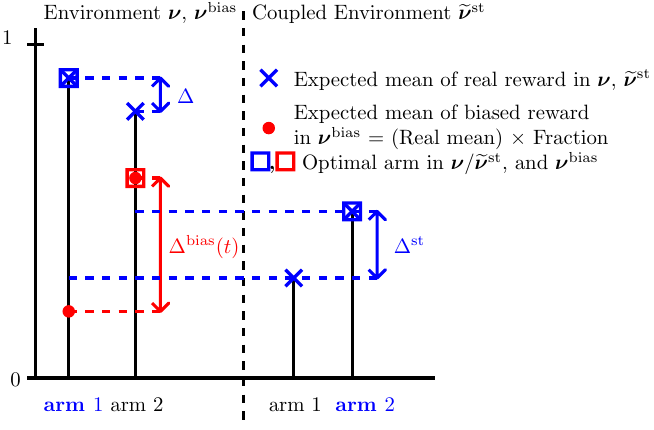}
     \caption{A depiction of the environment construction for \cref{thm:ucbLinearRegretAppendix} at $t=1$. The left side of the figure shows the means in the original environments $\nuunbiased,\nudy$. The right side shows the ``frozen'' environment $\nustatic$ used for our proof. Notice that the biased optimal arm is arm $\stOpt$, not the true optimal arm $\dyOpt$. Further, $\delSt < \delDy(1)$.}
     \label{fig:UCB}
\end{figure}
    Fix a $2$-armed Bernoulli bandit environment $\bnu$, where $\dyOpt$ denotes the optimal arm, and $\stOpt$  denotes the suboptimal arm, and $\mu_{\dyOpt} > \mu_{\stOpt}$. Let $\nudy$ be the associated biased Bernoulli environment, where we take the initial arm biases to be $\initialbias_{\dyOpt} < \initialbias_{\stOpt}$ (we will choose these parameters explicitly later in the proof). Denote the mean of arm $i$ in $\nudy$ at time $t$ as $\muDy{i}{t}$. Notice that, by construction, and recalling the notation $\totalinitialbias=\initialbias_{\dyOpt} + \initialbias_{\stOpt}$, we have that:
    \begin{align*}
        \frac{\initialbias_{\dyOpt}}{\totalinitialbias} \mu_{\dyOpt} = \muDy{\dyOpt}{1} < \muDy{\stOpt}{1} = \frac{\initialbias_{\stOpt}}{\totalinitialbias}\mu_{\stOpt},
    \end{align*}
    so that, at time $t=1$, the suboptimal arm $\stOpt$ \emph{appears} optimal.

    Given this environment, we construct a static, unbiased Bernoulli bandit environment $\nustatic$, with means denoted as $\muSt{i}$, such that:
    \begin{align*}
        \muDy{\dyOpt}{1}
        \leq \mu_{\dyOpt}\frac{\initialbiasst_{\dyOpt}}{\totalinitialbiasst}
        =\muSt{\dyOpt} 
        < \muSt{\stOpt} = \mu_{\stOpt} \frac{\initialbias_{\stOpt}}{\totalinitialbiasst}
        \leq \muDy{\stOpt}{1},
    \end{align*}
    where $\initialbias_{\dyOpt} \leq \initialbiasst_{\dyOpt} < \frac{\mu_{\stOpt}}{\mu_{\dyOpt}} \initialbias_{\stOpt}$ and $\totalinitialbiasst = \initialbiasst_{\dyOpt} + \initialbias_{\stOpt}$.
    In particular, in $\nustatic$ arm $\stOpt$ (the suboptimal arm from $\nudy$) is optimal. Intuitively, since the ``observable'' suboptimality gap in $\nudy$ at time $t=1$ is smaller than that of $\nustatic$, UCB in the static environment should pull arm $b$ less often than in $\nudy$, at least as long as:
    \begin{align*}
        \muDy{\dyOpt}{t} \leq \muSt{\dyOpt} < \muSt{\stOpt} \leq \muDy{\stOpt}{t}.
    \end{align*}
    Refer to \cref{fig:UCB} for a graphic depicting our environment construction.

\subsection{Proofs}
    Intuitively, since the ``observable'' suboptimality gap in $\nudy$ at time $t=1$ is larger than that of $\nustatic$, UCB in the static environment should pull arm $\stOpt$ less often than in $\nudy$, at least as long as $\muDy{\dyOpt}{t} \leq \muSt{\dyOpt} < \muSt{\stOpt} \leq \muDy{\stOpt}{t}$.
    We define the event that condition holds for all times in $[t]$ as:
    \begin{align}\label{eq:ucbGoodEvent}
        \G_t = \bigcap_{s\in[t]} \braces{\muDy{\dyOpt}{s} \leq \muSt{\dyOpt} < \muSt{\stOpt} \leq \muDy{\stOpt}{s}}
    \end{align}
    Our goal is to construct an coupling between these two environments such that, whenever $\G_t$ is true, then arm $\dyOpt$ is played more often in $\nustatic$ than in $\nudy$. Towards this goal, we are able to show the following:
    \begin{restatable}[Stochastic dominance]{lem}{restateStochasticDominance}\label{lem:stochasticDominance}
        Let $\TSt{i}{s}$ (resp., $\TDy{i}{i}$) denote the number of times arm $i$ was played in $\nustatic$ (resp., $\nudy$) over $[s]$.
        Then, there exists a coupling between \eqref{eq:UCB} in environments $\nudy$ and $\nustatic$ such that, for any $t$, whenever $\G_t$ is true, then $\TSt{\dyOpt}{s} \geq \TDy{\dyOpt}{s}$ (thus also $\TSt{\stOpt}{s} \leq \TDy{\stOpt}{s}$) for all $s\leq t$.
    \end{restatable}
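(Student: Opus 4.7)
The plan is to couple the two runs of \eqref{eq:UCB} through shared per-sample Bernoulli randomness indexed by the play count, and then induct on $s\leq t$ to maintain the count ordering on the event $\G_t$.

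\textbf{Coupling.} For each arm $i\in\{\dyOpt,\stOpt\}$ and each integer $k\geq 1$, let $U_{i,k}\sim\unif{[0,1]}$ be independent, shared across the two environments. When \eqref{eq:UCB} plays arm $i$ for the $k$-th time at a time $s$, declare the corresponding reward in $\nudy$ to be $\1{U_{i,k}\leq \muDy{i}{s}}$ and in $\nustatic$ to be $\1{U_{i,k}\leq \muSt{i}}$. On $\G_t$, the inequalities $\muDy{\dyOpt}{s}\leq\muSt{\dyOpt}$ and $\muSt{\stOpt}\leq\muDy{\stOpt}{s}$ immediately give the termwise comparisons
\begin{align*}
Y^{\mathrm{bias}}_{\dyOpt,k}\leq Y^{\mathrm{st}}_{\dyOpt,k}
\qquad\text{and}\qquad
Y^{\mathrm{st}}_{\stOpt,k}\leq Y^{\mathrm{bias}}_{\stOpt,k}
\end{align*}
for every $k$ whose $k$-th pull of arm $i$ in the relevant environment occurs by time $s\leq t$.

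\textbf{Induction.} I will show by induction on $s\leq t$ that $\TSt{\dyOpt}{s}\geq\TDy{\dyOpt}{s}$; since $\TSt{\dyOpt}{s}+\TSt{\stOpt}{s}=s=\TDy{\dyOpt}{s}+\TDy{\stOpt}{s}$, this is equivalent to $\TSt{\stOpt}{s}\leq\TDy{\stOpt}{s}$. The base case $s\leq 2$ follows from the initialization phase of \eqref{eq:UCB}, during which each arm is played exactly once in both environments. For the inductive step from $s$ to $s+1$, I inspect the four possible combinations of actions at round $s+1$: the invariant is trivially preserved in three of them, and the only potentially dangerous combination is when \eqref{eq:UCB} selects $\dyOpt$ in $\nudy$ but $\stOpt$ in $\nustatic$. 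Moreover this is only a threat when the inductive hypothesis holds with equality, i.e., when $\TSt{\dyOpt}{s}=\TDy{\dyOpt}{s}$ and (hence) $\TSt{\stOpt}{s}=\TDy{\stOpt}{s}$.

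\textbf{Ruling out the critical subcase.} In the equality case the confidence radii $\sqrt{2\log(s+1)/T_i(s)}$ in the UCB index are identical across the two environments, so the selection at round $s+1$ is driven entirely by the sign of $\muhatSt_{\stOpt}(s)-\muhatSt_{\dyOpt}(s)-c_{s+1}$ and symmetrically in $\nudy$, for a common deterministic offset $c_{s+1}$. Equal play counts also mean that the two empirical means are averages of the coupled samples $Y^{\cdot}_{i,1},\ldots,Y^{\cdot}_{i,T_i(s)}$ over the \emph{same} index set, so the pointwise comparisons above integrate to
\begin{align*}
\muhatSt_{\dyOpt}(s)\geq\muhatDy_{\dyOpt}(s),\qquad
\muhatSt_{\stOpt}(s)\leq\muhatDy_{\stOpt}(s),
\end{align*}
and therefore $\muhatSt_{\stOpt}(s)-\muhatSt_{\dyOpt}(s)\leq \muhatDy_{\stOpt}(s)-\muhatDy_{\dyOpt}(s)$. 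Hence if \eqref{eq:UCB} prefers $\stOpt$ in $\nustatic$, the same inequality forces it to prefer $\stOpt$ in $\nudy$, contradicting the assumed action profile. This closes the induction and the lemma follows.

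\textbf{Main obstacle.} The whole argument hinges on this reduction to the equality case: the termwise coupling yields a clean comparison of empirical means only when the two runs have drawn the \emph{same} prefix of coupled samples $(U_{i,k})_{k\leq T_i(s)}$, which is precisely what equal counts guarantee. If the counts had already diverged, comparing averages over different prefix lengths would require a separate monotonicity argument; fortunately, once $\TSt{\dyOpt}{s}>\TDy{\dyOpt}{s}$ there is slack in the invariant and the dangerous combination no longer threatens it.
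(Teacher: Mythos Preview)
Your argument is correct and mirrors the paper's strategy: couple so that the $k$-th sample of each arm is ordered across the two runs, then induct on time and rule out the critical action profile $(\ADy{s+1},\ASt{s+1})=(\dyOpt,\stOpt)$ via a UCB-index comparison when the counts are tied. Your coupling, however, is cleaner than the paper's. The paper implements the play-count correspondence with FIFO queues and Bernoulli thinning (its Algorithm~\ref{alg:coupling}): when the two runs disagree it stores the samples in queues, cashes them in at the next opposite disagreement, and separately verifies that the queue length equals $\TSt{\dyOpt}{s}-\TDy{\dyOpt}{s}$ so that the queues cannot be drained while the invariant holds. Your shared-uniform construction $U_{i,k}$ indexed by pull count achieves exactly the same per-sample dominance directly and makes the tied-count empirical-mean comparison a one-liner; you avoid the queue bookkeeping entirely. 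One minor omission: in the chain $\ucbDy{\stOpt,s+1}\geq\ucbSt{\stOpt,s+1}\geq\ucbSt{\dyOpt,s+1}\geq\ucbDy{\dyOpt,s+1}$ all inequalities may be equalities, so to conclude $\ADy{s+1}=\stOpt$ you need (as the paper assumes explicitly) a common deterministic tiebreaking rule in both runs.
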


    We will defer the proof of \cref{lem:stochasticDominance} until later, and focus for now on consequences of this result. We first show it is possible to construct a sufficient event $\tG_R$ for $\G_n$ to occur. This event depends only on the dynamics of \eqref{eq:UCB} in the unbiased environment $\nustatic$, unlike the event $\G_n$, and thus it will be easier to analyze:

    \begin{restatable}[Simplifying the event $\G_t$]{lem}{restateSufficientGoodEvent}\label{lem:sufficientGoodEvent}
        Let $0 = t_0 < t_1 < \ldots < t_R < t_{R+1} = n$ be the sequence of times such that
        $t_1 = \initialbiasst_{\dyOpt}-\initialbias_{\dyOpt} + 1$ and
        $t_{i+1} = t_i + 1 ~ \forall i \in [R]$.
        Fix $\eps =\frac{\initialbiasst_{\dyOpt}}{\totalinitialbiasst}$, and denote
        \begin{align}
            \tG_R = \bigcap_{r=0}^R \braces{\TSt{\dyOpt}{t_r} \leq \eps t_r}.
        \end{align}
        Then, under the same coupling from \cref{lem:stochasticDominance}, we have that $\tG_R \subseteq \G_n$.
    \end{restatable}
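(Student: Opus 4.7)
The plan is to prove $\tG_R \subseteq \G_n$ by induction on $s$, where the inductive hypothesis is that, on the event $\tG_R$, the event $\G_s$ defined in \eqref{eq:ucbGoodEvent} holds. The key structural observation is that the two ``outer'' inequalities in the definition of $\G_s$ are in fact equivalent: using $\TDy{\dyOpt}{s-1}+\TDy{\stOpt}{s-1}=s-1$ and $\initialbiasst_{\stOpt}=\initialbias_{\stOpt}$, a short calculation shows that
\begin{align*}
\muDy{\dyOpt}{s}\leq \muSt{\dyOpt}
\quad\Longleftrightarrow\quad
\frac{\initialbias_{\dyOpt}+\TDy{\dyOpt}{s-1}}{\totalinitialbias+s-1}\leq \eps
\quad\Longleftrightarrow\quad
\muSt{\stOpt}\leq \muDy{\stOpt}{s},
\end{align*}
and the strict inequality $\muSt{\dyOpt}<\muSt{\stOpt}$ is baked into the construction of $\nustatic$. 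So $\G_s$ reduces to verifying the single inequality $\tfrac{\initialbias_{\dyOpt}+\TDy{\dyOpt}{s-1}}{\totalinitialbias+s-1}\leq\eps$ at every $s\leq n$.

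\textbf{Early regime ($s\leq t_1$).} I would first handle $s\leq t_1 = \initialbiasst_{\dyOpt}-\initialbias_{\dyOpt}+1$ \emph{unconditionally} (i.e., without using $\tG_R$ or the coupling), by using the trivial deterministic bound $\TDy{\dyOpt}{s-1}\leq s-1$. Plugging this in, the required inequality becomes
\begin{align*}
\frac{\initialbias_{\dyOpt}+(s-1)}{\totalinitialbias+s-1}\leq \frac{\initialbiasst_{\dyOpt}}{\totalinitialbiasst},
\end{align*}
which, using $\initialbiasst_{\stOpt}=\initialbias_{\stOpt}$, rearranges to $s-1\leq \initialbiasst_{\dyOpt}-\initialbias_{\dyOpt}$, i.e., $s\leq t_1$. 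This establishes $\G_{t_1}$ on all of $\tG_R$.

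\textbf{Inductive step ($s> t_1$).} For $s\in(t_1,n]$, assume $\G_{s-1}$ holds. Since $t_r=\initialbiasst_{\dyOpt}-\initialbias_{\dyOpt}+r$ for $r\geq 1$ and $s-1\in[t_1,t_R]$, we can write $s-1=t_r$ for some $r\in\{1,\ldots,R\}$, so on $\tG_R$ we have $\TSt{\dyOpt}{s-1}\leq \eps(s-1)$. By the inductive hypothesis $\G_{s-1}$, \cref{lem:stochasticDominance} yields $\TDy{\dyOpt}{s-1}\leq \TSt{\dyOpt}{s-1}\leq \eps(s-1)$. Substituting:
\begin{align*}
\frac{\initialbias_{\dyOpt}+\TDy{\dyOpt}{s-1}}{\totalinitialbias+s-1}
\leq \frac{\initialbias_{\dyOpt}+\eps(s-1)}{\totalinitialbias+s-1}
\leq \eps,
\end{align*}
where the last step reduces to $\initialbias_{\dyOpt}\leq\eps\,\totalinitialbias$, equivalently $\initialbias_{\dyOpt}\totalinitialbiasst\leq\initialbiasst_{\dyOpt}\totalinitialbias$, equivalently $\initialbias_{\dyOpt}\leq \initialbiasst_{\dyOpt}$, which holds by the construction of $\nustatic$. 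This extends $\G_{s-1}$ to $\G_s$ and closes the induction.

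\textbf{Main obstacle.} The proof is essentially bookkeeping, and the one subtle point is the interaction between the induction and \cref{lem:stochasticDominance}: the coupling only guarantees the dominance $\TSt{\dyOpt}{s-1}\geq \TDy{\dyOpt}{s-1}$ \emph{on} $\G_{s-1}$, so I must make sure to use the inductive hypothesis before invoking the coupling. Once that ordering is set up correctly, the algebra needed at each step uses only the definitions $\totalinitialbias=\initialbias_{\dyOpt}+\initialbias_{\stOpt}$, $\totalinitialbiasst=\initialbiasst_{\dyOpt}+\initialbias_{\stOpt}$, and $\initialbias_{\dyOpt}\leq\initialbiasst_{\dyOpt}$, all of which are in place. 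The early-regime case ($s\leq t_1$) handled separately is what makes starting the induction clean, since $\tG_R$ gives no useful information before time $t_1$.
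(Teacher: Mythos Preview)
Your proposal is correct and follows essentially the same approach as the paper's proof: both reduce $\G_s$ to the single inequality $\tfrac{\initialbias_{\dyOpt}+\TDy{\dyOpt}{s-1}}{\totalinitialbias+s-1}\leq\eps$ via the equivalence of the two outer conditions, handle $s\leq t_1$ deterministically via $\TDy{\dyOpt}{s-1}\leq s-1$, and then induct using \cref{lem:stochasticDominance} together with $\tG_R$ for the later times. The only cosmetic difference is that the paper phrases the induction over the indices $r$ (which, since $t_{r+1}=t_r+1$, is the same as your induction over $s$) and bounds $\tfrac{\initialbias_{\dyOpt}+\eps t_r}{\totalinitialbias+t_r}$ via a convex-combination/max argument rather than your direct rearrangement to $\initialbias_{\dyOpt}\leq\initialbiasst_{\dyOpt}$.
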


    To make use of \cref{lem:sufficientGoodEvent}, we recall a standard concentration bound for \eqref{eq:UCB} from \citep{AMS09}:

    \begin{theorem}[Theorem 8, \citep{AMS09}]\label{lem:concentrationUCB}
        Consider the UCB algorithm \eqref{eq:UCB} interacting with a $2$-armed (unbiased) stochastic bandit instance $\nuunbiased=(\nu_{\dyOpt},\nu_{\stOpt})$, such that the support of each $\nu_i$ is $[0,1]$ and arm $\stOpt$ is optimal. Then, for any $n\geq 1$ and $x\geq 1 + \frac{8\log(n)}{\Delta_{\dyOpt}^2}$,
        \begin{align*}
            \PRO{T_{\dyOpt}(n) > x}
            \leq n^{-\frac{4x}{1 + \frac{8\log(n)}{\Delta_{\dyOpt}^2}}+1} + \frac{x^{-3}}{3}.
        \end{align*}
    \end{theorem}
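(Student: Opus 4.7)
The plan is to follow the standard UCB concentration analysis, viewing $\{T_\dyOpt(n) > x\}$ as requiring at least $\lceil x \rceil$ pulls of the suboptimal arm $\dyOpt$. First, I would observe that this event implies the existence of a time $\tau \leq n$ with $A_\tau = \dyOpt$ and $T_\dyOpt(\tau-1) \geq \lceil x \rceil$. The \eqref{eq:UCB} selection rule then forces
\begin{align*}
\hmu_\dyOpt(T_\dyOpt(\tau-1)) + \sqrt{\tfrac{2\log\tau}{T_\dyOpt(\tau-1)}} \;\geq\; \hmu_\stOpt(T_\stOpt(\tau-1)) + \sqrt{\tfrac{2\log\tau}{T_\stOpt(\tau-1)}}.
\end{align*}
The threshold condition $x \geq u_0 := 1 + 8\log(n)/\Delta_\dyOpt^2$ is calibrated so that $\sqrt{2\log(\tau)/T_\dyOpt(\tau-1)} \leq \Delta_\dyOpt/2$ whenever $T_\dyOpt(\tau-1) \geq x$, which is precisely what makes the dichotomy below sharp.

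Next, assuming the UCB of the optimal arm $\stOpt$ is a valid upper bound (that is, $\hmu_\stOpt(T_\stOpt(\tau-1)) + \sqrt{2\log\tau/T_\stOpt(\tau-1)} \geq \mu_\stOpt$), the displayed inequality forces $\hmu_{\dyOpt,s} - \mu_\dyOpt \geq \Delta_\dyOpt/2$ for some $s = T_\dyOpt(\tau-1) \geq \lceil x \rceil$. This yields a decomposition $\{T_\dyOpt(n) > x\} \subseteq \mathcal{E}_a \cup \mathcal{E}_b$, where $\mathcal{E}_a$ is the event that $\hmu_{\dyOpt,s} \geq \mu_\dyOpt + \Delta_\dyOpt/2$ for some $s \geq \lceil x \rceil$, and $\mathcal{E}_b$ is the event that $\hmu_{\stOpt,s} + \sqrt{2\log\tau/s} < \mu_\stOpt$ for some $\tau \leq n$ and $s \leq \tau$. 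Each of the two terms in the claimed bound will come from one of these two events.

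Finally, I would bound $\PRO{\mathcal{E}_a}$ and $\PRO{\mathcal{E}_b}$ via Hoeffding's inequality and careful union bounds. For $\mathcal{E}_a$, Hoeffding gives $\PRO{\hmu_{\dyOpt,s} \geq \mu_\dyOpt + \Delta_\dyOpt/2} \leq \exp(-s\Delta_\dyOpt^2/2)$; since $\Delta_\dyOpt^2/2 \geq 4\log(n)/u_0$ by the definition of $u_0$, the geometric sum over $s \geq \lceil x \rceil$ yields at most $n^{-4x/u_0 + 1}$, matching the first term. For $\mathcal{E}_b$, applying Hoeffding together with a uniform-in-$s$ concentration bound at each candidate time $\tau > x$ (the restriction $\tau > x$ is forced since arm $\dyOpt$ cannot have been played $x$ times before time $x$) gives a per-$\tau$ probability of $\O(\tau^{-4})$, and the tail estimate $\sum_{\tau > x}\tau^{-4} \leq \int_{x}^\infty t^{-4}\,dt = x^{-3}/3$ yields the second term.

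The main obstacle is pinning down the exact constants. The factor $x^{-3}/3$ (rather than the loose $x^{-2}$ one would get from a naive union bound over both $s$ and $\tau$) requires a uniform-in-$s$ concentration bound on the empirical mean $\hmu_{\stOpt,s}$ at each time $\tau$, and the exponent $-4x/u_0 + 1$ requires carefully translating the Hoeffding exponent $s\Delta_\dyOpt^2/2$ into a power of $n$ via the definition of $u_0$. These constants recover the precise form originally established by Audibert, Munos, and Szepesv\'ari; the core of the argument, however, is the elementary Hoeffding-plus-union-bound decomposition underlying the standard UCB regret analyses.
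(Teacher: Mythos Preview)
The paper does not prove this statement: it is quoted verbatim as Theorem~8 of \citep{AMS09} and used as a black box in the proof of \cref{thm:ucbLinearRegretAppendix}. So there is no ``paper's own proof'' to compare against.

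Your sketch is essentially the original argument from \citep{AMS09}. The decomposition into $\mathcal{E}_a$ (suboptimal arm's empirical mean too high after at least $x$ pulls) and $\mathcal{E}_b$ (optimal arm's UCB falls below its true mean) is the standard one, and you correctly identify the two places that require care: (i) translating the Hoeffding exponent $s\Delta_\dyOpt^2/2$ into the power $n^{-4x/u_0+1}$ via the definition of $u_0$, and (ii) using a maximal-type inequality rather than a naive union bound over $s$ to get the per-$\tau$ probability down to $\tau^{-4}$, so that $\sum_{\tau>x}\tau^{-4}\leq x^{-3}/3$. Both observations are right; the only thing not spelled out is exactly which maximal inequality you invoke for (ii), but this is a known device (e.g., Hoeffding's maximal inequality for i.i.d.\ bounded sums) and is precisely what \citep{AMS09} use.
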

    
    Notice that \cref{lem:sufficientGoodEvent} allows us translate the condition $\G_n$ in the biased environment $\nudy$ to a condition $\tG_R$ in the unbiased environment $\nustatic$. Using the fact that \eqref{eq:UCB} is an \emph{anytime} algorithm, we may apply \cref{lem:concentrationUCB} for $n=t_1,t_2,\ldots,t_R$. That is,
    \begin{align}
        \PRO{\TSt{\dyOpt}{n} \leq \eps n \text{ and } \G_n}
        \geq \PRO{\TSt{\dyOpt}{n} \leq \eps n \text{ and } \tG_R}
        &\geq 1-\sum_{r=1}^{R+1}\PRO{\TSt{\dyOpt}{t_r} > \eps t_r}\nonumber\\
        &\geq 1-\sum_{r=1}^{R+1} t_r^{-\frac{4\eps t_r}{1 + \frac{8\log(t_r)}{(\delSt_{\dyOpt})^2}} + 1} + \frac{t_r^{-3}}{3\eps^{3}}.
    \end{align}

    Using the above results, we establish the following:
    
    \begin{lemma}[Explicit selection of environment parameters]\label{lem:ucbParameterChoicesAppendix}
        Suppose that the environment $\nuunbiased$ has means $\mu_1 = .9 > .8 = \mu_1$. Then, taking $\nudy$ and $\nustatic$ as defined above with:
        \begin{align*}
            \initialbias_{\dyOpt} = 10,
            \quad
            \initialbiasst_{\dyOpt} = 16126,
            \quad\text{and}\quad
            \initialbias_{\stOpt} = 16126^{1.5}.
        \end{align*}
        Then,
        \begin{align*}
            \PRO{\TDy{\stOpt}{n} \geq .99 n} \geq .99.
        \end{align*}
    \end{lemma}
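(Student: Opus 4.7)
The plan is to reduce the claim to a tail bound on $\TSt{\dyOpt}{n}$ in the unbiased environment $\nustatic$, and then apply the concentration inequality of \cref{lem:concentrationUCB} together with a union bound. By \cref{lem:stochasticDominance,lem:sufficientGoodEvent}, under the claimed coupling, whenever the event $\tG_R=\bigcap_{r=0}^R\braces{\TSt{\dyOpt}{t_r}\leq \eps t_r}$ holds (with $\eps=\initialbiasst_{\dyOpt}/\totalinitialbiasst$), one has both $\G_n$ and $\TDy{\dyOpt}{n}\leq \TSt{\dyOpt}{n}\leq \eps n$, hence $\TDy{\stOpt}{n}=n-\TDy{\dyOpt}{n}\geq(1-\eps)n$. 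It therefore suffices to verify two things: (i) $1-\eps\geq 0.99$, and (ii) $\PRO{\tG_R}\geq 0.99$.

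For (i), I would plug in the parameter choices: $\eps=16126/(16126+16126^{1.5})=1/(1+\sqrt{16126})$, and $\sqrt{16126}\in(126.98,127)$, so $\eps<1/127<0.01$, giving $1-\eps>0.99$ as required. For the suboptimality gap in $\nustatic$, compute $\delSt=\muSt{\stOpt}-\muSt{\dyOpt}=0.8(1-\eps)-0.9\eps=0.8-1.7\eps$, which is at least $0.786$; in particular $(\delSt)^2\geq 0.618$.

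For (ii), apply a union bound to get $\PRO{\tG_R^c}\leq\sum_{r=1}^{R+1}\PRO{\TSt{\dyOpt}{t_r}>\eps t_r}$ (the $r=0$ term is trivial since $t_0=0$). Each summand is bounded by \cref{lem:concentrationUCB} applied with time horizon $t_r$ and threshold $x=\eps t_r$, provided the prerequisite $\eps t_r\geq 1+8\log(t_r)/(\delSt)^2$ is satisfied. Using $\eps\geq 1/128$ and $(\delSt)^2\geq 0.618$, this prerequisite reduces to $t_r/128\geq 1+8\log(t_r)/0.618$, which I would verify holds for all $t_r\geq t_1=\initialbiasst_{\dyOpt}-\initialbias_{\dyOpt}+1=16117$, whenever $n\geq 16207$ as assumed in \cref{thm:ucbLinearRegretAppendix} (this is precisely the role of the explicit threshold $16207$). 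Under this prerequisite, the exponent $-4\eps t_r/(1+8\log(t_r)/(\delSt)^2)+1$ is at most $-3$, so each term is bounded by $t_r^{-3}+\frac{1}{3\eps^3}t_r^{-3}\lesssim 128^3 t_r^{-3}$, and summing over consecutive integers $t_r\geq 16117$ gives $\sum_{r}t_r^{-3}\lesssim \frac{1}{2\cdot 16117^2}$. The total is therefore on the order of $128^3/(16117^2)\approx 0.008\ll 0.01$, yielding $\PRO{\tG_R}\geq 0.99$.

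\textbf{Main obstacle.} The proof is mostly a careful numerical verification; the only delicate point is confirming the prerequisite $\eps t_r\geq 1+8\log(t_r)/(\delSt)^2$ at the smallest time $t_1$. The specific constants $\initialbiasst_{\dyOpt}=16126$, $\initialbias_{\dyOpt}=10$, and the hypothesis $n\geq 16207$ appear to be tuned precisely to make this inequality go through, and I would spend the most effort double-checking these bounds rather than on the subsequent tail sum, which has large slack.
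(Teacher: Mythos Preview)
Your approach is correct and essentially identical to the paper's: reduce via \cref{lem:stochasticDominance,lem:sufficientGoodEvent} to the union bound $\sum_{r}\PRO{\TSt{\dyOpt}{t_r}>\eps t_r}$, apply \cref{lem:concentrationUCB} to each term after checking the prerequisite $\eps t_r\geq 1+8\log(t_r)/(\delSt)^2$ at $t_r=t_1$, and bound the tail sum by an integral. One caveat: the lemma statement contains a typo ($16126$ should read $16216$, as in both \cref{thm:ucbLinearRegretAppendix} and the paper's own proof), which is why your $t_1=16117$ does not match the threshold $n\geq 16207$ and makes the prerequisite at $t_1$ marginally fail with your numbers; with $\initialbiasst_{\dyOpt}=16216$ one gets $t_1=16207$ and the inequality goes through.
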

    \begin{proof}
        Notice first that \cref{lem:stochasticDominance} implies that
        \begin{align*}
            \PRO{\TDy{\stOpt}{n} \geq (1-\eps) n}
            =\PRO{\TDy{\dyOpt}{n} \leq \eps n}
            \geq \PRO{\TDy{\dyOpt}{n} \leq \eps n \text{ and } \G_n}
            \geq \PRO{\TSt{\dyOpt}{n} \leq \eps n \text{ and } \G_n}.
        \end{align*}
        Further, by combining \cref{lem:sufficientGoodEvent} with the above, we have that:
        \begin{align*}
            \PRO{\TDy{\stOpt}{n} \geq (1-\eps) n}
            \geq \PRO{\TSt{\dyOpt}{n} \leq \eps n \text{ and } \tG_n}
            \geq 1 - \sum_{r=1}^{R+1}\PRO{\TSt{\dyOpt}{t_r} > \eps t_r}.
        \end{align*}
        Applying \cref{lem:concentrationUCB}, the above implies:
        \begin{align*}
            \PRO{\TDy{\stOpt}{n} \geq (1-\eps) n}
            &\geq 1-\sum_{r=1}^{R+1} t_r^{-\frac{4\eps t_r}{1 + \frac{8\log(t_r)}{(\delSt_{\dyOpt})^2}} + 1} + \frac{t_r^{-3}}{3\eps^{3}}.
        \end{align*}
        Therefore, let us choose $t_1$ is chosen to satisfy:
        \begin{align*}
            \frac{4\eps t_1}{1 + \frac{8\log(t_1)}{(\delSt_{\dyOpt})^2}} - 1 \geq 3
            \quad\text{or, equivalently,}\quad
            \eps t_1 \geq 1 + \frac{8\log(t_1)}{(\delSt_{\dyOpt})^2}.
        \end{align*}
        Noting that the function $f(t) = \eps t - 1 - \frac{8\log(t)}{(\delSt_{\dyOpt})^2}$ is non-increasing for $\eps t \geq \frac{8}{(\delSt_{\dyOpt})^2}$, then, assuming that:
        \begin{align}\label{eq:ucbNegResultConditionOne}
            \eps t_1 \geq \max\braces{\frac{8}{(\delSt_{\dyOpt})^2}, 1 + \frac{8\log(t_1)}{(\delSt_{\dyOpt})^2}},
        \end{align}
        the above simplifies to
        \begin{align*}
            \PRO{\TDy{\stOpt}{n} \geq (1-\eps)n}
            \geq 1- \parens{1 + \frac{1}{3\eps^3}}\sum_{r=1}^R t_r^{-3}
            &\geq 1- \parens{1 + \frac{1}{3\eps^3}}\int_{t_1-1}^\infty t^{-3} \mathrm{d}t
            = 1- \frac{1 + \frac{1}{3\eps^3}}{2(t_1-1)^2}
        \end{align*}
        To guarantee that $\PRO{\TDy{\stOpt}{n} \geq (1-\eps)n} > 1-\delta$, as long as \eqref{eq:ucbNegResultConditionOne} is satisfied, it suffices to take:
        \begin{align}\label{eq:ucbNegResultConditionTwo}
            2(1-\delta)(t_1-1)^2 > 1 + \frac{1}{3\eps^3}.
        \end{align}
        Thus, by the choices of $t_1 = \initialbiasst_{\dyOpt} - \initialbias_{\dyOpt} + 1$ and $\eps = \frac{\initialbiasst_{\dyOpt}}{\initialbiasst_{\dyOpt} + \initialbias_{\stOpt}}$ from \cref{lem:sufficientGoodEvent}, the conditions \eqref{eq:ucbNegResultConditionOne} and \eqref{eq:ucbNegResultConditionTwo} can be rewritten as:
        \begin{align*}
            &\frac{\initialbiasst_{\dyOpt} (\initialbiasst_{\dyOpt} - \initialbias_{\dyOpt} + 1)}{\initialbiasst_{\dyOpt} + \initialbias_{\stOpt}}
            \geq 1 + \frac{8}{(\delSt_1)^2}\max\braces{\log(\initialbiasst_{\dyOpt} - \initialbias_{\dyOpt} + 1),1}\\
            &\quad\text{and}\quad
            2(1-\delta)(\initialbiasst_{\dyOpt} - \initialbias_{\dyOpt})^2 > 1 + \frac{(\initialbiasst_{\dyOpt} + \initialbias_{\stOpt})^3}{3 (\initialbiasst_{\dyOpt})^3},
        \end{align*}
        where $\delSt = \mu_{\stOpt}\frac{\initialbias_{\stOpt}}{\initialbiasst_{\dyOpt} + \initialbias_{\stOpt}} - \mu_{\dyOpt}\frac{\initialbiasst_{\dyOpt}}{\initialbiasst_{\dyOpt} + \initialbias_{\stOpt}}$ (notice that $\delSt = \Omega(1)$ whenever $\initialbias_{\dyOpt} \gg \initialbiasst_{\stOpt}$). Thus, the conditions above are satisfied when:
        \begin{align*}
            \initialbias_{\dyOpt} \ll \initialbiasst_{\dyOpt} \ll \initialbias_{\stOpt} \ll (\initialbiasst_{\dyOpt})^2.
        \end{align*}
        
        Plugging in:
        \begin{align*}
            \mu_{\dyOpt} = .9 > .8 = \mu_{\stOpt}
            \quad\text{and}\quad
            \initialbias_{\dyOpt} = 10,
            \initialbiasst_{\dyOpt} = 16216,
            \initialbias_{\stOpt} = 16216^{1.5},
        \end{align*}
        and thus
        \begin{align*}
            \eps = \frac{1}{1 + \sqrt{16216}} \approx .007,
            t_1 = 16207,
            \muDy{\dyOpt}{1} \approx 4\times 10^{-6},
            \muSt{\dyOpt} \approx .007,
            \muSt{\stOpt} \approx .793,
            \muDy{\stOpt}{1} \approx .799,
        \end{align*}
        the above implies that
        \begin{align*}
            \PRO{\TDy{\stOpt}{n} \geq .99 n}
            \geq \PRO{\TDy{\dyOpt}{n} \geq (1-\eps)n}
            \geq 1-\frac{1+\frac{1}{3\eps^3}}{2(t_1-1)^2}
            \geq .99,
        \end{align*}
        as claimed.
    \end{proof}

    Notice that \cref{thm:ucbLinearRegretAppendix} follows immediately from \cref{lem:ucbParameterChoicesAppendix}, since
    \begin{align*}
        \regretUCB{n} 
        = \Delta_{\stOpt}\EXP{\TDy{\stOpt}{n}}
        &\geq .1 \EXP{\TDy{\stOpt}{n}\1{\TDy{\stOpt}{n} \geq .99 n}}\\
        &\geq .099n \PRO{\TDy{\stOpt}{n} \geq .99 n}\\
        &\geq .098n,
    \end{align*}
    as claimed. Thus, we will conclude by proving \cref{lem:sufficientGoodEvent,lem:stochasticDominance}.

    \begin{proof}[Proof of \cref{lem:sufficientGoodEvent}]
        We remark that the only property of the coupling that we use in this proof is the result of \cref{lem:stochasticDominance}. Thus, we will proceed in this proof without explicitly describing the coupling.
        
        Further, since
        \begin{align*}
            \muDy{\stOpt}{t} \geq \muSt{\stOpt}
            \iff
            \frac{\initialbias_{\stOpt} + \TDy{\stOpt}{t-1}}{\initialbias_{\stOpt} + \initialbias_{\dyOpt} + t-1}
            \geq \frac{\initialbias_{\stOpt}}{\initialbias_{\stOpt} + \initialbiasst_{\dyOpt}}
            &\iff
            \frac{\initialbiasst_{\dyOpt}}{\initialbias_{\stOpt} + \initialbiasst_{\dyOpt}}
            \geq \frac{\initialbias_{\dyOpt} + \TDy{\dyOpt}{t-1}}{\initialbias_{\stOpt} + \initialbias_{\dyOpt} + t-1}\\
            &\iff \muSt{\dyOpt} \geq \muDy{\dyOpt}{t},
        \end{align*}
        to prove our claim, it suffices to show that $\tG_R \subseteq \braces{ \muDy{\dyOpt}{t} \leq \muSt{\dyOpt}}$.
    
        To begin, we observe that, by choice of $t_1$, $\G_{t_1}$ (defined in \eqref{eq:ucbGoodEvent}) is true deterministically. Indeed, we have that, for any $t$:
        \begin{align*}
            \muDy{\dyOpt}{t}
            = \frac{\initialbias_{\dyOpt} + T_{\dyOpt}(t-1)}{\initialbias_{\dyOpt}+\initialbias_{\stOpt} + t -1}\mu_{\dyOpt}
            \leq \frac{\initialbias_{\dyOpt} + t-1}{\initialbias_{\dyOpt}+\initialbias_{\stOpt} + t -1}\mu_{\dyOpt}
        \end{align*}
       Thus, since the RHS above is nondecreasing in $t_1 \geq 1$, we conclude $\muDy{\dyOpt}{t}\leq \muSt{\dyOpt} = \frac{\initialbiasst_{\dyOpt}}{\initialbiasst_{\dyOpt} + \initialbias_{\stOpt}}$ for all $t\leq t_1 = \initialbiasst_{\dyOpt} - \initialbias_{\dyOpt} + 1$ (and, thus also that $\G_{t_1}$ is true).

        Now, we show that, for any $r\in [0,R]$, if $\G_{t_r}$ is true and $\TSt{\dyOpt}{t_{r}} \leq \eps t_r$ (where $\eps = \frac{\initialbiasst_{\dyOpt}}{\initialbiasst_{\dyOpt} + \initialbias_{\stOpt}}$), then $\G_{t_{r+1}}$ is also true. Notice that this establishes our claim that $\tG_{t_R} \subseteq \G_n$, since $\G_{t_1}$ is deterministically true. We proceed by induction on $r$. In the base case of $r=0$ follows immediately from the fact that $\G_{t_1}$ is deterministically true, as we just showed. Now, assume the claim holds at some $r\geq 0$, i.e., that $\G_{t_r}$ is true and $\TSt{\dyOpt}{t_r} \leq \eps t_r$. Then, by \cref{lem:stochasticDominance}, it follows that $\TDy{\dyOpt}{t_r} \leq \eps t_r$. Thus, since $t_{r+1}=t_r+1$,
        \begin{align*}
            \muDy{\dyOpt}{t_{r+1}} 
            = \mu_{\dyOpt}\frac{\initialbias_{\dyOpt} + \TDy{\dyOpt}{t_{r+1}-1}}{\initialbias_{\dyOpt} + \initialbias_{\stOpt} + t_{r+1}-1}
            = \mu_{\dyOpt}\frac{\initialbias_{\dyOpt} + \TDy{\dyOpt}{t_{r}}}{\initialbias_{\dyOpt} + \initialbias_{\stOpt} + t_{r}}
            \leq \mu_{\dyOpt}\frac{\initialbias_{\dyOpt} + \eps t_r}{\initialbias_{\dyOpt} + \initialbias_{\stOpt} + t_{r}}.
        \end{align*}
        Thus, to show that $\muDy{\dyOpt}{t_{r+1}} \leq \muSt{\dyOpt}$, it suffices to show:
        \begin{align*}
            \frac{\initialbias_{\dyOpt} + \eps t_r}{\initialbias_{\dyOpt} + \initialbias_{\stOpt} + t_{r}}
            \leq \frac{\initialbiasst_{\dyOpt}}{\initialbiasst_{\dyOpt} + \initialbias_{\stOpt}}.
        \end{align*}
        Observing that:
        \begin{align*}
            \frac{\initialbias_{\dyOpt} + \eps t_r}{\initialbias_{\dyOpt} + \initialbias_{\stOpt} + t_{r}}
            = \frac{\initialbias_{\dyOpt}}{\initialbias_{\dyOpt} + \initialbias_{\stOpt}} \parens{\frac{\initialbias_{\dyOpt} + \initialbias_{\stOpt}}{\initialbias_{\dyOpt} + \initialbias_{\stOpt} + t_{r}}}
            + \eps \parens{\frac{t_r}{\initialbias_{\dyOpt} + \initialbias_{\stOpt} + t_{r}}}
            \leq \max\braces{\frac{\initialbias_{\dyOpt}}{\initialbias_{\dyOpt} + \initialbias_{\stOpt}}, \eps},
        \end{align*}
        the claim follows by our choice of $\eps = \frac{\initialbiasst_{\dyOpt}}{\initialbiasst_{\dyOpt} + \initialbias_{\stOpt}}$.
    \end{proof}
    It remains only to prove \cref{lem:stochasticDominance}.
    \begin{proof}[Proof of \cref{lem:stochasticDominance}]
        We describe the coupling construction in \cref{alg:coupling}.
        Before we begin, let us introduce some notation that will be used throughout the proofs. Let $\HDy_t$ (resp., $\HSt_t$) denote the observation history of \eqref{eq:UCB} in environment $\nudy$ (resp., $\nustatic$) through time $t$, i.e.,:
        \begin{align*}
            \HDy_t = (\ADy{s}, \YDy{s})_{s\in [t]}
            \quad\text{and}\quad
            \HSt_t = (\ASt{s}, \YSt{s})_{s\in [t]}.
        \end{align*}
        Let $\FDy_t = \sigma\braces{\HDy_t}$ and $\FSt_t = \sigma\braces{\HSt_t}$ denote the associated $\sigma$-algebras generated by this interaction history.

       \begin{algorithm}
        \caption{Coupling construction}\label{alg:coupling}
            \begin{algorithmic}[1]
            \REQUIRE Time horizon $n\in\N$, unbiased Bernoulli environment $\nuunbiased = (\nuunbiasedi_{\dyOpt}, \nuunbiasedi_{\stOpt})$ with means $0 < \mu_{\stOpt} < \mu_{\dyOpt} < 1$. Associated biased environment $\nudy$ with initial biases $(\initialbias_{\dyOpt}, \initialbias_{\stOpt})$.
                Static environment parameters $\muSt{\dyOpt}, \muSt{\stOpt}$ and $\initialbiasst_{\dyOpt}$ such that $\G_1$ (the event from \eqref{eq:ucbGoodEvent}) is true, i.e.,
                \begin{align*}
                    \muDy{\dyOpt}{1} \leq \mu_{\dyOpt}\frac{\initialbiasst_{\dyOpt}}{\initialbiasst_{\dyOpt} + \initialbias_{\stOpt}} = \muSt{\dyOpt} < \muSt{\stOpt} = \mu_{\stOpt}\frac{\initialbias_{\stOpt}}{\initialbiasst_{\dyOpt} + \initialbias_{\stOpt}} \leq \muDy{\stOpt}{1}.
                \end{align*}
                Two algorithm instances of \eqref{eq:UCB} associated with $\nudy$ and $\nustatic$, with associated UCB indices $\ucbDy{i,t}$ and $\ucbSt{i,t}$, empirical means $\muhatDy_{i,t}$ and $\muhatSt_{i,t}$, and actions $\ADy{t}$ and $\ASt{t}$ respectively.
                \ENSURE At each time $t\in [n]$, produce samples satisfying:
                \begin{align*}
                    \YDy{t} \sim \bern{\muDy{\ADy{t}}{t}}
                    \quad\text{and}\quad
                    \YSt{t} \sim \bern{\muSt{\ASt{t}}},
                \end{align*}
                such that $\YDy{t} \ind \braces{\YDy{s}}_{s < t} \mid \ADy{t}$ and $\YSt{t} \ind \braces{\YSt{s}}_{s < t} \mid \ASt{t}$.
                \STATE Let $t \leftarrow 1$, and let $\QSt, \QDy$ be two empty FIFO queues
                \WHILE{the event $\G_t$ (defined in \cref{eq:ucbGoodEvent}) is $\true$}
                    \STATE Compute $\ADy{t}$ and $\ASt{t}$ (use a common tiebreaking rule for the UCB indices).
                    \IF{Both environments select arm $\dyOpt$ (i.e., $\ADy{t} = \dyOpt = \ASt{t}$)}
                        \STATE Since $\G_t$ is true, $\muDy{\dyOpt}{t} \leq \muSt{\dyOpt}$. Thus, we draw samples as:
                        \begin{align*}
                            \YSt{t} \sim \bern{\muSt{\dyOpt}}
                            \quad\text{and}\quad
                            \YDy{t} = \bern{\nicefrac{\muDy{\dyOpt}{t}}{\muSt{\dyOpt}}} \YSt{t}
                        \end{align*}
                    \ELSIF{Both environments select arm $\stOpt$ (i.e., $\ADy{t} = \stOpt = \ASt{t}$)}
                        \STATE Since $\G_t$ is true, $\muSt{\stOpt} \leq \muDy{\stOpt}{t}$. Thus, we do the reverse of the previous case:
                        \begin{align*}
                                \YDy{t} \sim \bern{\muDy{\stOpt}{t}}
                                \quad\text{and}\quad
                                \YSt{t} = \bern{\nicefrac{\muSt{\stOpt}}{\muDy{\stOpt}{t}}} \YDy{t}
                        \end{align*}
                    \ELSIF{$\ADy{t} = \stOpt$, $\ASt{t} = \dyOpt$}
                        \STATE Note $\TDy{\stOpt}{\cdot}$ increases but $\TSt{\stOpt}{\cdot}$ does not. This is the good case for our analysis. Take:
                        \begin{align*}
                            \YSt{t} \sim \bern{\muSt{\dyOpt}}
                            \quad\text{and}\quad
                            \YDy{t} \sim \bern{\muDy{\stOpt}{t}}
                        \end{align*}
                        \STATE Add $\YSt{t}$ to $\QSt$ and $\YDy{t}$ to $\QDy$
                    \ELSIF{$\ADy{t} = \dyOpt$, $\ASt{t} = \stOpt$ and $\QDy, \QSt$ are both non-empty}
                        \STATE Note $\TSt{\stOpt}{\cdot}$ increases but $\TDy{\stOpt}{\cdot}$ does not. This is the trickier case for our analysis.
                        \STATE Remove samples $\YSt{t'}$ and $\YDy{t'}$ from $\QSt$ and $\QDy$, respectively.
                        \STATE Since $\G_{t'}$ and $\G_t$ are true (since $t'<t$), thus $\muSt{\stOpt} \leq \muDy{\stOpt}{t'}$ and $\muDy{\dyOpt}{t} \leq \muSt{\dyOpt}$, take:
                        \begin{align*}
                            \YSt{t} = \bern{\nicefrac{\muSt{\stOpt}}{\muDy{\stOpt}{t'}}} \YDy{t'}
                            \quad\text{and}\quad
                            \YDy{t} = \bern{\nicefrac{\muDy{\dyOpt}{t}}{\muSt{\dyOpt}}} \YSt{t'}
                        \end{align*}
                    \ELSIF{$\ADy{t} = \dyOpt$, $\ASt{t} = \stOpt$ and $\QDy$ or $\QSt$ is empty}
                        \STATE $\brk$ out of loop
                    \ENDIF
                    \STATE Update $t\leftarrow t+1$
                \ENDWHILE
                \STATE Draw samples independently for every remaining $t$:
                \begin{align*}
                    \YSt{t} \sim \bern{\muSt{\ASt{t}}}
                    \quad\text{and}\quad
                    \YDy{t} \sim \bern{\muDy{\ADy{t}}{t}}
                \end{align*}
            \end{algorithmic}
        \end{algorithm}
        \textbf{Correctness of coupling}. By construction of \cref{alg:coupling}, we note that, whenever $\G_t$ is false, the coupling is trivially valid, since the samples are drawn independently. Similarly, whenever $\G_t$ is true and $\ADy{t} = \stOpt$, $\ASt{t} = \dyOpt$ or $\ADy{t} = \dyOpt$, $\ASt{t} = \stOpt$ and one of the queues $\QSt,\QDy$ is empty at $t$, the samples are also drawn independently. In the case when both environments select the same action, notice that $\YSt{t}$ and $\YDy{t}$ are \emph{not} independent. However, their respective marginal distributions are Bernoulli with mean $\muSt{\ASt{t}}$ and $\muDy{\ADy{t}}{t}$, respectively, since in the case when $\ADy{t} = \dyOpt = \ASt{t}$:
        \begin{align*}
            \EXP{\YSt{t} \mid \F_{t-1}} = \muSt{\dyOpt}
            \quad\text{and}\quad
            \EXP{\YDy{t} \mid \F_{t-1}} 
            &= \EXP{\bern{\frac{\muDy{\dyOpt}{t}}{\muSt{\dyOpt}}}\YSt{t} \mid \F_{t-1}} \\
            &= \frac{\muDy{\dyOpt}{t}}{\muSt{\dyOpt}}\muSt{\dyOpt},
        \end{align*}
        and, similarly, when $\ADy{t} = \stOpt = \ASt{t}$:
        \begin{align*}
            \EXP{\YDy{t} \mid \FDy_{t-1}} = \muDy{\stOpt}{t}
            \quad\text{and}\quad
            \EXP{\YSt{t} \mid \FSt_{t-1}} 
            &= \EXP{\bern{\frac{\muSt{\stOpt}}{\muDy{\stOpt}{t}}}\YDy{t} \mid \FSt_{t-1}} \\
            &= \frac{\muSt{\stOpt}}{\muDy{\stOpt}{t}}\muDy{\stOpt}{t}.
        \end{align*}
        Thus, \cref{alg:coupling} produces valid samples in these cases.
        Finally, in the case when $\ADy{t} = \stOpt$, $\ASt{t} = \dyOpt$, and $\QDy,\QSt$ are non-empty at $t$, let us denote $\YSt{t'}$ and $\YDy{t'}$ as the samples removed from $\QSt$ and $\QDy$, where $t' < t$ denotes the time these samples were originally added to the queue. Then,
        \begin{align*}
            \EXP{\YSt{t} \mid \FDy_{t-1}}
            = \EXP{\bern{\frac{\muSt{\stOpt}}{\muDy{\stOpt}{t'}}}\YDy{t'} \mid \FSt_{t-1}}
            = \frac{\muSt{\stOpt}}{\muDy{\stOpt}{t'}}\EXP{\YDy{t'} \mid \FSt_{t-1}}.
        \end{align*}
        Now, of course, $\YDy{t'}$ is measurable w.r.t. $\FDy_{t'}$ (hence also in $\FDy_{t-1}$ since $t'<t$) by definition. However, since each sample $\YDy{t'}$ added to $\QDy$ is used at most once to compute a single $\YSt{t}$, it follows that $\YDy{t'}$ is \emph{not} measurable w.r.t $\FSt_{t-1}$. Moreover, when a sample is added to $\QDy$ at $t'$, it must be the case that $\ADy{t'}=\stOpt$ by construction. It follows that $\EXP{\YDy{t'} \mid \FSt_{t-1}} = \muDy{\stOpt}{t'}$, and thus that $\YSt{t}$ has the correct marginal distribution. A symmetric argument shows that $\YDy{t}$ also has the correct marginal distribution in this case. We conclude, therefore, that \cref{alg:coupling} is a valid coupling.

    \textbf{Establishing the claim under this coupling}.
    Note that we wish to show that, as long as $\G_t$ is true, then $\TSt{\dyOpt}{s} \geq \TDy{\dyOpt}{s}$ for all $s\leq t$. We will prove the claim via induction on $s$. At time $s=1$, the claim is true trivially, since \eqref{eq:UCB} deterministically selects arm $1$ by definition. Now, suppose the claim holds for $1,\ldots,s$, but not at time $s+1$. If this were true, then it must be that $\TDy{\dyOpt}{s} = \TSt{\dyOpt}{s}$, and $\ADy{s+1} = \dyOpt$ and $\ASt{s+1} = \stOpt$ (indeed, this is the only way to have $\TDy{\dyOpt}{s} \leq \TSt{\dyOpt}{s}$ and $\TDy{\dyOpt}{s} > \TSt{\dyOpt}{s}$). This implies that:
    \begin{align*}
        \ucbSt{\dyOpt,s+1} - \ucbDy{\dyOpt,s+1}
        &=\muhatSt_{\dyOpt,s} + \sqrt{\frac{2\log(s+1)}{\TDy{\dyOpt}{s}}}
        - \muhatDy_{\dyOpt,s} - \sqrt{\frac{2\log(s+1)}{\TDy{\dyOpt}{s}}}\\
        &= \muhatSt_{\dyOpt,s} - \muhatDy_{\dyOpt,s}\\
        &= \frac{\sum_{\ell \in [s]} \YSt{\ell}\1{\ASt{\ell} = \dyOpt} - \YDy{\ell}\1{\ADy{\ell} = \dyOpt}}{\TDy{\dyOpt}{s}}\\
    \end{align*}
    Let us now examine $\YSt{\ell}\1{\ASt{\ell} = \dyOpt} - \YDy{\ell}\1{\ADy{\ell} = \dyOpt}$ for each $\ell \leq s$. Whenever $\ASt{\ell} = \dyOpt = \ADy{\ell}$, then \cref{alg:coupling} guarantees that $\YSt{\ell} \geq \YDy{\ell}$. Whenever $\ADy{\ell}=\dyOpt$ and $\ASt{\ell} = \stOpt$, then there are two cases: 
    
    \textbf{Case 1}: if $\QSt$ is non-empty at time $\ell$, then let us denote $\ell'$ as the index of the sample from this queue used at time $\ell$. By definition of \cref{alg:coupling}, $\ADy{\ell'} = \stOpt$ and $\ASt{\ell'} = \dyOpt$. Thus:
    \begin{align*}
        \YSt{\ell'}\underbrace{\1{\ASt{\ell'} = \dyOpt}}_{=1} - \YDy{\ell'}\underbrace{\1{\ADy{\ell'} = \dyOpt}}_{=0}
        +\YSt{\ell}\underbrace{\1{\ASt{\ell} = \dyOpt}}_{=0} - \YDy{\ell}\underbrace{\1{\ADy{\ell} = \dyOpt}}_{=1}
        &=
        \YSt{\ell'} - \YDy{\ell}
    \end{align*}
    By construction of \cref{alg:coupling}, $\YSt{\ell'} \geq \YDy{\ell}$.

    \textbf{Case 2}: if $\QSt$ is empty at time $\ell$, then, WLOG, let us assume $\ell$ is the first such time when this happens. It follows then that $\TDy{\dyOpt}{\ell-1} = \TSt{\dyOpt}{\ell-1}$ (indeed, $\TSt{\dyOpt}{\ell -1} - \TDy{\dyOpt}{\ell-1}$ measures the length of the queue at time $\ell-1$). Since $\ADy{\ell} = \dyOpt$ and $\ASt{\ell} = \stOpt$, we thus have that:
    \begin{align*}
        \TDy{\dyOpt}{\ell} - \TSt{\dyOpt}{\ell}
        = \underbrace{\TDy{\dyOpt}{\ell-1} - \TSt{\dyOpt}{\ell-1}}_{=0}
        + \underbrace{\1{\ADy{\ell}=\dyOpt}}_{=1} - \underbrace{\1{\ASt{\ell}= \dyOpt}}_{=0}
        > 0,
    \end{align*}
    which contradicts the induction hypothesis that $\TDy{\dyOpt}{\ell} \leq \TSt{\dyOpt}{\ell}$. Therefore, we conclude that Case 2 cannot happen as long as $\B_t$ is true.
    Taking the results of these cases together, we conclude that
    \begin{align*}
        \frac{\sum_{\ell \in [s]} \YSt{\ell}\1{\ASt{\ell} = \dyOpt} - \YDy{\ell}\1{\ADy{\ell} = \dyOpt}}{\TDy{\dyOpt}{s}}
        \geq 0
        \quad\text{and thus}\quad
        \ucbSt{\dyOpt,s+1} \geq \ucbDy{\dyOpt,s+1}
    \end{align*}
    By symmetric arguments, we also have that:
    \begin{align*}
        \ucbDy{\stOpt,s+1} \geq \ucbSt{\stOpt,s+1}
    \end{align*}
    Additionally, by assumption, $\ASt{s+1} = \stOpt$, so
    \begin{align*}
        \ucbSt{\stOpt,s+1} \geq \ucbSt{\dyOpt,s+1}
    \end{align*}
    However, these inequalities imply that:
    \begin{align*}
        \ucbDy{\stOpt,s+1} \geq \ucbSt{\stOpt,s+1} \geq \ucbSt{\dyOpt,s+1}\geq \ucbDy{\dyOpt,s+1}
    \end{align*}
    Therefore, either (i) all inequalities are equalities, in which case the algorithms make the same decisions (since they use the same tiebreaking rule by \cref{alg:coupling}), or (ii) one of the inequalities is strict. In either case, it must be that $\ADy{s} = \stOpt$, which contradicts our assumption! Thus, the claim is established.
    \end{proof}

\section{Additional experimental details}
\label{sec:experimentDetailsAppendix}

Here, we give additional details on the experiments included in the main body of our paper. All experiments were performed locally on a Mac operating system, using Python 3.9 and PyCharm. 
In each of the plots below, we average our results and display error bars representing the standard deviation of the estimated quantity.

\subsection{Ignoring the biased feedback}
\label{subsec:ignoreBias}

In \cref{fig:AlgsProbFail}, we demonstrate empirically that ignoring the bias structure of our problem leads to linear regret for many standard bandit algorithms, such as UCB1 \citep{ACF02}, EXP3 \citep{ACFS95}, and EXP-IX \citep{KNVM14}. We run each of these algorithms on a 2-armed Bernoulli bandit instance, where $\mu_1 = .4 < .6 = \mu_2$, with bias structure $\mbiast[i] = \fract[i]$, where the initial number of arm plays for each arm are: $\initialbias_2 = 10$, and we vary $\initialbias_1 \in \braces{1,3,5,10,15,20,25,30,40,50,70,90,200}$. The time horizon is $n=20,000$. Each experiment is repeated $r=50$ times. The $x$-axis of the plot is the initial reweighting for arm $1$, $\frac{\initialbias_1}{\initialbias_1 + \initialbias_2}$. The $y$-axis is the empirical probability of the suboptimal arm (arm 1) being pulled more than $\nicefrac{n}{2}$ times, i.e., $\frac{1}{r}\sum_{\ell\in [r]} \1{T_1(n) > \nicefrac{n}{2} \text{ on experiment repeat $r$}}$.

\subsection{The impacts of debiasing samples}
\label{subsec:debiasingFig}
In \cref{fig:debiasedUCB}, we demonstrate the challenges for regret minimization for UCB when the bias model is known. Given the bias model $\mbiast$ and biased feedback $Y_t$, an algorithm can, by \eqref{eq:biasModel}, obtain unbiased samples from the reward distribution of arm $A_t$ by computing $Z_{t,A_t} = Y_t \mbiast^{-1}$. For this experiment, we take $\mbiast[i] = \fract[i]$. Thus, even though the sample is unbiased, the variance scales up by a factor of $\parens{\invfract[i]}^2$. Moreover, even when $Y_t$ has bounded support, $Z_{t,A_t}$ may have a support that scales with the time horizon. We consider a 2-armed Bernoulli bandit instance, where $\mu_1 = .4 < .6 = \mu_2$, and the initial number of times each arm is played is $\initialbias_1 = 100, \initialbias_2 = 10$. We consider a time horizon $n=200,000$, and repeat each experiment $40$ times. We run UCB-V \citep{AMS07} in two configurations: (i) one in which it observes the true rewards $X_{t,A_t}$ as feedback (i.e., this is the standard bandit setting with no biases), and (ii) one in which the algorithm knows the bias model, and uses the debiased samples $Z_t$ to compute its estimates. We use the recommended parameter settings from Corollary 1 of that paper. We remark that UCB-V assumes a uniform upper-bound on the rewards. In case (i), this upper bound is $1$ (since the rewards are Bernoulli), but in (ii), since the algorithm uses the debiased feedback, the samples used by the algorithm have potentially unbounded support. For this reason, we slightly modify this algorithm to adaptively estimate the support size for each arm based on the largest sample value observed for that arm so far. For each of these algorithms, we plot the empirical $\frac{\EXP{T_1(t)}}{\sqrt{t}}$ as $t$ varies from $1$ to $200,000$.

\subsection{Failure of other optimistic algorithms}
\begin{figure}[tb]
     \centering
     \begin{subfigure}[t]{0.45\textwidth}
         \centering
         \includegraphics[width=\figsize\textwidth]{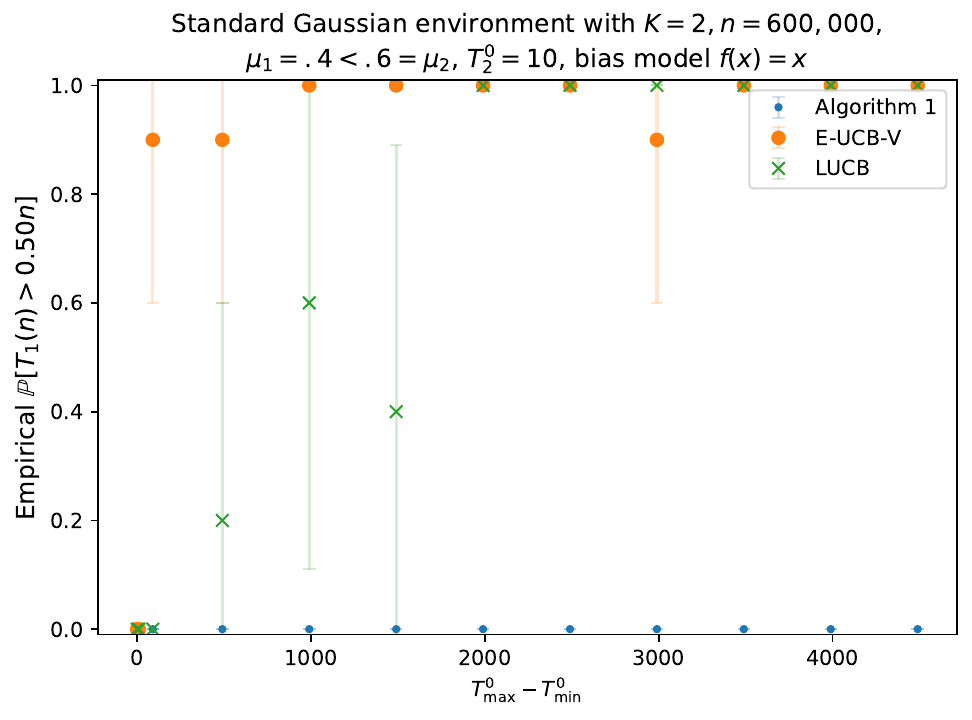}
         \caption*{Figure 6: The fraction of times, out of 10 independent experiments, that each algorithm pulls arm 1 more than half of the time, for different values of initial arm pull gaps $\initialbias_{\text{max}} - \initialbias_{\text{min}}$.}
         \label{fig:CompareAlgs}
     \end{subfigure}
     \hfill
     \begin{subfigure}[t]{0.45\textwidth}
         \centering
         \includegraphics[width=\figsize\textwidth]{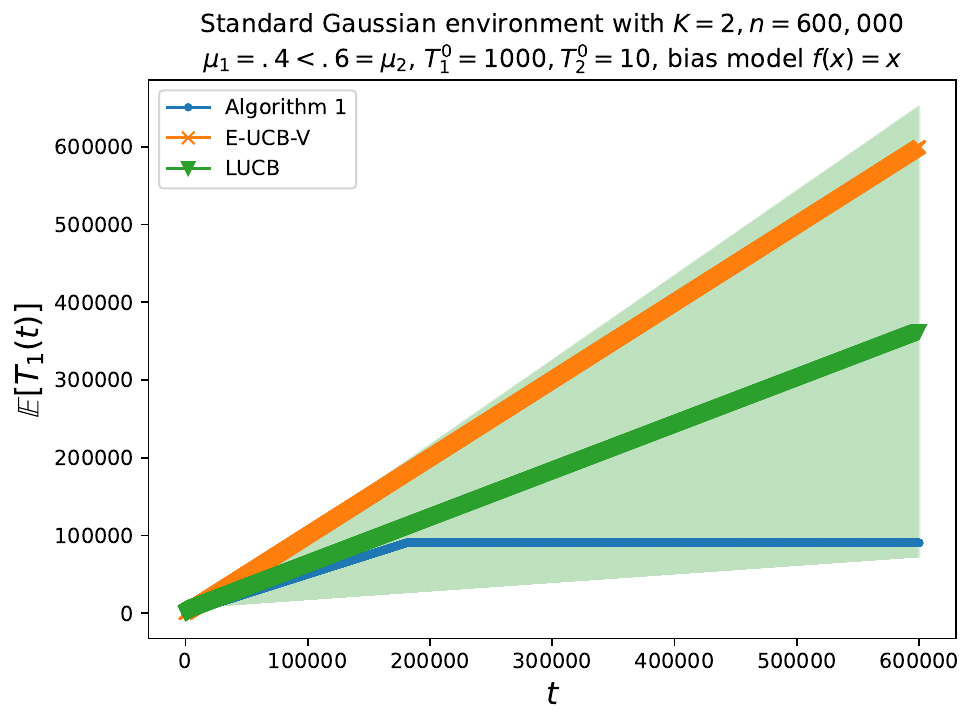}
         \caption*{Figure 7: The average number of times, over 10 independent experiments, that each algorithm pulls arm 1 during the time horizon $n$}
         \label{fig:CompareAlgs}
     \end{subfigure}
\end{figure}

In Figures 6 and 7, we compare the performance of \cref{alg:eliminationUnknownBias} against two alternative algorithms, Efficient-UCBV \citep{mukherjee2018efficient}, and an implementation of LUCB \citep{jamieson2014best}. We note that, unlike the phased elimination algorithm used in our paper (\cref{alg:eliminationUnknownBias}), neither of these algorithms are phased, but instead use optimistic estimates of the arm means for arm selection and elimination at each round. Our experimental results suggest that neither of these algorithms is well-suited for our problem setting, as both suffer linear regret even in a simple biased Gaussian environment for which our algorithm succeeds. This experiment highlights the challenging nature of designing algorithms which provably succeed in our biased setting. Indeed, many optimistic algorithms which work well in standard, unbiased bandit settings fail in our setting.

\subsection{Suboptimal arm pull scaling}

\begin{figure}[tb]
     \centering
     \begin{subfigure}[t]{0.45\textwidth}
         \centering
         \includegraphics[width=\figsize\textwidth]{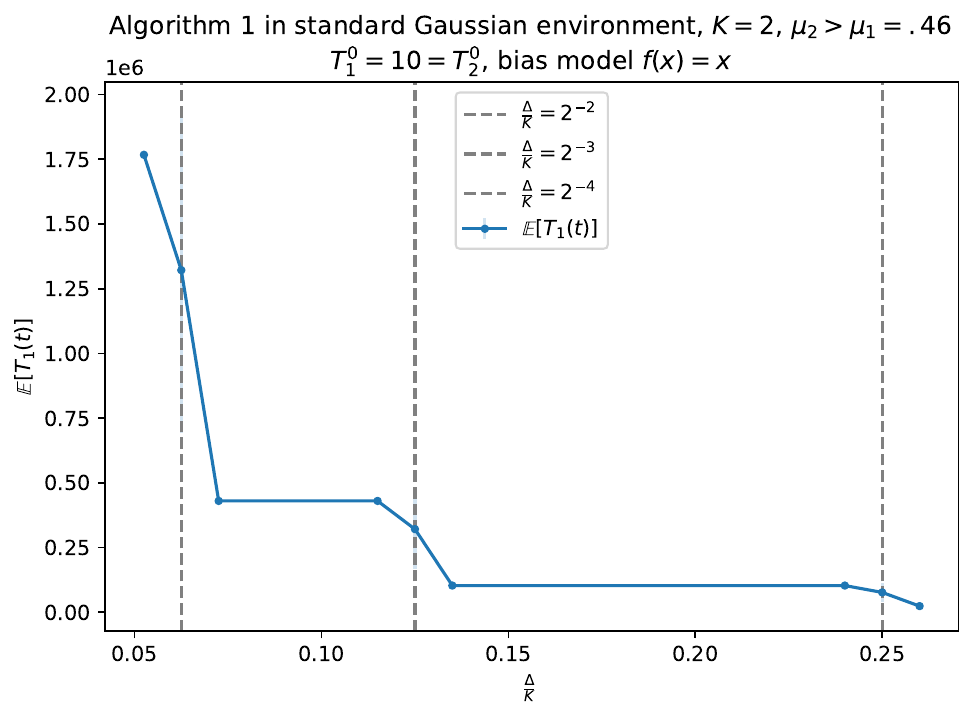}
         \caption*{Figure 8: The average number of times, over 3 independent experiments, that Algorithm 1 pulls arm 1 during the time horizon $n$, for varying values of ``effective'' suboptimality gap $\nicefrac{\Delta}{K}$ (by varying the value of $\mu_2$.}
         \label{fig:scaleGap}
     \end{subfigure}
     \hfill
     \begin{subfigure}[t]{0.45\textwidth}
         \centering
         \includegraphics[width=\figsize\textwidth]{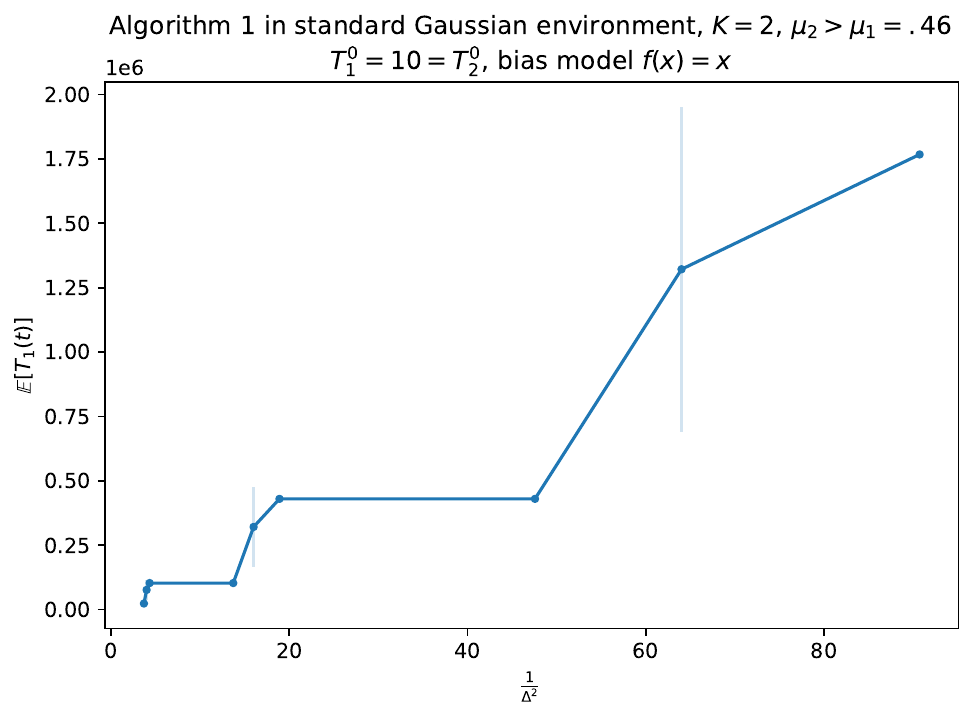}
         \caption*{Figure 9: The average number of times, over 3 independent experiments, that Algorithm 1 pulls arm 1 during the time horizon $n$, as we vary $\nicefrac{1}{\Delta^2}$ (by varying the value of $\mu_2$).}
         \label{fig:scaleGapInv}
     \end{subfigure}
\end{figure}

In Figures 8 and 9, we demonstrate the scaling of the expected number of suboptimal arm pulls for varying suboptimality gaps, where we fix the mean of the suboptimal arm and vary the mean of the optimal arm. Figure 8 shows that, as the suboptimality gap increases, the expected number of suboptimal arm pulls decreases, as expected. Moreover, the change points occur when the suboptimality gap normalized by the number of arms is $2^{-i}$. This is because the means of the observed feedback for each arm is (roughly) the true mean divided by K before the first arm is eliminated. Figure 9 shows that the expected plays of the suboptimal arm scales proportionally to $\Delta^{-2}$, as predicted by our theory. Note that the step patterns in this plot are a standard consequence of the phases in our phased elimination algorithm.

\subsection{Sensitivity to Lipschitz constant}

\begin{figure}[tb]
    \centering
    \includegraphics[width=.6\textwidth]{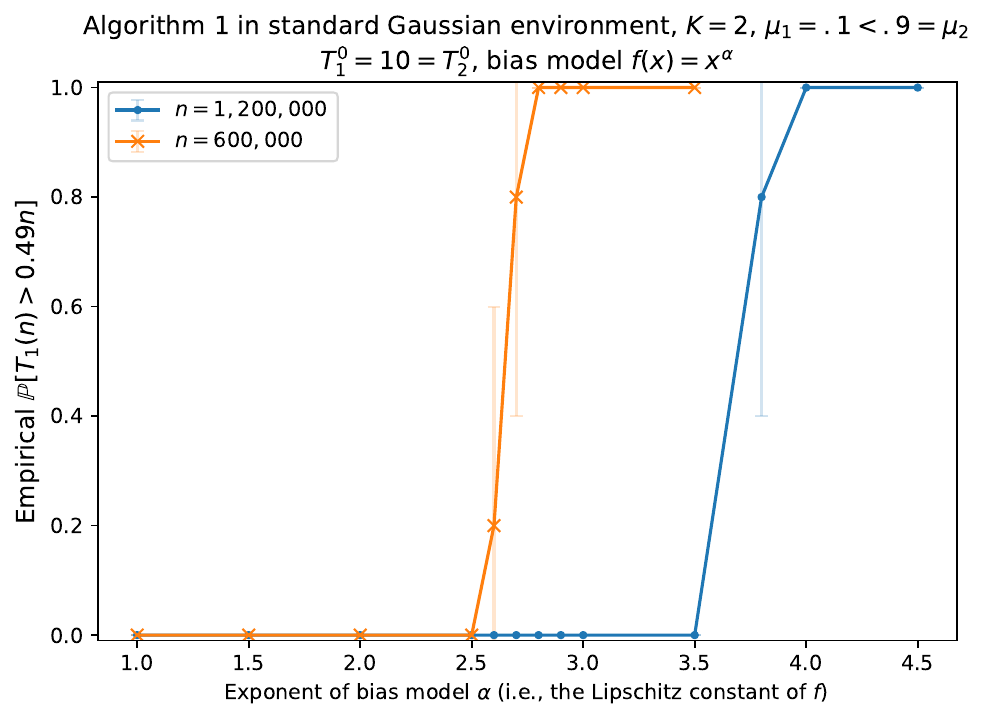}
    \caption*{Figure 10: The fraction of times, out of 10 independent experiments, that Algorithm 1 pulls arm 1 more than half of the time, for bias models $f(x)=x^\alpha$ for varying $\alpha$.}
    \label{fig:scaleL}
\end{figure}

In Figure 10, we investigate the sensitivity of \cref{alg:eliminationUnknownBias} to the Lipschitz constant. We consider a standard Gaussian bandit environment under bias model $f(x) = x^\alpha$ for $\alpha\in [1,4.5]$. We observe that, for a fixed time horizon, increasing the value of a eventually causes the algorithm to fail. However, increasing the time horizon causes a corresponding shift in the point of failure to a larger value of $\alpha$.

\subsection{Sensitivity to initial arm pull gap}

\begin{figure}[tb]
     \centering
     \includegraphics[width=.6\textwidth]{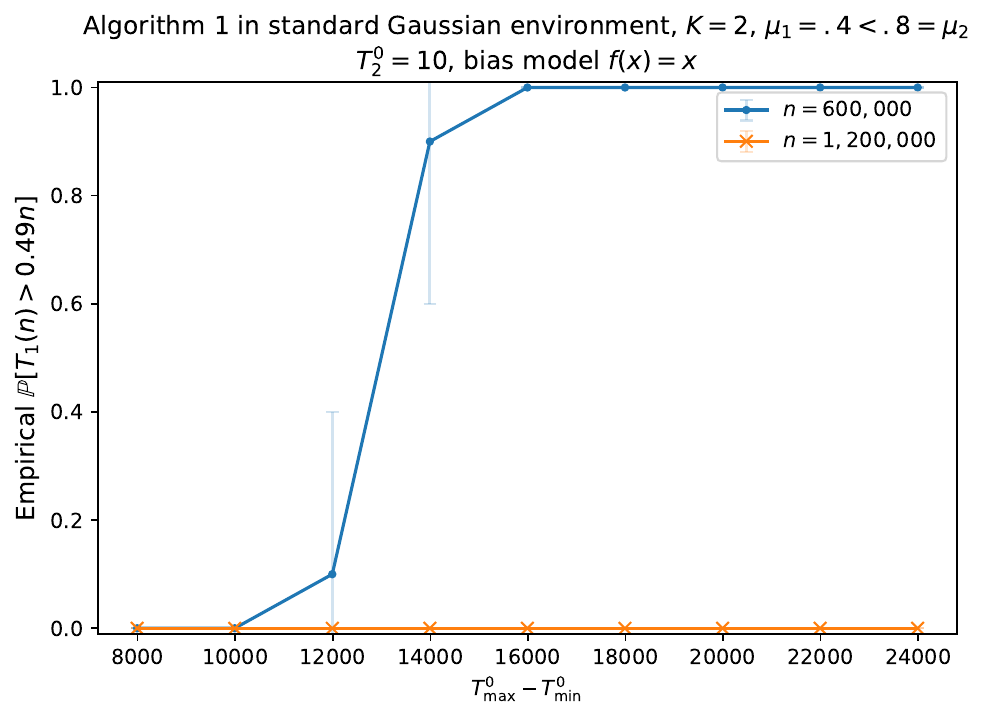}
     \caption*{Figure 11: The fraction of times, out of 10 independent experiments, that Algorithm 1 pulls arm 1 more than half of the time, for different values of initial arm pull gaps $\initialbias_{\text{max}} - \initialbias_{\text{min}}$.}
     \label{fig:scaleTmaxmin}
\end{figure}

In Figure 11, we investigate the sensitivity to the difference in initial arm pulls in the same setting as in the previous experiment, with bias model $f(x)=x$. Similarly as in the last experiment, for a fixed time horizon, increasing the gap in initial biases eventually causes the algorithm to fail, but this failure point occurs later when the time horizon is increased. Figures 9 and 10 demonstrate the purpose of our requirement that $n$ is sufficiently large in \cref{thm:eliminationUnknownBiasAppendix}.

\subsection{Sensitivity to number of arms for different bias models}

\begin{figure}[tb]
     \centering
     \includegraphics[width=.6\textwidth]{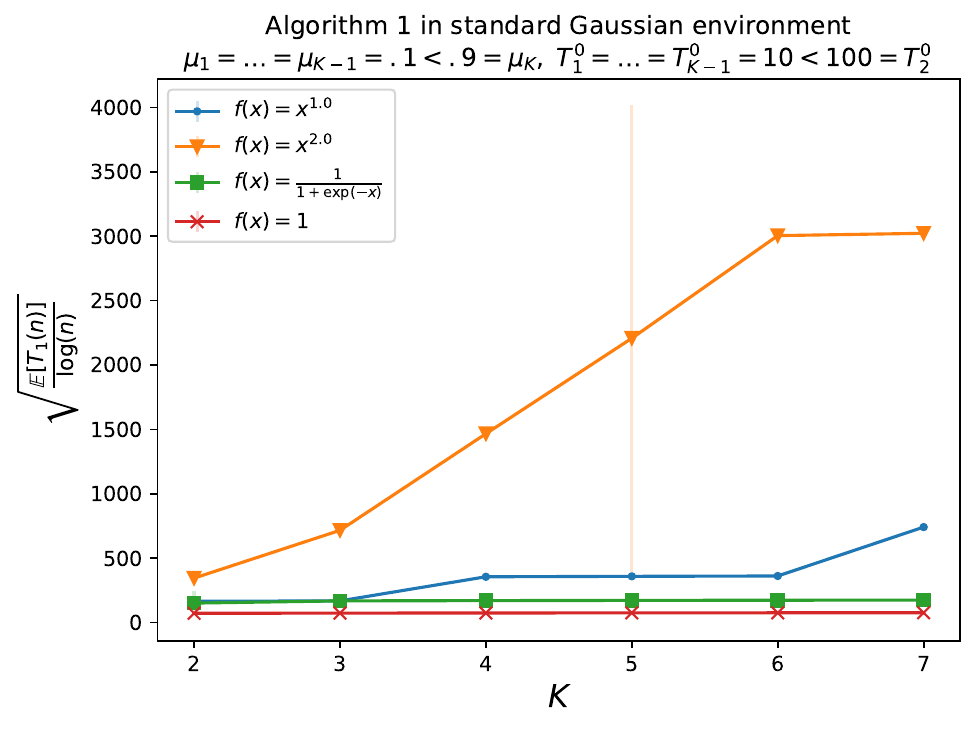}
     \caption*{Figure 12: A comparison of the regret of Algorithm 1 (estimated over of 10 independent experiments) for different bias models, as we vary the number of arms $K$.}
     \label{fig:scaleTmaxmin}
\end{figure}

In Figure 12, we compare the (normalized and square root of) the regret of \cref{alg:eliminationUnknownBias} for various bias models, as we increase the number of arms $K$. We observe that the bias model $f(x) = x^2$ has a significantly larger regret scaling than the other bias models, and this difference becomes more pronounced as $K$ increases. The scaling in the sigmoid bias function $f(x) = (1 + \exp(-x))^{-1}$ is larger than the unbiased setting ($f(x)=1$), but does not depend on $K$, since the sigmoid function is bounded on $[\nicefrac{1}{2}, 1]$ for $x\geq 0$.

\end{document}